\documentclass[twoside]{article}

\usepackage[accepted]{aistats2026}

\usepackage[utf8]{inputenc} 
\usepackage[T1]{fontenc}    
\usepackage{url}            
\usepackage{booktabs}       
\usepackage{nicefrac}       
\usepackage{microtype}      
\usepackage{xcolor}         

\usepackage{times}
\usepackage{soul}
\usepackage{parskip}
\usepackage{url}
\usepackage[small]{caption}
\usepackage{graphicx}
\usepackage{amsmath}
\usepackage{amsthm}

\usepackage[toc,page,header]{appendix}
\usepackage{minitoc}

\usepackage{natbib}
\usepackage{color}
\usepackage{amsfonts,amssymb,bbm, bm}
\usepackage{empheq}
\usepackage{xcolor}
\usepackage{tcolorbox}
\definecolor{mydarkblue}{rgb}{0,0.08,0.45}
\usepackage[colorlinks=true, linkcolor=mydarkblue, citecolor=mydarkblue]{hyperref}
\usepackage{cleveref}
\usepackage{tikz}
\usetikzlibrary{shadows,arrows.meta,positioning,backgrounds,fit,chains,scopes}
\usepackage{caption}
\usepackage{subcaption}

\usepackage{etoolbox}

\usepackage{savesym,multirow,url,xspace,graphicx,hyperref,enumitem,color}
\usepackage{dsfont}
\usepackage{multicol}
\usepackage{algorithmic}
\usepackage{algorithm}
\usepackage{dsfont}
\allowdisplaybreaks
\usepackage{tikz}
\usetikzlibrary{automata, positioning, arrows}
\tikzset{
	->, 
	every state/.style={thick, fill=gray!10}, 
	initial text=$ $, 
}
\usetikzlibrary{arrows.meta}
\usepackage{pgfplots}
\pgfplotsset{width=10cm,compat=1.9}
\usepackage{diagbox}
\usepackage{caption}

\makeatletter
\newif\if@restonecol
\makeatother
\usepackage{amsmath,amssymb,xfrac,amsthm}
\usepackage{mathtools}
\usepackage{wrapfig}
\setitemize{noitemsep,topsep=1pt,parsep=1pt,partopsep=1pt, leftmargin=12pt}
\newtheorem{lemma}{Lemma}[section]
\newtheorem{theorem}{Theorem}[section]
\newtheorem*{theorem*}{Theorem}

\newtheorem{corollary}{Corollary}
\newtheorem{proposition}{Proposition}[section]

\newtheorem{assumption}{Assumption}

\newtheorem{remark}{Remark}
\makeatletter

\newcommand{\Rmnum}[1]{\expandafter\@slowromancap\romannumeral #1@}
\makeatother
\usepackage{caption}

\usepackage{enumitem}
\usepackage{wasysym}

\usepackage{xcolor}
\newcommand{\LINECOMMENT}[1]{\STATE {\color{mydarkblue}\ttfamily\small \(\triangleright\) #1}}
\usepackage{bm}
\usepackage{bbm}

\newcommand{\norm}[1]{\left\lVert#1\right\rVert}

\newcommand{\expct}[1]{\mathbb{E}\left[#1\right]}
\newcommand{\expctu}[2]{\mathbb{E}_{#1}\left[#2\right]}

\newcommand{\bs}[1]{\boldsymbol{#1}}

\renewcommand{\tilde}{\widetilde}

\begin{document}

%
\runningtitle{Corruption-robust MARLHF}

%
\runningauthor{Nika, Mandal, Kamalaruban, Singla, and Radanović}

\twocolumn[

\aistatstitle{Corruption-robust Offline Multi-agent \\ Reinforcement Learning From Human Feedback}

\aistatsauthor{ Andi Nika\\MPI-SWS \And  Debmalya Mandal\\University of Warwick \And Parameswaran Kamalaruban\\Visa \AND Adish Singla\\MPI-SWS \And Goran Radanović\\MPI-SWS}\vspace{0.5cm}]


\begin{abstract}
\looseness-1We consider robustness against data corruption in offline multi-agent reinforcement learning from human feedback (MARLHF) under a strong‐contamination model: given a dataset $D$ of trajectory–preference tuples (each preference being an $n$-dimensional binary label vector representing each of the $n$ agents’ preferences), an $\epsilon$-fraction of the samples may be arbitrarily corrupted. We model the problem using the framework of linear Markov games. First, under a \emph{uniform coverage} assumption—where every policy of interest is sufficiently represented in the clean (prior to corruption) data—we introduce a robust estimator that guarantees an $O(\epsilon^{1-o(1)})$ bound on the Nash‐equilibrium gap. Next, we move to the more challenging \emph{unilateral coverage} setting, in which only a Nash equilibrium and its single‐player deviations are covered: here our proposed algorithm achieves an $O(\sqrt{\epsilon})$ Nash‐gap bound. Both of these procedures, however, suffer from intractable computation. To address this, we relax our solution concept to \emph{coarse correlated equilibria} (CCE). Under the same unilateral‐coverage regime, we then derive a quasi-polynomial‐time algorithm whose CCE gap scales as $O(\sqrt{\epsilon})$.  To the best of our knowledge, this is the first systematic treatment of adversarial data corruption in offline MARLHF. 
\end{abstract}

\doparttoc
\faketableofcontents

\section{INTRODUCTION}

Reinforcement Learning from Human Feedback (RLHF) \citep{christiano2017deep} has surged in popularity as a straightforward, efficient means of fine-tuning large language models (LLMs) \citep{ouyang2022training}. While most existing work focuses on single-agent settings, many real-world applications involve multiple interacting decision-makers, such as autonomous vehicles, distributed systems, or strategic marketplaces. Extending RLHF to such settings leads to multi-agent RLHF (MARLHF), where the goal is to learn aligned joint policies from preference data. Yet, despite its importance, research on MARLHF remains remarkably limited \citep{zhang2024multi}.

\looseness-1At the same time, a critical shortcoming of all learning algorithms is their susceptibility to data-poisoning attacks—and RLHF is no exception \citep{wang2023exploitability, shi2023badgpt, rando2023universal, nika2025policy, baumgartner2024best}. By injecting malicious feedback or subtly corrupting preference labels, adversaries can steer LLMs toward biased or harmful outputs—an especially serious risk as these models are increasingly deployed in safety-critical settings.  \cite{mandal2024corruption} have proposed robust methods against data corruption for single-agent RLHF. However, no prior work has tackled the robustness of MARLHF systems against data‐poisoning attacks. The added strategic complexity of MARLHF can amplify the impact of such attacks. It is therefore unclear whether single-agent methods extend directly to this setting.

Motivated by the above, we initiate the study of corruption-robust offline MARLHF. Our setting assumes access to preference data $D=\{(\tau,\tau',o)\}$ of size $m$, where $\tau,\tau'$ are two sample trajectories and $o$ is an $n$-dimensional vector of binary entries, each denoting a preference over the trajectory pair, corresponding to one of $n$ agents. We assume that an $\epsilon$-fraction of $D$ is arbitrarily corrupted by an attacker. Using the standard Bradley-Terry (BT) \citep{bradley1952rank} preference model, we cast our problem as an instance of a linear Markov game. Previous work has already established that the optimal theoretical guarantees in corruption‐robust offline RL and two‐player zero‐sum Markov games, under linear function approximation, exhibit a linear dependence on $\epsilon$. However, such dependence is achieved only when the data covers all possible directions (i.e.  \textit{uniform coverage}). When the data covers only a Nash policy and its unilateral deviations (i.e. \textit{unilateral coverage}), prior work in RLHF (also RL and two-player zero-sum MGs) achieves $\sqrt{\epsilon}$ bounds. An immediate question is whether we can attain the same $\epsilon$-dependent robustness rates in MARLHF as those of RLHF. Optimal dependence can be expected under uniform coverage. However, weaker coverage assumptions (e.g. unilateral coverage) imply a more challenging setting. 

\paragraph{Challenges.} The main challenge stems from uncertainty over multiple reward models: deriving worst-case guarantees means selecting a policy that performs well under \textit{every} reward model in a confidence set obtained from a robust reward estimation, but computing that policy’s worst‐case performance is challenging---it requires optimizing over all candidate reward models, even though our data‐coverage assumption only holds with respect to the true (unknown) reward. In the single-agent case, \cite{mandal2024corruption} resort to subgradient methods (yielding an $O(\epsilon^{1/4})$ rate) and then a primal-dual approach to recover $O(\sqrt{\epsilon})$. No analogous primal-dual theory exists for Markov games. Thus, we take a different approach.

\paragraph{Our approach.} In multi-agent environments, the ultimate objective is to identify equilibrium policies, and minimizing the \textit{Nash gap} provides a natural surrogate for that goal. Our key insight is that, for any reward model in the confidence set obtained by a robust reward estimation, the gap at a true Nash equilibrium policy $\bs{\pi}^*$ must lie within a small margin of the minimal gap over all policies, up to the reward-estimation error induced by the confidence set. Thus, if $\widetilde{\bs{\pi}}$ nearly minimizes the gap for a candidate reward model, then the gradient of $\bs{\pi}^*$ with respect to rewards  can serve as a biased—but usable—proxy for the gradient at $\widetilde{\bs{\pi}}$. To estimate that proxy, we make use of our unilateral coverage (i.e. $D$ sufficiently covers $\bs{\pi}^*$ and its unilateral deviations). In the linear Markov-game setting, the empirical feature differences between data-generating policies $\bs{\mu}$ and $\bs{\mu}_\textnormal{ref}$ then furnish a tractable approximation to the desired gradient. Plugging this into a first-order optimizer allows us to obtain the desired $O(\sqrt\epsilon)$ guarantee on the Nash gap. In particular, we make the following contributions. A summary of our results is given in Table \ref{tab:results}.
\begin{itemize}
    \item First, assuming that $D$ has uniform coverage, we design an algorithm that approximately computes a NE solely from corrupted preference data. Our algorithm first robustly estimates each reward function. Then, it runs a value-based backward-induction procedure to compute a policy that minimizes an estimate of the gap. We prove that our algorithm incurs $O(n\epsilon^{1-o(1)}+n/\sqrt{m})$ bounds on the Nash gap.
    \item Next, we relax our coverage assumption on our data and assume only unilateral coverage. We run projected gradient ascent (PGA) with a biased estimate of the gradient of the true gap for $T_1$ steps to compute the worst-case reward parameter. Using that parameter, we run the same procedure as in our previous method. We finally prove that our algorithm incurs $O(n\sqrt{\epsilon}+n/\sqrt{m} + n/\sqrt{T_1})$ bounds on the Nash gap.
    \item Our final contribution is on computational tractability. It is well-known that the NE computation is intractable for general-sum Markov games. We thus relax the NE notion into that of coarse correlated equilibrium (CCE). This allows us to frame each stage game as a saddle-point problem with convex-concave objective. We then utilize Optimistic Hedge to learn the CCE of each stage game. This yields and $O(n\sqrt{\epsilon}+n/\sqrt{m} + n/\sqrt{T_1} + n/T_2)$ bound on the CCE gap, where $T_2$ is the number of steps for which we run Optimistic Hedge. 
\end{itemize}

\section{PRELIMINARIES}
This section contains the background technical material to be used throughout the paper.
\subsection{Markov Games}
A Markov game of finite horizon $H$ between $n$ agents is defined by the tuple $G = \bigl(S,\{A_i\}_{i=1}^n, \{P_h\}^{H-1}_{h=0}, \{\mathcal{R}_{i,h}\}^{H-1}_{h=0},s_0\bigr),$
where $S$ is the state space, $A_i$ is the action set of agent $i$, $P_h: S \times A_1 \times \cdots \times A_n \to \Delta(S)$ is the state transition kernel at time-step $h$; the map $\mathcal{R}_{i,h}:S\times A_1\times\cdots\times A_n\to\Delta(\mathbb{R})$ denotes the random reward of agent $i$ at time-step $h$, with $R_{i,h}(s,\bs{a}):= \mathbb{E}[\mathcal{R}_{i,h}(s,\bs{a})|s,a]$; finally, $s_0\in S$ denotes the initial state.

\begin{table*}[t]
  \renewcommand{\arraystretch}{1.8}
  \centering
  \footnotesize
  \scalebox{1.07}{%
    \begin{tabular}{cc}
      \toprule
      & \textbf{Bounds on the NE (CCE) gap}\\
      \midrule
      NE \& Unif. Cov.
        & $\widetilde{O}\left(\left(\frac{1}{\xi_R}+\frac{1}{\xi_P}\right)Hn\epsilon^{1-o(1)} + \frac{H^2n\sqrt{\textnormal{poly}(d)}}{\xi_P\sqrt m}\right)$\\
      NE \&  Unil. Cov.
        & $\widetilde{O}\left(\left(\frac{1}{\sqrt{C_R}}+\frac{1}{\sqrt{C_P}}+\frac{1}{\sqrt{T_1}}\right)\left(H^{5/2}nd^{3/4}\sqrt\epsilon + \frac{H^2n\sqrt{\textnormal{poly}(d)}}{\sqrt m}\right)\right)$\\
      CCE \& Unil. Cov.
        & $\widetilde{O}\left(\left(\frac{1}{\sqrt{C_R}}+\frac{1}{\sqrt{C_P}}+\frac{1}{\sqrt{T_1}}\right)\left(H^{5/2}nd^{3/4}\sqrt\epsilon + \frac{H^2n\sqrt{\textnormal{poly}(d)}}{\sqrt m}\right)+ \frac{Hn^2}{T_2}\right)$\\
      \bottomrule
    \end{tabular}%
  }\vspace{0.2cm}
  \caption{Summary of our bounds for: (i) NE gap minimization under uniform coverage, (ii) NE gap minimization under unilateral coverage, and (iii) CCE gap minimization under unilateral coverage. Here, $\widetilde{O}$ hides any poly-logarithmic factors, $n$ denotes the number of agents, $H$ denotes the horizon and $d$ denotes the dimension. Moreover, $\xi_R$ and $\xi_P$ denote the uniform coverage coefficients, while $C_R$ and $C_P$ denote the unilateral coverage coefficients; $\epsilon$ denotes the corruption parameter, $m$ is the data size, $T_1$ is the number of gradient steps while $T_2$ is the number of steps for which \texttt{OptimisticHedge} is run. It is worth mentioning that our bounds have optimal dependence on $\epsilon$ in the uniform coverage setting, while maintaining the same dependence as the single-agent (and two-player zero-sum Markov game) settings under non-uniform coverage. Moreover, note that the algorithm for CCE approximation is also computationally efficient.}
  \label{tab:results}
\end{table*}

\paragraph{Policies and value functions.} Given agent $i$, a Markov policy $\pi_i=(\pi_{i,0},\ldots,\pi_{i,H-1})$ denotes the tuple containing the decision-making strategies of agent $i$, where, for each $h\in [H-1]:=\{0,1,\ldots,H-1\}$, $\pi_{i,h} :S \rightarrow \Delta(A_i)$ maps states to probability simplices over actions. A joint product policy is defined as the tuple $\bs{\pi}=(\pi_1,\ldots,\pi_n)$ over all agents. We denote by $\Pi^\textnormal{PP}=\Pi^\textnormal{PP}_1\times\ldots\times\Pi^\textnormal{PP}_n$ the overall product policy class and write $\bs{\pi}=(\pi_i,\bs{\pi}_{-i})$, where $\bs{\pi}_{-i}=(\pi_1,\ldots,\pi_{i-1},\pi_{i+1},\ldots,\pi_n)$. Given joint policy $\bs{\pi}$ and state $s$ at time-step $h$, the value function with respect to $\bs{\pi}$, $s$, and $h$ is defined as $$V^{\bs{\pi}}_{i,h}(s) = \expct{\sum_{t=h}^{H-1} R_{i,t}(s_t,\bs{a}_t) | s_h=s,\bs{\pi}_t,P_t}~,$$
where $\bs{a}=(a_1,\ldots,a_n)$. Moreover, for given $\bs{a}$ the action value function is defined as $$Q^{\bs{\pi}}_{i,h}(s,\bs{a}) = \expct{\sum_{t=h}^{H-1} R_{i,t}(s_t,\bs{a}_t) | s_h=s,\bs{a}_h=\bs{a},\bs{\pi}_t,P_t}$$
\paragraph{Nash equilibria.} A product policy $\bs{\pi}^*$ is said to be an $\alpha$-\emph{Nash equilibrium} if there exists $\alpha\geq 0$, such that, for every agent $i$ and state $s$, we have $V^{\bs{\pi}^*}_{i,0}(s)\geq V^{\pi'_i,\bs{\pi}^*_{-i}}_{i,0}(s)-\alpha$, for every ${\pi}'_i\in\Pi^\textnormal{PP}_i$.  If $\alpha=0$, then $\bs{\pi}^*$ is said to be a Nash equilibrium. We also define the notion of optimality of a given policy $\bs{\pi}$ profile.  The \emph{Nash gap} \citep{cui2022provably} of $\bs{\pi}$ is defined as  $
    \textnormal{Gap}(\bs{\pi}) = \sum_{i\in[n]} V^{\dagger,\bs{\pi}_{-i}}_{i,0}(s_0) - V^{\bs{\pi}}_{i,0}(s_0)~,$ where $V^{\dagger,\bs{\pi}_{-i}}_{i,0}(s_0) = \max_{\pi'_i} V^{\pi'_i,\bs{\pi}_{-i}}_{i,0}(s_0)~.$
Note that, by definition, any Nash equilibrium has $0$ Nash gap, and any $\alpha$-Nash equilibrium has at most $\alpha$ Nash gap. 
\paragraph{Linear Markov games.} In this paper, we consider linear Markov games \citep{zhong2022pessimistic}.
    Formally, $G$ is said to be a linear Markov game with feature map $\phi:S\times A\rightarrow \mathbb{R}^d$, for some $d\in \mathbb{N}$, if we have $
        P_h(s_{h+1}|s_h,\bs{a}_h) =\langle \phi(s_h,\bs{a}_h), \xi_{h}(s_{h+1})\rangle$, and $
        \mathcal{R}_{i,h}(s_h,\bs{a}_h)= \langle \phi(s_h,\bs{a}_h), \theta^*_{i,h}\rangle + \zeta_{i,h}, \forall (s_h,\bs{a}_h, i, h) \in S\times A\times [n] \times [H-1]~,$
    where $\xi_h$ and $\theta^*_{i}$ are unknown parameters and $\zeta_{i,h}$ zero-mean $\gamma^2$-subGaussian noise. Here, $\norm{\phi(s,\bs{a})}_2\leq 1$ for all state-action tuples $(s,\bs{a})\in S\times A$, $\max\{\norm{\theta^*_{i,h}}_2,\norm{\xi_{h}(s)}_2\}\leq \sqrt{d}$, for all $i\in [n]$ and $h\in [H-1]$. Let $\Theta$ denote the set of all feasible $\theta$ as defined here. 

\begin{remark}
    There are two main reasons why we consider linear Markov games to model our problem. First, the corruption-robust offline RL literature in the general function approximation \citep{ye2023corruption} consider a corruption model which is defined in terms of Bellman residuals. Since we only assume access to preference data, this type of corruption model is not well-defined for our setting. Second, relaxing the linearity of rewards would then require corruption-robust maximum likelihood estimation procedures beyond generalized linear models, which, to the best of our knowledge, are not present in the current literature.
\end{remark}

\subsection{Preference Data}

Following the formulation in \citep{zhang2024multi}, we denote by $\widetilde{D}= \left\{ (\widetilde{\tau}_i,\widetilde{\tau}'_i,\widetilde{o}_i)\right\}^{m}_{i=1}$ the clean preference dataset, where $\widetilde{\tau}=(\widetilde{s}_0,\widetilde{a}_{1,0},\widetilde{a}_{2,0},\ldots,\widetilde{a}_{n,0},\widetilde{s}_1,\ldots,\widetilde{s}_{H-1})$ and $\widetilde{\tau}'=(\widetilde{s}'_0,\widetilde{a}'_{1,0},\widetilde{a}'_{2,0},\ldots,\widetilde{a}'_{n,0},\widetilde{s}'_1,\ldots,\widetilde{s}'_{H-1})$ denote sampled trajectories from behavior policies $\bs{\mu}$ and $\bs{\mu}_\textnormal{ref}$, respectively, and $\widetilde{o}_i=(\widetilde{o}_{i,1},\ldots,\widetilde{o}_{i,n})$, with $\widetilde{o}_{i,j}\in \{-1,+1\}$, for all $j\in[n]$, gives information about individual preferences of agents for each pair of trajectories: $\widetilde{o}_{i,j}=1$ implies that $\widetilde{\tau}_i$ is preferred to $\widetilde{\tau}'_i$ for agent $j$. We assume that the preferences are generated according to the Bradley-Terry (BT) model \citep{bradley1952rank}: for each agent $j$, we assume that we have 
\begin{align*}
    & \mathbb{P}(\widetilde{o}_{i,j}=1|\widetilde{\tau}_i,\widetilde{\tau}'_i) \\ & = \sigma\left( \sum^{H-1}_{h=0}R_{i,h}(\widetilde{s}_h,\widetilde{\bs{a}}_h) - \sum^{H-1}_{h=0} R_{i,h}(\widetilde{s}'_h,\widetilde{\bs{a}}'_h)\right)~,
\end{align*}
with $\sigma(x) = 1/(1+\exp (-x))$ being the sigmoid function.

\subsection{Corruption Model}

Following the $\epsilon$-corruption model in offline RL(HF) \citep{zhang2022corruption, mandal2024corruption}, we assume that there exists an attacker that has full access to the dataset $\widetilde{D}$ and arbitrarily perturbs an $\epsilon$-fraction of it. That is, given $\epsilon \in [0,1/2)$, we assume that the attacker inspects $\widetilde{D}$ and modifies up to $\epsilon \cdot m$ samples in $\widetilde{D}$. We denote by $D$ the poisoned dataset. In other words, there are at most $\epsilon\cdot m$ data samples in $D$ such that $(\tau,\tau',o)\neq(\widetilde{\tau},\widetilde{\tau}',\widetilde{o})$.\footnote{Following prior literature on corruption-robust RL \citep{zhang2022corruption}, we assume that, for each $h\in[H-1]$, the subset of $D$ containing only the $h$ steps of the samples, is also consistent with the $\epsilon$-corruption model.}

\subsection{Data Coverage}

Offline learning problems necessitate access to a dataset that contains, at least to some extent, "good" samples, in the sense that they are traversed by policies we are trying to approximate. This condition is usually described by the notion of \emph{data coverage}. In linear Markov games (MG), data coverage is captured by the feature covariance matrix. Formally, for every $h\in[H-1]$, we define
$$\Sigma_{\bs{\mu}}(h) = \expctu{\bs{\mu}_h}{\phi(s_h,\bs{a}_h)\phi(s_h,\bs{a}_h)^\top}$$ and $$\Sigma^-_{\bs{\mu},\bs{\mu}_\textnormal{ref}} = \expctu{{\bs{\mu}}, {\bs{\mu}_\textnormal{ref}}}{\left(\phi(\tau)-\phi(\tau')\right)\left(\phi(\tau)-\phi(\tau')\right)^\top}$$ as the feature covariance matrices that will determine coverage of our given data. Here $\phi(\tau)=\sum^{H-1}_{h=0}\phi(s_h,\bs{a}_h)$. Note that $\Sigma_{\bs{\mu}}(h)$ is the standard covariance matrix used in the literature of offline RL, while $\Sigma^-_{\bs{\mu},\bs{\mu}_\textnormal{ref}}$ is the difference covariance matrix which has been previously used in RLHF literature \citep{zhan2023provable}. 

\section{ROBUST NE LEARNING UNDER UNIFORM COVERAGE}\label{sec:uniform_setting}

We start by considering the case when the preference dataset has uniform coverage, that is, all basis directions of the feature space are sufficiently covered. We state this assumption below. 
\begin{assumption}[Uniform Coverage]\label{asmp:uniform_coverage}
    Let $\bs{\mu}$ and $\bs{\mu}_\textnormal{ref}$ be the behavior policies that were used to generate trajectories present in the data $\widetilde{D}$. We assume that we have $\Sigma^-_{\bs{\mu},\bs{\mu}_\textnormal{ref}} \succeq \xi_R H\cdot I$ and $\Sigma_{\bs{\mu}}(h) \succeq \xi_P \cdot I$, for all $h\in[H-1]$, where $\xi_R$ and $\xi_P$ are strictly positive constants and $A \succeq B$ is equivalent to $x^\top(A-B)x\geq 0$, for all vectors $x\neq 0$.
\end{assumption}
\begin{remark}
    Note that we require coverage with respect to two covariance matrices. The first one, $\Sigma^-_{\bs{\mu},\bs{\mu}_\textnormal{ref}}$, captures coverage of the rewards, since rewards are estimated in terms of feature differences. The second one, $\Sigma_{\bs{\mu}}(h)$, captures coverage of transitions for each step $h$. In standard RL, the second condition is enough to provide guarantees. However, in preference-based RL, the first condition is necessary due to the lack of the reward signal in the given data \citep{zhan2023provable}.
\end{remark}

On a high level, all our algorithms are based on the following pipeline. They first use the preference data to robustly estimate each agent's reward parameters. Then, using those rewards, they proceed to compute robust pessimistic and optimistic estimates of the individual $Q$-functions for policies of interest. Finally, they estimate the gap and output a joint policy that minimizes it. We will instantiate different versions of this pipeline under different coverage assumptions and notions of equilibria. 
\begin{align*}
    & \textnormal{Robust Reward Estimation}\\ & \quad \quad \Rightarrow \textnormal{Robust Q-function Estimation} \\ & \quad\quad\quad \quad \Rightarrow \textnormal{Estimated Gap Minimization}
\end{align*}

\subsection{Algorithm} 

\looseness-1The main idea of our proposed algorithm is as follows. First, note that our overall objective is to find a joint policy that minimizes the Nash gap with respect to the ground-truth reward functions. However, we have access neither to these functions, nor to the real environment. We are only given a finite preference dataset $D$, an $\epsilon$-fraction of which is arbitrarily corrupted. Therefore, our first step is to compute robust estimates of the reward parameters of each agent. Note that, for linear rewards, maximum likelihood estimation becomes standard logistic regression. And for such an objective, it is known \citep{awasthi2022trimmed} that we can recover the true parameter of interest from $\epsilon$-corrupted data with $O(\epsilon^{1-o(1)})$ accuracy via a robust method called \texttt{TrimmedMLE} (for a pseudocode, see Algorithm \ref{alg:regularized_mle} in Appendix \ref{sec:additional_algorithms}). Thus, for each agent $i$, we let $\widetilde{\theta}_i = \texttt{TrimmedMLE}\left(D,\epsilon,\nu\right)$,
where we denote by $\theta_i$ the $Hd$-dimensional result of the concatenation of $\theta_{i,h}$, for $h\in[H-1]$, and $\nu$ denotes a granularity hyperparameter.
\begin{algorithm}[!ht]
    \caption{Corruption-robust Equilibrium Learning from Human Feedback with Uniform Coverage}\label{alg:pmael}
    \begin{algorithmic}[1]
        \REQUIRE Preference dataset $D$; confidence parameter $\delta$.
        \STATE Split $D$ into equal $D_1$ and $D_2$.
        \LINECOMMENT{Reward Estimation via Trimmed MLE using $D_1$.}
        \FOR{$i \in [n]$}
        \STATE Compute $\widetilde{\theta}_i = \texttt{TrimmedMLE}(D_1,\epsilon,\nu)$.
        \STATE Set the optimistic and pessimistic rewards $\overline{R}_{i,h}(\cdot,\cdot)$ and $\underline{R}_{i,h}(\cdot,\cdot) $ as in Equations \eqref{eq:opt_reward_est} and \eqref{eq:pess_reward_est}, respectively.
        \LINECOMMENT{Robust Value Function Estimation Phase using $D_2$.}
        \FOR{$\bs{\pi}\in\Pi^\textnormal{PP}$}
        \STATE Apply Algorithm \ref{alg:robust_value_estimation} on $D_2$ using only the preferred trajectories generated by $\bs{\mu}$, with input  $\bs{\pi}$, $i$, $\underline{R}_i$, and $\overline{R}_i$, and bonus function $\Gamma(\cdot,\cdot)=0$, to obtain $\overline{V}^{\dagger,\bs{\pi}_{-i}}_{i,h}(\cdot)$ and $\underline{V}^{\bs{\pi}}_{i,h}(\cdot)$, for all $h\in[H-1]$.
        \ENDFOR
        \ENDFOR
        \LINECOMMENT{Nash Gap Estimation Phase}
        \FOR{every policy $\bs{\pi}\in\Pi^\textnormal{PP}$:}
        \STATE Compute the estimated gap $\widetilde{\textnormal{Gap}}(\bs{\pi})$.
        \ENDFOR
        \RETURN $\widetilde{\bs{\pi}} \in \arg\min_{\bs{\pi}\in\Pi^\textnormal{PP}}  \widetilde{\textnormal{Gap}}(\bs{\pi})$.
    \end{algorithmic}
\end{algorithm}

Next, since we are in the offline setting, the most reasonable approach is to apply pessimism with respect to the recovered parameters. To that end, we form confidence sets, for each agent $i$, based on the \texttt{TrimmedMLE} guarantees:
\begin{align}\label{eq:uniform_confidence_set}
    & \Theta_\textnormal{Unif}(\widetilde{\theta}_i) = \left\{ \theta\in \Theta : \norm{\widetilde{\theta}_i -\theta}_2 \leq O\left(\frac{\epsilon^{1-o(1)}}{\xi_R}\right) \right\}~,
\end{align}
where $\delta > 0$ is a randomness parameter. Once we have access to the confidence set, we compute the boundary parameters
\vspace{-0.25cm}
\begin{align}
    \overline{R}_{i,h}(s,\bs{a}) & =\max_{\theta_i\in\Theta_\textnormal{Unif}(\widetilde{\theta}_i)} \theta_{i,h}^\top\phi(s,\bs{a}),\label{eq:opt_reward_est} \\ 
    \underline{R}_{i,h}(s,\bs{a})  & =\min_{\theta_i\in\Theta_\textnormal{Unif}(\widetilde{\theta}_i)} \theta_{i,h}^\top\phi(s,\bs{a})\label{eq:pess_reward_est}~.
\end{align}
Note that the inner problems are convex programs that have closed-form solutions. Now that we have access to our estimate reward functions, we need to minimize the Nash gap with respect to these rewards. In order to do that, we apply backward induction. First, we initialize the value function estimates $\underline{V}^{\bs{\pi}}_H(\cdot)=\overline{V}^{\dagger,\bs{\pi}_{-i}}_H(\cdot)=0$, for every joint policy $\bs{\pi}$. Then, for every step $h$ down to $0$, we apply a robust estimation algorithm \texttt{Rob-Q} (see Algorithm \ref{alg:robust_q-value_estimation}) for the Q-values. Essentially, the procedure first robustly estimates the parameters of the Bellman operator using a \texttt{RobEst} oracle which is guaranteed to return an $O(\epsilon)$-close value parameter under uniform coverage \citep{zhang2022corruption}. Then, it computes the estimated Q-values by properly clipping the bonus-inflated (deflated) estimates so that they remain in $[-H\sqrt{d},H\sqrt{d}]$.
\begin{algorithm}
    \caption{Robust Estimation of Q-Functions (\texttt{Rob-Q})}\label{alg:robust_q-value_estimation}
    \begin{algorithmic}[1]
        \REQUIRE Dataset $D$; corruption level $\epsilon$; policy $\bs{\pi}$; agent $i$; reward functions $\overline{R}_{i,h}$ and $\underline{R}_{i,h}$; step $h$; next-step value estimates $\underline{V}^{\bs{\pi}}_{i,h+1}(\cdot),\overline{V}^{\dagger,\bs{\pi}_{-i}}_{i,h+1}(\cdot)$; bonus $\Gamma(\cdot,\cdot)$.
        \STATE Set $\underline{\omega}^{\bs{\pi}}_{i,h} $ as \\$\texttt{RobEst}\left(\phi(s_h,\bs{a}_h),\underline{R}_{i,h}(s_h,\bs{a}_h) + \underline{V}^{\bs{\pi}}_{i,h+1}(s_{h+1})\right)$.
        \STATE Set $\overline{\omega}^{\dagger,\bs{\pi}_{-i}}_{i,h}$ as \\ $ \texttt{RobEst}\left(\phi(s_h,\bs{a}_h),\overline{R}_{i,h}(s_h,\bs{a}_h) + \overline{V}^{\dagger,\bs{\pi}_{-i}}_{i,h+1}(s_{h+1})\right)$.
        \STATE Set $\underline{Q}_{i,h}^{\bs{\pi}}(\cdot,\cdot) $ as \\ $ \textnormal{Clip}_{[-(H-h)\sqrt{d},(H-h)\sqrt{d}]}\left( \phi(\cdot,\cdot)^\top\underline{\omega}^{\bs{\pi}}_{i,h}-\Gamma(\cdot,\cdot)\right)$.
        \STATE Set $\overline{Q}_{i,h}^{\dagger,\bs{\pi}_{-i}}(\cdot,\cdot) $ as \\ $ \textnormal{Clip}_{[-(H-h)\sqrt{d},(H-h)\sqrt{d}]}\left( \phi(\cdot,\cdot)^\top\overline{\omega}^{\dagger,\bs{\pi}_{-i}}_{i,h} + \Gamma(\cdot,\cdot)\right)$.
        \RETURN Q-functions $\underline{Q}^{\bs{\pi}}_{i,h}(\cdot,\cdot)$ and $\overline{Q}^{\dagger,\bs{\pi}_{-i}}_{i,h}(\cdot,\cdot)$.
    \end{algorithmic}
\end{algorithm}
Once we have the $Q$-functions, the value function estimates  $\underline{V}^{\bs{\pi}}_{i,h}\left(\cdot\right)$ (and $\overline{V}^{\dagger,\bs{\pi}_{-i}}_{i,h}\left(\cdot\right)$) for all steps, defined with respect to the estimated reward parameters, are then computed by taking expectations over (and taking $\max$ over actions for player $i$) the given policies.\footnote{Note that Algorithm \ref{alg:robust_value_estimation} runs Algorithm \ref{alg:robust_q-value_estimation} for $H$ steps and finally returns the estimates of the value functions. We have used Algorithm \ref{alg:robust_value_estimation} to present Algorithm \ref{alg:pmael} for ease of presentation. However, Algorithm \ref{alg:robust_q-value_estimation} will be necessary in the following sections.} Once we do this for every policy, we then return the policy $\widetilde{\bs{\pi}}$ that minimizes the estimated gap with respect to $(\overline{R},\underline{R}) =(\overline{R}_1,\underline{R}_1,\ldots,\overline{R}_n,\underline{R}_n)$:
\begin{align}\label{eq:uniform_estimated_gap}
    \arg\min_{\bs{\pi}} \widetilde{\textnormal{Gap}}\left(\bs{\pi},\overline{R},\underline{R}\right) : = \sum_{i\in[n]}\overline{V}^{\dagger,\bs{\pi}_{-i}}_{i,0}\left(s_0\right) - \underline{V}^{\bs{\pi}}_{i,0}\left(s_0\right)~.
\end{align}
As shown in Appendix (see Lemma \ref{lem:initial_value_bounds}), our optimistic and pessimistic value estimates are high-probability approximates of the true value function, which implies that the estimated gap is a high-probability upper bound on the actual Nash gap. Minimizing this surrogate gap therefore serves as a proxy for minimizing the true Nash gap—and, as the gap approaches zero, the resulting joint policy correspondingly approaches a Nash equilibrium.  Algorithm \ref{alg:pmael} provides the pseudocode for the full procedure. 

\subsection{Theoretical guarantees} 

In this section, we state the theoretical guarantees on the convergence of the Algorithm \ref{alg:pmael}. Proofs can be found in Appendix \ref{appendix:1}.
\begin{theorem}\label{thm:uniform_coverage_bounds}
    Let $\epsilon\in [0,1/2)$,  $\delta >0$ and $\Gamma(\cdot,\cdot)=0$. Furthermore, assume that $ m \geq \Omega( (H^{3/2}/\epsilon^2)(d+\log(n/\delta)))$. Then, under Assumption \ref{asmp:uniform_coverage} with $\xi_R \geq 5\epsilon$, for some positive constant $c$, there exist robust algorithms \texttt{TrimmedMLE} and \texttt{RobEst} such that, with probability at least $1-\delta$, the output $\widetilde{\bs{\pi}}$ of Algorithm \ref{alg:pmael} satisfies
    \begin{align*}
        \textnormal{Gap}(\widetilde{\bs{\pi}}) & \leq O\left(Hn\left(\frac{\exp\left(H+\sqrt{\log(n/2\delta\epsilon)}\right)}{\xi_R} \right. \right. \\ & \left. \left. +\frac{H\sqrt{d}+\gamma}{\xi_P}\right)\cdot \epsilon + Hn\sqrt{\frac{(H\sqrt{d}+\gamma)^2\textnormal{poly}(d)}{\xi^2_P m}}\right)
    \end{align*}
\end{theorem}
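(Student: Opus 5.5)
The plan is the standard ``pessimism/optimism sandwich'' argument, carried out in three layers: reward estimation, value estimation, and gap comparison.

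\textbf{Step 1 (reward confidence).} Apply the guarantee of \texttt{TrimmedMLE} \citep{awasthi2022trimmed} to the logistic-regression problem on $D_1$, once per agent. The regressors are the feature differences $\phi(\tau)-\phi(\tau')$, whose covariance satisfies $\Sigma^-_{\bs{\mu},\bs{\mu}_\textnormal{ref}}\succeq \xi_R H I$. With probability $1-\delta/(2n)$ per agent this gives
\[
\norm{\widetilde{\theta}_i-\theta^*_i}_2\leq O\bigl(\exp(H+\sqrt{\log(n/2\delta\epsilon)})\,\epsilon/\xi_R\bigr).
\]
The factor $\exp(H)$ comes from the curvature lower bound $\sigma'(x)\geq e^{-|x|}$ with $|x|\leq 2H\sqrt{d}$ after normalization. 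The condition $\xi_R\geq 5\epsilon$ is what keeps the corrupted sample covariance positive definite.

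A union bound over $i$ then places $\theta^*_i$ in $\Theta_\textnormal{Unif}(\widetilde{\theta}_i)$ for every $i$. Hence
\[
\underline{R}_{i,h}\leq R_{i,h}\leq \overline{R}_{i,h}
\quad\text{and}\quad
\overline{R}_{i,h}-\underline{R}_{i,h}\leq 2\,\mathrm{rad},
\]
where the width bound uses $\norm{\phi}\leq 1$.

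\textbf{Step 2 (robust value estimation).} Use \texttt{RobEst} from \citep{zhang2022corruption} on $D_2$. Under $\Sigma_{\bs{\mu}}(h)\succeq\xi_P I$ and for targets bounded by $H\sqrt{d}+\gamma$, it returns $\omega$ with $\sup_{s,\bs{a}}|\phi^\top(\omega-\omega^{\star})|\leq O\bigl((H\sqrt d+\gamma)(\epsilon+\sqrt{\textnormal{poly}(d)/m})/\xi_P\bigr)$. Here $\omega^\star$ is the exact Bellman backup of the current estimate under the estimated reward, so linearity of $P_h$ makes it linear in $\phi$.

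The estimates $\underline{V}^{\bs{\pi}}$ and $\overline{V}^{\dagger,\bs{\pi}_{-i}}$ depend on $\bs{\pi}$, which ranges over an infinite class. To make the bound uniform over all policies and value targets, I will use a covering argument over the linear value class together with the sample size condition on $m$. This is the step I expect to be the main obstacle: the oracle's guarantee must be made uniform, and the $\textnormal{poly}(d)$ covering term has to be controlled.

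Inducting backward over $h$ via a standard simulation-lemma telescoping then gives, for every $\bs{\pi}$ and $i$,
\[
\bigl|\underline{V}^{\bs{\pi}}_{i,0}-V^{\bs{\pi}}_{i,0}(\underline{R}_i)\bigr|\leq H\eta,
\qquad
\bigl|\overline{V}^{\dagger,\bs{\pi}_{-i}}_{i,0}-V^{\dagger,\bs{\pi}_{-i}}_{i,0}(\overline{R}_i)\bigr|\leq H\eta,
\]
where $\eta$ is the per-step \texttt{RobEst} error. Clipping only decreases the error. For the best-response estimate, I use that the max over actions is $1$-Lipschitz in the sup norm.

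\textbf{Step 3 (gap sandwich).} Combine Steps 1 and 2 using monotonicity of values in rewards. This gives
\[
\widetilde{\textnormal{Gap}}(\bs{\pi})\geq \textnormal{Gap}(\bs{\pi})-2nH\eta
\quad\text{and}\quad
\widetilde{\textnormal{Gap}}(\bs{\pi})\leq \textnormal{Gap}(\bs{\pi})+2nH\cdot 2\,\mathrm{rad}+2nH\eta.
\]
Take $\bs{\pi}^*$ to be a true Nash equilibrium, so $\textnormal{Gap}(\bs{\pi}^*)=0$. Then
\[
\textnormal{Gap}(\widetilde{\bs{\pi}})\leq \widetilde{\textnormal{Gap}}(\widetilde{\bs{\pi}})+2nH\eta\leq \widetilde{\textnormal{Gap}}(\bs{\pi}^*)+2nH\eta\leq O(nH\,\mathrm{rad}+nH\eta).
\]
Substituting $\mathrm{rad}$ and $\eta$ yields the stated bound, with overall failure probability $\delta$ from the union bound across the two data halves.
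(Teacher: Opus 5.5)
Your proposal follows the paper's proof essentially step for step. The paper also union-bounds the per-agent \texttt{TrimmedMLE} guarantee (Lemma A.1 of Mandal et al.) to get $\underline{R}_{i,h}\le R_{i,h}\le\overline{R}_{i,h}$ with width $O(\epsilon\, e^{H+\sqrt{\log(n/2\delta\epsilon)}}/\xi_R)$, turns the uniform-coverage \texttt{RobEst} bound $\norm{\omega-\omega^\star}_2\le E_1$ into a sup-norm Bellman error that telescopes to $HE_1$, and closes with $\textnormal{Gap}(\widetilde{\bs{\pi}})\le\widetilde{\textnormal{Gap}}(\widetilde{\bs{\pi}})+\cdots\le\widetilde{\textnormal{Gap}}(\bs{\pi}^*)+\cdots$ and $\textnormal{Gap}(\bs{\pi}^*)=0$. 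Your two-sided value bounds are, if anything, a cleaner version of its one-sided lemmas.

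The only place you go beyond the paper is the covering step, which the paper skips by simply asserting the \texttt{RobEst} bound for every $\bs{\pi}\in\Pi^{\textnormal{PP}}$. Be aware that a cover over $\omega$ only gives uniformity for a fixed policy, which is enough for $\bs{\pi}^*$. The lower-bound side at the data-dependent $\widetilde{\bs{\pi}}$ instead needs uniformity over all Markov product policies. Their value estimates $s\mapsto\mathbb{E}_{\bs{a}\sim\bs{\pi}_h(\cdot|s)}[\textnormal{Clip}(\phi(s,\bs{a})^\top\omega)]$ do not form a low-dimensional class, so a $\textnormal{poly}(d)$ cover is not available in general. That step would therefore require restricting the policy class or paying an extra complexity term not present in the stated bound.
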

\begin{remark}
    Note that the bounds of Theorem \ref{thm:uniform_coverage_bounds} have a quasi-linear dependence on $\epsilon$, which is known to be optimal in the single agent setting \citep{zhang2022corruption} and the two-agent zero-sum setting \citep{nika2024corruption}. This is due to the strong coverage assumption on the data. In practice, the data may cover only some directions of interest, in which case, a different approach is needed.
\end{remark}

\section{ROBUST NE LEARNING UNDER UNILATERAL COVERAGE}\label{sec:unilateral_setting}

In the previous section, we proposed an algorithm that returns an $O(n\epsilon+n/\sqrt{m})$-approximate Nash equilibrium under uniform coverage. However, such coverage is rarely possible in practice. The purpose of this section is to solve the Nash gap minimization problem under a more relaxed notion of coverage, namely \emph{unilateral coverage}, which simply requires coverage of a Nash policy and all its unilateral deviations for each agent. We extend the notion of low relative uncertainty of \citep{zhong2022pessimistic} to the MARLHF setting.
\begin{assumption}[Unilateral Coverage]\label{asmp:low_relative_uncertainty}
    Given Nash equilibrium $\bs{\pi}^*$, we assume that there exist positive constants $C_R$ and $C_P$, such that, for all $h\in[H-1]$ and $i\in \{1,\ldots,n\}$,
    \begin{align*}
        \Sigma^-_{\bs{\rho},\bs{\rho}'} & \succeq C_R\cdot \Sigma^-_{(\pi_i,\bs{\pi}^*_{-i}),\bs{\rho}'}~,\;\;\textnormal{for}\;\; \bs{\rho},\bs{\rho}'\in\{\bs{\mu},\bs{\mu}_\textnormal{ref}\},\\ & \textnormal{and}\;\;  \Sigma_{\bs{\mu}}(h) \succeq C_P\cdot \Sigma_{\pi_i,\bs{\pi}^*_{-i}}(h)~.
    \end{align*}
\end{assumption}
The first condition simply says that behavior policies $\bs{\mu}$ and $\bs{\mu}_\textnormal{ref}$ sufficiently cover a Nash equilibrium and its unilateral deviations in the feature space. Different from single-agent RL, where single policy concentrability is enough to provide theoretical guarantees, its extension to Markov games, where only Nash policies are covered does not allow for any guarantees. Unilateral coverage is in fact necessary and sufficient to provide any meaningful guarantees in zero-sum \citep{zhong2022pessimistic} and general-sum Markov games \citep{zhang2023offline}. 

\subsection{Algorithm} 

\looseness-1For the uniform coverage setting, we applied \texttt{TrimmedMLE} to obtain estimates for the ground-truth reward parameters. The benefit of such an approach is that, under such coverage, it comes with bounds on the $\ell^2$-norm of the error, which then allows for defining our confidence set in terms of such error bounds.  This, in turn, allows us to directly upper bound the difference between value functions and their estimates in terms of $\ell^2$-difference of their respective reward parameters. When we do not have uniform coverage, the final estimate is not guaranteed to remain close to the true parameter in the $\ell^2$ sense. In this case, as shown in Appendix (see Lemma \ref{lem:log_prob_bound_unilateral}), the parameters are close in the log-sigmoid sense. Given the output $\widetilde{\theta}_i$ of \texttt{TrimmedMLE}, we define the confidence set for the unilateral coverage setting as 
\begin{align}\label{eq:confidence_set_unilateral}
    & \Theta_\textnormal{Unil}(\widetilde{\theta}_i) =\Bigg\{ \theta\in\Theta:  \nonumber\\ &  \frac{2}{m}\sum_{(\tau,\tau',o)\in D}\log\frac{\sigma\left(o\cdot\widetilde{\theta}_i^\top\left(\phi(\tau)-\phi(\tau')\right)\right)}{ \sigma\left(o\cdot\theta^\top\left(\phi(\tau)-\phi(\tau')\right)\right)} \leq \kappa \Bigg\}~,
\end{align}
where $\kappa = 6\epsilon H\sqrt{d} + (2d/m)\cdot\log(Hm/\delta)$ controls the `radius' of the confidence set. We can provide theoretical guarantees that $\Theta_\textnormal{Unil}(\widetilde{\theta}_i)$ contains the ground-truth parameter $\theta^*_i$ with high probability.
\begin{algorithm}
    \caption{Reward Parameter Estimation (\texttt{RewardEst})}\label{alg:reward_estimation}
    \begin{algorithmic}[1]
        \REQUIRE Dataset $D$; corruption level $\epsilon$; confidence parameter $\delta$; learning rate $\eta$; slackness parameter $\nu$; number of steps $T$.
                \FOR{$i \in [n]$}
        \STATE Let $\widetilde{\theta}_i = \texttt{TrimmedMLE}(D,\epsilon,\nu)$.
        \STATE Initialize $\widehat{\theta}^{(0)}_i$ uniformly at random in $\Theta_\textnormal{Unil}(\widetilde{\theta}_i)$ (defined in Equation \eqref{eq:confidence_set_unilateral}).
        \FOR{$t=0,1,\ldots,T-1$}
        \STATE Take gradient step
        \begin{align*}
            \widehat{\theta}^{(t+1)}_i = \mathcal{P}_{\Theta_\textnormal{Unil}(\widetilde{\theta}_i)}\left(\widehat{\theta}^{(t)}_i + \eta \widetilde{\nabla}_{\theta_i} \textnormal{Gap}\left(\bs{\pi}^*,\widehat{\bs{\theta}}^{(t)}\right)\right)~.
        \end{align*}
        \ENDFOR
        \STATE Set $\widehat{\theta}_i=(1/T)\sum^T_{t=1}\widehat{\theta}^{(t)}_i$.
        \ENDFOR
        \RETURN $\widehat{\bs{\theta}}=(\widehat{\theta}_1,\ldots,\widehat{\theta}_n)$.
    \end{algorithmic}
\end{algorithm}
\begin{algorithm}[!ht]
    \caption{Corruption-robust Nash Equilibrium Learning from Human Feedback}\label{alg:pmael_unilateral}
    \begin{algorithmic}[1]
        \REQUIRE Dataset $D$; corruption level $\epsilon$; regularization parameter $\lambda$; confidence parameter $\delta$; learning rate $\eta_1$; bonus functions $\Gamma(\cdot,\cdot)$; slackness parameter $\nu$; number of gradient steps $T_1$.
        \STATE Split $D$ into equal $D_1$ and $D_2$.
        \STATE Compute $\bs{\widehat{\theta}} = \texttt{RewardEst}\left(D_1,\epsilon,\delta,\eta_1,\nu,T_1\right)$.
        \FOR{$i\in[n]$}
        \FOR{$\bs{\pi}\in\Pi^\textnormal{PP}$}
        \STATE Initialize $\underline{V}^{\bs{\pi}}_{i,H}(\cdot)=\overline{V}^{\dagger,\bs{\pi}_{-i}}_{i,H}(\cdot)=0$.
        \FOR{$h=H-1,\ldots,0$}
        \STATE Compute $\widehat{R}_{i,h}(\cdot,\cdot) = (\widehat{\theta}_{i,h})^\top\phi(\cdot,\cdot)$ to be the estimated reward.
        \STATE Compute $\left( \underline{Q}^{\bs{\pi}}_{i,h}(\cdot,\cdot),\overline{Q}^{\dagger,\bs{\pi}_{-i}}_{i,h}(\cdot,\cdot)\right) = \texttt{Rob-Q}\left(D_{2,h},\bs{\pi},\epsilon,\widehat{R}_{i,h},\underline{V}^{\bs{\pi}}_{i,h+1},\overline{V}^{\dagger,\bs{\pi}_{-i}}_{i,h+1},\Gamma\right)$.
        \STATE Set $\underline{V}^{\bs{\pi}}_{i,h}(\cdot)=\expctu{\bs{a}\sim\bs{\pi}_h}{\underline{Q}^{\bs{\pi}}_{i,h}(\cdot,\bs{a})}$ and $\overline{V}^{\dagger,\bs{\pi}_{-i}}_{i,h}(\cdot)=\max_{a_i}\expctu{\bs{a}_{-i}\sim\bs{\pi}_{-i,h}}{\overline{Q}^{\dagger,\bs{\pi}_{-i}}_{i,h}(\cdot,\bs{a})}$.
        \ENDFOR
        \ENDFOR
        \ENDFOR
        \RETURN $\widetilde{\bs{\pi}} \in \arg\min_{\bs{\pi}\in\Pi^\textnormal{PP}}  \widetilde{\textnormal{Gap}}(\bs{\pi},\widehat{\bs{\theta}})$.
    \end{algorithmic}
\end{algorithm}
Unfortunately though, the same analysis does not go through just by choosing reward estimates that maximize (minimize) over the confidence set, due to this more complicated notion of closeness. Hence, we follow a different approach in this section. Intuitively, if we can find $\theta$ in our confidence set that maximizes \textit{the gap of our output policy}, and, similarly, find a policy $\bs{\pi}$ that minimizes \textit{the gap with respect to this choice} of $\theta$, we can finally bound the true gap on the ground-truth reward. This intuition is based on the observation that any Nash gaps of policies that used $\theta$ parameters in the confidence set should be close to each-other. 

Based on the above discussion, we will utilize projected gradient ascent (PGA) to update our estimates of $\theta$. However, we do not have access to the true gap given parameter $\theta$, neither of its gradient with respect to $\theta$. We thus resort to using biased estimates of it. 
\looseness-1As we show in Appendix (see Lemma \ref{lem:main_gap_bounds_unilaterl}), for any $\bs{\theta}:=(\theta_1,\ldots,\theta_n)\in\Theta_\textnormal{Unil}(\widetilde{\theta}_i)\times\ldots,\times\Theta_\textnormal{Unil}(\widetilde{\theta}_n)$, any policy $\bs{\pi}$ that is the minimizer of the estimated gap computed via \texttt{Rob-Q} on $\bs{\theta}$, satisfies 
$$| \textnormal{Gap}(\bs{\pi},\bs{\theta})-\textnormal{Gap}(\bs{\pi}^*,\bs{\theta})|\leq O(n\sqrt{\epsilon} + n/\sqrt{m})~,$$ for Nash equilibrium policy $\bs{\pi}^*$ which is covered by $D$, where $\textnormal{Gap}(\bs{\pi},\bs{\theta})$ here denotes the true Nash gap of $\bs{\pi}$ under reward function parameterized by $\bs{\theta}$. 
Therefore, we optimize $\textnormal{Gap}(\bs{\pi}^*,\bs{\theta})$ as a surrogate objective, and later transfer guarantees to $\textnormal{Gap}(\bs{\pi},\bs{\theta})$ using Lemma \ref{lem:main_gap_bounds_unilaterl}.

\looseness-1However, we do not have access to $\nabla_{\bs{\theta}}\textnormal{Gap}(\bs{\pi}^*,\bs{\theta})$. Here, we use the following observation: in the linear setting, the (sub)gradient of the gap becomes the average feature differences over convex combinations of occupancy measures. Thus, we can use $\nabla_{\bs{\theta}} \sum_{i\in[n]}(V^{\bs{\mu}}_{i,0}(s_0,\theta_i)-V^{\bs{\mu}_\textnormal{ref}}_{i,0}(s_0,\theta_i))$ as an estimate for $\nabla_{\bs{\theta}}\textnormal{Gap}(\bs{\pi}^*,\bs{\theta})$, since $\bs{\mu}$ and $\bs{\mu}_\textnormal{ref}$ already cover $\bs{\pi}^*$ and its unilateral deviations. Here, $V^{\bs{\mu}}_{i,0}(s_0,\theta_i)$ denotes the value function of $\bs{\mu}$ with respect to reward function parametrized by $\theta_i$. To estimate  $\nabla_{\theta_i} (V^{\bs{\mu}}_{i,0}(s_0,\theta_i)- V^{\bs{\mu}_\textnormal{ref}}_{i,0}(s_0,\theta_i))$, we use a robust mean oracle \texttt{RobMean} that takes as input corrupted feature differences and returns a $O(\sqrt{\epsilon})$-approximate estimate of their true mean, which in our case is just $\nabla_{\theta_i} (V^{\bs{\mu}}_{i,0}(s_0,\theta_i)- V^{\bs{\mu}_\textnormal{ref}}_{i,0}(s_0,\theta_i))$. We thus define our gradient estimate $\widetilde{\nabla}_{\theta_i} \textnormal{Gap}\left({\bs{\pi}}^*,\bs{\theta}\right)$ with respect to $\theta_i$ as
\begin{align*}
     \sum^{H-1}_{h=0}\texttt{RobMean}\left(D^{\bs{\mu}}_{h,\phi}\right) - \sum^{H-1}_{h=0}\texttt{RobMean}\left(D^{\bs{\mu}_\textnormal{ref}}_{h,\phi}\right)~,
\end{align*}
where $D^{\bs{\mu}}_{h,\phi}$ and $D^{\bs{\mu}_\textnormal{ref}}_{h,\phi}$ partition each $h$-level of $D$ and store only the features of trajectories generated by $\bs{\mu}$ and $\bs{\mu}_\textnormal{ref}$, respectively. After running PGA for $T_1$ steps, we compute the empirical average of the iterates and use that to compute our estimated reward function. The reward estimation procedure \texttt{RewardEst} is described in Algorithm \ref{alg:reward_estimation}. Once we have access to this reward, we can run \texttt{Rob-Q} on it and obtain estimated gaps for each policy. 

However, lack of uniform coverage implies weaker guarantees on \texttt{Rob-Q}. Thus, we need to properly define a bonus term that accounts for corruption and lack of coverage. First, let us define a scaled sample covariance matrix with respect to the preferred trajectories in the corrupted data, using regularization parameter $\lambda \geq 0$ (to be specified later) as $$\Lambda_h = \frac{3}{5}\left(\frac{1}{m}\sum^m_{j=1}\phi(s^j_h,\bs{a}^j_h)\phi(s^j_h,\bs{a}^j_h)^\top + (\epsilon+\lambda) I\right).$$
Using $\Lambda_h$, we now define the bonus term to be used in this section as follows. For any $(s,\bs{a})$, let $$ \Gamma(s,\bs{a}) = E(d,m,\delta,\epsilon)\cdot\norm{\phi(s,\bs{a})}_{\Lambda_h^{-1}}~,$$ where $E(d,m,\delta,\epsilon) = O(\sqrt{\epsilon}+1/\sqrt{m})$ (see Appendix \ref{appendix:2} for detailed definition). We run \texttt{Rob-Q} with bonus set as $\Gamma$ and obtain estimated gaps for every joint policy. Finally, we return a joint policy that minimizes estimated gap. The full procedure is described in Algorithm \ref{alg:pmael_unilateral}. 

\subsection{Theoretical guarantees} 

In this section, we state the theoretical guarantee on the convergence of Algorithm \ref{alg:pmael_unilateral}.

\begin{theorem}\label{thm:unilateral_coverage_bounds}
    Let $\epsilon\in[0,1/2), \lambda\geq \Omega(dH\log(m/\delta)/m)$, and $\delta >0$. Set $\Theta_\textnormal{Unil}(\cdot)$ as in Equation \eqref{eq:confidence_set_unilateral} and $\Gamma(s,\bs{a})=E(d,m,\delta,\epsilon)\cdot\norm{\phi(s,\bs{a})}_{\Lambda_h^{-1}}$. Suppose Assumption \ref{asmp:low_relative_uncertainty} is satisfied and PGA is run for $T_1$ steps with learning rate $\eta= O(1/\sqrt{T_1})$. Then, there exist robust subroutines \texttt{RobEst}, \texttt{TrimmedMLE}, and \texttt{RobMean}  such that, with probability at least $1-\delta$, the output $\widetilde{\bs{\pi}}$ of Algorithm \ref{alg:pmael_unilateral} with subroutines \texttt{RobEst}, \texttt{TrimmedMLE}, \texttt{RobMean} and \texttt{RewardEst}, satisfies
    \begin{align*}
        \textnormal{Gap}(\widetilde{\bs{\pi}}) & \leq \widetilde{O}\left( \left(\frac{1}{\sqrt{C_R}}+\frac{1}{\sqrt{C_P}}+\frac{1}{\sqrt{T_1}}\right) \right. \\ & \left. \cdot\left(H^{5/2}nd^{3/4}\sqrt{\epsilon} + H^2n\sqrt{\frac{\textnormal{poly}(d)}{m}}\right)\right)~.
    \end{align*}
\end{theorem}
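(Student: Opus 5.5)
We bound $\textnormal{Gap}(\widetilde{\bs{\pi}},\bs{\theta}^*)$ (the true Nash gap) by chaining three comparisons. Throughout, $\textnormal{Gap}(\bs{\pi},\bs{\theta})$ is the Nash gap under rewards parametrized by $\bs{\theta}$, and $\widehat{\bs{\theta}}$ is the output of \texttt{RewardEst}.

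\textbf{Step 1: good event.} We fix a high-probability event on which four things hold simultaneously:
\begin{itemize}
\item $\theta^*_i\in\Theta_\textnormal{Unil}(\widetilde{\theta}_i)$ for every $i$. This follows from the log-sigmoid closeness of \texttt{TrimmedMLE} (Lemma \ref{lem:log_prob_bound_unilateral}) together with the choice of $\kappa$.
\item Every $\theta$ in the confidence set is close to $\theta^*_i$ in the covered directions. Concretely, $\norm{\theta-\theta^*_i}_{\Sigma^-_{\bs{\mu},\bs{\mu}_\textnormal{ref}}}^2\lesssim \kappa\,e^{H\sqrt d}$ plus a concentration term. This is obtained by a second-order expansion of the log-sigmoid, using curvature bounded below on the bounded domain, and converting the empirical difference covariance to the population one.
\item The \texttt{RobEst} and \texttt{RobMean} oracles achieve their $O(\sqrt{\epsilon}+1/\sqrt m)$ guarantees.
\item The bonus $\Gamma$ dominates the per-step regression error, so the \texttt{Rob-Q} estimates are pessimistic and optimistic respectively.
\end{itemize}
By a union bound over $i$, $h$ and the covering of policies, all of these hold with probability at least $1-\delta$.

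\textbf{Step 2: decomposition.} On this event we write
$$\textnormal{Gap}(\widetilde{\bs{\pi}},\bs{\theta}^*)=\bigl[\textnormal{Gap}(\widetilde{\bs{\pi}},\bs{\theta}^*)-\textnormal{Gap}(\bs{\pi}^*,\bs{\theta}^*)\bigr]+\textnormal{Gap}(\bs{\pi}^*,\bs{\theta}^*).$$
The second term is $0$ because $\bs{\pi}^*$ is a Nash equilibrium. For the first term we want to invoke Lemma \ref{lem:main_gap_bounds_unilaterl}, but that lemma controls the difference at $\widehat{\bs{\theta}}$, where $\widetilde{\bs{\pi}}$ is the estimated-gap minimizer. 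We therefore insert $\widehat{\bs{\theta}}$:
$$\textnormal{Gap}(\widetilde{\bs{\pi}},\bs{\theta}^*)\le \widetilde{\textnormal{Gap}}(\widetilde{\bs{\pi}},\widehat{\bs{\theta}})+\text{(reward mismatch)}\le \widetilde{\textnormal{Gap}}(\bs{\pi}^*,\widehat{\bs{\theta}})+\dots$$
The first inequality uses optimism and pessimism of the \texttt{Rob-Q} estimates. The second uses the minimality of $\widetilde{\bs{\pi}}$.

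Next we bound $\widetilde{\textnormal{Gap}}(\bs{\pi}^*,\widehat{\bs{\theta}})-\textnormal{Gap}(\bs{\pi}^*,\widehat{\bs{\theta}})$. By the standard pessimism argument (as in \citep{zhong2022pessimistic}), it is at most $2\sum_i\sum_h \mathbb{E}_{(\pi_i,\bs{\pi}^*_{-i})}[\Gamma]$. Assumption \ref{asmp:low_relative_uncertainty} turns $\mathbb{E}\norm{\phi}_{\Lambda_h^{-1}}$ into $O(\sqrt{d/C_P})$. This produces the $\frac{1}{\sqrt{C_P}}(H^2 n\sqrt d\,(\sqrt{\epsilon}+1/\sqrt m))$ contribution, after accounting for the $H\sqrt d$ scaling in $E$.

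\textbf{Step 3: relating $\widehat{\bs{\theta}}$ to $\bs{\theta}^*$ via PGA.} This is the main obstacle. It remains to show that the gap measured with the worst-case parameter $\widehat{\bs{\theta}}$ upper bounds the gap under $\bs{\theta}^*$ up to small error. Since the reward is linear in $\theta$, $\textnormal{Gap}(\bs{\pi}^*,\bs{\theta})$ is a sum of a maximum of linear functions minus a linear function, hence convex in $\bs{\theta}$. Its subgradient is $\mathbb{E}_{\pi^\dagger_i,\bs{\pi}^*_{-i}}[\phi(\tau)]-\mathbb{E}_{\bs{\pi}^*}[\phi(\tau)]$.

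We view PGA with the \texttt{RobMean} surrogate direction $g_i=\mathbb{E}_{\bs{\mu}}\phi-\mathbb{E}_{\bs{\mu}_\textnormal{ref}}\phi$ as biased (sub)gradient ascent on a linear objective over the convex confidence set. The standard analysis with $\eta=O(1/\sqrt{T_1})$ gives that the averaged iterate is $O(H\sqrt d\cdot\text{diam}/\sqrt{T_1})$-optimal for $\langle g_i,\theta\rangle$, plus the $O(\sqrt\epsilon)$ \texttt{RobMean} bias times the diameter.

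The bias between $g_i$ and the true subgradient is only harmful along directions of $\theta-\theta^*_i$. By Cauchy--Schwarz in the $\Sigma^-$ norm and Assumption \ref{asmp:low_relative_uncertainty}, we get
$$|\langle \nabla\textnormal{Gap}-g_i,\theta-\theta^*_i\rangle|\le C_R^{-1/2}\norm{\theta-\theta^*_i}_{\Sigma^-}\lesssim C_R^{-1/2}\sqrt{\kappa}.$$
With $\kappa=O(\epsilon H\sqrt d+d\log/m)$ and horizon factors included, this gives the $C_R^{-1/2}H^{5/2}d^{3/4}\sqrt\epsilon$ term. Thus $\textnormal{Gap}(\bs{\pi}^*,\widehat{\bs{\theta}})\ge\max_{\bs{\theta}\in\Theta}\textnormal{Gap}(\bs{\pi}^*,\bs{\theta})-\text{err}\ge\textnormal{Gap}(\bs{\pi}^*,\bs{\theta}^*)-\text{err}$.

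The reward-mismatch term for $\widetilde{\bs{\pi}}$ is then controlled in the same way through Lemma \ref{lem:main_gap_bounds_unilaterl}, which transfers closeness at $\bs{\pi}^*$ to $\widetilde{\bs{\pi}}$.

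Summing the bounds over agents gives the factor $n$. Collecting all terms yields the stated $\widetilde O\bigl((C_R^{-1/2}+C_P^{-1/2}+T_1^{-1/2})(H^{5/2}nd^{3/4}\sqrt\epsilon+H^2n\sqrt{\textnormal{poly}(d)/m})\bigr)$ bound.

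The delicate point is justifying the direction of the inequalities. We need $\widehat{\bs{\theta}}$ to make $\widetilde{\bs{\pi}}$ look worse than it truly is, which requires the approximate-maximizer property to hold uniformly enough to apply to $\widetilde{\bs{\pi}}$, not just to $\bs{\pi}^*$. The transfer lemma is exactly what is meant to close this gap.
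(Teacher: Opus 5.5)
The step you flag as ``delicate'' is a genuine gap, and none of your ingredients closes it.

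\textbf{Why the reward mismatch is not controlled.} Optimism and pessimism of \texttt{Rob-Q} (Lemma~\ref{lem:value_bounds_unilateral}) hold only relative to the plugged-in reward. So they give $\textnormal{Gap}(\widetilde{\bs{\pi}},\widehat{\bs{\theta}})\le\widetilde{\textnormal{Gap}}(\widetilde{\bs{\pi}},\widehat{\bs{\theta}})$ and nothing about $\bs{\theta}^*$. Your reward mismatch $\textnormal{Gap}(\widetilde{\bs{\pi}},\bs{\theta}^*)-\textnormal{Gap}(\widetilde{\bs{\pi}},\widehat{\bs{\theta}})$ is at most $\sum_i\langle \mathbb{E}_{\pi^\circ_i,\widetilde{\bs{\pi}}_{-i}}[\phi(\tau)]-\mathbb{E}_{\widetilde{\bs{\pi}}}[\phi(\tau)],\,\theta^*_i-\widehat{\theta}_i\rangle$, where $\pi^\circ_i$ is a best response to $\widetilde{\bs{\pi}}_{-i}$ under $\theta^*_i$. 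Assumption~\ref{asmp:low_relative_uncertainty} controls $\theta^*_i-\widehat{\theta}_i$ only along directions generated by $\bs{\pi}^*$ and its unilateral deviations. It says nothing about the directions of $\widetilde{\bs{\pi}}$ and its deviations. The bonus $\Gamma$ does not absorb this either: it is calibrated only to the regression error of \texttt{Rob-Q} at the plugged-in reward.

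\textbf{Why Step 3 does not help.} Its conclusion $\textnormal{Gap}(\bs{\pi}^*,\widehat{\bs{\theta}})\ge\textnormal{Gap}(\bs{\pi}^*,\bs{\theta}^*)-\textnormal{err}$ is vacuous, since the right-hand side equals $-\textnormal{err}$. It is also a lower bound, whereas your chain needs an upper bound on $\textnormal{Gap}(\bs{\pi}^*,\widehat{\bs{\theta}})$, which coverage does give. More fundamentally, under the assumption every vector $\mathbb{E}_{\pi_i,\bs{\pi}^*_{-i}}[\phi(\tau)]-\mathbb{E}_{\bs{\mu}_\textnormal{ref}}[\phi(\tau)]$ lies in the range of $\Sigma^-_{\bs{\mu},\bs{\mu}_\textnormal{ref}}$. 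Two consequences follow:
\begin{itemize}
\item $\textnormal{Gap}(\bs{\pi}^*,\cdot)$ is constant along the null space of $\Sigma^-_{\bs{\mu},\bs{\mu}_\textnormal{ref}}$.
\item $\textnormal{Gap}(\bs{\pi}^*,\cdot)$ takes values in $[0,\Delta]$ on the whole confidence set. Here $\Delta$ is the error of Lemma~\ref{lem:main_gap_bounds_unilaterl}, whose second bound never uses $\widetilde{\bs{\pi}}$.
\end{itemize}
Approximately maximizing this objective by PGA therefore carries no information about the unidentified directions, which are exactly where the gaps of $\widetilde{\bs{\pi}}$ under $\widehat{\bs{\theta}}$ and under $\bs{\theta}^*$ can differ.

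\textbf{The transfer lemma does not do what you need.} Lemma~\ref{lem:main_gap_bounds_unilaterl} bounds $\textnormal{Gap}(\bs{\pi},\bs{\theta})$ only when $\bs{\pi}$ minimizes $\widetilde{\textnormal{Gap}}(\cdot,\bs{\theta})$ at that same $\bs{\theta}$. It therefore yields $\textnormal{Gap}(\widetilde{\bs{\pi}},\widehat{\bs{\theta}})\le\Delta$. It cannot be invoked at $\bs{\theta}^*$, where $\widetilde{\bs{\pi}}$ is not the minimizer.

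The paper's own chain opens with exactly this invocation, $\textnormal{Gap}(\widetilde{\bs{\pi}},\bs{\theta}^*)\le\textnormal{Gap}(\bs{\pi}^*,\bs{\theta}^*)+\Delta$. Since $\textnormal{Gap}(\bs{\pi}^*,\bs{\theta}^*)=0$, that line alone is already the full claim, in fact a stronger one with no $1/\sqrt{T_1}$ term. The PGA lines after it only loosen the bound. So the paper does not supply the missing argument either.

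\textbf{What would close the gap.} The argument needs pessimism over the reward confidence set for the candidate policy itself. A single $\widehat{\bs{\theta}}$ chosen by PGA at $\bs{\pi}^*$ gives no such guarantee for $\widetilde{\bs{\pi}}$. For example, take $\widetilde{\bs{\pi}}\in\arg\min_{\bs{\pi}}\max_{\bs{\theta}}\widetilde{\textnormal{Gap}}(\bs{\pi},\bs{\theta})$, with $\bs{\theta}$ ranging over $\Theta_\textnormal{Unil}(\widetilde{\theta}_1)\times\cdots\times\Theta_\textnormal{Unil}(\widetilde{\theta}_n)$. Then
\[
\textnormal{Gap}(\widetilde{\bs{\pi}},\bs{\theta}^*)\le\widetilde{\textnormal{Gap}}(\widetilde{\bs{\pi}},\bs{\theta}^*)\le\max_{\bs{\theta}}\widetilde{\textnormal{Gap}}(\widetilde{\bs{\pi}},\bs{\theta})\le\max_{\bs{\theta}}\widetilde{\textnormal{Gap}}(\bs{\pi}^*,\bs{\theta})\le\Delta .
\]
This uses three facts:
\begin{itemize}
\item $\bs{\theta}^*$ lies in the confidence set.
\item The $Z_1,Z_2$ bounds in the proof of Lemma~\ref{lem:main_gap_bounds_unilaterl} hold uniformly in $\bs{\theta}$ over that set.
\item The \texttt{Rob-Q} event can be made uniform over $\bs{\theta}$ by a covering argument.
\end{itemize}
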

\begin{remark}
    Note that the order of $\epsilon$ in the above bounds is $1/2$. This deterioration comes from the relaxation of uniform coverage. This dependence is identical to that in single-agent RL \citep{zhang2022corruption}, two-player zero-sum Markov games \citep{nika2024corruption}, and single-agent RLHF \citep{mandal2024corruption} under data corruption. The currently established linear lower bounds hold under uniform coverage. It remains an open question whether weaker coverage implies tighter lower bounds. 
\end{remark}

\section{ROBUST CCE LEARNING UNDER UNILATERAL COVERAGE}

In the previous section, we provided an algorithm that was designed to compute an approximate NE using corrupted preference data under the minimal unilateral coverage assumption. However, a key bottleneck of Algorithm \ref{alg:pmael_unilateral} is the intractability of the gap-minimization step. It is well-known that even normal-form general-sum games suffer from the curse of multi-agents---computational time scales exponentially with the number of agents (actions) \citep{foster2023hardness}. Thus, to address this issue, previous work has considered more relaxed versions of the NE, such as \emph{correlated equilibria} or \emph{coarse correlated equilibria} (CCE) \citep{cui2023breaking, zhang2023offline, ma2023decentralized, song2021can}, the latter of which can be approximated using no-regret learning algorithms. 

\looseness-1A general correlated policy is defined as a set of $H$ maps $\bs{\pi}:=\{\bs{\pi}_h:\Omega\times (S\times A)^{h-1}\times S \rightarrow \Delta(A)\}_{h\in[H-1]}$, where the first argument $w\in\Omega$ is sampled from some underlying distribution. A crucial difference from Markov policies is information about prior states that is given as input. We denote by $\Pi^\textnormal{GCP}$ the space of all general correlated policies. We denote by $\Pi^\textnormal{GCP}_i$ the set of general correlated policies for agent $i$. Then, policy $\bs{\pi}^*$ is said to be an $\alpha$-CCE if there exists $\alpha\geq 0$, such that, for every agent $i$ and state $s$, we have $V^{\bs{\pi}^*}_{i,0}(s)\geq V^{\pi',\bs{\pi}^*_{-i}}_{i,0}(s) - \alpha$, for every $\pi'_i\in\Pi^\textnormal{GCP}_i$.  If $\alpha=0$, then $\bs{\pi}^*$ is said to be a CCE. Note that the only difference between NEs and CCEs is that an NE is restricted to be a product policy, while a CCE can be any arbitrary combination of individual policies in the joint action space simplex. Hereafter, we overload notation and use $\textnormal{Gap}(\boldsymbol{\pi})$ to denote the CCE gap of a joint policy $\boldsymbol{\pi}$.

There has been a lot of interest in efficiently computing approximate CCEs in Markov games using V-learning type algorithms \citep{jin2021v, wang2023breaking, cui2023breaking}. However, all these works consider the online setting, where the learner can explore the environment and increasingly gather more relevant data. We only have at our disposal offline corrupted preference data. 

\subsection{Algorithm} 

In this section, we propose an offline-learning algorithm for computing approximate CCE in linear Markov games. First, we again assume unilateral coverage on our data (Assumption \ref{asmp:low_relative_uncertainty}). Given preference data $D$, we again run \texttt{RewardEst} procedure to obtain the reward estimates $\widehat{\bs{\theta}}$. At this point, different from the previous section, we take another approach at the estimated gap minimization problem. First, for a given joint policy $\bs{\pi}$, agent $i$, state $s$, step $h$, and actions $a$ and $a^\dagger$, we define the loss $\mathcal{L}^s_i(a^\dagger,a')$ for this stage as
\begin{align*}
    \expctu{\bs{a}_{-i}\sim\bs{\pi}_{-i,h}(\cdot|s)}{\overline{Q}^{\dagger,\bs{\pi}_{-i}}_{i,h}(s,a^\dagger,\bs{a}_{-i}) - \underline{Q}^{\bs{\pi}}_{i,h}(s,a',\bs{a}_{-i})}~,
\end{align*}
where the optimistic and pessimistic $Q$-function estimates are computed using $\widehat{\bs{\theta}}$. Using this loss function, we can now express the estimated gap minimization problem at stage $h$ as
\begin{align*}
    \min_{\bs{\pi}'_h}\sum_{i\in[n]}\max_{\pi^\dagger_{i,h}} \expctu{a^\dagger_i\sim\pi^\dagger_{i,h}(\cdot|s),a'_i\sim\pi'_{i,h}(\cdot|s)}{  \mathcal{L}^s_i(a^\dagger,a')}~.
\end{align*}
Note that such an objective can be framed as a normal-form game at stage $h$. To solve the stage game, we utilize \texttt{OptimisticHedge} \citep{daskalakis2021near}, a no-regret learning algorithm which returns a $\widetilde{O}(1/T_2)$-approximate CCE of the game when run for $T_2$ iterations. Each player basically solves a $\max-\min$ problem at stage $h$ and updates its policy using a multiplicative weights update style. We run the algorithm for $T_2$ iterations and return the average joint policy. The pseudo-code for Optimistic Hedge applied to our setting is given in Algorithm \ref{alg:optimistic_hedge} (see Appendix \ref{sec:additional_algorithms}). Once we have computed our joint policy at stage $h$, we then compute the optimistic and pessimistic values by taking expectations of the $Q$-function estimates over the newly computed policies. We use these value estimates to run the next iteration $h-1$ of our algorithm. Finally, we return the joint policy $\widetilde{\bs{\pi}}$ which is a composition of the returned policies from \texttt{OptimisticHedge} at each stage $h$. The full procedure is given in Algorithm \ref{alg:pmael_unilateral_cce}. 

\subsection{Theoretical guarantees} 

\looseness-1In this section, we provide upper bounds on the CCE gap for the output of Algorithm \ref{alg:pmael_unilateral_cce}.

\begin{algorithm}[!ht]
    \caption{Corruption-robust CCE Learning from Human Feedback}\label{alg:pmael_unilateral_cce}
    \begin{algorithmic}[1]
        \REQUIRE Preference dataset $D$; regularization parameter $\lambda$; confidence parameter $\delta$; learning rates $\eta_1$, $\eta_2$; bonus functions $\Gamma(\cdot,\cdot)$; slackness parameter $\nu$; number of gradient steps $T_1$; number of optimization steps $T_2$.
        \STATE Split $D$ into equal $D_1$ and $D_2$.
        \STATE Compute $\bs{\widehat{\theta}} = \texttt{RewardEst}\left(D_1,\epsilon,\delta,\eta_1,\nu,T_1\right)$.
        \STATE Initialize $\widetilde{\bs{\pi}}$ uniformly at random and $\underline{V}^{\widetilde{\bs{\pi}}}_{i,H}(\cdot)=\overline{V}^{\dagger,\widetilde{\bs{\pi}}_{-i}}_{i,H}(\cdot)=0$, for all $i\in \{1,\ldots,n\}$.
        \FOR{$h=H-1,\ldots,0$}
        \FOR{$i=1,\ldots,n$}
        \STATE Compute $\widehat{R}_{i,h}(\cdot,\cdot) = (\widehat{\theta}_{i,h})^\top\phi(\cdot,\cdot)$ to be the estimated reward.
        \STATE Compute $\left( \underline{Q}^{\widetilde{\bs{\pi}}}_{i,h}(\cdot,\cdot),\overline{Q}^{\dagger,\widetilde{\bs{\pi}}_{-i}}_{i,h}(\cdot,\cdot)\right) = \texttt{Rob-Q}\left(D_{2,h},\widetilde{\bs{\pi}},\epsilon,\widehat{R}_{i,h},\underline{V}^{\widetilde{\bs{\pi}}}_{i,h+1},\overline{V}^{\dagger,\widetilde{\bs{\pi}}_{-i}}_{i,h+1},\Gamma\right)$.
        \STATE Compute loss $\mathcal{L}^s_i$, for states $s\in S$.
        \ENDFOR
        \STATE Compute $\widetilde{\bs{\pi}}_h(\cdot|s) = \texttt{OptimisticHedge}\left(\mathcal{L}^s_1,\ldots,\mathcal{L}^s_n,\eta_2,T_2\right)$.
        \STATE Set $\underline{V}^{\widetilde{\bs{\pi}}}_{i,h}(\cdot)=\expctu{\bs{a}\sim\widetilde{\bs{\pi}}_h}{\underline{Q}^{\widetilde{\bs{\pi}}}_{i,h}(\cdot,\bs{a})}$ and $\overline{V}^{\dagger,\widetilde{\bs{\pi}}_{-i}}_{i,h}(\cdot)=\max_{a_i}\expctu{\bs{a}_{-i}\sim\widetilde{\bs{\pi}}_{-i,h}}{\overline{Q}^{\dagger,\widetilde{\bs{\pi}}_{-i}}_{i,h}(\cdot,\bs{a})}$, for $i\in\{1,\ldots,n\}$.
        \ENDFOR
        \RETURN $\widetilde{\bs{\pi}}= (\widetilde{\bs{\pi}}_0,\ldots,\widetilde{\bs{\pi}}_{H-1})$.
    \end{algorithmic}
\end{algorithm}

\begin{theorem}\label{thm:robust_cce_learning}
    Let $\epsilon\in[0,1/2), \lambda\geq \Omega(dH\log(m/\delta)/m)$, and $\delta >0$. Set $\Theta_\textnormal{Unil}(\cdot)$ as in Equation \eqref{eq:confidence_set_unilateral} and $\Gamma(s,\bs{a})=E(d,m,\delta,\epsilon)\cdot\norm{\phi(s,\bs{a})}_{\Lambda_h^{-1}}$. Suppose Assumption \ref{asmp:low_relative_uncertainty} is satisfied, PGA is run for $T_1$ steps with learning rate $\eta_1= O(1/\sqrt{T_1})$, and \texttt{OptimisticHedge} is run for $T_2$ steps with learning rate $\eta_2=O(1/(n\log^4T_2))$. Then, there exist robust subroutines \texttt{RobEst}, \texttt{TrimmedMLE}, and \texttt{RobMean}  such that, with probability at least $1-\delta$, the output $\widetilde{\bs{\pi}}$ of Algorithm \ref{alg:pmael_unilateral_cce} with subroutines \texttt{RobEst}, \texttt{TrimmedMLE}, \texttt{RobMean} and \texttt{OptimisticHedge}, satisfies
    \begin{align*}
        \textnormal{Gap}(\widetilde{\bs{\pi}}) & \leq \widetilde{O}\left(\left(\frac{1}{\sqrt{C_R}}+\frac{1}{\sqrt{C_P}}+\frac{1}{\sqrt{T_1}}\right)\right.\\ & \left.\cdot\left(H^{5/2}nd^{3/4}\sqrt{\epsilon} + H^2n\frac{\sqrt{\textnormal{poly}(d)}}{\sqrt{m}}\right) + \frac{Hn^2}{T_2}\right)~.
    \end{align*}
\end{theorem}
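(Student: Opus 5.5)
We reuse the analysis of Theorem~\ref{thm:unilateral_coverage_bounds} for everything upstream of the gap-minimization step, and replace the exact minimization by a stage-wise no-regret argument. Let $\widehat{\bs{\theta}}$ be the output of \texttt{RewardEst}. On the high-probability event of Theorem~\ref{thm:unilateral_coverage_bounds}, three facts hold. First, $\theta^*_i\in\Theta_\textnormal{Unil}(\widetilde{\theta}_i)$ for all $i$. Second, the \texttt{Rob-Q} estimates with bonus $\Gamma$ are valid pessimistic and optimistic bounds: $\underline{V}^{\bs{\pi}}_{i,h}\le V^{\bs{\pi}}_{i,h}(\cdot,\widehat{\theta}_i)$ and $\overline{V}^{\dagger,\bs{\pi}_{-i}}_{i,h}\ge V^{\dagger,\bs{\pi}_{-i}}_{i,h}(\cdot,\widehat{\theta}_i)$. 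The same backward-induction argument gives this for correlated policies, since \texttt{Rob-Q} only uses $\bs{\pi}_h$ through expectations of $Q$-estimates. Third, the PGA guarantee together with Lemma~\ref{lem:main_gap_bounds_unilaterl} transfers a gap bound under $\widehat{\bs{\theta}}$ to a gap bound under $\bs{\theta}^*$, at the cost of the $(1/\sqrt{C_R}+1/\sqrt{T_1})(\cdots)$ term. So it suffices to show that the estimated CCE gap $\widetilde{\textnormal{Gap}}(\widetilde{\bs{\pi}},\widehat{\bs{\theta}})$ is at most the estimated gap of the covered equilibrium $\bs{\pi}^*$, plus the optimization error $O(Hn^2/T_2)$. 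The statistical width of the bonus then contributes the $1/\sqrt{C_P}$ term, exactly as in Theorem~\ref{thm:unilateral_coverage_bounds}.

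\textbf{Step 1: per-stage decomposition.} Write $\overline{V}^{\dagger}_{i,0}(s_0)-\underline{V}_{i,0}(s_0)$ via a telescoping (performance-difference) expansion along the roll-out of $\widetilde{\bs{\pi}}$. Because both estimates are built from the same $\widehat{R}_{i,h}$ and the same $\widehat{\mathbb{P}}$ induced by \texttt{RobEst} on $D_{2,h}$, the reward and transition parts cancel except for the bonuses. What remains at each $(h,s)$ is $\max_{a^\dagger}\mathbb{E}_{\bs{a}_{-i}\sim\widetilde{\bs{\pi}}_{-i,h}}[\overline{Q}^\dagger_{i,h}]-\mathbb{E}_{\bs{a}\sim\widetilde{\bs{\pi}}_h}[\underline{Q}_{i,h}]$. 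This is exactly agent $i$'s external regret in the stage game with loss $\mathcal{L}^s_i$, plus $2\Gamma$ terms integrated along trajectories.

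\textbf{Step 2: stage-game CCE.} The losses are bounded by $2H\sqrt{d}$. By \citet{daskalakis2021near}, Optimistic Hedge with $\eta_2=O(1/(n\log^4T_2))$ guarantees that each player's regret is $\widetilde{O}(n\log|A_i|/T_2)$ after normalizing. Hence the averaged correlated policy $\widetilde{\bs{\pi}}_h(\cdot|s)$ is an $\widetilde{O}(n/T_2)$-CCE of every stage game. Summing over $n$ agents and $H$ stages gives $\widetilde{O}(Hn^2/T_2)$. The remaining bonus terms must be bounded along $\widetilde{\bs{\pi}}$'s distribution. To do so, compare with $\bs{\pi}^*$: at each stage, the CCE property lets us upper bound agent $i$'s stage regret under $\widetilde{\bs{\pi}}$ by its regret when deviating to $\pi^*$-type play, which we then relate to $\bs{\pi}^*$ and its unilateral deviations. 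Unilateral coverage $\Sigma_{\bs{\mu}}(h)\succeq C_P\Sigma_{\pi_i,\bs{\pi}^*_{-i}}(h)$ then bounds $\mathbb{E}\norm{\phi}_{\Lambda_h^{-1}}$ by $O(\sqrt{d/C_P})$, which yields the $E(d,m,\delta,\epsilon)\sqrt{d/C_P}\,H$ contributions.

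\textbf{Main obstacle.} The delicate step is the comparison in Step 2. Coverage only holds for $\bs{\pi}^*$ and its unilateral deviations, not for $\widetilde{\bs{\pi}}$, and the bonus-weighted gap is not a stage-separable objective in the way an NE-minimization over all $\bs{\pi}$ would be. I would handle this as in pessimistic offline MG analyses (e.g.\ \citet{zhang2023offline}). Pessimism and optimism sandwich the true gap, so the true CCE gap of $\widetilde{\bs{\pi}}$ is bounded by its estimated gap. Then, inductively from $h=H-1$, the estimated gap of $\widetilde{\bs{\pi}}$ is at most that of $\bs{\pi}^*$ (viewed as a correlated policy) plus the per-stage CCE error. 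This uses that the no-regret average approximately minimizes $\sum_i\max_{\pi^\dagger}\mathbb{E}\mathcal{L}^s_i$ relative to any fixed product strategy, including $\bs{\pi}^*_h(\cdot|s)$. Once this monotone comparison is established, the remaining bounds are identical to those in the proof of Theorem~\ref{thm:unilateral_coverage_bounds}. A union bound over the events then gives the stated result with probability $1-\delta$.
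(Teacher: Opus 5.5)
\textbf{Verdict: genuine gap.} The middle comparison, which you correctly flag as the crux, is asserted rather than proved, and the backward induction you sketch does not close. Your overall chain is the paper's: bound the true CCE gap by $\widetilde{\textnormal{Gap}}(\widetilde{\bs{\pi}},\widehat{\bs{\theta}})$, compare it with $\widetilde{\textnormal{Gap}}(\bs{\pi}^*,\widehat{\bs{\theta}})$ up to $O(Hn^2/T_2)$, then reuse the $Z_1,Z_2$ bounds of Lemma~\ref{lem:main_gap_bounds_unilaterl}.

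\textbf{Why the induction fails.} Write $G^{\bs{\pi}}_h(s)=\sum_i\overline{V}^{\dagger,\bs{\pi}_{-i}}_{i,h}(s)-\underline{V}^{\bs{\pi}}_{i,h}(s)$. Write $F^s_h(\bs{\sigma};\bs{\pi})=\sum_i\bigl[\max_{a_i}\mathbb{E}_{\bs{a}_{-i}\sim\bs{\sigma}_{-i}}\overline{Q}^{\dagger,\bs{\pi}_{-i}}_{i,h}(s,a_i,\bs{a}_{-i})-\mathbb{E}_{\bs{a}\sim\bs{\sigma}}\underline{Q}^{\bs{\pi}}_{i,h}(s,\bs{a})\bigr]$, with the $Q$-tables built from the continuation $\bs{\pi}_{h+1:H-1}$, so that $G^{\bs{\pi}}_h(s)=F^s_h(\bs{\pi}_h(\cdot|s);\bs{\pi})$.
\begin{itemize}
\item Grant that \texttt{OptimisticHedge} approximately minimizes $F^s_h(\cdot;\widetilde{\bs{\pi}})$ over joint stage strategies. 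This is already more than no-regret dynamics give, which is an approximate CCE.
\item Even then, the best you get is $G^{\widetilde{\bs{\pi}}}_h(s)\le F^s_h(\bs{\pi}^*_h;\widetilde{\bs{\pi}})+\mathrm{err}_h$, i.e.\ $\bs{\pi}^*_h$ evaluated against $\widetilde{\bs{\pi}}$'s continuation values.
\item To reach $G^{\bs{\pi}^*}_h(s)=F^s_h(\bs{\pi}^*_h;\bs{\pi}^*)$, you would need $F^s_h(\bs{\pi}^*_h;\cdot)$ to be monotone in the continuation gap. It is not: it depends on each $\overline{V}^{\dagger}_{i,h+1}$ through the deviation distribution $(a_i,\bs{\pi}^*_{-i,h})$, and on each $\underline{V}_{i,h+1}$ through $\bs{\pi}^*_h$, agent by agent.
\item Your hypothesis $G^{\widetilde{\bs{\pi}}}_{h+1}\lesssim G^{\bs{\pi}^*}_{h+1}$ controls only the sum over agents. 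In particular, $F^s_h(\bs{\pi}^*_h;\widetilde{\bs{\pi}})$ contains agent $i$'s deviation gain at $\bs{\pi}^*_h$ measured with $\widetilde{\bs{\pi}}$'s continuation, and $\bs{\pi}^*_h$ need not be a stage equilibrium for that continuation.
\end{itemize}

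\textbf{A failure mode.} Suppose a state at step $h+1$ that is reached through agent $i$'s unilateral deviation from $\bs{\pi}^*_h$ has two well-covered stage equilibria paying agent $i$ differently. The stage-wise dynamics may settle on the one that makes this deviation profitable. One can then arrange that every stage-$h$ strategy either:
\begin{itemize}
\item retains an incentive that does not shrink with $\epsilon,1/m,1/T_1,1/T_2$; or
\item puts weight, on path or in some agent's deviation, on joint actions that are not unilateral deviations of $\bs{\pi}^*$, where Assumption~\ref{asmp:low_relative_uncertainty} gives no control over $\Gamma$.
\end{itemize}

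\textbf{Step~1 has the same problem from the other side.} The two \texttt{RobEst} calls regress different targets, so there is no common $\widehat{\mathbb{P}}$ and nothing cancels exactly. Telescoping with the true $P$ and the one-sided sandwich of Lemma~\ref{lem:bellman_error_unilateral} leaves a stage deviation gain plus $\mathbb{E}_{\widetilde{\bs{\pi}}_h}[\overline{Q}^{\dagger}_{i,h}-\underline{Q}_{i,h}]$. That second term is bonuses integrated along $\widetilde{\bs{\pi}}$'s own, uncovered, trajectory, not an external regret.

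\textbf{How the paper handles this step.} The paper crosses it in one line, writing $\widetilde{\textnormal{Gap}}(\widetilde{\bs{\pi}},\widehat{\bs{\theta}})\le\min_{\bs{\pi}}\widetilde{\textnormal{Gap}}(\bs{\pi},\widehat{\bs{\theta}})+O(n^2H\log|A|\log^4T_2/T_2)$ and citing Lemma~\ref{lem:regret_bounds}. That lemma is a per-stage statement with the $Q$-tables of $\widetilde{\bs{\pi}}$'s continuation held fixed, so it does not supply the missing induction either. Your proposal reproduces the paper's argument, including this unproved step.

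\textbf{Your ``third fact'' depends on the same step.} Lemma~\ref{lem:main_gap_bounds_unilaterl} is stated for an exact minimizer of $\widetilde{\textnormal{Gap}}(\cdot,\widehat{\bs{\theta}})$, and the PGA guarantee concerns $\textnormal{Gap}(\bs{\pi}^*,\cdot)$. Transferring either to the \texttt{OptimisticHedge} output again requires $\widetilde{\textnormal{Gap}}(\widetilde{\bs{\pi}},\widehat{\bs{\theta}})\lesssim\widetilde{\textnormal{Gap}}(\bs{\pi}^*,\widehat{\bs{\theta}})$.

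\textbf{What would close it.} You need a non-myopic ingredient, not stage-wise no-regret alone. One option is minimizing the surrogate gap over whole policies, as Algorithm~\ref{alg:pmael_unilateral} does, at the cost of the tractability the CCE algorithm is meant to provide. Another is a comparison in which every per-stage error is charged only to $\bs{\pi}^*$ and its unilateral deviations.
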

\begin{remark}
    Note that we only incur an additional $O(1/T_2)$ term on the CCE Gap, which comes from applying the no-regret sub-routine \texttt{OptimisticHedge}. The benefit of such procedure is that it can be run in quasi-polynomial time in dataset size and feature dimension. The computational complexity of Algorithm \ref{alg:pmael_unilateral_cce} is
    \begin{align*}
        O \Big( (nd)^{\log(\frac{1}{\epsilon})} & + (T_1+ n H)\cdot (\text{poly}(m,d,\frac{1}{\epsilon}) \\ & + HT_2\max_i |A_i| )\Big).
    \end{align*}
\end{remark}

\section{RELATED WORK}

\looseness-1\paragraph{Reinforcement Learning from Human Feedback (RLHF)} RLHF has substantially grown in popularity in the recent years, largely due to LLMs
\citep{ziegler2019fine, nakano2021webgpt, wu2021recursively, ouyang2022training, stiennon2020learning, glaese2022improving, ramamurthy2023is, menick2022teaching, ganguli2022red, bai2022training, gao2023scaling}. Yet RLHF’s utility extends far beyond LLMs, encompassing diverse applications—from game playing \citep{christiano2017deep, warnell2018deep, knox2008tamer, macglashan2017interactive} to robotic control \citep{shin2023benchmarks, brown2019learning}. Our work is related to recent theoretical studies on (MA)RLHF \citep{zhan2023provable, zhu2023principled, zhang2024multi, li2023reinforcement, xiong2023gibbs, nika2024reward}. In particular, we consider data corruption on MARLHF. In the single player setting, \cite{nika2025policy} propose a general data-poisoning framework in RLHF, while \cite{mandal2024corruption} propose robust algorithms trained on $\epsilon$-corrupted data. The latter is the most closely related work to ours. While we share the preference-based model and the data corruption model, our setting is a generalization of the single-agent RLHF setting considered in \citep{mandal2024corruption}. This introduces a new layer of complexity: instead of maximizing value functions over single policies, our goal is to minimize the Nash gap over joint policies. This involves a different style of analysis. Algorithmically, our methods diverge in two key ways. First, instead of relying on zeroth-order oracle calls to estimate gradients, we directly approximate the gradient of the Nash gap with respect to each agent’s strategy via the biased gradient with respect to a Nash policy. This allows us to also maintain the $O(\sqrt{\epsilon})$ bounds on the gap. Second, we incorporate a quasi-polynomial-time subroutine that computes an approximate coarse correlated equilibrium (CCE) of the induced game.

\vspace{-0.1cm}
\looseness-1\paragraph{Corruption-robust Offline Reinforcement Learning (RL)} There has been a substantial body of research on adversarial attacks in (MA)RL \citep{huang2017adversarial, lin2017tactics, wu2023reward, rakhsha2021policy, rangi2022understanding, nika2024corruption, ma2023decentralized, gleave2019adversarial}. Our research relates to a specific type of adversarial attack, namely, data corruption \citep{mei2015using, xiao2015feature, rakhsha2021policy}. Our focus is on designing robust algorithms trained on corrupted data generated via $\epsilon$-corruption model (a.k.a. strong contamination model \citep{diakonikolas2019robust}). In this line of work, \cite{zhang2022corruption} first consider corruption-robust RL via linear Markov decision processes (MDP), which is later extended to linear zero-sum Markov games (MG) \citep{nika2024corruption}. Using a different contamination model, \cite{ye2023corruption} study the problem of corruption-robustness in RL with general function approximation. Our work diverges from the above in that we study strong data corruption in multi-agent reinforcement learning from human feedback, which, due to its dependence on preference data, introduces additional layers of complexity in providing robustness guarantees. 

\looseness-1\paragraph{Offline Markov Games (MG)} Our work also relates to the literature of learning in MGs \citep{tian2021online, vrancx2008decentralized, littman1994markov, littman2001value}. We model our underlying environment from which the data is generated as a linear MG \citep{zhong2022pessimistic}. We are interested in approximating notions of optimal joint policies from corrupted preference data. The primary notion of optimality in MGs is that of the Nash equilibrium (NE) \citep{nash1950equilibrium}. Due to its computational intractability in general-sum MGs, prior work has considered relaxed versions of it such as CCEs \citep{cui2023breaking, zhang2023offline, ma2023decentralized, song2021can}, and designed computationally efficient methods to compute them in the online setting \citep{jin2021v, wang2023breaking, cui2023breaking}. We depart from this line of work and consider the CCE computation problem in the offline setting, where we compute the CCE of each stage game via no-regret methods \citep{daskalakis2021near}.

\section{DISCUSSION}

\looseness-1In this paper, we studied the problem of data corruption in offline MARLHF. We proposed provable-robust algorithms, both under uniform and unilateral coverage assumptions. Finally, we proposed a computationally efficient algorithm that robustly approximates a coarse correlated equilibrium of the underlying Markov game. 

\looseness-1A key technical contribution of our work is a new way to optimize the Nash gap without access to true reward functions or their gradients. Prior single-agent RLHF approaches rely on primal-dual methods or unbiased gradients, which do not extend to general-sum Markov games due to strategic coupling. We instead introduce \textit{a biased but tractable gradient surrogate}: by leveraging the linear structure of the underlying Markov game, we approximate the gradient at a Nash equilibrium using feature expectations induced by behavior policies. Under unilateral coverage, these policies capture the occupancy measures of the equilibrium and its unilateral deviations, so their feature differences act as a proxy for the true gradient direction. Despite the bias, this estimate is accurate enough to guide projected gradient ascent over the reward confidence set, yielding $O(\sqrt{\epsilon})$ robustness guarantees. This idea—leveraging equilibrium structure to construct usable gradient surrogates from corrupted offline preference data—appears to be new and may be of independent interest.

\looseness-1Several interesting directions are worth pursuing. First, it is not clear how one can formulate the data corruption problem in MARLHF with general function approximation, and then how to design robust algorithms in that setting. Second, it would be interesting to address the open question of whether the $O(\sqrt{\epsilon})$ bound under non-uniform coverage is tight. Finally, implementing the proposed algorithms and experimentally testing them on MARL environments is another exciting future direction. 

\subsection*{Acknowledgements}
The work of Andi Nika and Goran Radanovic was funded by the Deutsche Forschungsgemeinschaft (DFG, German Research Foundation) – project number 467367360.

\doparttoc
\faketableofcontents

\bibliographystyle{plainnat}
\bibliography{bibliography}

@article{zhang2024multi,
  title={Multi-{A}gent {R}einforcement {L}earning from {H}uman {F}eedback: Data {C}overage and {A}lgorithmic {T}echniques},
  author={Zhang, Natalia and Wang, Xinqi and Cui, Qiwen and Zhou, Runlong and Kakade, Sham M and Du, Simon S},
  journal={arXiv preprint arXiv:2409.00717},
  year={2024}
}

@inproceedings{mandal2024corruption,
  title={Corruption {R}obust {O}ffline {R}einforcement {L}earning with {H}uman {F}eedback},
  author={Mandal, Debmalya and Nika, Andi and Kamalaruban, Parameswaran and Singla, Adish and Radanovi{\'c}, Goran},
  booktitle={{AISTATS}},
  year={2025}
}

@inproceedings{zhong2022pessimistic,
  title={Pessimistic {M}inimax {V}alue {I}teration: Provably {E}fficient {E}quilibrium {L}earning from {O}ffline {D}atasets},
  author={Zhong, Han and Xiong, Wei and Tan, Jiyuan and Wang, Liwei and Zhang, Tong and Wang, Zhaoran and Yang, Zhuoran},
  booktitle={{ICML}},
  Xpages={27117--27142},
  year={2022},
  Xorganization={PMLR}
}

@article{wang2023exploitability,
  title={On the {E}xploitability of {R}einforcement {L}earning with {H}uman {F}eedback for {L}arge {L}anguage {M}odels},
  author={Wang, Jiongxiao and Wu, Junlin and Chen, Muhao and Vorobeychik, Yevgeniy and Xiao, Chaowei},
  journal={CoRR},
  volume={abs/2311.09641},
  year={2023}
}

@article{vrancx2008decentralized,
  title={Decentralized {L}earning in {M}arkov {G}ames},
  author={Vrancx, Peter and Verbeeck, Katja and Now{\'e}, Ann},
  journal={IEEE Transactions on Systems, Man, and Cybernetics, Part B (Cybernetics)},
  volume={38},
  Xnumber={4},
  Xpages={976--981},
  year={2008},
  Xpublisher={IEEE}
}

@inproceedings{foster2023hardness,
  title={Hardness of {I}ndependent {L}earning and {S}parse {E}quilibrium {C}omputation in {M}arkov {G}ames},
  author={Foster, Dylan J and Golowich, Noah and Kakade, Sham M},
  booktitle={{ICML}},
  Xpages={10188--10221},
  year={2023},
  Xorganization={PMLR}
}

@inproceedings{littman1994markov,
  author       = {Michael L. Littman},
  Xeditor       = {William W. Cohen and
                  Haym Hirsh},
  title        = {Markov {G}ames as a {F}ramework for {M}ulti-Agent {R}einforcement {L}earning},
  booktitle    = {ICML},
  Xpages        = {157--163},
  Xpublisher    = {Morgan Kaufmann},
  year         = {1994},
  Xurl          = {https://doi.org/10.1016/b978-1-55860-335-6.50027-1},
  Xdoi          = {10.1016/b978-1-55860-335-6.50027-1},
  Xtimestamp    = {Mon, 24 Jun 2019 15:47:45 +0200},
  Xbiburl       = {https://dblp.org/rec/conf/icml/Littman94.bib},
  Xbibsource    = {dblp computer science bibliography, https://dblp.org}
}

@article{nash1950equilibrium,
  title={Equilibrium {P}oints in n-person {G}ames},
  author={Nash Jr, John F},
  journal={Proceedings of the {N}ational {A}cademy of {S}ciences},
  Xvolume={36},
  Xnumber={1},
  Xpages={48--49},
  year={1950},
  Xpublisher={national academy of sciences}
}

@article{xiong2023gibbs,
  title={Gibbs {S}ampling from {H}uman {F}eedback: A {P}rovable {KL}-constrained {F}ramework for {RLHF}},
  author={Xiong, Wei and Dong, Hanze and Ye, Chenlu and Zhong, Han and Jiang, Nan and Zhang, Tong},
  journal={CoRR},
  year={2023}
}

@article{li2023reinforcement,
  title={Reinforcement {L}earning with {H}uman {F}eedback: Learning {D}ynamic {C}hoices via {P}essimism},
  author={Li, Zihao and Yang, Zhuoran and Wang, Mengdi},
  journal={arXiv preprint arXiv:2305.18438},
  year={2023}
}

@inproceedings{tian2021online,
  title={Online {L}earning in {U}nknown {M}arkov {G}ames},
  author={Tian, Yi and Wang, Yuanhao and Yu, Tiancheng and Sra, Suvrit},
  booktitle={ICML},
  Xpages={10279--10288},
  year={2021},
  Xorganization={PMLR}
}

@article{littman2001value,
  title={Value-function {R}einforcement {L}earning in {M}arkov {G}ames},
  author={Littman, Michael L},
  journal={Cognitive Systems Research},
  Xvolume={2},
  Xnumber={1},
  Xpages={55--66},
  year={2001},
  Xpublisher={Elsevier}
}

@article{shi2023badgpt,
  title={Badgpt: Exploring {S}ecurity {V}ulnerabilities of {C}hat{GPT} via {B}ackdoor {A}ttacks to {I}nstruct{GPT}},
  author={Shi, Jiawen and Liu, Yixin and Zhou, Pan and Sun, Lichao},
  journal={CoRR},
  volume={abs/2304.12298},
  year={2023}
}

@article{huang2017adversarial,
  title={{A}dversarial {A}ttacks on {N}eural {N}etwork {P}olicies},
  author={Huang, Sandy and Papernot, Nicolas and Goodfellow, Ian and Duan, Yan and Abbeel, Pieter},
  Xjournal={arXiv preprint arXiv:1702.02284},
  journal   = {CoRR},
  volume    = {abs/1702.02284},
  year={2017}
}

@inproceedings{lin2017tactics,
  title={Tactics of {A}dversarial {A}ttack on {D}eep {R}einforcement {L}earning {A}gents},
  author={Lin, Yen-Chen and Hong, Zhang-Wei and Liao, Yuan-Hong and Shih, Meng-Li and Liu, Ming-Yu and Sun, Min},
  booktitle={IJCAI},
  Xpages={3756--3762},
  year={2017}
}

@inproceedings{wu2023reward,
  author       = {Young Wu and
                  Jeremy McMahan and
                  Xiaojin Zhu and
                  Qiaomin Xie},
  Xeditor       = {Brian Williams and
                  Yiling Chen and
                  Jennifer Neville},
  title        = {Reward {P}oisoning {A}ttacks on {O}ffline {M}ulti-Agent {R}einforcement {L}earning},
  booktitle    = {{AAAI}},
  Xpages        = {10426--10434},
  Xpublisher    = {{AAAI} Press},
  year         = {2023},
  Xurl          = {https://doi.org/10.1609/aaai.v37i9.26240},
  Xdoi          = {10.1609/AAAI.V37I9.26240},
  Xtimestamp    = {Mon, 04 Sep 2023 16:50:28 +0200},
  Xbiburl       = {https://dblp.org/rec/conf/aaai/WuM0X23.bib},
  Xbibsource    = {dblp computer science bibliography, https://dblp.org}
}

@inproceedings{gleave2019adversarial,
  author       = {Adam Gleave and
                  Michael Dennis and
                  Cody Wild and
                  Neel Kant and
                  Sergey Levine and
                  Stuart Russell},
  title        = {{A}dversarial {P}olicies: {A}ttacking {D}eep {R}einforcement {L}earning},
  booktitle    = {ICLR},
  Xpublisher    = {OpenReview.net},
  year         = {2020},
  Xurl          = {https://openreview.net/forum?id=HJgEMpVFwB},
  Xtimestamp    = {Wed, 20 Apr 2022 13:29:52 +0200},
  Xbiburl       = {https://dblp.org/rec/conf/iclr/GleaveDWKLR20.bib},
  Xbibsource    = {dblp computer science bibliography, https://dblp.org}
}

@article{rakhsha2021policy,
  title={Policy {T}eaching in {R}einforcement {L}earning via {E}nvironment {P}oisoning {A}ttacks},
  author={Rakhsha, Amin and Radanovic, Goran and Devidze, Rati and Zhu, Xiaojin and Singla, Adish},
  journal={JMLR},
  Xvolume={22},
  Xnumber={210},
  Xpages={1--45},
  year={2021}
}

@article{rangi2022understanding,
  title={Understanding the {L}imits of {P}oisoning {A}ttacks in {E}pisodic {R}einforcement {L}earning},
  author={Rangi, Anshuka and Xu, Haifeng and Tran-Thanh, Long and Franceschetti, Massimo},
  journal={CoRR},
  volume={abs/2208.13663},
  year={2022}
}

@article{ma2023decentralized,
  title={Decentralized {R}obust {V}-{L}earning for {S}olving {M}arkov {G}ames with {M}odel {U}ncertainty},
  author={Ma, Shaocong and Chen, Ziyi and Zou, Shaofeng and Zhou, Yi},
  journal={{JMLR}},
  Xvolume={24},
  Xnumber={371},
  Xpages={1--40},
  year={2023}
}

@article{diakonikolas2019robust,
  title={Robust estimators in high-dimensions without the computational intractability},
  author={Diakonikolas, Ilias and Kamath, Gautam and Kane, Daniel and Li, Jerry and Moitra, Ankur and Stewart, Alistair},
  journal={SIAM Journal on Computing},
  volume={48},
  number={2},
  pages={742--864},
  year={2019},
  publisher={SIAM}
}

@inproceedings{nika2024reward,
  title={Reward {M}odel {L}earning vs. {D}irect {P}olicy {O}ptimization: A {C}omparative {A}nalysis of {L}earning from {H}uman {P}references},
  author={Nika, Andi and Mandal, Debmalya and Kamalaruban, Parameswaran and Tzannetos, Georgios and Radanovi{\'c}, Goran and Singla, Adish},
  booktitle={{ICML}},
  year={2024}
}

@inproceedings{mei2015using,
  title={Using {M}achine {T}eaching to {I}dentify {O}ptimal {T}raining-set {A}ttacks on {M}achine {L}earners},
  author={Mei, Shike and Zhu, Xiaojin},
  booktitle={AAAI},
  year={2015}
}

@inproceedings{xiao2015feature,
  title={Is {F}eature {S}election {S}ecure {A}gainst {T}raining {D}ata {P}oisoning?},
  author={Xiao, Huang and Biggio, Battista and Brown, Gavin and Fumera, Giorgio and Eckert, Claudia and Roli, Fabio},
  booktitle={ICML},
  Xpages={1689--1698},
  year={2015},
  Xorganization={PMLR}
}

@article{song2021can,
  title={When {C}an {W}e {L}earn {G}eneral-sum {M}arkov {G}ames with a {L}arge {N}umber of {P}layers {S}ample-{E}fficiently?},
  author={Song, Ziang and Mei, Song and Bai, Yu},
  journal={arXiv preprint arXiv:2110.04184},
  year={2021}
}

@inproceedings{rando2023universal,
  title={{U}niversal {J}ailbreak {B}ackdoors from {P}oisoned {H}uman {F}eedback},
  author={Rando, Javier and Tram{\`e}r, Florian},
  Xbooktitle={International Conference on Learning Representations},
  booktitle={ICLR},
  year={2023}
}

@article{baumgartner2024best,
  title={{B}est-of-{V}enom: {A}ttacking {RLHF} by {I}njecting {P}oisoned {P}reference {D}ata},
  author={Baumg{\"a}rtner, Tim and Gao, Yang and Alon, Dana and Metzler, Donald},
  journal={CoRR},
  volume={abs/2404.05530},
  year={2024}
}

@inproceedings{nika2025policy,
  title={Policy {T}eaching via {D}ata Poisoning in {L}earning from {H}uman {P}references},
  author={Nika, Andi and N{\"o}ther, Jonathan and Mandal, Debmalya and Kamalaruban, Parameswaran and Singla, Adish and Radanovi{\'c}, Goran},
  booktitle={{AISTATS}},
  year={2025}
}

@inproceedings{zhang2023offline,
  title={Offline {L}earning in {M}arkov {G}ames with {G}eneral {F}unction {A}pproximation},
  author={Zhang, Yuheng and Bai, Yu and Jiang, Nan},
  booktitle={{ICML}},
  Xpages={40804--40829},
  year={2023},
  Xorganization={PMLR}
}

@article{awasthi2022trimmed,
  title={Trimmed {M}aximum {L}ikelihood {E}stimation for {R}obust {G}eneralized {L}inear {M}odel},
  author={Awasthi, Pranjal and Das, Abhimanyu and Kong, Weihao and Sen, Rajat},
  journal={{NeurIPS}},
  Xvolume={35},
  Xpages={862--873},
  year={2022}
}

@inproceedings{ye2023corruption,
  title={Corruption-robust {O}ffline {R}einforcement {L}earning with {G}eneral {F}unction {A}pproximation},
  author={Ye, Chenlu and Yang, Rui and Gu, Quanquan and Zhang, Tong},
  booktitle={{NeurIPS}},
  Xvolume={36},
  Xpages={36208--36221},
  year={2023}
}

@inproceedings{daskalakis2021near,
  title={Near-optimal {N}o-regret {L}earning in {G}eneral {G}ames},
  author={Daskalakis, Constantinos and Fishelson, Maxwell and Golowich, Noah},
  booktitle={{NeurIPS}},
  Xvolume={34},
  Xpages={27604--27616},
  year={2021}
}

@inproceedings{wang2023breaking,
  title={Breaking the {C}urse of {M}ultiagency: Provably {E}fficient {D}ecentralized {M}ulti-agent {{RL}} with {F}unction {A}pproximation},
  author={Wang, Yuanhao and Liu, Qinghua and Bai, Yu and Jin, Chi},
  booktitle={{COLT}},
  Xpages={2793--2848},
  year={2023},
  Xorganization={PMLR}
}

@article{jin2021v,
  title={V-Learning: A {S}imple, {E}fficient, {D}ecentralized {A}lgorithm for {M}ultiagent {RL}},
  author={Jin, Chi and Liu, Qinghua and Wang, Yuanhao and Yu, Tiancheng},
  journal={arXiv preprint arXiv:2110.14555},
  year={2021}
}

@inproceedings{cui2023breaking,
  title={Breaking the {C}urse of {M}ultiagents in a {L}arge {S}tate {S}pace: {RL} in {M}arkov {G}ames with {I}ndependent {L}inear {F}unction {A}pproximation},
  author={Cui, Qiwen and Zhang, Kaiqing and Du, Simon},
  booktitle={{COLT}},
  Xpages={2651--2652},
  year={2023},
  Xorganization={PMLR}
}

@inproceedings{diakonikolas2020outlier,
  title={Outlier {R}obust {M}ean {E}stimation with {S}ubgaussian {R}ates via {S}tability},
  author={Diakonikolas, Ilias and Kane, Daniel M and Pensia, Ankit},
  booktitle={{NeurIPS}},
  Xvolume={33},
  Xpages={1830--1840},
  year={2020}
}

@inproceedings{nika2024corruption,
  title={Corruption-{R}obust {O}ffline {T}wo-player {Z}ero-sum {M}arkov {G}ames},
  author={Nika, Andi and Mandal, Debmalya and Singla, Adish and Radanovic, Goran},
  booktitle={{AISTATS}},
  Xpages={1243--1251},
  year={2024},
  Xorganization={PMLR}
}

@inproceedings{zanette2021cautiously,
  title={Cautiously {O}ptimistic {P}olicy {O}ptimization and {E}xploration with {L}inear {F}unction {A}pproximation},
  author={Zanette, Andrea and Cheng, Ching-An and Agarwal, Alekh},
  booktitle={{COLT}},
  Xpages={4473--4525},
  year={2021},
  Xorganization={PMLR}
}

@article{cui2022provably,
  title={Provably {E}fficient {O}ffline {M}ulti-agent {R}einforcement {L}earning via {S}trategy-wise {B}onus},
  author={Cui, Qiwen and Du, Simon S},
  journal={{NeurIPS}},
  Xvolume={35},
  Xpages={11739--11751},
  year={2022}
}

@article{ziegler2019fine,
  title={Fine-tuning {L}anguage {M}odels from {H}uman {P}references},
  author={Ziegler, Daniel M and Stiennon, Nisan and Wu, Jeffrey and Brown, Tom B and Radford, Alec and Amodei, Dario and Christiano, Paul and Irving, Geoffrey},
  journal={CoRR},
  volume={abs/1909.08593},
  year={2019}
}

@inproceedings{zhang2022corruption,
  author       = {Xuezhou Zhang and
                  Yiding Chen and
                  Xiaojin Zhu and
                  Wen Sun},
  Xeditor       = {Gustau Camps{-}Valls and
                  Francisco J. R. Ruiz and
                  Isabel Valera},
  title        = {Corruption-robust {O}ffline {R}einforcement {L}earning},
  booktitle    = {{AISTATS}},
  Xseries       = {Proceedings of Machine Learning Research},
  Xvolume       = {151},
  Xpages        = {5757--5773},
  Xpublisher    = {{PMLR}},
  year         = {2022},
  Xurl          = {https://proceedings.mlr.press/v151/zhang22c.html},
  Xtimestamp    = {Sat, 30 Sep 2023 09:34:08 +0200},
  Xbiburl       = {https://dblp.org/rec/conf/aistats/ZhangC0S22.bib},
  Xbibsource    = {dblp computer science bibliography, https://dblp.org}
}

@article{bradley1952rank,
  title={{R}ank {A}nalysis of {I}ncomplete {B}lock {D}esigns: I. {T}he {M}ethod of {P}aired {C}omparisons},
  author={Bradley, Ralph Allan and Terry, Milton E},
  journal={Biometrika},
  volume={39},
  number={3/4},
  Xpages={324--345},
  year={1952},
  Xpublisher={JSTOR}
}

@inproceedings{zhu2023principled,
  author       = {Banghua Zhu and
                  Michael I. Jordan and
                  Jiantao Jiao},
  Xeditor       = {Andreas Krause and
                  Emma Brunskill and
                  Kyunghyun Cho and
                  Barbara Engelhardt and
                  Sivan Sabato and
                  Jonathan Scarlett},
  title        = {Principled {R}einforcement {L}earning with {H}uman {F}eedback from {P}airwise
                  or {K}-wise {C}omparisons},
  booktitle    = {{ICML}},
  Xseries       = {Proceedings of Machine Learning Research},
  Xvolume       = {202},
  Xpages        = {43037--43067},
  Xpublisher    = {{PMLR}},
  year         = {2023},
  Xurl          = {https://proceedings.mlr.press/v202/zhu23f.html},
  Xtimestamp    = {Mon, 28 Aug 2023 17:23:09 +0200},
  Xbiburl       = {https://dblp.org/rec/conf/icml/ZhuJJ23.bib},
  Xbibsource    = {dblp computer science bibliography, https://dblp.org}
}

@inproceedings{stiennon2020learning,
  author       = {Stiennon, Nisan and Ouyang, Long and Wu, Jeffrey and Ziegler, Daniel and Lowe, Ryan and Voss, Chelsea and Radford, Alec and Amodei, Dario and Christiano, Paul F},
  Xeditor       = {Hugo Larochelle and
                  Marc'Aurelio Ranzato and
                  Raia Hadsell and
                  Maria{-}Florina Balcan and
                  Hsuan{-}Tien Lin},
  title        = {Learning to {S}ummarize with {H}uman {F}eedback},
  booktitle    = {{NeurIPS}},
  year         = {2020},
  Xurl          = {https://proceedings.neurips.cc/paper/2020/hash/1f89885d556929e98d3ef9b86448f951-Abstract.html},
  Xtimestamp    = {Tue, 19 Jan 2021 15:57:45 +0100},
  Xbibsource    = {dblp computer science bibliography, https://dblp.org}
}

@article{wu2021recursively,
  author       = {Wu, Jeff and Ouyang, Long and Ziegler, Daniel M and Stiennon, Nisan and Lowe, Ryan and Leike, Jan and Christiano, Paul},
  title        = {Recursively {S}ummarizing {B}ooks with {H}uman {F}eedback},
  journal      = {CoRR},
  volume       = {abs/2109.10862},
  year         = {2021},
  Xurl          = {https://arxiv.org/abs/2109.10862},
  Xeprinttype    = {arXiv},
  Xeprint       = {2109.10862},
  Xtimestamp    = {Mon, 04 Oct 2021 08:57:17 +0200},
  Xbiburl       = {https://dblp.org/rec/journals/corr/abs-2109-10862.bib},
  Xbibsource    = {dblp computer science bibliography, https://dblp.org}
}

@article{nakano2021webgpt,
  Xauthor       = {Nakano, Reiichiro and Hilton, Jacob and Balaji, Suchir and Wu, Jeff and Ouyang, Long and Kim, Christina and Hesse, Christopher and Jain, Shantanu and Kosaraju, Vineet and Saunders, William and others},
  author       = {Nakano, Reiichiro and others},
  title        = {WebGPT: Browser-assisted {Q}uestion-answering with {H}uman {F}eedback},
  journal      = {CoRR},
  volume       = {abs/2112.09332},
  year         = {2021},
  Xurl          = {https://arxiv.org/abs/2112.09332},
  Xeprinttype    = {arXiv},
  Xeprint       = {2112.09332},
  Xtimestamp    = {Mon, 03 Jan 2022 15:45:35 +0100},
  Xbiburl       = {https://dblp.org/rec/journals/corr/abs-2112-09332.bib},
  Xbibsource    = {dblp computer science bibliography, https://dblp.org}
}

@inproceedings{ouyang2022training,
  Xauthor       = {Ouyang, Long and Wu, Jeffrey and Jiang, Xu and Almeida, Diogo and Wainwright, Carroll and Mishkin, Pamela and Zhang, Chong and Agarwal, Sandhini and Slama, Katarina and Ray, Alex and others},
  author       = {Ouyang, Long and others},
  Xeditor       = {Sanmi Koyejo and
                  S. Mohamed and
                  A. Agarwal and
                  Danielle Belgrave and
                  K. Cho and
                  A. Oh},
  title        = {Training {L}anguage {M}odels to {F}ollow {I}nstructions with {H}uman {F}eedback},
  booktitle    = {{NeurIPS}},
  year         = {2022},
  Xurl          = {http://papers.nips.cc/paper\_files/paper/2022/hash/b1efde53be364a73914f58805a001731-Abstract-Conference.html},
  Xtimestamp    = {Mon, 08 Jan 2024 16:31:36 +0100},
  Xbiburl       = {https://dblp.org/rec/conf/nips/Ouyang0JAWMZASR22.bib},
  Xbibsource    = {dblp computer science bibliography, https://dblp.org}
}

@article{menick2022teaching,
  Xauthor       = {Menick, Jacob and Trebacz, Maja and Mikulik, Vladimir and Aslanides, John and Song, Francis and Chadwick, Martin and Glaese, Mia and Young, Susannah and Campbell-Gillingham, Lucy and Irving, Geoffrey and others},
  author       = {Menick, Jacob and others},
  title        = {Teaching {L}anguage {M}odels to {S}upport {A}nswers with {V}erified {Q}uotes},
  journal      = {CoRR},
  volume       = {abs/2203.11147},
  year         = {2022},
  Xurl          = {https://doi.org/10.48550/arXiv.2203.11147},
  Xdoi          = {10.48550/ARXIV.2203.11147},
  Xeprinttype    = {arXiv},
  Xeprint       = {2203.11147},
  Xtimestamp    = {Fri, 05 May 2023 15:54:56 +0200},
  Xbiburl       = {https://dblp.org/rec/journals/corr/abs-2203-11147.bib},
  Xbibsource    = {dblp computer science bibliography, https://dblp.org}
}

@article{glaese2022improving,
  Xauthor       = {Glaese, Amelia and McAleese, Nat and Tr{k{e}}bacz, Maja and Aslanides, John and Firoiu, Vlad and Ewalds, Timo and Rauh, Maribeth and Weidinger, Laura and Chadwick, Martin and Thacker, Phoebe and others},
  author       = {Glaese, Amelia and others},
  title        = {Improving {A}lignment of {D}ialogue {A}gents via {T}argeted {H}uman {J}udgements},
  journal      = {CoRR},
  volume       = {abs/2209.14375},
  year         = {2022},
  Xurl          = {https://doi.org/10.48550/arXiv.2209.14375},
  Xdoi          = {10.48550/ARXIV.2209.14375},
  Xeprinttype    = {arXiv},
  Xeprint       = {2209.14375},
  Xtimestamp    = {Fri, 05 May 2023 15:54:56 +0200},
  Xbiburl       = {https://dblp.org/rec/journals/corr/abs-2209-14375.bib},
  Xbibsource    = {dblp computer science bibliography, https://dblp.org}
}

@inproceedings{gao2023scaling,
  author       = {Leo Gao and
                  John Schulman and
                  Jacob Hilton},
  Xeditor       = {Andreas Krause and
                  Emma Brunskill and
                  Kyunghyun Cho and
                  Barbara Engelhardt and
                  Sivan Sabato and
                  Jonathan Scarlett},
  title        = {Scaling {L}aws for {R}eward {M}odel {O}veroptimization},
  booktitle    = {{ICML}},
  Xseries       = {Proceedings of Machine Learning Research},
  Xvolume       = {202},
  Xpages        = {10835--10866},
  Xpublisher    = {{PMLR}},
  year         = {2023},
  Xurl          = {https://proceedings.mlr.press/v202/gao23h.html},
  Xtimestamp    = {Mon, 28 Aug 2023 17:23:08 +0200},
  Xbiburl       = {https://dblp.org/rec/conf/icml/GaoSH23.bib},
  Xbibsource    = {dblp computer science bibliography, https://dblp.org}
}

@article{bai2022training,
  Xauthor       = {Yuntao Bai and
                  Andy Jones and
                  Kamal Ndousse and
                  Amanda Askell and
                  Anna Chen and
                  Nova DasSarma and
                  Dawn Drain and
                  Stanislav Fort and
                  Deep Ganguli and
                  Tom Henighan and
                  Nicholas Joseph and
                  Saurav Kadavath and
                  Jackson Kernion and
                  Tom Conerly and
                  Sheer El Showk and
                  Nelson Elhage and
                  Zac Hatfield{-}Dodds and
                  Danny Hernandez and
                  Tristan Hume and
                  Scott Johnston and
                  Shauna Kravec and
                  Liane Lovitt and
                  Neel Nanda and
                  Catherine Olsson and
                  Dario Amodei and
                  Tom B. Brown and
                  Jack Clark and
                  Sam McCandlish and
                  Chris Olah and
                  Benjamin Mann and
                  Jared Kaplan},
  author={Yuntao Bai and others},
  title        = {Training a {H}elpful and {H}armless {A}ssistant with {R}einforcement {L}earning from {H}uman {F}eedback},
  journal      = {CoRR},
  volume       = {abs/2204.05862},
  year         = {2022},
  Xurl          = {https://doi.org/10.48550/arXiv.2204.05862},
  Xdoi          = {10.48550/ARXIV.2204.05862},
  Xeprinttype    = {arXiv},
  Xeprint       = {2204.05862},
  Xtimestamp    = {Tue, 19 Apr 2022 17:11:58 +0200},
  Xbiburl       = {https://dblp.org/rec/journals/corr/abs-2204-05862.bib},
  Xbibsource    = {dblp computer science bibliography, https://dblp.org}
}

@article{ganguli2022red,
  Xauthor       = {Ganguli, D and Lovitt, L and Kernion, J and Askell, A and Bai, Y and Kadavath, S and Mann, B and Perez, E and Schiefer, N and Ndousse, K and others},
  author       = {Ganguli, D and others},
  title        = {Red {T}eaming {L}anguage {M}odels to {R}educe {H}arms: Methods, {S}caling {B}ehaviors,
                  and {L}essons {L}earned},
  journal      = {CoRR},
  volume       = {abs/2209.07858},
  year         = {2022},
 xurl          = {https://doi.org/10.48550/arXiv.2209.07858},
  Xdoi          = {10.48550/ARXIV.2209.07858},
  Xeprinttype    = {arXiv},
  Xeprint       = {2209.07858},
  Xtimestamp    = {Tue, 27 Sep 2022 16:29:43 +0200},
  Xbiburl       = {https://dblp.org/rec/journals/corr/abs-2209-07858.bib},
  Xbibsource    = {dblp computer science bibliography, https://dblp.org}
}

@inproceedings{ramamurthy2023is,
  author       = {Ramamurthy, Rajkumar and Ammanabrolu, Prithviraj and Brantley, Kiant{\'e} and Hessel, Jack and Sifa, Rafet and Bauckhage, Christian and Hajishirzi, Hannaneh and Choi, Yejin},
  title        = {Is {R}einforcement {L}earning (not) for {N}atural {L}anguage {P}rocessing: Benchmarks, {B}aselines, and {B}uilding {B}locks for {N}atural {L}anguage {P}olicy {O}ptimization},
  booktitle    = {{ICLR}},
  Xpublisher    = {OpenReview.net},
  year         = {2023}
}

@inproceedings{brown2019learning,
  author       = {Daniel S. Brown and
                  Wonjoon Goo and
                  Prabhat Nagarajan and
                  Scott Niekum},
  Xeditor       = {Kamalika Chaudhuri and
                  Ruslan Salakhutdinov},
  title        = {Extrapolating {B}eyond {S}uboptimal {D}emonstrations via {I}nverse {R}einforcement {L}earning from {O}bservations},
  booktitle    = {{ICML}},
  Xseries       = {Proceedings of Machine Learning Research},
  year         = {2019}
}

@article{shin2023benchmarks,
  author       = {Daniel Shin and
                  Anca D. Dragan and
                  Daniel S. Brown},
  title        = {Benchmarks and {A}lgorithms for {O}ffline {P}reference-{B}ased {R}eward {L}earning},
  Xjournal      = {Transactions of Machine Learning Research},
  journal      = {Transactions of Machine Learning Research},
  Xvolume       = {2023},
  year         = {2023},
  Xurl          = {https://openreview.net/forum?id=TGuXXlbKsn},
  Xtimestamp    = {Thu, 18 May 2023 14:15:45 +0200},
  Xbiburl       = {https://dblp.org/rec/journals/tmlr/ShinDB23.bib},
 Xbibsource    = {dblp computer science bibliography, https://dblp.org}
}

@inproceedings{knox2008tamer,
  title={Tamer: Training an {A}gent {M}anually via {E}valuative {R}einforcement},
  author={Knox, W Bradley and Stone, Peter},
  booktitle={{ICDL}},
  Xpages={292--297},
  year={2008},
  Xorganization={IEEE}
}

@inproceedings{macglashan2017interactive,
  author       = {MacGlashan, James and Ho, Mark K and Loftin, Robert and Peng, Bei and Wang, Guan and Roberts, David L and Taylor, Matthew E and Littman, Michael L},
  Xeditor       = {Doina Precup and
                  Yee Whye Teh},
  title        = {Interactive {L}earning from {P}olicy-Dependent {H}uman {F}eedback},
  booktitle    = {{ICML}},
  Xseries       = {Proceedings of Machine Learning Research},
  Xvolume       = {70},
  Xpages        = {2285--2294},
  Xpublisher    = {{PMLR}},
  year         = {2017},
  Xurl          = {http://proceedings.mlr.press/v70/macglashan17a.html},
  Xtimestamp    = {Sun, 18 Dec 2022 19:02:44 +0100},
  Xbiburl       = {https://dblp.org/rec/conf/icml/MacGlashanHLPWR17.bib},
  Xbibsource    = {dblp computer science bibliography, https://dblp.org}
}

@inproceedings{christiano2017deep,
  author       = {Christiano, Paul F and Leike, Jan and Brown, Tom and Martic, Miljan and Legg, Shane and Amodei, Dario},
  title        = {Deep {R}einforcement {L}earning from {H}uman {P}references},
  booktitle    = {{NeurIPS}},
  Xpages        = {4299--4307},
  year         = {2017}
}

@inproceedings{warnell2018deep,
  author       = {Warnell, Garrett and Waytowich, Nicholas and Lawhern, Vernon and Stone, Peter},
  title        = {Deep {TAMER:} {I}nteractive {A}gent {S}haping in {H}igh-Dimensional {S}tate {S}paces},
  booktitle    = {{AAAI}},
  Xpages        = {1545--1554},
  Xpublisher    = {{AAAI} Press},
  year         = {2018},
  Xurl          = {https://doi.org/10.1609/aaai.v32i1.11485},
  Xdoi          = {10.1609/AAAI.V32I1.11485},
  Xtimestamp    = {Mon, 04 Sep 2023 16:50:27 +0200},
  Xbiburl       = {https://dblp.org/rec/conf/aaai/WarnellWLS18.bib},
  Xbibsource    = {dblp computer science bibliography, https://dblp.org}
}

@article{zhan2023provable,
  title={Provable {O}ffline {R}einforcement {L}earning with {H}uman {F}eedback},
  author={Zhan, Wenhao and Uehara, Masatoshi and Kallus, Nathan and Lee, Jason D and Sun, Wen},
  journal={CoRR},
  volume={abs/:2305.14816},
  year={2023}
}

@inproceedings{diakonikolas2025sos,
  title={Sos {C}ertifiability of {S}ubgaussian {D}istributions and its {A}lgorithmic {A}pplications},
  author={Diakonikolas, Ilias and Hopkins, Samuel B and Pensia, Ankit and Tiegel, Stefan},
  booktitle={Proceedings of the 57th Annual ACM Symposium on Theory of Computing},
  pages={1689--1700},
  year={2025}
}

@article{dong2019quantum,
  title={Quantum {E}ntropy {S}coring for {F}ast {R}obust {M}ean {E}stimation and {I}mproved {O}utlier {D}etection},
  author={Dong, Yihe and Hopkins, Samuel and Li, Jerry},
  journal={Advances in Neural Information Processing Systems},
  volume={32},
  year={2019}
}


\newpage
\onecolumn
\appendix
\begin{center}
    \LARGE \textbf{Corruption-robust Offline Multi-agent Reinforcement Learning from Human Feedback}
\end{center}
\addcontentsline{toc}{section}{Appendix}
\part{Appendix}
\parttoc

\section{Proof of Theorem \ref{thm:uniform_coverage_bounds}}\label{appendix:1}

In this section, we provide the full proof for Theorem \ref{thm:uniform_coverage_bounds}.

\begin{lemma}\label{lem:bound_on_reward_for_full_sample}
    [Lemma A.1 of \cite{mandal2024corruption}] Let Assumption \ref{asmp:uniform_coverage} hold with $\xi_R \geq 5\epsilon$, for some positive constant $c$, and let 
    \begin{align*}
        m \geq \Omega\left( \frac{H^{3/2}}{\epsilon^2}\left(d+\log(n/\delta)\right) \right)~.
    \end{align*}
    Then, for every $i$, Algorithm \ref{alg:regularized_mle} returns an estimator $\widetilde{\theta}_i$ such that, with probability at least $1-\delta/2$, satisfies
    \begin{align*}
        \norm{\widetilde{\theta}_i-\theta^*_i}_2 \leq O\left(\frac{\epsilon}{\xi_R}\exp\left(H+ \sqrt{\log(n/2\delta\epsilon)}\right) \right)~,
    \end{align*}
    where $\widetilde{\theta}_i$ denotes the $Hd$-dimensional vector with sub-vectors $\widetilde{\theta}_{i,h}$ for every $h$.
\end{lemma}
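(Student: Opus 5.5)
This statement is essentially Lemma A.1 of \cite{mandal2024corruption}, so the plan is to reduce to it: for each agent $i$ separately, the MARLHF data is a single-agent RLHF preference dataset, and a union bound over $i$ handles the $n$ agents. Fix $i$ and restrict attention to the $i$-th coordinate $o_{\cdot,i}$ of each label vector. In the clean data we then have pairs $x_j=\phi(\widetilde\tau_j)-\phi(\widetilde\tau'_j)\in\mathbb{R}^{Hd}$, with $\norm{x_j}_2\le 2H$. The labels follow a logistic model $\mathbb{P}(o=1\mid x)=\sigma(x^\top\theta^*_i)$; here $\theta^*_i$ is the concatenation of the $\theta^*_{i,h}$, and the linearity of $R_{i,h}$ gives $\sum_h R_{i,h}(s_h,\bs a_h)=\theta_i^{*\top}\phi(\tau)$ after stacking features per step. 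The $\epsilon$-corruption of $D$ induces at most an $\epsilon$-corruption of this projected dataset $\{(x_j,o_{j,i})\}$, since modifying a tuple modifies at most one projected sample. Hence \texttt{TrimmedMLE} applied to it is exactly the setting of \cite{awasthi2022trimmed} and \cite{mandal2024corruption}.

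Next I would verify the hypotheses of the cited guarantee. First, the population covariance $\mathbb{E}[xx^\top]=\Sigma^-_{\bs\mu,\bs\mu_\textnormal{ref}}\succeq \xi_R H\cdot I$ by Assumption \ref{asmp:uniform_coverage}, and $\xi_R\ge 5\epsilon$ ensures that trimming an $\epsilon$-fraction cannot destroy strong convexity of the empirical log-loss. Second, the sample size $m\gtrsim (H^{3/2}/\epsilon^2)(d+\log(n/\delta))$ makes the empirical covariance of any $(1-\epsilon)$-subset concentrate within a constant factor of its population value; this is a matrix concentration step over the bounded vectors $x_j$ with a net over subsets, as in the cited lemma. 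Third, the logistic link has curvature $\sigma'(x^\top\theta)\ge \exp(-|x^\top\theta|)/4$, and $|x^\top\theta|\le 2H\sqrt{d}$ up to the trimming threshold. This is the source of the $\exp(H+\sqrt{\log(\cdot)})$ factor: the $\sqrt{\log(1/\delta\epsilon)}$ enters through subgaussian tail control of the retained points.

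The conclusion of Lemma A.1 then gives, for agent $i$, $\norm{\widetilde\theta_i-\theta^*_i}_2\le O(\frac{\epsilon}{\xi_R}\exp(H+\sqrt{\log(1/\delta'\epsilon)}))$ with probability $1-\delta'$. Setting $\delta'=\delta/(2n)$ and taking a union bound over $i\in[n]$ gives the stated bound with $\log(n/2\delta\epsilon)$ inside the exponent and overall probability $1-\delta/2$. I expect the main obstacle to be the bookkeeping in the reduction, not anything new. The points to check are that the per-agent sub-dataset inherits the $\epsilon$-corruption model, and that the scaling $\xi_R H$ in the assumption matches the normalization used in \cite{mandal2024corruption}, so that the $H$-dependence in $m$ and in the exponent is carried over correctly.
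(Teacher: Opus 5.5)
Your proposal is correct and matches the paper's proof, which simply invokes Lemma A.1 of \cite{mandal2024corruption} for each agent's label coordinate and takes a union bound over the $n$ agents. Your choice $\delta'=\delta/(2n)$ puts $\log(2n/\delta\epsilon)$ rather than $\log(n/2\delta\epsilon)$ inside the exponent, but this changes the bound only by a constant factor absorbed in the $O(\cdot)$. Your middle paragraph re-deriving the cited lemma's hypotheses is not needed, and its explanations of where the $\exp(H+\sqrt{\log(\cdot)})$ factor comes from are heuristic, so they should not be presented as part of the argument.
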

\begin{proof}
    This result is an immediate application of Lemma A.1 of \citep{mandal2024corruption} to the multi-agent setting by applying the union bound over $n$ agents.
\end{proof}

This upper bound provides us with a provable threshold function for our confidence sets. Next, we will make use of the following result.
\begin{theorem}\label{thm:rls_guarantee} \citep{zhang2022corruption}
Given an $\epsilon$-corrupted dataset $D=\{ x_i,y_i\}_{i \in [m]}$, where the clean data is generated as $\widetilde{x}_i \sim \beta$, $\mathbb{P}(\norm{\widetilde{x}_i}\leq 1) = 1$, $\widetilde{y}_i = \widetilde{x}_i^\top \omega^*+\zeta_i$, where $\zeta_i$ is zero-mean $\sigma^2$-variance sub-Gaussian random noise, then a robust least square estimator returns an estimator $\omega$ such that:
\begin{itemize}
    \item If $\mathbb{E}_\beta[xx^\top]\succeq \xi I$, then with probability at least $1-\delta/2$, we have $$\norm{\omega^* - \omega}_2 \leq c_1(\delta) \cdot \left( \sqrt{\frac{\sigma^2\textnormal{poly}(d)}{\xi^2 m}} + \frac{\sigma}{\xi}\epsilon \right);$$
    \item With probability at least $1-\delta/2$, we have $$\mathbb{E}_\beta\left[ \norm{\widetilde{x}^\top(\omega^* - \omega)}^2_2\right] \leq c_2(\delta) \cdot \left( \frac{\sigma^2\textnormal{poly}(d)}{m} + \sigma^2\epsilon \right),$$
\end{itemize}
where $c_1$ and $c_2$ hide constants and $\textnormal{polylog} (1/\delta)$ terms.
\end{theorem}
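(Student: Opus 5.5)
Since this result is imported from \citet{zhang2022corruption}, the plan is to reconstruct its proof. The starting point is to reduce robust linear regression to robust estimation of first and second moments. In the clean model, $y=x^\top\omega^*+\zeta$ with $\zeta$ zero-mean and independent of $x$. Writing $\Sigma:=\mathbb{E}_\beta[xx^\top]$, this gives $g:=\mathbb{E}_\beta[xy]=\Sigma\omega^*$. Both $xx^\top$ (bounded, since $\norm{x}\leq 1$) and $xy$ (sub-Gaussian with proxy $O(\sigma^2)$, taking the response scale to be $O(\sigma)$ as in our application, where $|y|\le H\sqrt d$) are well-concentrated random objects. An $\epsilon$-corruption of the pairs $(x_i,y_i)$ induces an $\epsilon$-corruption of the samples $x_ix_i^\top$ and $x_iy_i$.

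For the first bullet, I would run a filter-type robust mean estimator (e.g.\ \citet{diakonikolas2019robust,diakonikolas2020outlier}) on $\{x_iy_i\}$ and on $\{x_ix_i^\top\}$. This yields $\widehat g$ and $\widehat\Sigma$ with
\begin{align*}
\norm{\widehat g-g}_2 \lesssim \sigma\Big(\epsilon\,\textnormal{polylog}(1/\epsilon)+\sqrt{\textnormal{poly}(d)\log(1/\delta)/m}\Big),
\end{align*}
and the analogous bound, without the factor $\sigma$, for $\norm{\widehat\Sigma-\Sigma}_{op}$. Each holds with probability $1-\delta/4$. The $\textnormal{polylog}$ factors are absorbed into $c_1(\delta)$ as in the statement. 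Set $\omega=\widehat\Sigma^{-1}\widehat g$. When $\Sigma\succeq\xi I$ and $\epsilon,\,1/\sqrt m$ are small relative to $\xi$, Weyl's inequality gives $\widehat\Sigma\succeq(\xi/2)I$. Then
\begin{align*}
\omega-\omega^*=\widehat\Sigma^{-1}\big[(\widehat g-g)-(\widehat\Sigma-\Sigma)\omega^*\big],
\end{align*}
so $\norm{\omega-\omega^*}_2\le (2/\xi)\big(\norm{\widehat g-g}_2+\norm{\widehat\Sigma-\Sigma}_{op}\norm{\omega^*}_2\big)$. Bounding $\norm{\omega^*}$ by the response scale gives the claimed $\sqrt{\sigma^2\textnormal{poly}(d)/(\xi^2m)}+\sigma\epsilon/\xi$. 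In the regime where $\xi$ is not large compared with these errors, the bound is vacuous after adjusting constants.

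The second bullet has no coverage, so $\Sigma$ may be singular and inversion is unavailable. Here I would instead use robust gradient descent on the squared loss, i.e.\ an outlier-robust stochastic optimization method such as SEVER-style filtering of gradients. The key identity is that the excess population risk equals the prediction error:
\begin{align*}
L(\omega)-L(\omega^*)=\mathbb{E}_\beta[(x^\top(\omega-\omega^*))^2].
\end{align*}
The clean gradient at $\omega$ is $x(x^\top(\omega-\omega^*)-\zeta)$, whose covariance is bounded by $O(\sigma^2+\norm{\omega-\omega^*}_\Sigma^2)\,\Sigma$ because $\norm{x}\le1$. A robust mean estimator for bounded-covariance data therefore returns a gradient with error $O(\sqrt{\epsilon}(\sigma+\norm{\omega-\omega^*}_\Sigma))$ in the relevant direction.

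Plugging this into a stationarity argument for the convex quadratic goes as follows. At an approximate stationary point, $\langle\nabla L(\omega),\omega-\omega^*\rangle=\norm{\omega-\omega^*}_\Sigma^2$, and the gradient error controls this quantity by $\sqrt\epsilon(\sigma+\norm{\omega-\omega^*}_\Sigma)\norm{\omega-\omega^*}_\Sigma$. This yields $\norm{\omega-\omega^*}_\Sigma^2\lesssim\sigma^2\epsilon$. Adding the uniform-concentration (sample) error over a bounded parameter set, of order $\sigma^2\textnormal{poly}(d)/m$, gives the second bullet. A final union bound combines the two events.

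I expect the main obstacle to be this second step. The gradient error must be measured in the $\Sigma$-geometry rather than in $\ell_2$ so that no $1/\xi$ appears. The fix I would try first is to restrict the analysis to the span of $\Sigma$ and use a small ridge regularizer $\lambda\asymp\epsilon$, so that the error is controlled in $\norm{\cdot}_{(\Sigma+\lambda I)^{-1}}$. The regularization bias would then be absorbed into the $\sigma^2\epsilon$ term.
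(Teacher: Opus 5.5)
The paper gives no proof of this statement. It is quoted from \citet{zhang2022corruption}, where both bullets are in turn taken as guarantees of existing robust least-squares oracles. Your reconstruction is therefore a different, self-contained route and has to stand on its own.

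Your first bullet is sound in outline: robust moments, then Weyl, then inversion. But the linear-in-$\epsilon$ rate for $\widehat g$ does not follow from the filter results you cite. For data with unknown covariance, such as $xy$, those give only $O(\sqrt{\epsilon\norm{\mathrm{Cov}}})$, as in Theorem~\ref{thm:robust_mean_bound}; the rate $\epsilon\sqrt{\log(1/\epsilon)}$ needs a known, near-identity covariance. The linear rate still holds, for an elementary reason: $\norm{x_ix_i^\top}_{\mathrm{op}}\le 1$ and $\norm{x_iy_i}_2\le|y_i|$. Discard points with $\norm{x_i}_2>1$ and clip $|y_i|$ at $\tau\asymp B+\sigma\sqrt{\log(1/\epsilon)}$, where $B\ge\norm{\omega^*}_2$. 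The plain averages are then within $2\epsilon$ and $O(\tau\epsilon)$ of their clean counterparts, plus sampling error. This is the same observation the paper uses for $\Lambda_h$.

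The genuine gap is in the second bullet. Write $\Delta=\omega-\omega^*$. The bound $\mathrm{Cov}\preceq O(\sigma^2+\norm{\Delta}_\Sigma^2)\,\Sigma$ is false. Take $\zeta\equiv 0$, $x=e_1$ with probability $p$ and $x=0$ otherwise, and $\Delta=e_1$: the gradient has covariance $p(1-p)e_1e_1^\top$, whereas $\norm{\Delta}_\Sigma^2\Sigma=p^2e_1e_1^\top$. What $\norm{x}\le1$ actually gives is $\mathrm{Cov}\preceq(\sigma^2+\norm{\Delta}_2^2)\Sigma$ and $\norm{\mathrm{Cov}}_{\mathrm{op}}\le\sigma^2+\norm{\Delta}_\Sigma^2$.

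Also, a bounded-covariance filter controls only the $\ell_2$ error $e$ of the gradient, so $\langle e,\Delta\rangle\le\norm{e}_2\norm{\Delta}_2$. Let $B$ be the radius of a ball containing $\omega^*$ and the iterates. Your stationarity argument then yields only $\norm{\Delta}_\Sigma^2\lesssim\sqrt\epsilon\,\sigma B+\epsilon B^2$, a $\sqrt\epsilon$ rate rather than $\epsilon$.

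Your ridge fix does close the gap, provided you whiten the gradient samples by $(\widehat\Sigma+\lambda I)^{-1/2}$ before filtering. This is legitimate, since $\widehat\Sigma+\lambda I$ is within constant factors of $\Sigma+\lambda I$ once $\lambda\gtrsim\epsilon+d\log(m/\delta)/m$. The fix works through $\norm{\Delta}_2\le 2B$, not through any $\Sigma$-proportional bound. The whitened covariance is $\preceq O(\sigma^2+B^2)I$, so $|\langle e,\Delta\rangle|\lesssim\sqrt\epsilon(\sigma+B)(\norm{\Delta}_\Sigma+B\sqrt\lambda)$. Hence $\norm{\Delta}_\Sigma^2\lesssim\epsilon(\sigma+B)^2$ plus sampling terms.

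This matches the stated $\sigma^2\epsilon$ only because $\norm{\omega^*}\lesssim\sigma$, and that hypothesis is necessary, not merely convenient. Take $d=1$ and $x\in\{0,1\}$ with $\mathbb{P}(x=1)=\epsilon/2$. The adversary can rewrite every sample with $x=1$ (w.h.p.\ fewer than $\epsilon m$) as if generated by some $\omega'$. Any estimator then suffers $\mathbb{E}_\beta[(x(\omega-\omega^*))^2]\ge\tfrac{\epsilon}{8}(\omega^*-\omega')^2$ under one of the two hypotheses, and this is unbounded. So state $\norm{\omega^*}\lesssim\sigma$ as an assumption. It holds in the application, where $\norm{\omega^*}\le H\sqrt d\le H\sqrt d+\gamma$.

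With that assumption, SEVER-style filtering is unnecessary. Plain ridge least squares with $\lambda\asymp\epsilon$ on the clipped data suffices. Each corrupted point has $\norm{x_i}_{(\widehat\Sigma+\lambda I)^{-1}}\le\lambda^{-1/2}$, so together these points move $\norm{\widehat\omega-\omega^*}_{\widehat\Sigma+\lambda I}$ by at most $2\epsilon(\tau+B)/\sqrt\lambda\asymp(\tau+B)\sqrt\epsilon$. Adding the ridge bias $\sqrt\lambda B$ gives the second bullet, up to the $\log(1/\epsilon)$ factor from clipping.
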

Applying this to our setting means considering the corrupted Bellman operator samples from our data as signals generated from an unknown underlying distribution. We thus define, for every $i,h,s,\bs{a}\in [n]\times [H-1]\times S\times A$, the Bellman operator as 
\begin{align*}
    \mathbb{B}_{i,h}V_{i,h+1}(s,\bs{a}) = R_{i,h}(s,\bs{a}) + \sum_{s'\in S}P(s'|s,\bs{a})V_{i,h+1}(s')~.
\end{align*}
We also define the Bellman operator with respect to the estimated rewards as
\begin{align}\label{eq:lower_bellman}
     \underline{\mathbb{B}}_{i,h}\underline{V}^{\bs{\pi}}_{i,h+1}(s,\bs{a}) = \underline{R}^{\bs{\pi}}_{i,h}(s,\bs{a}) + \sum_{s'\in S}P(s'|s,\bs{a})\underline{V}^{\bs{\pi}}_{i,h+1}(s')~,
\end{align}
and 
\begin{align}\label{eq:upper_bellman}
    \overline{\mathbb{B}}_{i,h}\overline{V}^{\dagger,\bs{\pi}_{-i}}_{i,h+1}(s,\bs{a}) = \underline{R}^{\dagger,\bs{\pi}_
    {-i}}_{i,h}(s,\bs{a}) + \sum_{s'\in S}P(s'|s,\bs{a})\overline{V}^{\dagger,\bs{\pi}_{-i}}_{i,h+1}(s')~.
\end{align}
We then have the following result.
\begin{lemma}\label{lem:variance_of_noise}
    We have, for every tuple $(s_h,\bs{a}_h,s_{h+1})$ in $D$,
    \begin{align*}
        Var\left(\underline{R}_{i,h}(s_h,\bs{a}_h) + \underline{V}_{i,h+1}(s_{h+1})- \underline{\mathbb{B}}_{i,h}\underline{V}_{i,h+1}(s_h,\bs{a}_h)| s_h,\bs{a}_h\right) \leq (H\sqrt{d}+\gamma)^2~,
    \end{align*}
    and 
    \begin{align*}
        Var\left(\overline{R}_{i,h}(s_h,\bs{a}_h) + \overline{V}_{i,h+1}(s_{h+1})- \overline{\mathbb{B}}_{i,h}\overline{V}_{i,h+1}(s_h,\bs{a}_h)| s_h,\bs{a}_h\right) \leq (H\sqrt{d}+\gamma)^2~.
    \end{align*}
\end{lemma}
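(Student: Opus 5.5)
The plan is to write the noise term explicitly and bound its conditional variance by its second moment. Fix $(s_h,\bs{a}_h)$ and write $Z:=\underline{R}_{i,h}(s_h,\bs{a}_h)+\underline{V}_{i,h+1}(s_{h+1})-\underline{\mathbb{B}}_{i,h}\underline{V}_{i,h+1}(s_h,\bs{a}_h)$. By the definition in \eqref{eq:lower_bellman},
\[
Z=\underline{V}_{i,h+1}(s_{h+1})-\sum_{s'}P(s'|s_h,\bs{a}_h)\,\underline{V}_{i,h+1}(s'),
\]
plus, if the observed reward is the noisy one, a zero-mean noise term $\zeta_{i,h}$. This term is $\gamma^2$-subGaussian and independent of $s_{h+1}$ given $(s_h,\bs{a}_h)$. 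The deterministic parts $\underline{R}_{i,h}(s_h,\bs{a}_h)$ cancel, so $Z$ is conditionally mean zero.

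Next I bound the range of the transition part. Since the $Q$-estimates produced by \texttt{Rob-Q} are clipped to $[-(H-h)\sqrt d,(H-h)\sqrt d]$, and $\underline{V}_{i,h+1}$ is an expectation of these clipped values, we have $|\underline{V}_{i,h+1}|\le H\sqrt d$. The transition part is therefore a centered bounded random variable. By Popoviciu's inequality, or simply $\mathrm{Var}(X)\le \mathbb{E}[X^2]$, its variance is at most $(H\sqrt d)^2$.

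The two pieces are then combined. Using conditional independence of $\zeta$ and $s_{h+1}$, the variance of the sum equals the sum of variances, which is at most $H^2d+\gamma^2\le (H\sqrt d+\gamma)^2$. If independence is not invoked, the inequality $\sqrt{\mathrm{Var}(X+Y)}\le\sqrt{\mathrm{Var}X}+\sqrt{\mathrm{Var}Y}$ (Minkowski) gives the same bound $(H\sqrt d+\gamma)^2$ directly.

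The optimistic case is identical. It uses \eqref{eq:upper_bellman}, the clipping bound on $\overline{V}^{\dagger,\bs{\pi}_{-i}}_{i,h+1}$, and the fact that the reward term appears both in the sample and in the operator, so it cancels. The only point needing care is a notational mismatch: \eqref{eq:upper_bellman} writes $\underline{R}$ where the sample uses $\overline{R}$. I will read the operator as using the same reward as the sample; otherwise the difference is a deterministic shift that does not affect the variance anyway. No step here is a real obstacle; the main check is confirming that the value estimates are bounded by $H\sqrt d$ via the clipping in Algorithm \ref{alg:robust_q-value_estimation}.
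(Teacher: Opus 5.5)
Your proposal is correct and follows essentially the same route as the paper: split the centered target into a reward part and a next-state value part, bound the latter's variance by $H^2d$ via the clipping in \texttt{Rob-Q} and the former by $\gamma^2$ (or by zero, since $\underline{R}_{i,h}$ is deterministic given $(s_h,\bs{a}_h)$), and conclude with $H^2d+\gamma^2\le(H\sqrt{d}+\gamma)^2$. Your combination step is more careful than the paper's. You use either conditional independence or the Minkowski bound $\sqrt{Var(X+Y)}\le\sqrt{Var(X)}+\sqrt{Var(Y)}$, whereas the paper simply asserts $Var(X+Y)\le Var(X)+Var(Y)$ without justification.
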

\begin{proof}
    Note that we have
    \begin{align*}
         Var & \left(\underline{R}_{i,h}(s_h,\bs{a}_h) + \underline{V}_{i,h+1}(s_{h+1})- \underline{\mathbb{B}}_{i,h}\underline{V}_{i,h+1}(s_h,\bs{a}_h)| s_h,\bs{a}_h\right) \\ & = \expct{\left(\underline{R}_{i,h}(s_h,\bs{a}_h) + \underline{V}_{i,h+1}(s_{h+1})-\expct{\underline{R}_{i,h}(s_h,\bs{a}_h) + \underline{V}_{i,h+1}(s_{h+1})}\right)^2} \\
            & \leq Var(\underline{R}_{i,h}(s_h,\bs{a}_h)) + Var(\underline{V}_{i,h+1}(s_{h+1})) \\
        & \leq (H\sqrt{d}+\gamma)^2~,
    \end{align*} 
    since both $H$ and $\gamma$ are nonnegative numbers. The proof of the second statement is similar. 
\end{proof}
Using the above, we will define the error stated above using short-hand notation for ease of presentation:
\begin{align}\label{eq:RLS_error}
    E_1(d,m,\delta,\epsilon) = c_1(\delta) \cdot \left( \sqrt{\frac{(H\sqrt{d}+\gamma)^2\textnormal{poly}(d)}{\xi^2_P m}} + \frac{H\sqrt{d}+\gamma}{\xi_P}\epsilon \right)~.
\end{align}

Next, we prove upper bounds on the maximum and minimum values of estimated reward functions in terms of ground-truth rewards.
\begin{lemma}\label{lem:reward_bounds}
    With probability at least $1-\delta/2$, we have 
    \begin{align*}
        R_{i,h}(s,\bs{a}) - C_1\frac{\epsilon\cdot\exp\left(H+ \sqrt{\log(n/2\delta\epsilon)}\right)}{\xi_R} & \leq \underline{R}_{i,h}(s,\bs{a}) \leq R_{i,h}(s,\bs{a})  \\ & \leq \overline{R}_{i,h}(s,\bs{a}) \leq R_{i,h}(s,\bs{a}) + C_1\frac{\epsilon\cdot\exp\left(H+ \sqrt{\log(n/2\delta\epsilon)}\right)}{\xi_R} ~.
    \end{align*}
\end{lemma}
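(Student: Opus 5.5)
We condition on the event of Lemma \ref{lem:bound_on_reward_for_full_sample}, which holds with probability at least $1-\delta/2$ simultaneously for all agents $i\in[n]$ thanks to the union bound already built into that lemma. On this event, $\norm{\widetilde{\theta}_i-\theta^*_i}_2 \leq C_1\,\epsilon\exp\left(H+\sqrt{\log(n/2\delta\epsilon)}\right)/\xi_R =: r$. We take $\Theta_\textnormal{Unif}(\widetilde{\theta}_i)$ in \eqref{eq:uniform_confidence_set} to be the $\ell^2$-ball of exactly this radius $r$ around $\widetilde{\theta}_i$, intersected with $\Theta$. This makes the hidden constant in the $O(\cdot)$ defining the set concrete. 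It then follows immediately that $\theta^*_i\in\Theta_\textnormal{Unif}(\widetilde{\theta}_i)$, since $\theta^*_i\in\Theta$ by assumption.

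The middle inequalities follow from feasibility of $\theta^*_i$. Because $\theta^*_i$ lies in the set over which \eqref{eq:opt_reward_est} maximizes and \eqref{eq:pess_reward_est} minimizes, we get
\[
\underline{R}_{i,h}(s,\bs{a}) \leq \theta^{*\top}_{i,h}\phi(s,\bs{a}) = R_{i,h}(s,\bs{a}) \leq \overline{R}_{i,h}(s,\bs{a})
\]
for every $(s,\bs{a},h)$.

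The outer inequalities follow from the diameter of the set combined with Cauchy--Schwarz. For any $\theta_i$ in the confidence set, the triangle inequality gives $\norm{\theta_i-\theta^*_i}_2\leq \norm{\theta_i-\widetilde{\theta}_i}_2+\norm{\widetilde{\theta}_i-\theta^*_i}_2\leq 2r$. Restricting to the $h$-th block can only decrease the norm, so $\norm{\theta_{i,h}-\theta^*_{i,h}}_2\leq 2r$. Since $\norm{\phi(s,\bs{a})}_2\leq 1$, Cauchy--Schwarz yields $|\theta_{i,h}^\top\phi(s,\bs{a})-R_{i,h}(s,\bs{a})|\leq 2r$. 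Taking the maximum and the minimum over the set gives $\overline{R}_{i,h}\leq R_{i,h}+2r$ and $\underline{R}_{i,h}\geq R_{i,h}-2r$. The factor $2$ is absorbed into $C_1$.

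There is no real obstacle in this argument; the only subtle point is bookkeeping. The radius of the confidence set must be chosen to match, or dominate, the \texttt{TrimmedMLE} error bound, with the same constant, so that the true parameter is feasible. In addition, the probability event must be uniform over agents, which Lemma \ref{lem:bound_on_reward_for_full_sample} already provides.
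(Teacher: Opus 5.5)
Your proposal is correct and follows essentially the same route as the paper. Both arguments use a triangle inequality through $\widetilde{\theta}_i$, combining the radius of $\Theta_\textnormal{Unif}(\widetilde{\theta}_i)$ with the \texttt{TrimmedMLE} bound of Lemma \ref{lem:bound_on_reward_for_full_sample}, then apply Cauchy--Schwarz with $\norm{\phi(s,\bs{a})}_2\leq 1$ and absorb the factor $2$ into $C_1$. Your version is slightly more careful than the paper's: you derive the middle inequalities explicitly from feasibility of $\theta^*_i$ (which requires the set's radius to match the estimation error), and you bound every feasible $\theta_i$ rather than a single pointwise maximizer. The paper leaves both of these points implicit.
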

\begin{proof}
    Let $\overline{\theta}_{i,h}$ be the parameter that corresponds to $\overline{R}_{i,h}$ and let $\underline{\theta}_{i,h}$ be defined similarly. Observe that, for every agent $i$ and time-step $h$, we have
    \begin{align*}
        \left|\overline{R}_{i,h}(s,\bs{a}) - R_{i,h}(s,\bs{a})\right| & =  \left|\left\langle\phi(s,\bs{a}),\overline{\theta}_{i,h}\right\rangle - \left\langle\phi(s,\bs{a}),\theta^*_{i,h}\right\rangle\right| \\
        & \leq\left|\left\langle\phi(s,\bs{a}),\overline{\theta}_{i,h}\right\rangle - \left\langle\phi(s,\bs{a}),\theta^*_{i,h}\right\rangle\right| \\
            & = \left|\left\langle\phi(s,\bs{a}),\overline{\theta}_{i,h}\right\rangle - \left\langle\phi(s,\bs{a}),\widetilde{\theta}_{i,h}\right\rangle +  \left\langle\phi(s,\bs{a}),\widetilde{\theta}_{i,h}\right\rangle -\left\langle\phi(s,\bs{a}),\theta^*_{i,h}\right\rangle\right|\\
            & = \left|\left\langle \phi(s,\bs{a}), \widetilde{\theta}_{i,h}-\overline{\theta}_{i,h} \right\rangle + \left\langle\phi(s,\bs{a}),\overline{\theta}_{i,h}- \theta^*_{i,h}\right\rangle\right|\\
        & \leq \norm{\phi(s,\bs{a})}_2\norm{\widetilde{\theta}_{i,h} - \overline{\theta}_{i,h}}_2 +  \norm{\phi(s,\bs{a})}_2\norm{\overline{\theta}_{i,h}-\theta^*_{i,h}}_2 \\
            & \leq \norm{\widetilde{\theta}_{i,h}-\overline{\theta}_{i,h}}_2 + \norm{\overline{\theta}_{i,h}-\theta^*_{i,h}}_2\\
        & \leq C_1\frac{\epsilon\cdot\exp\left(H+ \sqrt{\log(n/2\delta\epsilon)}\right)}{\xi_R}~,
    \end{align*}
    where the first equality follows by definition, and the fact that the true expected rewards are already in $[-\sqrt{d},\sqrt{d}]$; we have used Cauchy-Schwarz for the second inequality, the fact that $\norm{\phi(s,\bs{a})}_2\leq 1$ by assumption, and Lemma \ref{lem:bound_on_reward_for_full_sample} for the final inequality. Similarly, we have
    \begin{align*}
        \left|R_{i,h}(s,\bs{a}) - \underline{R}_{i,h}(s,\bs{a})\right| \leq C_1\frac{\epsilon\cdot\exp\left(H+ \sqrt{\log(n/2\delta\epsilon)}\right)}{\xi_R}~.
    \end{align*}
\end{proof}
Next, we have the following result that provides bounds on the Bellman errors.
\begin{lemma}\label{lem:bellman_operator_bounds}
    With probability at least $1-\delta/2$, we have, for every $i,h,s,\bs{a}$ and policy profile $\bs{\pi}$, 
    \begin{align*}
        -E_1(d,m,\delta,\epsilon) & \leq \underline{\mathbb{B}}_{i,h}\underline{V}^{\bs{\pi}}_{i,h+1}(s,\bs{a}) - \underline{Q}^{\bs{\pi}}_{i,h}(s,\bs{a}) \leq E_1(d,m,\delta,\epsilon)~,\\
        -E_1(d,m,\delta,\epsilon) & \leq \overline{\mathbb{B}}_{i,h}\overline{V}^{\dagger,\bs{\pi}_{-i}}_{i,h+1}(s,\bs{a}) - \overline{Q}^{\dagger,\bs{\pi}_{-i}}_{i,h}(s,\bs{a}) \leq E_1(d,m,\delta,\epsilon)~.
    \end{align*}
\end{lemma}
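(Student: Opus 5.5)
The plan is to reduce each inequality to the first bullet of Theorem \ref{thm:rls_guarantee}, applied to the regression problem that \texttt{Rob-Q} solves at step $h$ with $\Gamma=0$. Fix $i,h$ and $\bs{\pi}$, and consider the pessimistic case; the optimistic one is symmetric.

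\textbf{Step 1 (the regression is well specified).} The \texttt{RobEst} call inside \texttt{Rob-Q} uses covariates $x=\phi(s_h,\bs{a}_h)$ and targets $y=\underline{R}_{i,h}(s_h,\bs{a}_h)+\underline{V}^{\bs{\pi}}_{i,h+1}(s_{h+1})$. Since $D$ was split, $\underline{V}^{\bs{\pi}}_{i,h+1}$ depends only on $D_1$ and on the level-$(h+1)$ part of $D_2$. We therefore condition on these. A full union bound over $\bs{\pi}$ would require a covering argument, addressed in Step 3.

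By linearity of the Markov game, $\sum_{s'}P_h(s'|s,\bs{a})\underline{V}_{i,h+1}(s')=\phi(s,\bs{a})^\top\int\xi_h(s')\underline{V}_{i,h+1}(s')ds'$. Also, $\underline{R}_{i,h}(s,\bs{a})=\phi(s,\bs{a})^\top\underline{\theta}_{i,h}$, where $\underline\theta_{i,h}$ is the minimizer in \eqref{eq:pess_reward_est}. This holds taking $\underline R$ linear via a fixed parameter choice, as the closed-form solution of the convex program suggests, or else treating its nonlinearity as an extra error absorbed in Lemma \ref{lem:reward_bounds}.

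Hence $\mathbb{E}[y\mid x]=\underline{\mathbb{B}}_{i,h}\underline{V}^{\bs{\pi}}_{i,h+1}(s,\bs{a})=x^\top\omega^*$ for some $\omega^*\in\mathbb{R}^d$. The noise $y-x^\top\omega^*$ has zero mean. By Lemma \ref{lem:variance_of_noise} it has variance at most $(H\sqrt d+\gamma)^2$, and it is sub-Gaussian with that parameter because the value functions are clipped in $[-H\sqrt d,H\sqrt d]$ and the reward noise is $\gamma^2$-sub-Gaussian. The covariate distribution is that of $\bs{\mu}$ at step $h$, and by Assumption \ref{asmp:uniform_coverage} it satisfies $\Sigma_{\bs{\mu}}(h)\succeq\xi_P I$. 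The footnote on the corruption model guarantees that the level-$h$ data is $\epsilon$-corrupted.

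\textbf{Step 2 (apply the theorem).} Theorem \ref{thm:rls_guarantee} gives $\|\omega^*-\underline{\omega}^{\bs{\pi}}_{i,h}\|_2\le E_1(d,m,\delta,\epsilon)$. Then, using $\|\phi\|_2\le1$ and Cauchy--Schwarz,
\[
|\underline{\mathbb{B}}_{i,h}\underline{V}^{\bs{\pi}}_{i,h+1}(s,\bs{a})-\phi(s,\bs{a})^\top\underline{\omega}^{\bs{\pi}}_{i,h}|\le E_1 .
\]
Clipping to $[-(H-h)\sqrt d,(H-h)\sqrt d]$ cannot increase this distance, since the Bellman target already lies in that interval (up to the $O(\epsilon)$ reward perturbation, which is absorbed into constants). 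This yields the two-sided bound for $\underline Q^{\bs{\pi}}_{i,h}$.

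For $\overline Q^{\dagger,\bs{\pi}_{-i}}_{i,h}$ the argument is identical, with $\overline R$ in place of $\underline R$ and the best-response value $\overline V^{\dagger,\bs{\pi}_{-i}}_{i,h+1}$, which is likewise a bounded function of the next state.

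\textbf{Step 3 (uniformity; the main obstacle).} The statement must hold simultaneously for all $i,h,(s,\bs{a})$ and all $\bs{\pi}$. Uniformity over $(s,\bs{a})$ is automatic because the bound is a parameter bound. Over $i,h$, a union bound costs only $\log(nH/\delta)$, which $c_1(\delta)$ absorbs.

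The difficulty is uniformity over the infinite policy class, since the target $\underline V^{\bs{\pi}}_{i,h+1}$ depends on $\bs{\pi}$. The remedy I would try first is a covering argument. Each next-step value is of the form $s'\mapsto\mathbb{E}_{\bs{a}\sim\bs{\pi}}[\mathrm{Clip}(\phi(s',\bs{a})^\top w)]$ with $\|w\|\le H\sqrt d$ up to scaling. I would cover the class of such linear-then-clipped functions, together with the policy space, by an $\epsilon'$-net of log-size $\mathrm{poly}(d)\log(1/\epsilon')$. I would then apply the robust regression guarantee on the net and transfer the result to arbitrary $\bs{\pi}$ through Lipschitzness of the target in sup-norm, which perturbs $\omega^*$ by at most $\epsilon'$. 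Choosing $\epsilon'=1/m$ puts the extra logarithmic factor into the $\mathrm{poly}(d)$ term of $E_1$.

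Alternatively, since $\underline R,\overline R$ are computed from $D_1$ and the regression data at each level is fresh, one can argue level by level. This leaves the union over policies as the only genuine cost, and it is handled by the net above.
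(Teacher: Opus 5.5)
Your Steps 1 and 2 contain the paper's entire proof. The paper notes that in a linear Markov game the backup $\underline{\mathbb{B}}_{i,h}\underline{V}^{\bs{\pi}}_{i,h+1}$ equals $\phi^\top\underline{\omega}^{\bs{\pi},*}_{i,h}$ for some parameter. It then invokes the first bullet of Theorem~\ref{thm:rls_guarantee} (coverage $\xi_P$, noise level from Lemma~\ref{lem:variance_of_noise}) to get $\norm{\underline{\omega}^{\bs{\pi},*}_{i,h}-\underline{\omega}^{\bs{\pi}}_{i,h}}_2\le E_1(d,m,\delta,\epsilon)$, and finishes with Cauchy--Schwarz and $\norm{\phi}_2\le 1$. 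It says nothing about clipping, well-specification, dependence between levels, or uniformity in $\bs{\pi}$.

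Two of your side remarks can be sharpened. First, clipping needs no slack. Since $\Theta_{\textnormal{Unif}}(\widetilde{\theta}_i)\subseteq\Theta$, you have $|\underline{R}_{i,h}|\le\sqrt{d}$, so the backup lies in $[-(H-h)\sqrt{d},(H-h)\sqrt{d}]$ and clipping can only move the estimate toward it. Second, $\underline{R}_{i,h}$ is not linear in $\phi$: the minimizer in \eqref{eq:pess_reward_est} depends on $\phi(s,\bs{a})$, and when $\Theta$ is inactive, $\underline{R}_{i,h}=\widetilde{\theta}_{i,h}^\top\phi-r\norm{\phi}_2$, where $r$ is the radius of the confidence ball. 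So neither of your options works as stated. Fixing one parameter changes the algorithm, and absorbing the error needs a misspecified version of Theorem~\ref{thm:rls_guarantee} that the paper does not have. The clean repair is to regress only $\underline{V}^{\bs{\pi}}_{i,h+1}(s_{h+1})$ on $\phi$ and add the known $\underline{R}_{i,h}$ afterwards. The paper tacitly assumes linearity here.

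The real gap is in Step 3, exactly where the paper is silent, and your remedy fails there. In the target class $s'\mapsto\mathbb{E}_{\bs{a}\sim\bs{\pi}_{h+1}(\cdot|s')}[\textnormal{Clip}(\phi(s',\bs{a})^\top w)]$, only the $w$-part is $d$-dimensional. The policy $\bs{\pi}_{h+1}(\cdot|s')$ can be chosen separately at each $s'$, so the sup-norm metric entropy is of order $|S|\sum_i|A_i|\log(1/\epsilon')$ for finite $S$ and infinite otherwise. It cannot be absorbed into $\textnormal{poly}(d)$.

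For a continuous state space the situation is worse. A policy chosen after seeing the data can change the targets $\underline{V}^{\bs{\pi}}_{i,h+1}(s^j_{h+1})$ on the null set of sampled next states while leaving $P_h\underline{V}^{\bs{\pi}}_{i,h+1}$ unchanged. So no uniform-convergence argument yields a bound of order $E_1$ over all of $\Pi^{\textnormal{PP}}$.

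What your approach does give, via a cover over $w$ alone, is the bound for each fixed $\bs{\pi}$. It also covers a policy class of bounded covering number, whose log-covering then enters $E_1$. That $w$-cover is also what repairs Step 1. Conditioning on the level-$(h+1)$ part of $D_2$ is not legitimate: $\underline{V}^{\bs{\pi}}_{i,h+1}$ depends on levels $h+1,\ldots,H-1$ of the same trajectories, and these contain the next states $s_{h+1}$ that form the level-$h$ targets. Levels are ``fresh'' only if $D_2$ is further split across $h$.

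This restriction matters, because Algorithm~\ref{alg:pmael} minimizes over all of $\Pi^{\textnormal{PP}}$ and the later lemmas use this bound at the data-dependent $\widetilde{\bs{\pi}}$. The paper's proof applies a fixed-target regression guarantee, so it has the same hole rather than a way around it.
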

\begin{proof}
    First, as noted in \citep{zhong2022pessimistic}, in linear MDPs, the value functions are also linear in features. Thus, denoting by $\underline{\omega}^{\bs{\pi},*}_{i,h}$ the parameter of the Bellman transform of $\underline{V}^{\bs{\pi}}_{i,h}$ and defining $\overline{\omega}^{\dagger,\bs{\pi}_{-i},*}_{i,h}$ similarly, we have
    \begin{align*}
         \left|\phi(s,\bs{a})^\top\underline{\omega}^{\bs{\pi},*}_{i,h}-\underline{\mathbb{B}}_{i,h}\underline{V}^{\bs{\pi}}_{i,h+1}(s,\bs{a}) \right| & = \left\langle \phi(s,\bs{a}),\underline{\omega}^{\bs{\pi},*}_{i,h}- \underline{\omega}^{\bs{\pi}}_{i,h} \right\rangle \\ & \leq \norm{\phi(s,\bs{a})}_2 \norm{ \underline{\omega}^{\bs{\pi},*}_{i,h}-\underline{\omega}^{\bs{\pi}}_{i,h}}_2 \\ &  \leq E_1(d,m,\delta,\epsilon)~,
    \end{align*}
    where the penultimate step uses Cauchy-Schwarz and the final step uses the feature norm assumption and Theorem \ref{thm:rls_guarantee}.
\end{proof}

Next, we prove a similar result for the estimated value functions and best responses.
\begin{lemma}\label{lem:value_bounds}
   Under the event of Lemma \ref{lem:bellman_operator_bounds}, we have, for every agent $i$, state $s$, step $h$ and policy $\bs{\pi}$:
    \begin{align*}
        \underline{V}^{\bs{\pi}}_{i,h}(s) & \leq V_{i,h}^{\bs{\pi}}(s)+E_1(d,m,\delta,\epsilon) + C_1\frac{\epsilon\cdot\exp\left(H+ \sqrt{\log(n/2\delta\epsilon)}\right)}{\xi_R},\;\; \text{and}\\ \overline{V}^{\dagger,\bs{\pi}_{-i}}_{i,h}(s) & \geq V_{i,h}^{\dagger,\bs{\pi}_{-i}}(s)-E_1(d,m,\delta,\epsilon) - C_1\frac{\epsilon\cdot\exp\left(H+ \sqrt{\log(n/2\delta\epsilon)}\right)}{\xi_R}~.
    \end{align*}
\end{lemma}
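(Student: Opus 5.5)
We argue by backward induction on $h$, from $h=H$ down to $h=0$, proving at each step a pointwise statement on $Q$-functions and then passing it to value functions. Throughout we work on the intersection of the events of Lemma \ref{lem:reward_bounds} and Lemma \ref{lem:bellman_operator_bounds}. For brevity, write $\Delta_R$ for the reward error term $C_1\epsilon\exp(H+\sqrt{\log(n/2\delta\epsilon)})/\xi_R$ and $E_1$ for $E_1(d,m,\delta,\epsilon)$.

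\textbf{Base case and induction hypothesis.} At $h=H$ all value functions are initialized to zero, so the claims hold trivially. For the inductive step, assume the bounds hold at step $h+1$; in fact we carry them with an accumulated error $(H-h-1)(E_1+\Delta_R)$, as discussed at the end.

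\textbf{Pessimistic side.} Fix $(s,\bs{a})$. Lemma \ref{lem:bellman_operator_bounds} gives
\[
\underline{Q}^{\bs{\pi}}_{i,h}(s,\bs{a})\le \underline{\mathbb{B}}_{i,h}\underline{V}^{\bs{\pi}}_{i,h+1}(s,\bs{a})+E_1 .
\]
We then expand the estimated Bellman operator as $\underline{R}_{i,h}+P\underline{V}^{\bs{\pi}}_{i,h+1}$ and compare it termwise with the true backup $\mathbb{B}_{i,h}V^{\bs{\pi}}_{i,h+1}=R_{i,h}+PV^{\bs{\pi}}_{i,h+1}$. Lemma \ref{lem:reward_bounds} gives $\underline{R}_{i,h}\le R_{i,h}$, while the induction hypothesis bounds $P\underline{V}^{\bs{\pi}}_{i,h+1}\le PV^{\bs{\pi}}_{i,h+1}$ plus the inherited error. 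Two technical points need care here. First, the clipping in \texttt{Rob-Q} only moves values toward the true range $[-(H-h)\sqrt d,(H-h)\sqrt d]$, so it can only decrease the error. Second, taking the expectation $\bs{a}\sim\bs{\pi}_h$ preserves the inequality. Together these give the bound on $\underline{V}^{\bs{\pi}}_{i,h}$.

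\textbf{Optimistic side.} The argument is symmetric, with one extra step: $\overline{V}^{\dagger,\bs{\pi}_{-i}}_{i,h}$ takes $\max_{a_i}$ of $\mathbb{E}_{\bs{a}_{-i}\sim\bs{\pi}_{-i,h}}\overline{Q}$. Lemma \ref{lem:bellman_operator_bounds} gives $\overline{Q}\ge\overline{\mathbb{B}}\,\overline{V}-E_1$, and Lemma \ref{lem:reward_bounds} gives $\overline{R}_{i,h}\ge R_{i,h}$. The induction hypothesis then yields $\overline{Q}(s,\bs{a})\ge Q^{\dagger,\bs{\pi}_{-i}}_{i,h}(s,\bs{a})-\text{error}$. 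Since the max over $a_i$ is monotone, and the best-response value satisfies the Bellman optimality recursion $V^{\dagger}_{i,h}=\max_{a_i}\mathbb{E}_{\bs{a}_{-i}}[R+PV^{\dagger}_{i,h+1}]$ (in the Markov game, with $\bs{\pi}_{-i}$ fixed, this is a single-agent MDP), we conclude the lower bound on $\overline{V}^{\dagger,\bs{\pi}_{-i}}_{i,h}$.

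\textbf{Main obstacle: error bookkeeping.} A naive induction accumulates $E_1+\Delta_R$ once per step, giving an $H$-fold factor rather than the single-copy error written in the statement. The per-step reward error is favorable: the reward inequality has the right sign, so on the side we need it contributes no slack. The per-step Bellman error $E_1$, however, does accumulate. We will therefore prove the bound with the factor $(H-h)$, which the statement's $O(\cdot)$ convention absorbs; this matches the $Hn$ prefactor in Theorem \ref{thm:uniform_coverage_bounds}. Alternatively, one can compare the value functions via the performance-difference/telescoping identity
\[
\underline{V}_{i,0}-V_{i,0}=\sum_h\mathbb{E}_{\bs{\pi}}\bigl[\underline{Q}_{i,h}-\mathbb{B}_{i,h}\underline{V}_{i,h+1}\bigr],
\]
which makes the per-step sum explicit.
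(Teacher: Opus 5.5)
Your backward induction is the same one the paper runs, and your error bookkeeping is the correct one. The mismatch with the stated bound is a defect of the statement, not of your argument.

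The paper gets a single copy of $E_1+\Delta_R$ only through step \eqref{eq:value_bounding_eq_03}. There it replaces $\underline{V}^{\bs{\pi}}_{i,h+1}$ by $V^{\bs{\pi}}_{i,h+1}$ inside $\underline{\mathbb{B}}_{i,h}$ ``by the inductive assumption''. By monotonicity that step needs $\underline{V}^{\bs{\pi}}_{i,h+1}\le V^{\bs{\pi}}_{i,h+1}$ with no slack, but the hypothesis only gives it up to $E_1+\Delta_R$.

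The single-copy form cannot be recovered from the event of Lemma \ref{lem:bellman_operator_bounds}. With $\Gamma=0$, that event allows $\underline{Q}^{\bs{\pi}}_{i,h}=\underline{\mathbb{B}}_{i,h}\underline{V}^{\bs{\pi}}_{i,h+1}+E_1$ at every step. Then, absent clipping, $\underline{V}^{\bs{\pi}}_{i,0}-V^{\bs{\pi}}_{i,0}=HE_1-\sum_h\mathbb{E}_{\bs{\pi}}[R_{i,h}-\underline{R}_{i,h}]\ge H(E_1-\Delta_R)$, with the expectation taken along trajectories of $\bs{\pi}$. This exceeds $E_1+\Delta_R$ whenever $H\ge 2$ and $E_1>3\Delta_R$.

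What you actually prove is therefore the true version:
\begin{itemize}
\item $\underline{V}^{\bs{\pi}}_{i,h}\le V^{\bs{\pi}}_{i,h}+(H-h)E_1$;
\item $\overline{V}^{\dagger,\bs{\pi}_{-i}}_{i,h}\ge V^{\dagger,\bs{\pi}_{-i}}_{i,h}-(H-h)E_1$.
\end{itemize}
You are right that the reward term contributes nothing, since $\underline{R}_{i,h}\le R_{i,h}\le\overline{R}_{i,h}$ on the event of Lemma \ref{lem:reward_bounds}.

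The one thing to correct is your remark that ``the statement's $O(\cdot)$ convention absorbs'' the factor $(H-h)$. Lemma \ref{lem:value_bounds} is an explicit inequality: $C_1$ is a fixed constant, and $c_1(\delta)$ hides only constants and $\textnormal{polylog}(1/\delta)$ factors. So you are proving a corrected lemma rather than the stated one, and you should say so explicitly.

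The correction is harmless downstream. Lemma \ref{lem:nash_gap_first_bound} then carries $2nHE_1$ instead of $2nE_1$, which the $O(Hn\,\cdot)$ bound of Theorem \ref{thm:uniform_coverage_bounds} already accommodates. Lemma \ref{lem:initial_value_bounds} is stated with the factor $H$ anyway.
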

\begin{proof}
    We prove the result by induction. Let $\underline{V}^s_{i,H}(\bs{\pi})=0$. The result holds for step $H$. Suppose the result holds for step $h+1$. Then, for step $h$ we have
    \begin{align}
        & \underline{V}^{\bs{\pi}}_{i,h}(s) = \expctu{\bs{a}\sim\bs{\pi}_h}{\underline{Q}^{\bs{\pi}}_{i,h}(s,\bs{a})}\label{eq:value_bounding_eq_01} \\
            & \leq \expctu{\bs{a}\sim\bs{\pi}_h}{\underline{\mathbb{B}}_{i,h}\underline{V}_{i,h+1}^{\bs{\pi}}(s,\bs{a})} + E_1(d,m,\delta,\epsilon)\label{eq:value_bounding_eq_02}\\
        & \leq \expctu{\bs{a}\sim\bs{\pi}_h}{\underline{\mathbb{B}}_{i,h}{V}_{i,h+1}^{\bs{\pi}}(s,\bs{a})} + E_1(d,m,\delta,\epsilon)\label{eq:value_bounding_eq_03}\\
            & = \expctu{\bs{a}\sim\bs{\pi}_h}{\underline{R}_{i,h}(s,\bs{a}) + \sum_{s'}P(s'|s,\bs{a})V^{\bs{\pi}}_{i,h+1}(s')} + E_1(d,m,\delta,\epsilon) \nonumber\\
        & \leq \expctu{\bs{a}\sim\bs{\pi}_h}{R_{i,h}(s,\bs{a}) + \sum_{s'}P(s'|s,\bs{a})V^{\bs{\pi}}_{i,h+1}(s')} + C_1\frac{\epsilon\cdot\exp\left(H+ \sqrt{\log(n/2\delta\epsilon)}\right)}{\xi_R} + E_1(d,m,\delta,\epsilon) \label{eq:value_bounding_eq_03.1}\\
            & =  \expctu{\bs{a}\sim\bs{\pi}_h}{\mathbb{B}_{i,h}V^{\bs{\pi}}_{i,h+1}(s,\bs{a})} + C_1\frac{\epsilon\cdot\exp\left(H+ \sqrt{\log(n/2\delta\epsilon)}\right)}{\xi_R} + E_1(d,m,\delta,\epsilon) \nonumber\\
        & = V^{\bs{\pi}}_{i,h}(s) + C_1\frac{\epsilon\cdot\exp\left(H+ \sqrt{\log(n/2\delta\epsilon)}\right)}{\xi_R} + E_1(d,m,\delta,\epsilon)\label{eq:value_bounding_eq_04}
    \end{align}
    where Equation \eqref{eq:value_bounding_eq_01} follows by definition; Equation \eqref{eq:value_bounding_eq_02} follows from Lemma \ref{lem:bellman_operator_bounds}; Equation \eqref{eq:value_bounding_eq_03} follows by the inductive assumption; Equation \eqref{eq:value_bounding_eq_03.1} follows from Lemma \ref{lem:reward_bounds};  Equation \eqref{eq:value_bounding_eq_04} follows by definition. Similarly,
    \begin{align}
        \overline{V}_{i,h}^{\dagger,\bs{\pi}_{-i}}(s) & = \max_{a_i\in A_i}\expctu{\bs{a}_{-i}\sim\bs{\pi}_{-i,h}(\cdot|s)}{\overline{Q}^{\dagger,\bs{\pi}_{-i}}_{i,h}(s,\bs{a})} \label{eq:value_bounding_eq_05}\\ 
            & \geq \max_{a_i\in A_i}\expctu{\bs{a}_{-i}\sim\bs{\pi}_{-i,h}(\cdot|s)}{\overline{\mathbb{B}}_{i,h}\overline{V}^{\dagger,\bs{\pi}_{-i}}_{i,h+1}(s,\bs{a})} - E_1(d,m,\delta,\epsilon)\label{eq:value_bounding_eq_06}\\
        & \geq \max_{a_i\in A_i}\expctu{\bs{a}_{-i}\sim\bs{\pi}_{-i,h}(\cdot|s)}{\overline{\mathbb{B}}_{i,h}V^{\dagger,\bs{\pi}_{-i}}_{i,h+1}(s,\bs{a})} - E_1(d,m,\delta,\epsilon) \label{eq:value_bounding_eq_06.1}\\
            & \geq  \max_{a_i\in A_i}\expctu{\bs{a}_{-i}\sim\bs{\pi}_{-i,h}(\cdot|s)}{\mathbb{B}_{i,h}{V}^{\dagger,\bs{\pi}_{-i}}_{i,h+1}(s,\bs{a})} - C_1\frac{\epsilon\cdot\exp\left(H+ \sqrt{\log(n/2\delta\epsilon)}\right)}{\xi_R} -E_1(d,m,\delta,\epsilon)\label{eq:value_bounding_eq_07}\\
        & = {V}^{\dagger,\bs{\pi}_{-i}}_{i,h}(s) - C_1\frac{\epsilon\cdot\exp\left(H+ \sqrt{\log(n/2\delta\epsilon)}\right)}{\xi_R} - E_1(d,m,\delta,\epsilon)\label{eq:value_bounding_eq_08}~,
    \end{align}
    where again Equation \eqref{eq:value_bounding_eq_05} follows by definition; Equation \eqref{eq:value_bounding_eq_06} follows from Lemma \ref{lem:bellman_operator_bounds}; Equation \eqref{eq:value_bounding_eq_06.1} follows from the linearity and monotonicity of $\overline{\mathbb{B}}_{i,h}$; Equation \eqref{eq:value_bounding_eq_07} follows by the inductive assumption and Lemma \ref{lem:reward_bounds}, and \eqref{eq:value_bounding_eq_08} follows by definition.
\end{proof}
Next, we provide bounds on the difference between the estimated and true expected returns.
\begin{lemma}\label{lem:initial_value_bounds}
    Under the event of Lemma \ref{lem:bellman_operator_bounds} we have, for any $\bs{\pi}\in\Pi^\textnormal{PP}$,
    \begin{align*}
        V^{\bs{\pi}}_{i,0}(s_0) - \underline{V}^{\bs{\pi}}_{i,0}(s_0) & \leq HC_1\frac{\epsilon\cdot\exp\left(H+ \sqrt{\log(n/2\delta\epsilon)}\right)}{\xi_R} + HE_1(d,m,\delta,\epsilon)~, \\
        V^{\bs{\pi}}_{i,0}(s_0) - \overline{V}^{\bs{\pi}}_{i,0}(s_0) & \leq HC_1\frac{\epsilon\cdot\exp\left(H+ \sqrt{\log(n/2\delta\epsilon)}\right)}{\xi_R} + HE_1(d,m,\delta,\epsilon)~.
    \end{align*}
\end{lemma}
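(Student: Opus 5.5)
The plan is a backward induction over $h$ that accumulates one unit of Bellman-estimation error and one unit of reward-estimation error per step. This is the quantitative refinement of Lemma \ref{lem:value_bounds}. Throughout, write $c_R := C_1\epsilon\exp\left(H+\sqrt{\log(n/2\delta\epsilon)}\right)/\xi_R$ for the reward error of Lemma \ref{lem:reward_bounds}. For a fixed $\bs{\pi}$ and $i$, let
\begin{align*}
\Delta_h := \max_s \left(V^{\bs{\pi}}_{i,h}(s)-\underline{V}^{\bs{\pi}}_{i,h}(s)\right).
\end{align*}
I will show $\Delta_h \leq (H-h)\left(c_R+E_1(d,m,\delta,\epsilon)\right)$. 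Taking $h=0$ and $s=s_0$ gives the first inequality.

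The base case is $\Delta_H=0$, since both value functions are initialized to zero. For the inductive step, start from $\underline{V}^{\bs{\pi}}_{i,h}(s)=\mathbb{E}_{\bs{a}\sim\bs{\pi}_h}[\underline{Q}^{\bs{\pi}}_{i,h}(s,\bs{a})]$. The argument then proceeds in three moves:
\begin{itemize}
\item The lower side of Lemma \ref{lem:bellman_operator_bounds} replaces $\underline{Q}^{\bs{\pi}}_{i,h}$ by $\underline{\mathbb{B}}_{i,h}\underline{V}^{\bs{\pi}}_{i,h+1}-E_1$.
\item Expanding $\underline{\mathbb{B}}_{i,h}$ as in \eqref{eq:lower_bellman}, Lemma \ref{lem:reward_bounds} gives $\underline{R}_{i,h}\geq R_{i,h}-c_R$.
\item Since $\sum_{s'}P(s'|s,\bs{a})(\cdot)$ is a probability average, $\sum_{s'}P(s'|s,\bs{a})\underline{V}^{\bs{\pi}}_{i,h+1}(s') \geq \sum_{s'}P(s'|s,\bs{a})V^{\bs{\pi}}_{i,h+1}(s') - \Delta_{h+1}$.
\end{itemize}
Recombining $R_{i,h}+PV^{\bs{\pi}}_{i,h+1}=\mathbb{B}_{i,h}V^{\bs{\pi}}_{i,h+1}$ and averaging over $\bs{\pi}_h$ yields $\underline{V}^{\bs{\pi}}_{i,h}(s)\geq V^{\bs{\pi}}_{i,h}(s)-c_R-E_1-\Delta_{h+1}$. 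Hence $\Delta_h\leq \Delta_{h+1}+c_R+E_1$, and unrolling over $H$ steps closes the induction.

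For the second inequality, I read $\overline{V}^{\bs{\pi}}_{i,0}$ as the optimistic best-response estimate $\overline{V}^{\dagger,\bs{\pi}_{-i}}_{i,0}$, which is the only upper estimate the algorithm computes. The same induction applies to
\begin{align*}
\overline{\Delta}_h := \max_s\left(V^{\dagger,\bs{\pi}_{-i}}_{i,h}(s)-\overline{V}^{\dagger,\bs{\pi}_{-i}}_{i,h}(s)\right),
\end{align*}
using three facts:
\begin{itemize}
\item the lower side of the second line of Lemma \ref{lem:bellman_operator_bounds};
\item $\overline{R}_{i,h}\geq R_{i,h}$ (even stronger than needed) from Lemma \ref{lem:reward_bounds};
\item $\max_{a_i}$ and the transition average are monotone and $1$-Lipschitz in sup norm, so the $\overline{\Delta}_{h+1}$ term passes through $\max_{a_i}\mathbb{E}_{\bs{a}_{-i}\sim\bs{\pi}_{-i,h}}$ exactly as in \eqref{eq:value_bounding_eq_05}--\eqref{eq:value_bounding_eq_08}.
\end{itemize}
This gives $\overline{\Delta}_0\leq H(c_R+E_1)$. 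Finally, $V^{\bs{\pi}}_{i,0}(s_0)\leq V^{\dagger,\bs{\pi}_{-i}}_{i,0}(s_0)$, so the stated inequality for $V^{\bs{\pi}}_{i,0}$ follows.

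I expect the only delicate point to be the interaction with clipping in \texttt{Rob-Q}. Lemma \ref{lem:bellman_operator_bounds} is stated directly for the clipped $Q$-estimates, so I will invoke it as given. To justify this, note that the true Bellman targets lie in $[-(H-h)\sqrt{d},(H-h)\sqrt{d}]$, and clipping onto an interval containing the target can only decrease the distance to it. Everything else is a direct per-step bookkeeping of the two error sources, summed over the $H$ steps.
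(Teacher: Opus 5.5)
Your proposal is correct and follows essentially the same argument as the paper: at each step you peel off one $E_1(d,m,\delta,\epsilon)$ via Lemma \ref{lem:bellman_operator_bounds} and one reward error via Lemma \ref{lem:reward_bounds} through the Bellman equation, then unroll over $H$ steps, and your sup-norm backward induction just makes the paper's ``$=\ldots$'' unrolling explicit. Your reading of the second line as a bound on $V^{\dagger,\bs{\pi}_{-i}}_{i,0}(s_0)-\overline{V}^{\dagger,\bs{\pi}_{-i}}_{i,0}(s_0)$ is exactly what the paper's proof establishes, and your extra step $V^{\bs{\pi}}_{i,0}\leq V^{\dagger,\bs{\pi}_{-i}}_{i,0}$ also covers the statement as literally written.
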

\begin{proof}
    Note that we have
    \begin{align}
        & V^{\bs{\pi}}_{i,0}(s_0) - \underline{V}^{\bs{\pi}}_{i,0}(s_0) = \expctu{\bs{a}_0\sim\bs{\pi}_0(\cdot|s_0)}{Q^{\bs{\pi}}_{i,0}(s_0,\bs{a}_0)} - \expctu{\bs{a}_0\sim\bs{\pi}_0(\cdot|s_0)}{\underline{Q}^{\bs{\pi}}_{i,0}(s_0,\bs{a}_0)} \nonumber\\
            & \leq  \expctu{\bs{a}_0\sim\bs{\pi}_0(\cdot|s_0)}{Q^{\bs{\pi}}_{i,0}(s_0,\bs{a}_0) - \underline{\mathbb{B}}_{i,0}\underline{V}_{i,1}^{\bs{\pi}}(s_0,\bs{a}_0) + E_1(d,m,\delta,\epsilon)} \label{eq:value_gap_eq_01}\\
        & = \expctu{\bs{a}_0\sim\bs{\pi}_0(\cdot|s_0)}{\mathbb{B}_{i,0}V^{\bs{\pi}}_{i,1}(s_0,\bs{a}_0) - \underline{\mathbb{B}}_{i,0}\underline{V}_{i,1}^{\bs{\pi}}(s_0,\bs{a}_0) + E_1(d,m,\delta,\epsilon)} \label{eq:value_gap_eq_02}\\
            & = \expctu{\bs{a}_0\sim\bs{\pi}_0(\cdot|s_0)}{R_{i,0}(s_0,\bs{a}_0)-\underline{R}_{i,0}(s_0,\bs{a}_0) + \expctu{s_1\sim P(\cdot|s_0,\bs{a}_0)}{V_{i,1}(s_1)-\underline{V}_{i,1}(s_1)} + E_1(d,m,\delta,\epsilon)} \label{eq:value_gap_eq_03}\\
        & \leq \expctu{\bs{a}_0\sim\bs{\pi}_0(\cdot|s_0)}{ \expctu{s_1\sim P(\cdot|s_0,\bs{a}_0)}{V_{i,1}(s_1)-\underline{V}_{i,1}(s_1)} + C_1\frac{\epsilon\cdot\exp\left(H+ \sqrt{\log(n/2\delta\epsilon)}\right)}{\xi_R} + E_1(d,m,\delta,\epsilon)} \label{eq:value_gap_eq_04}\\ 
            & = C_1\frac{\epsilon\cdot\exp\left(H+ \sqrt{\log(n/2\delta\epsilon)}\right)}{\xi_R} + E_1(d,m,\delta,\epsilon) + \expctu{\bs{a}_0\sim\bs{\pi}_0(\cdot|s_0),s_1\sim P(\cdot|s_0,\bs{a}_0)}{V^{\bs{\pi}}_{i,1}(s_1) - \underline{V}^{\bs{\pi}}_{i,1}(s_1)} \nonumber\\
        & = \ldots \nonumber\\
            & \leq H\left(C_1\frac{\epsilon\cdot\exp\left(H+ \sqrt{\log(n/2\delta\epsilon)}\right)}{\xi_R} + E_1(d,m,\delta,\epsilon)\right)~,\label{eq:value_gap_eq_05}
    \end{align}
    where Equation \eqref{eq:value_gap_eq_01} follows from Lemma \ref{lem:bellman_operator_bounds}; Equation \eqref{eq:value_gap_eq_02} follows from the fact that the true action-value function has zero error with respect to the Bellman operator; Equation \eqref{eq:value_gap_eq_03} follows from expanding the Bellman operator for both value functions; Equation \eqref{eq:value_gap_eq_04} follows from Lemma \ref{lem:reward_bounds} and, finally, Equation \eqref{eq:value_gap_eq_05} follows from applying the same bounds $H$ steps.

    Following similar arguments, for the best response value gap, we have:
    \begin{align*}
        & V^{\dagger,\bs{\pi}_{-i}}_{i,0}(s_0) - \overline{V}^{\dagger,\bs{\pi}_{-i}}_{i,0}(s_0) = \max_{a_{i,0}\in A_i} \expctu{\bs{a}_{-i,0}\sim\bs{\pi}_{-i,0}(\cdot|s_0)}{Q^{\dagger,\bs{\pi}_{-i}}_{i,0}(s_0,\bs{a}_0) - \overline{Q}^{\dagger,\bs{\pi}_{-i}}_{i,0}(s_0,\bs{a}_0)} \\ 
            & \leq \max_{a_{i,0}\in A_i} \expctu{\bs{a}_{-i,0}\sim\bs{\pi}_{-i,0}(\cdot|s_0)}{\mathbb{B}_{i,0}V^{\dagger,\bs{\pi}_{-i}}_{i,1}(s_0,\bs{a}_0) - \overline{\mathbb{B}}_{i,0}\overline{V}^{\dagger,\bs{\pi}_{-i}}_{i,1}(s_0,\bs{a}_0) + E_1(d,m,\delta,\epsilon)}\\
        & \leq C_1\frac{\epsilon\cdot\exp\left(H+ \sqrt{\log(n/2\delta\epsilon)}\right)}{\xi_R} + E_1(d,m,\delta,\epsilon) + \max_{a_{i,0}\in A_i} \expctu{\bs{a}_{-i,0}\sim\bs{\pi}_{-i,0}(\cdot|s_0), s_1\sim P(\cdot|s_0,\bs{a}_0)}{V^{\dagger,\bs{\pi}_{-i}}_{i,1}(s_1) - \overline{V}^{\dagger,\bs{\pi}_{-i}}_{i,1}(s_1)}\\
        & \leq H\left(C_1\frac{\epsilon\cdot\exp\left(H+ \sqrt{\log(n/2\delta\epsilon)}\right)}{\xi_R} + E_1(d,m,\delta,\epsilon)\right)~.
    \end{align*}
\end{proof}
Next, we state a result that provides an upper bound on the Nash gap in terms of the estimated value functions. 
\begin{lemma}\label{lem:nash_gap_first_bound}
    Under the event of Lemma \ref{lem:value_bounds}, we have, for some $C_1>0$,
    \begin{align*}
        \textnormal{Gap}(\widetilde{\bs{\pi}}) \leq \min_{\bs{\pi}} \sum_{i\in[n]} \overline{V}^{\dagger,{\bs{\pi}}_{-i}}_{i,0}(s_0) - \underline{V}^{{\bs{\pi}}}_{i,0}(s_0) + 2nE_1(d,m,\delta,\epsilon)+ 2nC_1\frac{\epsilon\cdot\exp\left(H+ \sqrt{\log(n/2\delta\epsilon)}\right)}{\xi_R}~.
    \end{align*}
\end{lemma}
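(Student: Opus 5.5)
The plan is a sandwich argument: bound the true gap of $\widetilde{\bs{\pi}}$ above by its estimated gap plus estimation errors, use the $\arg\min$ property of $\widetilde{\bs{\pi}}$, and repackage the errors. Write $\beta := C_1\frac{\epsilon\cdot\exp\left(H+ \sqrt{\log(n/2\delta\epsilon)}\right)}{\xi_R}$ for the reward error. Decompose
\begin{align*}
\textnormal{Gap}(\widetilde{\bs{\pi}}) = \sum_{i\in[n]} \Big(V^{\dagger,\widetilde{\bs{\pi}}_{-i}}_{i,0}(s_0) - \overline{V}^{\dagger,\widetilde{\bs{\pi}}_{-i}}_{i,0}(s_0)\Big) + \Big(\overline{V}^{\dagger,\widetilde{\bs{\pi}}_{-i}}_{i,0}(s_0) - \underline{V}^{\widetilde{\bs{\pi}}}_{i,0}(s_0)\Big) + \Big(\underline{V}^{\widetilde{\bs{\pi}}}_{i,0}(s_0) - V^{\widetilde{\bs{\pi}}}_{i,0}(s_0)\Big).
\end{align*}

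First, for the first bracket, apply the second inequality of Lemma \ref{lem:value_bounds} at $h=0$, $s=s_0$, $\bs{\pi}=\widetilde{\bs{\pi}}$. This bounds it by $E_1(d,m,\delta,\epsilon)+\beta$ for each agent $i$.

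Next, for the third bracket, apply the first inequality of Lemma \ref{lem:value_bounds}. This gives $\underline{V}^{\widetilde{\bs{\pi}}}_{i,0}(s_0)\le V^{\widetilde{\bs{\pi}}}_{i,0}(s_0)+E_1(d,m,\delta,\epsilon)+\beta$. Summed over $i$, the two outer brackets contribute at most $2nE_1(d,m,\delta,\epsilon)+2n\beta$, which matches the claimed slack.

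The middle sum is exactly $\widetilde{\textnormal{Gap}}(\widetilde{\bs{\pi}},\overline{R},\underline{R})$ as defined in \eqref{eq:uniform_estimated_gap}. Since $\widetilde{\bs{\pi}}$ is its minimizer over $\Pi^\textnormal{PP}$ by construction in Algorithm \ref{alg:pmael}, it equals $\min_{\bs{\pi}}\sum_i \overline{V}^{\dagger,\bs{\pi}_{-i}}_{i,0}(s_0)-\underline{V}^{\bs{\pi}}_{i,0}(s_0)$. Combining the three pieces yields the statement.

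The only delicate point is the direction of the bounds from Lemma \ref{lem:value_bounds}. Its first inequality gives the pessimistic value only as an \emph{upper} bound $\underline{V}\le V+\text{err}$. That is precisely what the third bracket needs, since it appears with a $+$ sign as $\underline{V}-V$. The optimistic bound $\overline{V}^\dagger\ge V^\dagger-\text{err}$ likewise handles the first bracket. So the main thing to check is that the signs align, and no further argument is needed. Everything holds on the event of Lemma \ref{lem:value_bounds}, that is, the event of Lemma \ref{lem:bellman_operator_bounds} together with Lemma \ref{lem:reward_bounds}.
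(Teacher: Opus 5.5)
Your proposal is correct and takes the same route as the paper. Apply the two inequalities of Lemma \ref{lem:value_bounds} at $h=0$, $s=s_0$, $\bs{\pi}=\widetilde{\bs{\pi}}$ to bound the true gap by the estimated gap plus $2nE_1(d,m,\delta,\epsilon)$ and $2n$ times the reward-error term, then use that $\widetilde{\bs{\pi}}$ minimizes the estimated gap; your three-term telescoping and sign check simply spell out the single inequality the paper writes.
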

\begin{proof}
    Note that, by definition of the Nash gap and Lemma \ref{lem:value_bounds}, we have
    \begin{align*}
        \textnormal{Gap}(\widetilde{\bs{\pi}}) & = \sum_{i\in[n]} V^{\dagger,\widetilde{\bs{\pi}}_{-i}}_{i,0}(s_0) - V^{\widetilde{\bs{\pi}}}_{i,0}(s_0) \\
            & \leq \sum_{i\in[n]} \overline{V}^{\dagger,\widetilde{\bs{\pi}}_{-i}}_{i,0}(s_0) - \underline{V}^{\widetilde{\bs{\pi}}}_{i,0}(s_0) + 2nE_1(d,m,\delta,\epsilon) + 2nC_1\frac{\epsilon\cdot\exp\left(H+ \sqrt{\log(n/2\delta\epsilon)}\right)}{\xi_R}\\
        & = \min_{\bs{\pi}} \sum_{i\in[n]} \overline{V}^{\dagger,{\bs{\pi}}_{-i}}_{i,0}(s_0) - \underline{V}^{{\bs{\pi}}}_{i,0}(s_0) + 2nE_1(d,m,\delta,\epsilon) + 2nC_1\frac{\epsilon\cdot\exp\left(H+ \sqrt{\log(n/2\delta\epsilon)}\right)}{\xi_R}~,
    \end{align*}
    where the last step uses the fact that $\widetilde{\bs{\pi}}$ minimizes the quantity within the summation, as defined in Algorithm \ref{alg:pmael}.
\end{proof}

Now we are ready to finalize the proof of the main theorem of Section \ref{sec:uniform_setting}. We restate it for convenience.

\begin{theorem}\label{thm:uniform_appendix}
    Let $\epsilon\in [0,1/2)$,  $\delta >0$ and $\Gamma(\cdot,\cdot)=0$. Furthermore, assume that $ m \geq \Omega( (H^{3/2}/\epsilon^2)(d+\log(n/\delta)))$. Then, under Assumption \ref{asmp:uniform_coverage} with $\xi_R \geq 5\epsilon$, for some positive constant $c$, there exist robust algorithms \texttt{TrimmedMLE} and \texttt{RobEst} such that, with probability at least $1-\delta$, the output $\widetilde{\bs{\pi}}$ of Objective \eqref{eq:uniform_estimated_gap} satisfies
    \begin{align*}
        \textnormal{Gap}(\widetilde{\bs{\pi}}) \leq O\left(Hn\left(\frac{\exp\left(H+\sqrt{\log(n/2\delta\epsilon)}\right)}{\xi_R}+\frac{H\sqrt{d}+\gamma}{\xi_P}\right)\cdot \epsilon + Hn\sqrt{\frac{(H\sqrt{d}+\gamma)^2\textnormal{poly}(d)}{\xi^2_P m}}\right)~.
    \end{align*}
\end{theorem}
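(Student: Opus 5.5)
The plan is to combine Lemma \ref{lem:nash_gap_first_bound} with an upper bound on the minimal estimated gap, evaluated at a true Nash equilibrium $\bs{\pi}^*$. First, fix the good events. Lemma \ref{lem:bound_on_reward_for_full_sample} (on $D_1$) and the event of Lemma \ref{lem:bellman_operator_bounds} (Theorem \ref{thm:rls_guarantee} applied to $D_2$ via Lemma \ref{lem:variance_of_noise}) each hold with probability at least $1-\delta/2$. A union bound gives probability at least $1-\delta$. On this event, Lemmas \ref{lem:reward_bounds}, \ref{lem:value_bounds} and \ref{lem:initial_value_bounds} all hold. Write $\beta:=C_1\epsilon\exp(H+\sqrt{\log(n/2\delta\epsilon)})/\xi_R$ and $E_1:=E_1(d,m,\delta,\epsilon)$.

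Second, apply Lemma \ref{lem:nash_gap_first_bound}:
\[
\textnormal{Gap}(\widetilde{\bs{\pi}})\le \min_{\bs{\pi}}\widetilde{\textnormal{Gap}}(\bs{\pi})+2n(E_1+\beta)\le \widetilde{\textnormal{Gap}}(\bs{\pi}^*)+2n(E_1+\beta),
\]
where $\bs{\pi}^*$ is any Nash equilibrium (one exists in $\Pi^\textnormal{PP}$ for finite-horizon Markov games).

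Third, bound $\widetilde{\textnormal{Gap}}(\bs{\pi}^*)$ from above. This needs \emph{upper} bounds $\overline{V}^{\dagger,\bs{\pi}^*_{-i}}_{i,0}(s_0)\le V^{\dagger,\bs{\pi}^*_{-i}}_{i,0}(s_0)+H(E_1+\beta)$ and $\underline{V}^{\bs{\pi}^*}_{i,0}(s_0)\ge V^{\bs{\pi}^*}_{i,0}(s_0)-H(E_1+\beta)$. These are the two-sided counterparts of Lemma \ref{lem:initial_value_bounds}, and they follow by the same telescoping. At each step $h$, Lemma \ref{lem:bellman_operator_bounds} gives $|\overline{Q}-\overline{\mathbb{B}}\,\overline{V}|\le E_1$. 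Lemma \ref{lem:reward_bounds} gives $|\overline{R}-R|\le\beta$ and $|\underline{R}-R|\le \beta$. For the best-response term, the max over $a_i$ is 1-Lipschitz, so $\max_{a_i}\mathbb{E}[\overline{Q}]-\max_{a_i}\mathbb{E}[Q^\dagger]\le \max_{a_i}\mathbb{E}[\overline{Q}-Q^\dagger]$, and the recursion closes after $H$ steps. Summing over $i$ gives
\[
\widetilde{\textnormal{Gap}}(\bs{\pi}^*)\le \textnormal{Gap}(\bs{\pi}^*)+2nH(E_1+\beta)=2nH(E_1+\beta).
\]

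Finally, collect terms: $\textnormal{Gap}(\widetilde{\bs{\pi}})\le 2n(H+1)(E_1+\beta)$. Substituting the definition \eqref{eq:RLS_error} of $E_1$ yields the $\epsilon(H\sqrt d+\gamma)/\xi_P$ term and the $\sqrt{(H\sqrt d+\gamma)^2\textnormal{poly}(d)/(\xi_P^2 m)}$ term. Substituting $\beta$ yields the $\exp(\cdot)\epsilon/\xi_R$ term. All are multiplied by $Hn$, which matches the statement.

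The main obstacle I expect is the third step. The recursion must run in the direction opposite to the one written in Lemma \ref{lem:initial_value_bounds}, and the clipping in \texttt{Rob-Q} must not break it. I would handle clipping by noting that the true $Q$-values lie in $[-(H-h)\sqrt d,(H-h)\sqrt d]$, so clipping only moves an estimate toward that range and never increases its distance to the true value. I would also check that the \texttt{RobEst} guarantee of Theorem \ref{thm:rls_guarantee} holds uniformly over the policies $\bs{\pi}$ and the data-dependent targets $\underline{V},\overline{V}$. The plan is to get this from the sample split and a union or covering argument absorbed into $c_1(\delta)$, as in \citep{zhang2022corruption}.
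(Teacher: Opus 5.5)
Your proposal is correct and is essentially the paper's proof: apply Lemma \ref{lem:nash_gap_first_bound}, evaluate the estimated gap at a Nash equilibrium $\bs{\pi}^*$, split it into the two estimation errors plus $\textnormal{Gap}(\bs{\pi}^*)=0$, and bound each error by $H(E_1+\beta)$ via $H$-step telescoping. You also rightly note that the best-response term needs the reverse inequality $\overline{V}^{\dagger,\bs{\pi}^*_{-i}}_{i,0}(s_0)-V^{\dagger,\bs{\pi}^*_{-i}}_{i,0}(s_0)\le H(E_1+\beta)$; Lemma \ref{lem:initial_value_bounds} does not literally state it, but it follows from the two-sided Lemmas \ref{lem:bellman_operator_bounds} and \ref{lem:reward_bounds}, so your argument fills a step the paper glosses over (the same telescoping shows that the justified additive term in Lemma \ref{lem:nash_gap_first_bound} is $2nH(E_1+\beta)$ rather than $2n(E_1+\beta)$, which leaves the final $O(\cdot)$ bound unchanged).
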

\begin{proof}
    Let $\bs{\pi}^*$ be a Nash equilibrium. We have
    \begin{align}
         & \textnormal{Gap}(\widetilde{\bs{\pi}}) \leq \min_{\bs{\pi}}\sum_{i\in[n]} \overline{V}^{\dagger,{\bs{\pi}}_{-i}}_{i,0}(s_0) - \underline{V}^{{\bs{\pi}}}_{i,0}(s_0) + 2nE_1(d,m,\delta,\epsilon) + 2n\epsilon C_1\frac{\exp\left(H+\sqrt{\log(n/2\delta\epsilon)}\right)}{\xi_R} \label{eq:uniform_nash_bound_eq_01}\\
            & \leq \sum_{i\in[n]} \overline{V}^{\dagger,{\bs{\pi}^*}_{-i}}_{i,0}(s_0) - V^{\dagger,{\bs{\pi}^*}_{-i}}_{i,0}(s_0) + \sum_{i\in[n]} V^{{\bs{\pi}^*}}_{i,0}(s_0) - \underline{V}^{{\bs{\pi}^*}}_{i,0}(s_0) + \sum_{i\in[n]} V^{\dagger,{\bs{\pi}^*}_{-i}}_{i,0}(s_0) - V^{{\bs{\pi}^*}}_{i,0}(s_0) \nonumber\\ & \quad\quad\quad + 2nE_1(d,m,\delta,\epsilon)  + 2n\epsilon C_1\frac{\exp\left(H+\sqrt{\log(n/2\delta\epsilon)}\right)}{\xi_R}\label{eq:uniform_nash_bound_eq_02}\\ 
        & \leq 2Hn\cdot\left( 2\epsilon C_1\frac{\exp\left(H+\sqrt{\log(n/2\delta\epsilon)}\right)}{\xi_R} + 2E_1(d,m,\delta,\epsilon)\right) +  \sum_{i\in[n]} V^{\dagger,{\bs{\pi}^*}_{-i}}_{i,0}(s_0) - V^{{\bs{\pi}^*}}_{i,0}(s_0) \label{eq:uniform_nash_bound_eq_03}\\
            & \leq  2Hn\cdot\left( 2\epsilon C_1\frac{\exp\left(H+\sqrt{\log(n/2\delta\epsilon)}\right)}{\xi_R} + 2E_1(d,m,\delta,\epsilon)\right)  \label{eq:uniform_nash_bound_eq_04}~,
    \end{align}
    where Equation \eqref{eq:uniform_nash_bound_eq_01} follows from Lemma \ref{lem:nash_gap_first_bound}; for Equation \eqref{eq:uniform_nash_bound_eq_02}, we pick a Nash equilibrium $\bs{\pi}^*$ and use the fact that $\widetilde{\bs{\pi}}$ is the minimizer of the estimated gap; Equation \eqref{eq:uniform_nash_bound_eq_03} follows from Lemma \ref{lem:initial_value_bounds} and, finally, Equation \eqref{eq:uniform_nash_bound_eq_04} follows from the fact that $\bs{\pi}^*$ is a Nash equilibrium and thus any unilateral deviation yields a smaller value for agent $i$.
\end{proof}

\section{Proof of Theorem \ref{thm:unilateral_coverage_bounds}}\label{appendix:2}

In this section, we provide the full proof for Theorem \ref{thm:unilateral_coverage_bounds}. First, let us define state-action occupancy measures. Given policy $\bs{\pi}$, we define $d^{\bs{\pi}}(s) = (1/H)\sum^{H-1}_{h=0}\mathbb{P}\left(s_h=s|s_0,\bs{\pi}\right)$ and  $d^{\bs{\pi}}(s,a) = (1/H)\sum^{H-1}_{h=0}\mathbb{P}\left(s_h=s,a_h=a|s_0,\bs{\pi}\right)$. We will use the notation $\tau\sim d^{\bs{\pi}}$ to imply that trajectory $\tau$ has been sampled according to $\bs{\pi}$ and the transition kernel of the underlying game. We begin by stating upper bound results that are used to specify the structure of our confidence set for this setting. 

\begin{lemma}\label{lem:log_prob_bound_unilateral}
    Let $\mathbb{P}(o|\tau,\tau',\theta) = 1/(1+\exp(-o\cdot\theta^\top (\phi(\tau)-\phi(\tau'))$. Then, given $\delta >0$, with probability at least $1-\delta$, we have for any agent $i$:
    \begin{align*}
        \frac{2}{m}\sum_{(\tau,\tau',o)\in D}\log\left(\frac{\mathbb{P}(o|\tau,\tau',\widetilde{\theta}_i)}{\mathbb{P}(o|\tau,\tau',\theta^*_i)}\right) \leq 6H\sqrt{d}\epsilon +c\frac{d}{m}\log\left(\frac{Hmn}{\delta}\right)~,
    \end{align*}
    where $\widetilde{\theta}_i$ is the output of Algorithm \ref{alg:regularized_mle}.
\end{lemma}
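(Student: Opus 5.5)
We decompose the corrupted log-likelihood ratio into a clean part and a corrupted part, then use the optimality of $\widetilde{\theta}_i$ for the trimmed objective together with a standard MLE concentration bound. Write $\ell(\theta;\tau,\tau',o)=-\log\mathbb{P}(o|\tau,\tau',\theta)$ for the logistic loss of agent $i$. For any feasible $\theta\in\Theta$ and any trajectory pair we have $|\theta^\top(\phi(\tau)-\phi(\tau'))|\le 2H\sqrt{d}$. Since $\log\sigma$ is $1$-Lipschitz, each log-ratio $\log(\mathbb{P}(o|\cdot,\widetilde\theta_i)/\mathbb{P}(o|\cdot,\theta^*_i))$ is therefore bounded in absolute value by roughly $2H\sqrt{d}$, uniformly over samples. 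This boundedness is what converts the $\epsilon m$ corrupted samples into the additive $O(H\sqrt d\,\epsilon)$ term.

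\textbf{Step 1 (trimmed optimality).} The \texttt{TrimmedMLE} output minimizes, up to the slackness $\nu$, the sum of the $(1-\epsilon)m$ smallest losses. Let $S$ be the kept set for $\widetilde\theta_i$, and let $S^*$ be the set of clean samples in $D$, of size at least $(1-\epsilon)m$. Comparing $\widetilde\theta_i$ with $\theta^*_i$ on trimmed sets gives
\[
\sum_{j\in S}\ell(\widetilde\theta_i;z_j)\le \sum_{j\in S'}\ell(\theta^*_i;z_j)+\nu
\]
for some set $S'$ of the same size. Both $S$ and $S'$ differ from $S^*$ in at most $2\epsilon m$ indices.

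\textbf{Step 2 (swap sets).} We replace $S$ and $S'$ by the clean set $\widetilde D$, using the per-sample bound $2H\sqrt d$ on the log-ratio. Each swapped index costs at most $2H\sqrt d$, and there are $O(\epsilon m)$ of them. This yields
\[
\frac1m\sum_{z\in\widetilde D}\big(\ell(\widetilde\theta_i;z)-\ell(\theta^*_i;z)\big)\le O(H\sqrt d\,\epsilon)+\nu/m .
\]
The same argument moves the clean-data statement back to $D$ at another $O(H\sqrt d\,\epsilon)$ cost. With careful counting of the $\epsilon m$ replaced samples and the $\epsilon m$ trimmed ones, the constant should come out as $3$, which becomes $6$ after the factor $2/m$.

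\textbf{Step 3 (statistical term).} On clean data the labels follow $\mathbb{P}(\cdot|\theta^*_i)$, so the log-likelihood ratio of any fixed $\theta$ against $\theta^*_i$ has nonpositive expectation. We bound its upward fluctuation uniformly over $\Theta$ using a $1/m$-cover of $\Theta\subset\mathbb{R}^{Hd}$. The log-cardinality of the cover is $O(Hd\log(Hm))$; if a per-step cover is used, the $H$ moves inside the log as in the statement. A Chernoff/martingale bound for likelihood ratios applies: $\mathbb{E}\exp(\tfrac12\log\text{ratio})\le1$ for each net point. Combined with a union bound over the net and over the $n$ agents, this gives the term $c\frac{d}{m}\log(Hmn/\delta)$. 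The discretization error is $O(1/m)$ by Lipschitzness and is absorbed into this term.

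\textbf{Main obstacle.} The set-swapping accounting in Step 2 is the delicate part. The trimmed set of $\widetilde\theta_i$ is chosen adversarially-dependently, so we must argue only through the uniform boundedness of the loss differences and not through any coverage property. That is exactly why no $\xi_R$ or $C_R$ appears in the bound. We would first try to follow the corresponding lemma in \cite{mandal2024corruption} and adapt the union bound to $n$ agents.
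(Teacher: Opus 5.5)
\paragraph{Gap: Steps 1--2 prove the reverse inequality.} In terms of $\ell=-\log\mathbb{P}$, the lemma bounds $\frac{2}{m}\sum_{D}\big(\ell(\theta^*_i;z)-\ell(\widetilde\theta_i;z)\big)$ from above. It says $\widetilde\theta_i$ does not fit $D$ much \emph{better} than $\theta^*_i$, which is exactly what places $\theta^*_i$ in $\Theta_\textnormal{Unil}(\widetilde\theta_i)$. Optimality of $\widetilde\theta_i$ for a trimmed objective can only show that it fits \emph{at least as well} as $\theta^*_i$. Indeed, your Step 2 conclusion controls $\frac1m\sum_{\widetilde D}\big(\ell(\widetilde\theta_i;z)-\ell(\theta^*_i;z)\big)$, which is the log-ratio with numerator and denominator interchanged. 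So Steps 1--2 cannot produce the $H\sqrt d\,\epsilon$ term, and the constant-matching there is moot. The ``main obstacle'' you single out also does not arise in a correct proof. Separately, the alternating routine in Algorithm~\ref{alg:regularized_mle} is not certified to be a $\nu$-approximate global minimizer of the trimmed loss. Its stopping rule only says that one further alternation gains less than $\nu$.

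\paragraph{Repair.} The fix is already in your Step 3. The bound holds with $\widetilde\theta_i$ replaced by \emph{any} $\theta\in\Theta$, so the only property of the estimator you need is $\widetilde\theta_i\in\Theta$. Each per-sample log-ratio lies in $[-2H\sqrt d,2H\sqrt d]$, and $D$ and $\widetilde D$ differ in at most $\epsilon m$ indices. Hence
\[
\sum_{D}\log\frac{\mathbb{P}(o|\tau,\tau',\widetilde\theta_i)}{\mathbb{P}(o|\tau,\tau',\theta^*_i)}
\le \sup_{\theta\in\Theta}\sum_{\widetilde D}\log\frac{\mathbb{P}(\widetilde o|\widetilde\tau,\widetilde\tau',\theta)}{\mathbb{P}(\widetilde o|\widetilde\tau,\widetilde\tau',\theta^*_i)} + 4H\sqrt d\,\epsilon m .
\]
Your Step 3, with a union bound over the $n$ agents, then bounds the supremum by $O(d\log(Hmn/\delta))$. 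That is the entire argument, and it is what the paper's one-line proof (Lemma 4.2 of \cite{mandal2024corruption} plus a union bound over agents) amounts to. This crude count gives the constant $8$ rather than $6$, which only affects the constant in $\kappa$.

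\paragraph{Cover dimension.} Your explanation of why $H$ sits inside the logarithm is also off. A product of $H$ per-step covers has log-cardinality of order $Hd\log(\cdot)$, not $d\log(H\cdot)$. The stated $d\log(Hmn/\delta)$ form requires the preference parameter to be $d$-dimensional, as the statement's $\theta^\top(\phi(\tau)-\phi(\tau'))$ with $\phi(\tau)=\sum_h\phi(s_h,\bs{a}_h)\in\mathbb{R}^d$ implies. In that case $H$ enters only through $\|\phi(\tau)-\phi(\tau')\|_2\le 2H$ and the ball radius, which fix the net resolution. For a genuinely $Hd$-dimensional parameter, your covering argument gives $Hd$ in front instead of $d$.
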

\begin{proof}
    This is an immediate application of Lemma 4.2 of \citep{mandal2024corruption} by applying the union bound for all agents. 
\end{proof}


Note that the above bounds characterize the confidence set that we used throughout Section \ref{sec:unilateral_setting}. As a robust estimation technique, we again make use of \texttt{RobEst} based on the second part of Theorem \ref{thm:rls_guarantee}, which does not require uniform coverage. Lemma \ref{lem:variance_of_noise} and Theorem \ref{thm:rls_guarantee} give us the following guarantee for the output of the robust estimate $\widetilde{\omega}$, where, without loss of generality, we use the behavior policy $\bs{\mu}$:
\begin{align}\label{eq:non_uniform_rls_guarantee}
    \expctu{\bs{\mu}}{\norm{\phi(s,\bs{a})^\top(\omega^*-\widetilde{\omega})}_2} \leq  c_2(\delta)\cdot \sqrt{\frac{(H\sqrt{d}+\gamma)^2\textnormal{poly}(d)}{m}+(H\sqrt{d}+\gamma)^2\epsilon}~.
\end{align}
Note that, using the above, we can equivalently write 
    \begin{align*}
         \norm{\omega^*-\widetilde{\omega}}^2_{\Sigma_{\bs{\mu}}(h)} \leq  c_2(\delta)\left(\frac{(H\sqrt{d}+\gamma)^2\textnormal{poly}(d)}{m}+(H\sqrt{d}+\gamma)^2\epsilon\right)~,
    \end{align*}
    which implies that 
    \begin{align*}
        \norm{\omega^*-\widetilde{\omega}}^2_{\Sigma_{\bs{\mu}}(h)+(2\epsilon+\lambda) I} \leq c_2(\delta)\left(\frac{(H\sqrt{d}+\gamma)^2\textnormal{poly}(d)}{m}+(H\sqrt{d}+\gamma)^2\epsilon + (2\epsilon+\lambda) H\sqrt{d} \right)~,
    \end{align*}
    since $\norm{\omega^*}\leq H\sqrt{d}$ (Lemma A.1 of \citep{zhang2022corruption}). Let us define
    \begin{align}\label{eq:e3_error_term}
        E(d,m,\delta,\epsilon) := \sqrt{c_2(\delta)\left(\frac{(H\sqrt{d}+\gamma)^2\textnormal{poly}(d)}{m}+(H\sqrt{d}+\gamma)^2\epsilon + (2\epsilon+\lambda) H\sqrt{d} \right)}~.
    \end{align}
This term will be useful in defining our bonus for this section.
Recall that, for each step $h$, we have defined the scaled sample covariance matrix with respect to the corrupted data as follows:
\begin{align}\label{eq:sample_covariance}
    \Lambda_h & = \frac{3}{5}\left(\frac{1}{m}\sum^m_{i=1}\left(\phi({s}_h,{\bs{a}}_h)\phi({s}_h,{\bs{a}}_h)^\top \right) + (\epsilon+\lambda) I\right)~,
\end{align}
while the bonus has been defined as
\begin{align}\label{eq:bonus_term}
    \Gamma(s,\bs{a}) = E(d,m,\delta,\epsilon)\cdot\norm{\phi(s,\bs{a})}_{\Lambda_h^{-1}}~.
\end{align}

In the absence of bounds on the norm of the parameter, we cannot bound the difference in rewards directly, as we did in Lemma \ref{lem:reward_bounds}. Thus, we will need to follow another approach. First, similar to the previous section, we have the following result.

\begin{lemma}\label{lem:bellman_error_unilateral}
    Let $\lambda \geq \Omega(dH\log(m/\delta))$ and $\Gamma$ be defined as in Equation \eqref{eq:bonus_term}. Then, with probability at least $1-\delta/2$ we have, for every $i,h,s,\bs{a}$, and policy $\bs{\pi}$, 
    \begin{align*}
        0 & \leq \underline{\mathbb{B}}_{i,h}\underline{V}^{\bs{\pi}}_{i,h+1}(s,\bs{a}) - \underline{Q}^{\bs{\pi}}_{i,h}(s,\bs{a}) \leq 2\Gamma(s,\bs{a})~, \\
        0 & \geq \overline{\mathbb{B}}_{i,h}\overline{V}^{\dagger,\bs{\pi}_{-i}}_{i,h+1}(s,\bs{a}) - \overline{Q}^{\dagger,\bs{\pi}_{-i}}_{i,h}(s,\bs{a}) \geq -2\Gamma(s,\bs{a})~.
    \end{align*}
\end{lemma}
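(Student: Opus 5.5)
The argument is the standard pessimism argument of \citep{zhong2022pessimistic,zhang2022corruption}: control the robust regression error in a data-dependent norm, then check that the bonus $\Gamma$ dominates it pointwise. Fix $i,h,\bs{\pi}$ and write $\underline{\omega}^{\bs{\pi},*}_{i,h}$ for the parameter with $\phi(s,\bs{a})^\top\underline{\omega}^{\bs{\pi},*}_{i,h}=\underline{\mathbb{B}}_{i,h}\underline{V}^{\bs{\pi}}_{i,h+1}(s,\bs{a})$. This parameter exists by linearity of the game, since the reward $\widehat{R}_{i,h}$ is linear in $\phi$ and the transition is linear. As in Lemma A.1 of \citep{zhang2022corruption}, its norm satisfies $\norm{\underline{\omega}^{\bs{\pi},*}_{i,h}}\leq H\sqrt{d}$.

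First I would bound the estimation error of \texttt{RobEst}. By Lemma \ref{lem:variance_of_noise}, the regression noise is $(H\sqrt{d}+\gamma)^2$-bounded, so the second part of Theorem \ref{thm:rls_guarantee} applies and yields \eqref{eq:non_uniform_rls_guarantee}. As already shown in the text preceding the statement, this gives
\[
\norm{\underline{\omega}^{\bs{\pi},*}_{i,h}-\underline{\omega}^{\bs{\pi}}_{i,h}}_{\Sigma_{\bs{\mu}}(h)+(2\epsilon+\lambda)I}\leq E(d,m,\delta,\epsilon).
\]

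The next step, which I expect to be the main obstacle, is relating this population norm to the corrupted empirical matrix $\Lambda_h$. I need to show $\Lambda_h\preceq \Sigma_{\bs{\mu}}(h)+(2\epsilon+\lambda)I$ with high probability.
\begin{itemize}
\item Split the empirical covariance into clean and corrupted parts. The corrupted samples contribute at most $\epsilon I$ in PSD order, because $\norm{\phi}\leq1$ and at most $\epsilon m$ samples are changed.
\item For the clean part, apply a matrix Bernstein (or Chernoff) bound: with $\lambda\geq\Omega(dH\log(m/\delta)/m)$, the clean empirical covariance plus $\lambda I$ is within a constant factor of $\Sigma_{\bs{\mu}}(h)+\lambda I$.
\item The factor $3/5$ in $\Lambda_h$ is chosen to absorb this multiplicative constant, giving $\Lambda_h\preceq\Sigma_{\bs{\mu}}(h)+(2\epsilon+\lambda)I$.
\end{itemize}
A union bound over $h$ (and over the $\epsilon$-consistency of each $h$-slice assumed in the footnote) then yields the statement with probability $1-\delta/2$.

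Given this, Cauchy--Schwarz gives, for every $(s,\bs{a})$,
\[
|\phi(s,\bs{a})^\top(\underline{\omega}^{\bs{\pi},*}_{i,h}-\underline{\omega}^{\bs{\pi}}_{i,h})|\leq \norm{\cdot}_{\Lambda_h}\norm{\phi(s,\bs{a})}_{\Lambda_h^{-1}}\leq \Gamma(s,\bs{a}).
\]

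To conclude, I would handle the clipping. Before clipping, the lower estimate is $\phi^\top\underline{\omega}-\Gamma\in[\underline{\mathbb{B}}-2\Gamma,\underline{\mathbb{B}}]$. Since the true $\underline{\mathbb{B}}_{i,h}\underline{V}^{\bs{\pi}}_{i,h+1}$ lies in $[-(H-h)\sqrt{d},(H-h)\sqrt{d}]$, clipping to that interval can only move the estimate toward $\underline{\mathbb{B}}$, so both inequalities $0\leq\underline{\mathbb{B}}-\underline{Q}\leq2\Gamma$ are preserved. The optimistic case is symmetric: add $\Gamma$ instead of subtracting it, which gives $-2\Gamma\leq\overline{\mathbb{B}}-\overline{Q}\leq0$.

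Uniformity over all policies $\bs{\pi}$ is a further concern, since the estimation event must hold simultaneously for every $\bs{\pi}$. I would obtain it either from a covering argument over the linear value class, which adds only $\log$ factors absorbed into $\textnormal{poly}(d)$, or by noting that the robust regression guarantee is uniform over bounded targets. With that in place, the lemma holds with probability at least $1-\delta/2$ for every $i,h,s,\bs{a}$ and every $\bs{\pi}$.
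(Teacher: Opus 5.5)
Your proposal is essentially the paper's proof. It uses the same \texttt{RobEst} error bound in the $\Sigma_{\bs{\mu}}(h)+(2\epsilon+\lambda)I$ norm, the same comparison $\Lambda_h\preceq\Sigma_{\bs{\mu}}(h)+(2\epsilon+\lambda)I$ (at most $\epsilon I$ from corrupted samples, then Lemma \ref{lem:concentration_of_covariances} with the factor $3/5$ absorbing $5/3$), and the same monotone-clipping step; your Cauchy--Schwarz in the $\Lambda_h$ norm is an equivalent use of that PSD inequality. The paper gives no argument for uniformity over $\bs{\pi}$, so your last paragraph goes beyond it, but neither fix you suggest works as stated: as $\bs{\pi}$ ranges over arbitrary Markov policies, $\underline{V}^{\bs{\pi}}_{i,h+1}$ is not in a low-complexity linear class, and robust regression guarantees are not uniform over arbitrary bounded data-dependent targets, so you would have to restrict or cover the policy class itself.
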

\begin{proof}
    Following a similar approach as the proof of Lemma \ref{lem:bellman_operator_bounds}, we have
    \begin{align*}
        \left|\phi(s,\bs{a})\top\underline{\omega}^*_{i,h} - \underline{\mathbb{B}}_{i,h}\underline{V}^{\bs{\pi}}_{i,h+1}(s,\bs{a})\right| & = \left|\left\langle \phi(s,\bs{a}),\underline{\omega}^*_{i,h} - \underline{\omega}^{\bs{\pi}}_{i,h}\right\rangle\right| \\ 
            & \leq \norm{\underline{\omega}^*_{i,h} - \underline{\omega}^{\bs{\pi}}_{i,h}}_{\Sigma_{\bs{\mu}}(h)+(2\epsilon +\lambda) I}\norm{\phi(s,\bs{a})}_{(\Sigma_{\bs{\mu}}(h)+(2\epsilon+\lambda) I)^{-1}} \\
        & \leq E(d,m,\delta,\epsilon)\cdot \norm{\phi(s,\bs{a})}_{(\Sigma_{\bs{\mu}}(h)+(2\epsilon+\lambda) I)^{-1}}\\
            & \leq E(d,m,\delta,\epsilon)\cdot\norm{\phi(s,\bs{a})}_{(\Sigma_{\bs{\mu}}(h)+(2\epsilon+\lambda) I)^{-1}}\\
        & \leq E(d,m,\delta,\epsilon)\cdot\norm{\phi(s,\bs{a})}_{\Lambda_h^{-1}}\\
            & = \Gamma(s,\boldsymbol{a})~,
    \end{align*} 
    where the penultimate inequality uses the fact that $\norm{\underline{\omega}^*_{i,h}}_2\leq H\sqrt{d}$  and the final inequality follows from the following observation:
    \begin{align*}
        \Lambda_h & = \frac{3}{5}\left(\frac{1}{m}\sum^m_{i=1}\phi({s}_h,{\bs{a}}_h)\phi({s}_h,{\bs{a}}_h)^\top +(\epsilon+\lambda) I\right) \\           
        & \preceq \frac{3}{5}\left(\frac{1}{m}\sum^m_{i=1}\phi(\widetilde{s}_h,\widetilde{\bs{a}}_h)\phi(\widetilde{s}_h,\widetilde{\bs{a}}_h)^\top +(2\epsilon+\lambda) I\right) \\
            & \preceq\Sigma_{\bs{\mu}}(h) +(2\epsilon+\lambda)I~,
    \end{align*}
    where the second step uses the fact that $\norm{\phi(s,\bs{a})}_2\leq 1$ and that only $\epsilon\cdot m$ samples are corrupted, while the last step uses Lemma \ref{lem:concentration_of_covariances} and the fact that $m(2\epsilon+\lambda) \geq \Omega(d\log(m/\delta))$, due to our choice of $\lambda$ and the fact that $\epsilon \geq 0$.

    Thus, we obtained that
    \begin{align*}
        \underline{\mathbb{B}}_{i,h}\underline{V}^{\bs{\pi}}_{i,h+1}(s,\bs{a})-\Gamma(s,\boldsymbol{a}) \leq \phi(s,\bs{a})\top\underline{\omega}^*_{i,h}  \leq  \underline{\mathbb{B}}_{i,h}\underline{V}^{\bs{\pi}}_{i,h+1}(s,\bs{a})+\Gamma(s,\boldsymbol{a})  ~,
    \end{align*}
    which, by subtracting $\Gamma(s,\boldsymbol{a})$ from all sides, further implies that
    \begin{align*}
        \underline{\mathbb{B}}_{i,h}\underline{V}^{\bs{\pi}}_{i,h+1}(s,\bs{a})-2\Gamma(s,\boldsymbol{a}) \leq \phi(s,\bs{a})\top\underline{\omega}^*_{i,h} -\Gamma(s,\boldsymbol{a})   \leq  \underline{\mathbb{B}}_{i,h}\underline{V}^{\bs{\pi}}_{i,h+1}(s,\bs{a})~.
    \end{align*}
    Now, since $\underline{\mathbb{B}}_{i,h}\underline{V}^{\bs{\pi}}_{i,h+1}(s,\bs{a})\in [-(H-h)\sqrt{d},(H-h)\sqrt{d}]$, and since the clipping operator is monotone, we have
    \begin{align*}
        \underline{\mathbb{B}}_{i,h}\underline{V}^{\bs{\pi}}_{i,h+1}(s,\bs{a})-2\Gamma(s,\boldsymbol{a}) & \leq \textnormal{Clip}_{[-(H-h)\sqrt{d},(H-h)\sqrt{d}]}\left( \underline{\mathbb{B}}_{i,h}\underline{V}^{\bs{\pi}}_{i,h+1}(s,\bs{a})-2\Gamma(s,\boldsymbol{a})\right) \\
            & \leq \textnormal{Clip}_{[-(H-h)\sqrt{d},(H-h)\sqrt{d}]}\left(\phi(s,\bs{a})\top\underline{\omega}^*_{i,h} -\Gamma(s,\boldsymbol{a})\right) \\
        & = \underline{Q}^{\bs{\pi}}_{i,h}(s,\bs{a}) \\
            & \leq \underline{\mathbb{B}}_{i,h}\underline{V}^{\bs{\pi}}_{i,h+1}(s,\bs{a})~.
    \end{align*}
    This finally implies that 
    \begin{align*}
        0 \leq \underline{\mathbb{B}}_{i,h}\underline{V}^{\bs{\pi}}_{i,h+1}(s,\bs{a}) - \underline{Q}^{\bs{\pi}}_{i,h}(s,\bs{a}) \leq 2\Gamma(s,\bs{a})~.
    \end{align*}

    For the optimistic estimates, we argue in a symmetrical fashion, thus, we omit the proof. 
\end{proof}

Next, we will state a result which is the analogue of Lemma \ref{lem:value_bounds} for this section. We define $\underline{V}^{\bs{\pi}}_{i,h}(s,\widehat{\theta}_i)$ and $\overline{V}^{\dagger,\bs{\pi}_{-i}}_{i,h}(s,\widehat{\theta}_i)$ to be the lower and upper estimates of the value functions of given policy $\bs{\pi}$ with respect to parameter $\widehat{\theta}_i$. We use similar notation for the $Q$-function estimates.

\begin{lemma}\label{lem:value_bounds_unilateral}
    Let $\widehat{\theta}_i\in\Theta_\textnormal{Unil}(\widetilde{\theta}_i)$ be a parameter used by the robust subroutine. Then, under the event of Lemma \ref{lem:bellman_error_unilateral}, we have, for every agent $i$, state $s$, step $h$ and policy $\bs{\pi}$:
    \begin{align*}
        \underline{V}^{\bs{\pi}}_{i,h}(s,\widehat{\theta}_i) & \leq V^{\bs{\pi}}_{i,h}(s,\widehat{\theta}_i)~, \;\; \text{and} \;\; 
        \overline{V}^{\dagger,\bs{\pi}_{-i}}_{i,h}(s,\widehat{\theta}_i) \geq V^{\dagger,\bs{\pi}_{-i}}_{i,h}(s,\widehat{\theta}_i)~.
    \end{align*}
\end{lemma}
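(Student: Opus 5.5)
We argue by backward induction on $h$, exactly as in the proof of Lemma \ref{lem:value_bounds}, but now without any additive error terms: Lemma \ref{lem:bellman_error_unilateral} gives one-sided Bellman inequalities with zero slack in the right direction. Throughout, $V^{\bs{\pi}}_{i,h}(\cdot,\widehat{\theta}_i)$ and $V^{\dagger,\bs{\pi}_{-i}}_{i,h}(\cdot,\widehat{\theta}_i)$ denote true values in the game whose reward for agent $i$ is $\widehat{\theta}_{i,h}^\top\phi$. The Bellman operators $\underline{\mathbb{B}}_{i,h}$ and $\overline{\mathbb{B}}_{i,h}$ are instantiated with this same reward $\widehat{R}_{i,h}$, so both coincide with the exact Bellman operator $\mathbb{B}_{i,h}(\widehat{\theta}_i)$ of that game. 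With this identification, no reward-estimation error enters at all.

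\textbf{Base case.} At $h=H$, all quantities are zero by initialization, so both inequalities hold.

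\textbf{Inductive step, pessimistic value.} Suppose $\underline{V}^{\bs{\pi}}_{i,h+1}(\cdot,\widehat{\theta}_i)\le V^{\bs{\pi}}_{i,h+1}(\cdot,\widehat{\theta}_i)$ pointwise. Then
\begin{align*}
\underline{V}^{\bs{\pi}}_{i,h}(s,\widehat{\theta}_i)&=\mathbb{E}_{\bs{a}\sim\bs{\pi}_h}\big[\underline{Q}^{\bs{\pi}}_{i,h}(s,\bs{a})\big]\le \mathbb{E}_{\bs{a}\sim\bs{\pi}_h}\big[\underline{\mathbb{B}}_{i,h}\underline{V}^{\bs{\pi}}_{i,h+1}(s,\bs{a})\big]\\
&\le \mathbb{E}_{\bs{a}\sim\bs{\pi}_h}\big[\mathbb{B}_{i,h}V^{\bs{\pi}}_{i,h+1}(s,\bs{a})\big]=V^{\bs{\pi}}_{i,h}(s,\widehat{\theta}_i).
\end{align*}
The first inequality is the left part of Lemma \ref{lem:bellman_error_unilateral}. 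The second uses monotonicity of the Bellman operator in the next-step value, since the transition kernel is a probability measure, together with the inductive hypothesis.

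\textbf{Inductive step, optimistic best-response value.} Suppose $\overline{V}^{\dagger,\bs{\pi}_{-i}}_{i,h+1}\ge V^{\dagger,\bs{\pi}_{-i}}_{i,h+1}$ pointwise. Lemma \ref{lem:bellman_error_unilateral} gives $\overline{Q}^{\dagger,\bs{\pi}_{-i}}_{i,h}\ge \overline{\mathbb{B}}_{i,h}\overline{V}^{\dagger,\bs{\pi}_{-i}}_{i,h+1}$, and monotonicity then gives $\overline{\mathbb{B}}_{i,h}\overline{V}^{\dagger,\bs{\pi}_{-i}}_{i,h+1}\ge \mathbb{B}_{i,h}V^{\dagger,\bs{\pi}_{-i}}_{i,h+1}$. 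Taking $\mathbb{E}_{\bs{a}_{-i}\sim\bs{\pi}_{-i,h}}$ and then $\max_{a_i}$ preserves the inequality. The right-hand side becomes the Bellman equation for the best-response value, $V^{\dagger,\bs{\pi}_{-i}}_{i,h}(s)=\max_{a_i}\mathbb{E}_{\bs{a}_{-i}}[\mathbb{B}_{i,h}V^{\dagger,\bs{\pi}_{-i}}_{i,h+1}(s,\bs{a})]$. This equation holds because, once $\bs{\pi}_{-i}$ is fixed and Markov, agent $i$ faces an MDP, and a Markov deterministic best response is optimal there.

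\textbf{Main obstacle.} The only delicate point is making sure Lemma \ref{lem:bellman_error_unilateral} applies for the particular $\widehat{\theta}_i$ in use. That lemma was derived from the \texttt{RobEst} guarantee with the regression target $\widehat{R}_{i,h}+\underline{V}_{i,h+1}$. The required ingredients are a linear target whose parameter satisfies $\norm{\omega^*}\le H\sqrt{d}$, which holds since $\widehat{\theta}_i\in\Theta_\textnormal{Unil}(\widetilde{\theta}_i)\subseteq\Theta$, and the variance bound of Lemma \ref{lem:variance_of_noise}. Because $\widehat{\theta}_i$ is computed on $D_1$ while \texttt{Rob-Q} uses the independent split $D_2$, $\widehat{\theta}_i$ is fixed with respect to the regression data. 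I would state this independence explicitly and invoke the event of Lemma \ref{lem:bellman_error_unilateral} conditionally on $D_1$. The remaining issue is uniformity over policies $\bs{\pi}$, which that lemma already asserts.
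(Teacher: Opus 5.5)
Your proof is correct and is the same backward induction the paper uses. It combines the one-sided inequalities of Lemma \ref{lem:bellman_error_unilateral}, monotonicity of the Bellman operator in the next-step value, and the fact that the Bellman operators built from $\widehat{R}_{i,h}=\widehat{\theta}_{i,h}^\top\phi$ are the exact Bellman operators of the game with reward parameter $\widehat{\theta}_i$; you state this last point explicitly, while the paper uses it implicitly in its final equality. Your closing paragraph is not needed, since the lemma is stated under the event of Lemma \ref{lem:bellman_error_unilateral}. As an aside, conditioning on $D_1$ does not by itself make $\widehat{\theta}_i$ independent of the clean part of $D_2$, because the adversary corrupts the full dataset before the split.
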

\begin{proof}

    Similar to Lemma \ref{lem:value_bounds}, we again apply induction. Note that the result holds for step $H$ where all value estimates are $0$, since the bound term is non-negative. Suppose the statement holds for step $h+1$. Then, for step $h$, we have
    \begin{align*}
        \underline{V}^{\bs{\pi}}_{i,h}(s,\widehat{\theta}_i) & = \expctu{\bs{a}\sim\bs{\pi}_h}{\underline{Q}^{\bs{\pi}}_{i,h}(s,\bs{a},\widehat{\theta}_i)}
            \\ & \leq \expctu{\bs{a}\sim\bs{\pi}_h}{\underline{\mathbb{B}}_{i,h}\underline{V}_{i,h+1}^{\bs{\pi}}(s,\bs{a},\widehat{\theta}_i)} \\ & \leq \expctu{\bs{a}\sim\bs{\pi}_h}{\underline{\mathbb{B}}_{i,h}{V}_{i,h+1}^{\bs{\pi}}(s,\bs{a},\widehat{\theta}_i)} \\ &={V}_{i,h}^{\bs{\pi}}(s,\widehat{\theta}_i)~.
    \end{align*}
    For $\overline{V}^{\dagger,\bs{\pi}_{-i}}_{i,h}(s,\widehat{\theta}_i)$ we have
    \begin{align*}
        \overline{V}_{i,h}^{\dagger,\bs{\pi}_{-i}}(s,\widehat{\theta}_i) & = \max_{a_i\in A_i}\expctu{\bs{a}_{-i}\sim\bs{\pi}_{-i,h}(\cdot|s)}{\overline{Q}^{\dagger,\bs{\pi}_{-i}}_{i,h}(s,\bs{a},\widehat{\theta}_i)}\\ 
            & \geq \max_{a_i\in A_i}\expctu{\bs{a}_{-i}\sim\bs{\pi}_{-i,h}(\cdot|s)}{\overline{\mathbb{B}}_{i,h}\overline{V}^{\dagger,\bs{\pi}_{-i}}_{i,h+1}(s,\bs{a},\widehat{\theta}_i)} \\
        & \geq \max_{a_i\in A_i}\expctu{\bs{a}_{-i}\sim\bs{\pi}_{-i,h}(\cdot|s)}{\overline{\mathbb{B}}_{i,h}V^{\dagger,\bs{\pi}_{-i}}_{i,h+1}(s,\bs{a},\widehat{\theta}_i)}\\
            & = V^{\dagger,\bs{\pi}_{-i}}_{i,h}(s,\widehat{\theta}_i)~.
    \end{align*}
\end{proof}

Next, we prove an upper bound on the expected sum of bonuses. 

\begin{lemma}\label{lem:bouns_term_bound}
    Let $\bs{\pi}^*$ be a Nash equilibrium which is covered by $D$. Then, for every agent $i$, we have
    \begin{align*}
        \expctu{\bs{\pi}^*}{\sum^{H-1}_{h=0}\Gamma(s_h,\bs{a}_h)} \leq H\cdot E(d,m,\delta,\epsilon)\cdot\sqrt{\frac{5 d}{C_P}}~.
    \end{align*}
\end{lemma}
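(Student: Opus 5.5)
The plan is to bound each step-$h$ term separately and then sum over $h$. By the definition of the bonus in Equation \eqref{eq:bonus_term},
\begin{align*}
\expctu{\bs{\pi}^*}{\Gamma(s_h,\bs{a}_h)} = E(d,m,\delta,\epsilon)\cdot \expctu{\bs{\pi}^*}{\norm{\phi(s_h,\bs{a}_h)}_{\Lambda_h^{-1}}}~,
\end{align*}
so it suffices to show $\expctu{\bs{\pi}^*}{\norm{\phi(s_h,\bs{a}_h)}_{\Lambda_h^{-1}}}\leq \sqrt{5d/C_P}$ for each $h$. Summing over the $H$ steps then gives the factor $H$.

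\textbf{Reduction to a trace.} By Jensen's inequality (concavity of the square root),
\begin{align*}
\expctu{\bs{\pi}^*}{\norm{\phi(s_h,\bs{a}_h)}_{\Lambda_h^{-1}}} \leq \sqrt{\expctu{\bs{\pi}^*}{\phi(s_h,\bs{a}_h)^\top\Lambda_h^{-1}\phi(s_h,\bs{a}_h)}} = \sqrt{\textnormal{tr}\left(\Lambda_h^{-1}\Sigma_{\bs{\pi}^*}(h)\right)}~.
\end{align*}

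\textbf{Lower bound on $\Lambda_h$.} The key step, and the main obstacle, is to lower bound $\Lambda_h$ by a constant multiple of the clean population covariance $\Sigma_{\bs{\mu}}(h)$. This must reverse the sandwich used in the proof of Lemma \ref{lem:bellman_error_unilateral}. First, at most $\epsilon m$ samples are corrupted and each feature has norm at most $1$. Removing the corrupted rank-one terms and adding back the clean ones therefore changes the empirical covariance by at most $\epsilon I$ in Loewner order. This gives
\begin{align*}
\frac{1}{m}\sum_j \phi(s^j_h,\bs{a}^j_h)\phi(s^j_h,\bs{a}^j_h)^\top + \epsilon I \succeq \frac{1}{m}\sum_j \phi(\widetilde{s}^j_h,\widetilde{\bs{a}}^j_h)\phi(\widetilde{s}^j_h,\widetilde{\bs{a}}^j_h)^\top~.
\end{align*}
Second, I would invoke the covariance concentration result Lemma \ref{lem:concentration_of_covariances}. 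With $\lambda\geq\Omega(dH\log(m/\delta)/m)$, it should give that the clean empirical covariance plus $\lambda I$ dominates a constant fraction of $\Sigma_{\bs{\mu}}(h)+\lambda I$. The scaling $3/5$ in $\Lambda_h$ is presumably chosen so that, combining both facts, $\Lambda_h \succeq \frac{1}{5}\Sigma_{\bs{\mu}}(h)$, possibly with an extra $\frac15\lambda I$ that only helps. Getting the constants to match $5$ exactly depends on the precise form of Lemma \ref{lem:concentration_of_covariances}. If the constant differs, I would absorb it, since the statement is effectively up to constants.

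\textbf{Applying unilateral coverage.} Assumption \ref{asmp:low_relative_uncertainty}, taking $\pi_i=\pi^*_i$, gives $\Sigma_{\bs{\mu}}(h)\succeq C_P\Sigma_{\bs{\pi}^*}(h)$. Hence $\Lambda_h\succeq \frac{C_P}{5}\Sigma_{\bs{\pi}^*}(h)$, and $\Lambda_h$ is positive definite thanks to the regularizer. Write $A=\Sigma_{\bs{\pi}^*}(h)$. Then $\Lambda_h^{-1/2} A\Lambda_h^{-1/2}\preceq \frac{5}{C_P}I$, since $x^\top\Lambda_h^{-1/2}A\Lambda_h^{-1/2}x\leq \frac{5}{C_P}x^\top x$ from $A\preceq\frac5{C_P}\Lambda_h$. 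Consequently
\begin{align*}
\textnormal{tr}(\Lambda_h^{-1}A)=\textnormal{tr}(\Lambda_h^{-1/2}A\Lambda_h^{-1/2})\leq \frac{5d}{C_P}~.
\end{align*}
This yields $\sqrt{5d/C_P}$ per step. Multiplying by $E(d,m,\delta,\epsilon)$ and summing over $h$ gives the claim, on the same high-probability event used in Lemma \ref{lem:bellman_error_unilateral}.
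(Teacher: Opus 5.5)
Your proposal is correct and follows essentially the same route as the paper. Like the paper, you control $\Lambda_h$ by the clean population covariance using the $\epsilon I$ corruption perturbation and Lemma \ref{lem:concentration_of_covariances}, then apply Jensen, the trace identity, and the transition part of Assumption \ref{asmp:low_relative_uncertainty} with $\pi_i=\pi^*_i$. Your constant bookkeeping is more careful than the paper's write-up, which asserts $\Lambda_h^{-1}\preceq(\Sigma_{\bs{\mu}}(h)+\lambda I)^{-1}$ directly, whereas the $3/5$ scaling combined with the $1/3$ lower sandwich only gives $\Lambda_h\succeq\frac{1}{5}(\Sigma_{\bs{\mu}}(h)+\lambda I)$, which is exactly where the $5$ in the statement comes from.
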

\begin{proof}
    Using the definition of the bonus in Equation \eqref{eq:bonus_term}, we have
    \begin{align*}
        \expctu{\bs{\pi}^*}{\sum^{H-1}_{h=0}\Gamma(s_h,\bs{a}_h)} & = E(d,m,\delta,\epsilon)\cdot\expctu{\bs{\pi}^*}{\sum^{H-1}_{h=0}\norm{\phi(s_h,\bs{a}_h)}_{\Lambda_h^{-1}}}~.
    \end{align*}
    We bound the last factor on the right-hand side of the equation above. Using the definition of $\Lambda_h$ in Equation \eqref{eq:sample_covariance}, we have, for every $h\in[H-1]$:
    \begin{align}
        \expctu{\bs{\pi}^*}{\norm{\phi(s_h,\bs{a}_h)}_{\Lambda_h^{-1}}} & \leq \expctu{\bs{\pi}^*}{\norm{\phi(s_h,\bs{a}_h)}_{\left(\left(\Sigma_{\bs{{\mu}}}(h) +\lambda I\right)\right)^{-1}}} \label{eq:bouns_term_bound_eq_01}\\ 
            & \leq \expctu{\bs{\pi}^*}{\sqrt{\phi(s_h,\bs{a}_h)^\top\left(\left(\Sigma_{\bs{{\mu}}}(h) +\lambda I\right)\right)^{-1}\phi(s_h,\bs{a}_h)}}\nonumber \\
        & \leq \sqrt{ \expctu{\bs{\pi}^*}{\phi(s_h,\bs{a}_h)^\top\left(\left(\Sigma_{\bs{{\mu}}}(h) +\lambda I\right)\right)^{-1}\phi(s_h,\bs{a}_h)}} \label{eq:bouns_term_bound_eq_02}\\
            & = \sqrt{Tr\left(\expctu{\bs{\pi}^*}{\phi(s_h,\bs{a}_h)\phi(s_h,\bs{a}_h)^\top} \left(\left(\Sigma_{\bs{{\mu}}}(h) +\lambda I\right)\right)^{-1} \right)}\label{eq:bouns_term_bound_eq_03}\\
        & \leq \sqrt{\frac{1}{C_P}} \sqrt{Tr\left(\expctu{\bs{\mu}_h}{\phi(s_h,\bs{a}_h)\phi(s_h,\bs{a}_h)^\top} \left(\left(\Sigma_{\bs{{\mu}}}(h) +\lambda I\right)\right)^{-1} \right)}\label{eq:bouns_term_bound_eq_04}\\
            & = \sqrt{\frac{1}{C_P}} \sqrt{Tr\left(\Sigma_{\bs{\mu}}(h) \left(\left(\Sigma_{\bs{{\mu}}}(h) +\lambda I\right)\right)^{-1} \right)}\nonumber\\
        & \leq \sqrt{\frac{1}{C_P}}\sqrt{\sum^d_{j=1}\frac{\sigma_j}{\sigma_j +\lambda}}\label{eq:bouns_term_bound_eq_05}\\
            & \leq \sqrt{\frac{d}{C_P}}~,
    \end{align}
    where $Tr(M)$ denotes the trace of matrix $M$ and $\sigma_j$ denote the eigenvalues of covariance matrix $\Sigma_{\bs{\mu}}(h)$. Above, Equation \eqref{eq:bouns_term_bound_eq_01} uses the observation
    \begin{align*}
        \Lambda^{-1}_h & = \left(\frac{3}{5}\left(\frac{1}{m}\sum^m_{i=1}\phi({s}_h,{\bs{a}}_h)\phi({s}_h,{\bs{a}}_h)^\top +(\epsilon+\lambda) I\right)\right)^{-1} \\           
        & \preceq \left(\frac{3}{5}\left(\frac{1}{m}\sum^m_{i=1}\phi(\widetilde{s}_h,\widetilde{\bs{a}}_h)\phi(\widetilde{s}_h,\widetilde{\bs{a}}_h)^\top + \lambda I\right)\right)^{-1} \\
            & \preceq \left(\left(\Sigma_{\bs{\mu}}(h) +\lambda I\right)\right)^{-1}~,
    \end{align*}
    which follows from Lemma \ref{lem:concentration_of_covariances} and similar arguments as in the proof of Lemma \ref{lem:bellman_error_unilateral}; Equation \eqref{eq:bouns_term_bound_eq_02} uses Jensen's inequality; Equation \eqref{eq:bouns_term_bound_eq_03} uses the commutativity of the trace operator: $Tr(x^\top Mx) = Tr(Mxx^\top)$; Equation \eqref{eq:bouns_term_bound_eq_04} uses the transition coverage part of Assumption \ref{asmp:low_relative_uncertainty}; finally, Equation \eqref{eq:bouns_term_bound_eq_05} uses the fact that the eigenvalues of $\Sigma_{\bs{\mu}}(h)$ and $\lambda$ are nonnegative real numbers. 
\end{proof}

Next, we will provide an upper bound on the difference in preference functions between the ground-truth and estimated parameters. 

\begin{lemma}\label{lem:difference_in_probs_bound}
    For any agent $i$ and $\theta_i\in\Theta_\textnormal{Unil}(\widetilde{\theta}_i)$, with probability at least $1-\delta/2$, we have
    \begin{align*}
        \expctu{\substack{\tau\sim d^{\bs{\mu}}\\ \tau'\sim d^{\bs{\mu}_\textnormal{ref}}}}{\norm{\mathbb{P}\left(\cdot|\tau,\tau',\theta^*_i\right) - \mathbb{P}\left(\cdot|\tau,\tau',\theta_i\right)}^2_1} & \leq 8H\sqrt{d}\epsilon + c\cdot \frac{d}{m}\log\left(\frac{2Hnm\sqrt{d}}{\delta}\right) ~,
    \end{align*}
    where $c$ is an absolute constant.
\end{lemma}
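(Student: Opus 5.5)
The plan follows the standard MLE-to-total-variation argument used for Lemma 4.2 of \citep{mandal2024corruption}, combined with the definition of the confidence set $\Theta_\textnormal{Unil}(\widetilde{\theta}_i)$. There are three steps, carried out in order.

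\textbf{Step 1: an empirical log-likelihood bound.} I want a bound on
\[
\frac{2}{m}\sum_{(\tau,\tau',o)\in D}\log\frac{\mathbb{P}(o|\tau,\tau',\theta^*_i)}{\mathbb{P}(o|\tau,\tau',\theta_i)}
\]
for any $\theta_i\in\Theta_\textnormal{Unil}(\widetilde{\theta}_i)$. I split the log-ratio through $\widetilde{\theta}_i$:
\[
\log\frac{\mathbb{P}(o|\theta^*_i)}{\mathbb{P}(o|\theta_i)}=\log\frac{\mathbb{P}(o|\theta^*_i)}{\mathbb{P}(o|\widetilde{\theta}_i)}+\log\frac{\mathbb{P}(o|\widetilde{\theta}_i)}{\mathbb{P}(o|\theta_i)}.
\]
The second sum is at most $\kappa$ by the definition of the confidence set in \eqref{eq:confidence_set_unilateral}. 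For the first sum I would use that $\widetilde{\theta}_i$ is a (trimmed) likelihood maximizer. On the retained clean points its likelihood dominates that of $\theta^*_i$. On the at most $\epsilon m$ corrupted or trimmed points, each log-ratio is bounded by the maximal logit magnitude, which is $O(H\sqrt{d})$ because $\norm{\theta}\le\sqrt{d}$ and $\norm{\phi(\tau)}\le H$. Together these give a bound of order $H\sqrt{d}\,\epsilon + \kappa$ on the empirical log-likelihood ratio.

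\textbf{Step 2: from clean empirical data to a population Hellinger bound.} First I replace $D$ by the clean dataset $\widetilde{D}$. This changes at most $\epsilon m$ terms, each bounded by $O(H\sqrt{d})$, so it adds another $O(H\sqrt{d}\epsilon)$; this is the source of the constant $8$ in the statement.

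On $\widetilde{D}$, the labels are drawn from $\mathbb{P}(\cdot|\theta^*_i)$. I would apply the standard MLE concentration lemma in the style of \citet{zhan2023provable}: with high probability, uniformly over an $\ell_2$-cover $\bar{\Theta}$ of $\Theta$, the expected squared Hellinger distance between $\mathbb{P}(\cdot|\theta^*_i)$ and $\mathbb{P}(\cdot|\theta)$ is bounded by the empirical log-likelihood ratio plus $\log(|\bar{\Theta}|/\delta)/m$. This follows from the exponential moment of $\tfrac12\log$-ratios and a Chernoff/union bound. I take the cover at scale $1/m$, which has log-cardinality $O(Hd\log(Hm\sqrt{d}))$. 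A union bound over the $n$ agents then produces the $\log(2Hnm\sqrt{d}/\delta)$ term.

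The discretization error comes from the logits being $H$-Lipschitz in $\theta$ in $\ell_2$. It contributes only $O(H/m)$, which is absorbed into the stated rate.

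\textbf{Step 3: from Hellinger to squared $\ell_1$.} Finally I use $\norm{P-Q}_1^2\le 4H^2(P,Q)$, up to the paper's Hellinger normalization convention. This converts the Hellinger bound into the desired expectation over $\tau\sim d^{\bs{\mu}}$ and $\tau'\sim d^{\bs{\mu}_\textnormal{ref}}$. Constants are collected into $c$.

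\textbf{Main obstacle.} The hardest part is the uniform-over-$\Theta_\textnormal{Unil}$ Hellinger concentration in Step 2, combined with the bookkeeping of corrupted points. The trimming in \texttt{TrimmedMLE} removes possibly clean points, so Step 1 needs care: I would follow \citet{mandal2024corruption} Lemma 4.2 directly and essentially reuse their argument with $\theta$ in place of $\widetilde{\theta}_i$, paying the extra $\kappa$ from the confidence set definition.
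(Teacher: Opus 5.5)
Your proof is correct in substance, but it is organized differently from the paper's, and the difference matters.

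\textbf{The paper's route.} The paper starts from Lemma~\ref{lem:bound_on_log_prob_difference} on the clean data and passes to the corrupted data. It then uses that $\widetilde{\theta}_i$ maximizes the likelihood to replace $\theta_i$ by $\widetilde{\theta}_i$, and closes with Lemma~\ref{lem:log_prob_bound_unilateral}. That lemma's $6H\sqrt{d}\epsilon$, plus the $2H\sqrt{d}\epsilon$ paid for the first two steps, is how the paper gets the $8$. This chain never uses $\theta_i\in\Theta_\textnormal{Unil}(\widetilde{\theta}_i)$. The swap is also only possible because the log-ratio is written as $\mathbb{P}(\cdot|\tau,\tau',\theta_i)/\mathbb{P}(\cdot|\tau,\tau',\theta^*_i)$. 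That is the reverse of the orientation in Lemma~\ref{lem:bound_on_log_prob_difference}, and of what the Chernoff argument behind it produces.

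\textbf{Your route.} You split $\log\frac{\mathbb{P}(o|\theta^*_i)}{\mathbb{P}(o|\theta_i)}$ through $\widetilde{\theta}_i$. The confidence-set definition bounds the $\widetilde{\theta}_i$-versus-$\theta_i$ part by $\kappa$, and trimmed-MLE optimality bounds the $\theta^*_i$-versus-$\widetilde{\theta}_i$ part. This is the orientation the Hellinger concentration needs, and it is exactly where the hypothesis on $\theta_i$ enters. Without that hypothesis the conclusion would hold for every $\theta\in\Theta$, which cannot be true. Lemma~\ref{lem:log_prob_bound_unilateral} enters only indirectly, through the calibration of $\kappa$. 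The only probabilistic event is the uniform concentration step, which you reprove via a cover rather than cite.

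\textbf{Bookkeeping to fix.}
\begin{enumerate}
\item Trimmed-MLE optimality gives $\sum_{\widetilde{S}}\log\frac{\mathbb{P}(o|\theta^*_i)}{\mathbb{P}(o|\widetilde{\theta}_i)}\le 0$ over the whole retained set $\widetilde{S}$, provided $\theta^*_i$ is feasible. This covers clean and corrupted retained points together, not just the clean part. Only the $\epsilon m$ trimmed points then remain to be bounded by the logit range.
\item To get $c\,\frac{d}{m}\log\bigl(\frac{2Hnm\sqrt{d}}{\delta}\bigr)$ with an absolute $c$, cover the $d$-dimensional parameter in $\theta^\top(\phi(\tau)-\phi(\tau'))$ at scale $1/(Hm)$. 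This gives log-cardinality $O(d\log(Hm\sqrt{d}))$ and discretization error $O(1/m)$. Your $O(Hd\log(\cdot))$ cover and $O(H/m)$ error each carry an extra factor $H$ that an absolute constant does not absorb.
\item In your decomposition $\kappa$ alone already contributes $6H\sqrt{d}\epsilon$, and the trimming and clean-data steps add further multiples. The concentration and Hellinger-to-$\ell_1$ constants then multiply everything. So you get an absolute constant times $H\sqrt{d}\epsilon$ rather than exactly $8$, and the $8$ does not come from the clean-conversion step. This is harmless where the lemma is used.
\end{enumerate}
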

\begin{proof}
    Lemma \ref{lem:bound_on_log_prob_difference} gives us the following bound:
    \begin{align*}
        \expctu{\substack{\tau\sim d^{\bs{\mu}}\\ \tau'\sim d^{\bs{\mu}_\textnormal{ref}}}}{\norm{\mathbb{P}\left(\cdot|\tau,\tau',\theta^*_i\right) - \mathbb{P}\left(\cdot|\tau,\tau',\theta\right)}^2_1} \leq \frac{c_1}{m}\sum^m_{j=1}\log\left(\frac{\mathbb{P}\left(\widetilde{o}_j|\widetilde{\tau}_j,\widetilde{\tau}'_j,\theta\right)}{\mathbb{P}\left(\widetilde{o}_j|\widetilde{\tau}_j,\widetilde{\tau}'_j,\theta^*_i\right)}\right) +\log\left( d\log \left(2n/\delta\right)\right)~.
    \end{align*}
    Let us deal with the first term of the bound below. Note that the bound depends on clean samples. Let $\widehat{D}$ be the given dataset and $S$ denote the set of corrupted trajectories in $\widehat{D}$. We can write
    \begin{align*}
        \sum^m_{j=1}\log\left(\frac{\mathbb{P}\left(\widetilde{o}_j|\widetilde{\tau}_j,\widetilde{\tau}'_j,\theta\right)}{\mathbb{P}\left(\widetilde{o}_j|\widetilde{\tau}_j,\widetilde{\tau}'_j,\theta^*_i\right)}\right) & = \sum_{(\tau,\tau',o)\in S}\log\left(\frac{\mathbb{P}\left(o_j|\tau_j,\tau'_j,\theta\right)}{\mathbb{P}\left(o_j|\tau_j,\tau'_j,\theta^*_i\right)}\right)  +\sum_{(\tau,\tau',o)\not\in S}\log\left(\frac{\mathbb{P}\left(o_j|\tau_j,\tau'_j,\theta\right)}{\mathbb{P}\left(o_j|\tau_j,\tau'_j,\theta^*_i\right)}\right) \\
            & \leq \sum_{(\tau,\tau',o)\in \widehat{D}} \log\left(\frac{\mathbb{P}\left(o_j|\tau_j,\tau'_j,\theta\right)}{\mathbb{P}\left(o_j|\tau_j,\tau'_j,\theta^*_i\right)}\right) + \epsilon\cdot\log\left(\frac{1+\exp\left(H\sqrt{d}\right)}{1+\exp\left(-H\sqrt{d}\right)}\right)\\
        & \leq \sum_{(\tau,\tau',o)\in \widehat{D}} \log\left(\frac{\mathbb{P}\left(o_j|\tau_j,\tau'_j,\widetilde{\theta}_i\right)}{\mathbb{P}\left(o_j|\tau_j,\tau'_j,\theta^*_i\right)}\right) + 2\epsilon\cdot H\sqrt{d} \\
            & \leq 8H\sqrt{d}\epsilon + c_2\cdot\frac{d}{m}\log\left(\frac{2Hmn}{\delta}\right)~,
    \end{align*}
    where the first inequality uses the fact that the corrupted subset comprises an $\epsilon$-fraction of the whole dataset and the fact that $\phi^\top\theta\leq H\sqrt{d}$, by assumption of linear MDPs; the second inequality uses the fact that $\widetilde{\theta}_i$ maximizes the log-likelihood with respect to the corrupted data, and that $H$ and $d$ are natural numbers so we can bound the $\log$ expression directly in terms of the bounds on the rewards; finally, for the last inequality we have used Lemma \ref{lem:log_prob_bound_unilateral} with some constant $c_2$. Putting things together, we obtain the stated bound. 
\end{proof}

Next, we prove bounds on the gaps with respect to any chosen reward parameters from the confidence set.

\begin{lemma}\label{lem:main_gap_bounds_unilaterl}
    Let $\widehat{\bs{\theta}}=(\widehat{\theta}_1,\ldots,\widehat{\theta}_n)$, where $\widehat{\theta}_i\in\Theta_\textnormal{Unil}(\widetilde{\theta}_i)$, for every $i\in[n]$, and let $\bs{\pi}^*$ be a Nash equilibrium covered by $D$. Then, if 
    \begin{align*}
        \widetilde{\bs{\pi}} \in \arg\min_{\bs{\pi}}\widetilde{\textnormal{Gap}}\left(\bs{\pi},\widehat{\bs{\theta}}\right)~,
    \end{align*}
    we have, with probability at least $1-\delta$:
    \begin{align*}
        0 & \leq \textnormal{Gap}\left(\widetilde{\bs{\pi}},\widehat{\bs{\theta}}\right) \leq \widetilde{O}\left( n\cdot\left(\frac{1}{\sqrt{C_R}}+\frac{1}{\sqrt{C_P}}\right)\cdot\left(H^{5/2}d^{3/4}\sqrt{\epsilon} + H^2\textnormal{poly}(d)\frac{1}{\sqrt{m}}\right) \right)~,\\
        0 & \leq \textnormal{Gap}\left(\bs{\pi}^*,\widehat{\bs{\theta}}\right) \leq \widetilde{O}\left( n\cdot\left(\frac{1}{\sqrt{C_R}}+\frac{1}{\sqrt{C_P}}\right)\cdot\left(H^{5/2}d^{3/4}\sqrt{\epsilon} + H^2\textnormal{poly}(d)\frac{1}{\sqrt{m}}\right) \right)~,
    \end{align*}
\end{lemma}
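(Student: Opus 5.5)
The plan is to follow the template of Theorem \ref{thm:uniform_appendix}, but to work entirely under the reward model parameterized by $\widehat{\bs{\theta}}$, i.e. treat $\widehat{\bs{\theta}}$ as the ``true'' reward. The lower bounds $0\le \textnormal{Gap}(\cdot,\widehat{\bs{\theta}})$ hold by definition of the Nash gap, since each best-response value dominates the on-policy value. For the upper bound on $\textnormal{Gap}(\widetilde{\bs{\pi}},\widehat{\bs{\theta}})$, I will use Lemma \ref{lem:value_bounds_unilateral} (valid under the event of Lemma \ref{lem:bellman_error_unilateral}). It gives $\textnormal{Gap}(\widetilde{\bs{\pi}},\widehat{\bs{\theta}})\le \widetilde{\textnormal{Gap}}(\widetilde{\bs{\pi}},\widehat{\bs{\theta}})\le \widetilde{\textnormal{Gap}}(\bs{\pi}^*,\widehat{\bs{\theta}})$, where the last step uses the minimizing property of $\widetilde{\bs{\pi}}$. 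So both claims reduce to bounding $\widetilde{\textnormal{Gap}}(\bs{\pi}^*,\widehat{\bs{\theta}})$, which also upper bounds $\textnormal{Gap}(\bs{\pi}^*,\widehat{\bs{\theta}})$.

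I then decompose, for each $i$,
$$\overline{V}^{\dagger,\bs{\pi}^*_{-i}}_{i,0}-\underline{V}^{\bs{\pi}^*}_{i,0} = \big(\overline{V}^{\dagger,\bs{\pi}^*_{-i}}_{i,0}-V^{\dagger,\bs{\pi}^*_{-i}}_{i,0}(\widehat\theta_i)\big)+\big(V^{\bs{\pi}^*}_{i,0}(\widehat\theta_i)-\underline{V}^{\bs{\pi}^*}_{i,0}\big)+\big(V^{\dagger,\bs{\pi}^*_{-i}}_{i,0}(\widehat\theta_i)-V^{\bs{\pi}^*}_{i,0}(\widehat\theta_i)\big).$$

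The first two (estimation) terms are handled by the standard pessimism telescoping. Unrolling the Bellman recursion with Lemma \ref{lem:bellman_error_unilateral} bounds each by $2\,\mathbb{E}_{\bs{\rho}}[\sum_h\Gamma(s_h,\bs{a}_h)]$. For the pessimistic term $\bs{\rho}=\bs{\pi}^*$. For the optimistic term $\bs{\rho}=(\pi^\dagger_i,\bs{\pi}^*_{-i})$, where $\pi^\dagger_i$ is the greedy action of $\overline{Q}$; this is a unilateral deviation. Lemma \ref{lem:bouns_term_bound} applies verbatim to any unilateral deviation by Assumption \ref{asmp:low_relative_uncertainty}, giving $O(HE\sqrt{d/C_P})$ with $E=O(H^{1/2}d^{1/4}\sqrt{\epsilon}\cdot\textnormal{poly}+\dots)$. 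This produces the $1/\sqrt{C_P}$ part of the bound.

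The third term is the true obstacle. It is the Nash gap of $\bs{\pi}^*$ under $\widehat\theta_i$ rather than $\theta^*_i$. Since the gap under $\theta^*_i$ is $\le 0$, it suffices to bound $[V^{\dagger}(\widehat\theta_i)-V^{\dagger}(\theta^*_i)]+[V^{\bs{\pi}^*}(\theta^*_i)-V^{\bs{\pi}^*}(\widehat\theta_i)]$. By linearity each difference is $\langle \mathbb{E}_{\bs{\rho}}[\phi(\tau)],\widehat\theta_i-\theta^*_i\rangle$ with $\bs{\rho}$ either $\bs{\pi}^*$ or a unilateral deviation. Reward parameters are only identified up to differences, so I will rewrite the difference of the two terms against a common reference $\bs{\mu}_\textnormal{ref}$ as $\mathbb{E}[(\phi(\tau)-\phi(\tau'))^\top(\widehat\theta_i-\theta^*_i)]$ with $\tau\sim\bs{\rho}$ and $\tau'\sim\bs{\mu}_\textnormal{ref}$. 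Cauchy--Schwarz and the reward part of Assumption \ref{asmp:low_relative_uncertainty} then give
$$\le \sqrt{\tfrac{1}{C_R}}\,\norm{\widehat\theta_i-\theta^*_i}_{\Sigma^-_{\bs{\mu},\bs{\mu}_\textnormal{ref}}}.$$
To control this norm, I convert the bound of Lemma \ref{lem:difference_in_probs_bound} (squared $\ell_1$ distance of BT probabilities, $O(H\sqrt{d}\epsilon+d\log/m)$) into a bound on the squared reward-difference gaps. I do this via the mean value theorem for $\sigma$ on $[-H\sqrt{d},H\sqrt{d}]$, which costs a factor $\kappa_\sigma^{-1}$ (the $e^{H\sqrt d}$-type constant, absorbed into $\widetilde{O}$ / poly factors). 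A small concentration step also passes from the population to the empirical expectation. Taking square roots gives $O(\sqrt{H\sqrt{d}\epsilon}+\sqrt{d/m})$ per trajectory pair, and multiplying by $H$ for horizon-summed features yields the $H^{5/2}d^{3/4}\sqrt{\epsilon}/\sqrt{C_R}$ rate after combination.

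Summing over $n$ agents and union-bounding the events of Lemmas \ref{lem:bellman_error_unilateral} and \ref{lem:difference_in_probs_bound} (each $1-\delta/2$) gives the stated bound with probability $1-\delta$. I expect the main difficulty to be the third step: handling the non-identifiability of $\theta$ via the reference policy, and keeping the sigmoid curvature constant from blowing up the bound.
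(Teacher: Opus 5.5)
Your proposal is correct and follows essentially the paper's route. You reduce both claims to $\widetilde{\textnormal{Gap}}(\bs{\pi}^*,\widehat{\bs{\theta}})$ via Lemma \ref{lem:value_bounds_unilateral} and minimality of $\widetilde{\bs{\pi}}$. You then bound the estimation error by telescoping bonuses (Lemmas \ref{lem:bellman_error_unilateral} and \ref{lem:bouns_term_bound}). Finally, you bound the reward mismatch by anchoring at $\bs{\mu}_\textnormal{ref}$ and combining Jensen, the $C_R$ part of Assumption \ref{asmp:low_relative_uncertainty}, the inverse-sigmoid Lipschitz bound, and Lemma \ref{lem:difference_in_probs_bound}.

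The only structural difference is that the paper bundles estimation error and reward mismatch into its terms $Z_1,Z_2$, whereas you separate them. Your use of the $\overline{Q}$-greedy deviation for the optimistic bonus telescoping is, if anything, more careful than the paper's single $\pi^\dagger_i$.

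Minor fixes:
\begin{itemize}
\item The best-response mismatch is an inequality rather than an equality; it holds once you take $\bs{\rho}$ to be the best response under $\widehat{\theta}_i$.
\item The extra factor $H$ is not needed, because $\phi(\tau)$ is already horizon-summed.
\item The concentration step is not needed either, because Lemma \ref{lem:difference_in_probs_bound} is already a population statement.
\end{itemize}
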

\begin{proof}
    First, note that both the true gap and estimated gap are, by definition, non-negative. Now, given $\widehat{\bs{\theta}}$ and $\widetilde{\bs{\pi}}$, as specified in the statement, we have
    \begin{align}
        & \textnormal{Gap}\left(\widetilde{\bs{\pi}},\widehat{\bs{\theta}}\right) = \sum_{i\in[n]}V^{\dagger,\widetilde{\bs{\pi}}_{-i}}_{i,0}\left(s_0,\widehat{\theta}_i\right) - V^{\widetilde{\bs{\pi}}}_{i,0}\left(s_0,\widehat{\theta}_i\right)\label{eq:every_theta_works_01}\\
            & \leq \sum_{i\in[n]}\overline{V}^{\dagger,\widetilde{\bs{\pi}}_{-i}}_{i,0}\left(s_0,\widehat{\theta}_i\right)   - \underline{V}^{\widetilde{\bs{\pi}}}_{i,0}\left(s_0,\widehat{\theta}_i\right)\label{eq:every_theta_works_02}\\
            & \leq \min_{\bs{\pi}} \sum_{i\in[n]}\overline{V}^{\dagger,{\bs{\pi}}_{-i}}_{i,0}\left(s_0,\widehat{\theta}_i\right) - \underline{V}^{{\bs{\pi}}}_{i,0}\left(s_0,\widehat{\theta}_i\right) \label{eq:every_theta_works_04}\\
        & \leq \sum_{i\in[n]}\overline{V}^{\dagger,\bs{\pi}^*_{-i}}_{i,0}\left(s_0,\widehat{\theta}_i\right) - V^{\bs{\mu}_\textnormal{ref}}_{i,0}\left(s_0,\widehat{\theta}_i\right)  - \left(\underline{V}^{\bs{\pi}^*}_{i,0}\left(s_0,\widehat{\theta}_i\right) - V^{\bs{\mu}_\textnormal{ref}}_{i,0}\left(s_0,\widehat{\theta}_i\right)\right)\label{eq:every_theta_works_05}\\
            & = \sum_{i\in[n]}\overline{V}^{\dagger,\bs{\pi}^*_{-i}}_{i,0}\left(s_0,\widehat{\theta}_i\right) + V^{\bs{\mu}_\textnormal{ref}}_{i,0}\left(s_0,\theta^*\right) - V^{\bs{\mu}_\textnormal{ref}}_{i,0}\left(s_0,\widehat{\theta}_i\right)  \nonumber \\ & \hspace{2cm}- \left(\underline{V}^{\bs{\pi}^*}_{i,0}\left(s_0,\widehat{\theta}_i\right) + V^{\bs{\mu}_\textnormal{ref}}_{i,0}\left(s_0,\theta^*\right) - V^{\bs{\mu}_\textnormal{ref}}_{i,0}\left(s_0,\widehat{\theta}_i\right)\right) \label{eq:every_theta_works_06}\\
        & = \sum_{i\in[n]} \overline{V}^{\dagger,\bs{\pi}^*_{-i}}_{i,0}\left(s_0,\widehat{\theta}_i\right) + V^{\bs{\mu}_\textnormal{ref}}_{i,0}\left(s_0,\theta^*\right) - V^{\bs{\mu}_\textnormal{ref}}_{i,0}\left(s_0,\widehat{\theta}_i\right)  - V^{\dagger,\bs{\pi}^*_{-i}}_{i,0}\left(s_0,\theta^*\right)  \nonumber\\
            &  \hspace{2cm} + V^{\bs{\pi}^*}_{i,0}\left(s_0,\theta^*\right)  - \left(\underline{V}^{\bs{\pi}^*}_{i,0}\left(s_0,\widehat{\theta}_i\right) + V^{\bs{\mu}_\textnormal{ref}}_{i,0}\left(s_0,\theta^*\right) - V^{\bs{\mu}_\textnormal{ref}}_{i,0}\left(s_0,\widehat{\theta}_i\right)\right)\nonumber \\ 
        & \hspace{2cm}+\underbrace{\left( V^{\dagger,\bs{\pi}^*_{-i}}_{i,0}\left(s_0,\theta^*\right) - V^{\bs{\pi}^*}_{i,0}\left(s_0,\theta^*\right)\right)}_{=0}\label{eq:every_theta_works_07}\\
        & = \sum_{i\in[n]} \underbrace{\overline{V}^{\dagger,\bs{\pi}^*_{-i}}_{i,0}\left(s_0,\widehat{\theta}_i\right) - V^{\bs{\mu}_\textnormal{ref}}_{i,0}\left(s_0,\widehat{\theta}_i\right) + V^{\bs{\mu}_\textnormal{ref}}_{i,0}\left(s_0,\theta^*\right)  - V^{\dagger,\bs{\pi}^*_{-i}}_{i,0}\left(s_0,\theta^*\right)}_{:=Z_1} \nonumber\\
            &  \hspace{2cm} + \underbrace{V^{\bs{\pi}^*}_{i,0}\left(s_0,\theta^*\right) -V^{\bs{\mu}_\textnormal{ref}}_{i,0}\left(s_0,\theta^*\right)   - \left(\underline{V}^{\bs{\pi}^*}_{i,0}\left(s_0,\widehat{\theta}_i\right) - V^{\bs{\mu}_\textnormal{ref}}_{i,0}\left(s_0,\widehat{\theta}_i\right)\right)}_{:=Z_2}\label{eq:every_theta_works_08}
    \end{align}
    where Equation \eqref{eq:every_theta_works_01} follows by definition; Equation \eqref{eq:every_theta_works_02} follows from  Lemma \ref{lem:value_bounds_unilateral}; Equation \eqref{eq:every_theta_works_04} follows by design of Algorithm \ref{alg:pmael_unilateral}; in Equation \eqref{eq:every_theta_works_05} we just substitute $\bs{\pi}^*$ and add and subtract the same term; in Equation \eqref{eq:every_theta_works_06} and Equation \eqref{eq:every_theta_works_07} we again add and subtract identical terms; Equation \eqref{eq:every_theta_works_08} follows from the fact that $\bs{\pi}^*$ is a NE under $\theta^*$.

    Now, we will deal with the two terms above, $Z_1$ and $Z_2$, separately. 
    First, let us consider the term $Z_2$. For every $i$, we have
    \begin{align}
       & V^{{\bs{\pi}}^*}_{i,0}\left(s_0,\theta^*_i\right) - {V}^{\bs{\mu}_\textnormal{ref}}_{i,0}\left(s_0,\theta^*_i\right) - \left(\underline{V}^{{\bs{\pi}}^*}_{i,0}\left(s_0,\widehat{\theta}_i\right) - {V}^{\bs{\mu}_\textnormal{ref}}_{i,0}\left(s_0,\widehat{\theta}_i\right)\right) \nonumber\\ & = \expctu{\bs{a}_0\sim\bs{\pi}^*_0(\cdot|s_0)}{Q^{\bs{\pi}^*}_{i,0}(s_0,\bs{a}_0) - \underline{Q}^{\bs{\pi}^*}_{i,0}(s_0,\bs{a}_0,\widehat{\theta}_i)} - {V}^{\bs{\mu}_\textnormal{ref}}_{i,0}\left(s_0,\theta^*_i\right) +  {V}^{\bs{\mu}_\textnormal{ref}}_{i,0}\left(s_0,\widehat{\theta}_i\right)\nonumber\\
       & \leq \expctu{\bs{a}_0\sim\bs{\pi}^*_0(\cdot|s_0)}{\mathbb{B}_{i,0}V^{\bs{\pi}^*}_{i,1}(s_0,\bs{a}_0) - \underline{\mathbb{B}}_{i,0}\underline{V}^{\bs{\pi}^*}_{i,0}(s_0,\bs{a}_0,\widehat{\theta}_i) + 2\Gamma(s_0,\bs{a}_0)} - {V}^{\bs{\mu}_\textnormal{ref}}_{i,0}\left(s_0,\theta^*_i\right) +  {V}^{\bs{\mu}_\textnormal{ref}}_{i,0}\left(s_0,\widehat{\theta}_i\right)\label{eq:T_2_bound_eq_001}\\
            & = \expctu{\bs{a}_0\sim\bs{\pi}^*_0(\cdot|s_0)}{R_{i,0}(s_0,\bs{a}_0) - \underline{R}_{i,0}(s_0,\bs{a}_0) + \expctu{s_1\sim P_1(\cdot|s_0,\bs{a}_0)}{V^{\bs{\pi}^*}_{i,1}(s_1) - \underline{V}^{\bs{\pi}^*}_{i,1}(s_1,\widehat{\theta}_i)} + 2\Gamma(s_0,\bs{a}_0)}\nonumber \\ & \quad\quad\quad - {V}^{\bs{\mu}_\textnormal{ref}}_{i,0}\left(s_0,\theta^*_i\right) +  {V}^{\bs{\mu}_\textnormal{ref}}_{i,0}\left(s_0,\widehat{\theta}_i\right)\label{eq:T_2_bound_eq_02} \\
        & \leq \expctu{\bs{a}_0\sim\bs{\pi}^*_0(\cdot|s_0),s_1\sim P_1(\cdot|s_0,\bs{a}_0),\bs{a}_1\sim\bs{\pi}^*_1(\cdot|s_1)}{\sum^1_{h=0}\left(R_{i,h}(s_h,\bs{a}_h)-\underline{R}_{i,h}(s_h,\bs{a}_h) + 2\Gamma(s_h,\bs{a}_h)\right)} \nonumber \\ & \quad\quad\quad  +\expctu{\bs{a}_0\sim\bs{\pi}^*_0(\cdot|s_0),s_1\sim P_1(\cdot|s_0,\bs{a}_0),\bs{a}_1\sim\bs{\pi}^*_1(\cdot|s_1),s_2\sim P_2(\cdot|s_1,\bs{a}_1)}{V^{\bs{\pi}^*}_{i,2}(s_2) - \underline{V}^{\bs{\pi}^*}_{i,2}(s_2,\widehat{\theta}_i)}\nonumber\\
        & \quad\quad\quad - {V}^{\bs{\mu}_\textnormal{ref}}_{i,0}\left(s_0,\theta^*_i\right) +  {V}^{\bs{\mu}_\textnormal{ref}}_{i,0}\left(s_0,\widehat{\theta}_i\right)\nonumber \\
            & \leq \ldots \ldots \nonumber \\
        & \leq \expctu{\bs{\pi}^*}{\sum^{H-1}_{h=0}\left(R_{i,h}(s_h,\bs{a}_h)-\underline{R}_{i,h}(s_h,\bs{a}_h)+2\Gamma(s_h,\bs{a}_h)\right)} - {V}^{\bs{\mu}_\textnormal{ref}}_{i,0}\left(s_0,\theta^*_i\right) +  {V}^{\bs{\mu}_\textnormal{ref}}_{i,0}\left(s_0,\widehat{\theta}_i\right)\nonumber\\
            & = \expctu{\tau\sim d^{\bs{\pi}^*}}{\phi(\tau)^\top\theta^*_i - \phi(\tau)^\top\widehat{\theta}_i} - \expctu{\tau\sim d^{\bs{\mu}_\textnormal{ref}}}{\phi(\tau)^\top\theta^*_i - \phi(\tau)^\top\widehat{\theta}_i} + 2 \expctu{\bs{\pi}*}{\sum^{H-1}_{h=0}\Gamma(s_h,\bs{a}_h)} \label{eq:T_2_bound_eq_03}\\
        & = \expctu{\tau\sim d^{\bs{\pi}^*}, \tau'\sim d^{\bs{\mu}_\textnormal{ref}}}{\left(\phi(\tau)-\phi(\tau')\right)^\top\left(\theta^*_i-\widehat{\theta}_i\right)} + 2 \expctu{\bs{\pi}*}{\sum^{H-1}_{h=0}\Gamma(s_h,\bs{a}_h)}\nonumber\\
            & \leq \sqrt{ \expctu{\tau\sim d^{\bs{\pi}^*}, \tau'\sim d^{\bs{\mu}_\textnormal{ref}}}{\left(\left(\phi(\tau)-\phi(\tau')\right)^\top\left(\theta^*_i-\widehat{\theta}_i\right)\right)^2}}+ 2 \expctu{\bs{\pi}*}{\sum^{H-1}_{h=0}\Gamma(s_h,\bs{a}_h)}\label{eq:T_2_bound_eq_06}\\
        & = \sqrt{\left(\theta^*_i-\widehat{\theta}_i\right)^\top \expctu{\tau\sim d^{\bs{\pi}^*}, \tau'\sim d^{\bs{\mu}_\textnormal{ref}}}{\left(\phi(\tau)-\phi(\tau')\right)\left(\phi(\tau)-\phi(\tau')\right)^\top}\left(\theta^*_i-\widehat{\theta}_i\right)} \nonumber\\ & \quad\quad\quad+ 2 \expctu{\bs{\pi}*}{\sum^{H-1}_{h=0}\Gamma(s_h,\bs{a}_h)}\nonumber\\
            & = \sqrt{\left(\theta^*_i-\widehat{\theta}_i\right)^\top \Sigma^-_{\bs{\pi}^*,\bs{\mu}_\textnormal{ref}}\left(\theta^*_i-\widehat{\theta}_i\right)} + 2 \expctu{\bs{\pi}*}{\sum^{H-1}_{h=0}\Gamma(s_h,\bs{a}_h)}\label{eq:T_2_bound_eq_07}\\
        & \leq \sqrt{\frac{1}{C_R}}\sqrt{\left(\theta^*_i-\widehat{\theta}_i\right)^\top \Sigma^-_{\bs{\mu},\bs{\mu}_\textnormal{ref}}\left(\theta^*_i-\widehat{\theta}_i\right)}+ 2 \expctu{\bs{\pi}*}{\sum^{H-1}_{h=0}\Gamma(s_h,\bs{a}_h)}\label{eq:T_2_bound_eq_08}\\
            & = \sqrt{\frac{1}{C_R}} \sqrt{\expctu{\tau\sim d^{\bs{\mu}}, \tau'\sim d^{\bs{\mu}_\textnormal{ref}}}{\left(\left(\phi(\tau)-\phi(\tau')\right)^\top\theta^*_i-\left(\phi(\tau)-\phi(\tau')\right)^\top\widehat{\theta}_i\right)^2}} + 2 \expctu{\bs{\pi}*}{\sum^{H-1}_{h=0}\Gamma(s_h,\bs{a}_h)}\nonumber\\
        & = \sqrt{\frac{1}{C_R}}\sqrt{\expctu{\tau\sim d^{\bs{\mu}}, \tau'\sim d^{\bs{\mu}_\textnormal{ref}}}{\left|\sigma^{-1}\left(\mathbb{P}\left(o=1|\tau,\tau',\theta^*_i\right)\right) - \sigma^{-1}\left(\mathbb{P}\left(o=1|\tau,\tau',\widehat{\theta}_i\right)\right)\right|^2}} \nonumber \\ & \quad\quad\quad + 2 \expctu{\bs{\pi}*}{\sum^{H-1}_{h=0}\Gamma(s_h,\bs{a}_h)}\label{eq:T_2_bound_eq_09}\\
            & \leq \sqrt{\frac{\iota^2}{C_R}}\sqrt{\expctu{\tau\sim d^{\bs{\mu}}, \tau'\sim d^{\bs{\mu}_\textnormal{ref}}}{\left|\mathbb{P}\left(o=1|\tau,\tau',\theta^*_i\right) - \mathbb{P}\left(o=1|\tau,\tau',\widehat{\theta}_i\right)\right|^2}} + 2 \expctu{\bs{\pi}*}{\sum^{H-1}_{h=0}\Gamma(s_h,\bs{a}_h)}\label{eq:T_2_bound_eq_10}\\
        & \leq \sqrt{\frac{\iota^2}{2C_R}}\sqrt{\expctu{\tau\sim d^{\bs{\mu}}, \tau'\sim d^{\bs{\mu}_\textnormal{ref}}}{\norm{\mathbb{P}\left(\cdot|\tau,\tau',\theta^*_i\right) - \mathbb{P}\left(\cdot|\tau,\tau',\widehat{\theta}_i\right)}^2_1}} + 2H\cdot E(d,m,\delta,\epsilon)\cdot \sqrt{\frac{5 d}{C_P}}\label{eq:T_2_bound_eq_11}\\
            & \leq \sqrt{\frac{\iota^2}{2C_R}}\sqrt{8\epsilon + c\cdot \frac{d}{m}\log\left(\frac{nm}{\delta}\right)}+ 2H\cdot E(d,m,\delta,\epsilon)\cdot \sqrt{\frac{5 d}{C_P}}\label{eq:T_2_bound_eq_12}\\
    \end{align}
    where Equation \eqref{eq:T_2_bound_eq_001} follows by definition of Q-functions and the Bellman operator, and Lemma \ref{lem:bellman_error_unilateral}; Equation \eqref{eq:T_2_bound_eq_02} follows from the definition of the Bellman operator; Equation \eqref{eq:T_2_bound_eq_03} uses the trajectory-based definition of the return; Equation \eqref{eq:T_2_bound_eq_06} uses Jensen's inequality; Equation \eqref{eq:T_2_bound_eq_07} uses the definition of the difference covariance matrix with respect to $\bs{\pi}^*$ and $\bs{\mu}_\textnormal{ref}$; Equation \eqref{eq:T_2_bound_eq_08} uses the first part of Assumption \ref{asmp:low_relative_uncertainty}; Equation \eqref{eq:T_2_bound_eq_09} uses the definition of the link function and the preference data generation assumption; Equation \eqref{eq:T_2_bound_eq_10} uses uses Lemma \ref{lem:lipschitz_sigmoid_inverse}; finally, for Equation \eqref{eq:T_2_bound_eq_11} we have used Lemma \ref{lem:bouns_term_bound}.

    Now, denote $\pi^\dagger_i\in\arg\max_{\pi'}V^{\pi',\bs{\pi}^*_{-i}}_{i,0}(s_0,\widehat{\theta}_i)$. For the final term, $Z_1$, we similarly have
    \begin{align*}
        \overline{V}^{\dagger,{\bs{\pi}}^*_{-i}}_{i,0} & \left(s_0,\widehat{\theta}_i\right) - {V}^{\bs{\mu}_\textnormal{ref}}_{i,0}\left(s_0,\widehat{\theta}_i\right) -  \left(V^{\dagger,{\bs{\pi}}^*_{-i}}_{i,0}\left(s_0,\theta^*_i\right) - {V}^{\bs{\mu}_\textnormal{ref}}_{i,0}\left(s_0,\theta^*_i\right)\right) \nonumber\\ 
        & \leq  \expctu{\tau\sim d^{\pi^\dagger_i},\tau'\sim d^{\bs{\pi}^*_{-i}}}{\left(\phi(\tau)-\phi(\tau')\right)^\top\left(\widehat{\theta}_i-\theta^*_i\right) } +  2\expctu{\pi^\dagger_i,\bs{\pi}^*_{-i}}{\sum^{H-1}_{h=0}\Gamma(s_h,\bs{a}_h)}\\
            & \leq \sqrt{\frac{1}{C_R}}\sqrt{\left(\widehat{\theta}_i-\theta^*_i\right)^\top\Sigma^-_{\bs{\mu},\bs{\mu}_\textnormal{ref}}\left(\widehat{\theta}_i-\theta^*_i\right)} + 2H\cdot E(d,m,\delta,\epsilon)\cdot \sqrt{\frac{5 d}{C_P}}\\
        & \leq \sqrt{\frac{\iota^2}{2C_R}}\sqrt{\expctu{\tau\sim d^{\bs{\mu}}, \tau'\sim d^{\bs{\mu}_\textnormal{ref}}}{\norm{\mathbb{P}\left(\cdot|\tau,\tau',\theta^*_i\right) - \mathbb{P}\left(\cdot|\tau,\tau',\underline{\theta}_i\right)}^2_1}} +2H\cdot E(d,m,\delta,\epsilon)\cdot \sqrt{\frac{5 d}{C_P}}\\
            & \leq \sqrt{\frac{\iota^2}{2C_R}}\sqrt{8\epsilon  + c\cdot \frac{d}{m}\log\left(\frac{nm}{\delta}\right)}+2H\cdot E(d,m,\delta,\epsilon)\cdot \sqrt{\frac{5 d}{C_P}}~,
    \end{align*}
    where we have used similar arguments as above (note that this is the part where low relative uncertainty is needed, as opposed to coverage of only a Nash equilibrium).

    Putting everything together, and using the definition of $E(d,m,\delta,\epsilon)$ from Equation \eqref{eq:non_uniform_rls_guarantee} we obtain
    \begin{align*}
         \textnormal{Gap}(\widetilde{\bs{\pi}},\widehat{\theta}_1,\ldots,\widehat{\theta}_n) &  \leq 2n\cdot \sqrt{\frac{\iota^2}{2C_R}}\sqrt{8\epsilon + c\cdot \frac{d}{m}\log\left(\frac{nm}{\delta}\right)} \\ &  + 4nH\cdot \sqrt{c_2(\delta)\left(\frac{(H\sqrt{d}+\gamma)^2\textnormal{poly}(d)}{m}+(H\sqrt{d}+\gamma)^2\epsilon + (2\epsilon+\lambda) H\sqrt{d} \right)}\cdot \sqrt{\frac{5 d}{C_P}}\\
            & \leq \widetilde{O}\left( n\cdot\left(\frac{1}{\sqrt{C_R}}+\frac{1}{\sqrt{C_P}}\right)\cdot\left(H^{5/2}d^{3/4}\sqrt{\epsilon} + H^2\textnormal{poly}(d)\frac{1}{\sqrt{m}}\right)\right)~,
    \end{align*}
    where we have also used our choice of $\lambda$. Finally, for the third statement, note that
    \begin{align*}  \textnormal{Gap}\left(\bs{\pi}^*,\widehat{\theta}_1,\ldots,\widehat{\theta}_n\right) & = \sum_{i\in[n]} V^{\dagger,\bs{\pi}^*_{-i}}_{i,0}\left(s_0,\widehat{\theta}_i\right) - V^{\bs{\pi}^*}_{i,0}\left(s_0,\widehat{\theta}_i\right) \\
        & \leq \sum_{i\in[n]} \overline{V}^{\dagger,\bs{\pi}^*_{-i}}_{i,0}\left(s_0,\widehat{\theta}_i\right) - V^{\bs{\mu}_\textnormal{ref}}_{i,0}\left(s_0,\widehat{\theta}_i\right) - \left(\underline{V}^{\bs{\pi}^*}_{i,0}\left(s_0,\widehat{\theta}_i\right) -V^{\bs{\mu}_\textnormal{ref}}_{i,0}\left(s_0,\widehat{\theta}_i\right) \right)\\
    & \leq \widetilde{O}\left( n\cdot\left(\frac{1}{\sqrt{C_R}}+\frac{1}{\sqrt{C_P}}\right)\cdot\left(H^{5/2}d^{3/4}\sqrt{\epsilon} + H^2\textnormal{poly}(d)\frac{1}{\sqrt{m}}\right) \right)~,
    \end{align*}
    where the first inequality uses Lemma \ref{lem:value_bounds_unilateral} as if the robust subroutine were applied on $\bs{\pi}^*$, while the second inequality follows from noting that we already have a bound on the previous quantity from Equation \eqref{eq:every_theta_works_05}.
\end{proof}

Before we proceed, we need to provide guarantees on the output of the PGA methods used in Algorithm \ref{alg:pmael_unilateral}. 

\begin{proposition}\label{pro:pgd_guarantees}
Let $\eta_1 = 1/\sqrt{T_1}$ and, for every agent $i$, let
\[
\widehat{\theta}_i^\ast \in \arg\max_{\theta_i \in \Theta_\textnormal{Unil}(\widetilde{\theta}_i)}
\textnormal{Gap}_i(\bs{\pi}^\ast,\theta_i),
\]
where
\[
\widehat{\theta}_i := \frac{1}{T_1}\sum_{t=1}^{T_1}\widehat{\theta}_i^{(t)},
\]
and the iterates $\widehat{\theta}_i^{(t)}$ are generated by
\[
\widehat{\theta}_i^{(t+1)}
=
\mathcal P_{\Theta_\textnormal{Unil}(\widetilde{\theta}_i)}
\Bigl(
\widehat{\theta}_i^{(t)} + \eta_1 \,\tilde{\nabla}_{\theta}\textnormal{Gap}_i(\bs{\pi}^\ast,\widehat{\theta}_i^{(t)})
\Bigr)
\]
for $T_1$ steps. Then,
\[
\textnormal{Gap}_i(\bs{\pi}^\ast,\widehat{\theta}_i^\ast)
-
\textnormal{Gap}_i(\bs{\pi}^\ast,\widehat{\theta}_i)
\le
\widetilde{\mathcal O}
\left(
\left(
\frac{1}{\sqrt{C_R}}+\frac{1}{\sqrt{C_P}}
\right)
\left(
H^{5/2} d^{3/4}\sqrt{\epsilon}
+
H^2 \frac{\textnormal{poly}(d)}{\sqrt{m}}
\right)
+
\frac{H^2\sqrt{\textnormal{poly}(d)}}{\sqrt{T_1}}
\left(
\sqrt{\epsilon}+\frac{1}{\sqrt{m}}
\right)
\right).
\]
\end{proposition}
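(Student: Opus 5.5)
We will treat this as projected subgradient ascent with a biased gradient oracle, applied to a convex objective over a convex compact set. For fixed $\bs{\pi}^*$, $\textnormal{Gap}_i(\bs{\pi}^*,\theta_i)=\max_{\pi'_i}V^{\pi'_i,\bs{\pi}^*_{-i}}_{i,0}(s_0,\theta_i)-V^{\bs{\pi}^*}_{i,0}(s_0,\theta_i)$. In a linear game, each value is linear in $\theta_i$: it equals $\expctu{\tau\sim d^{\bs{\rho}}}{\phi(\tau)}^\top\theta_i$. The gap is therefore a pointwise maximum of linear functions minus a linear function, hence convex in $\theta_i$. A subgradient at $\theta_i$ is
\[
g(\theta_i)=\expctu{\tau\sim d^{(\pi^\dagger_i(\theta_i),\bs{\pi}^*_{-i})}}{\phi(\tau)}-\expctu{\tau\sim d^{\bs{\pi}^*}}{\phi(\tau)} ,
\]
with norm at most $2H$. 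The set $\Theta_\textnormal{Unil}(\widetilde{\theta}_i)$ is convex, since the log-sigmoid loss is convex, and its diameter is at most $2\sqrt{Hd}$, since it lies inside $\Theta$.

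Because the objective is convex and we are \emph{maximizing} it, plain subgradient analysis would not apply directly. We will run the standard online-linear-optimization argument instead. Using nonexpansiveness of $\mathcal P$ with comparator $\widehat{\theta}^*_i$, we get
\[
\frac1{T_1}\sum_t \langle \widetilde{\nabla}^{(t)},\widehat{\theta}^*_i-\widehat{\theta}^{(t)}_i\rangle\le \frac{\textnormal{diam}^2}{2\eta_1T_1}+\frac{\eta_1}{2}\max_t\|\widetilde{\nabla}^{(t)}\|^2 .
\]
With $\eta_1=1/\sqrt{T_1}$ this is $O(Hd/\sqrt{T_1}+H^2/\sqrt{T_1})$. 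Next we split $\widetilde{\nabla}^{(t)}=g^{(t)}+b^{(t)}$. The linear part then satisfies $\frac1{T_1}\sum_t\langle g^{(t)},\widehat{\theta}^*_i-\widehat{\theta}^{(t)}_i\rangle\ge \textnormal{Gap}_i(\bs{\pi}^*,\widehat{\theta}^*_i)-\frac1{T_1}\sum_t\textnormal{Gap}_i(\bs{\pi}^*,\widehat{\theta}^{(t)}_i)$, but only after we exploit the structure: $\textnormal{Gap}_i$ is a max of affine functions with zero intercept, so $\langle g^{(t)},\widehat{\theta}^*_i\rangle\le\textnormal{Gap}_i(\widehat{\theta}^*_i)$, which has the wrong direction. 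For that reason we will not route the argument through convexity. We will compare function values directly via Lemma \ref{lem:main_gap_bounds_unilaterl}, which asserts that for \emph{every} $\theta\in\Theta_\textnormal{Unil}$ the quantity $0\le\textnormal{Gap}_i(\bs{\pi}^*,\theta)\le B$, where $B$ is its $\widetilde O\bigl((C_R^{-1/2}+C_P^{-1/2})(H^{5/2}d^{3/4}\sqrt\epsilon+H^2\textnormal{poly}(d)/\sqrt m)\bigr)$ bound. This immediately gives $\textnormal{Gap}_i(\bs{\pi}^*,\widehat{\theta}^*_i)-\textnormal{Gap}_i(\bs{\pi}^*,\widehat{\theta}_i)\le B$, since $\widehat{\theta}_i\in\Theta_\textnormal{Unil}$ by convexity (an average of projected iterates) and the gap is nonnegative. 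That already covers the first term of the claim.

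The remaining term $\frac{H^2\sqrt{\textnormal{poly}(d)}}{\sqrt{T_1}}(\sqrt\epsilon+1/\sqrt m)$ is nonnegative, so the inequality follows from the above. We will still present the PGA regret computation to show where it comes from. The bias $b^{(t)}$ is controlled by the \texttt{RobMean} guarantee (an $O(\sqrt\epsilon+1/\sqrt m)$ error per step $h$, hence $H(\sqrt\epsilon+\sqrt{\textnormal{poly}(d)/m})$ after summing) together with the coverage Assumption \ref{asmp:low_relative_uncertainty}, which lets the feature difference of $\bs{\mu},\bs{\mu}_\textnormal{ref}$ stand in for that of the deviation and $\bs{\pi}^*$. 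Pairing the bias with the diameter $\sqrt{Hd}$ and the $1/\sqrt{T_1}$ regret scaling yields the stated product term.

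The main obstacle is the mismatch just noted: gradient \emph{ascent} on a convex function does not admit the usual $O(1/\sqrt{T_1})$ suboptimality guarantee. We therefore rely on the uniform sandwich from Lemma \ref{lem:main_gap_bounds_unilaterl} as the primary argument and treat the regret and bias bookkeeping as a secondary refinement. We also need to check that the high-probability events of Lemma \ref{lem:main_gap_bounds_unilaterl} and of \texttt{RobMean} hold simultaneously, by a union bound over $i$ and $t$ that costs only logarithmic factors in $\widetilde O$.
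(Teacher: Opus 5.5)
Your primary argument is correct, and it takes a genuinely different and much shorter route than the paper.

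\textbf{The paper's route.} The paper keeps the PGA dynamics in play. Via Danskin it fixes a branch $g_i^*$ active at the maximizer $\widehat{\theta}^*_i$, which gives the linear comparison \eqref{eq:active_branch_argument}, $\textnormal{Gap}_i(\bs{\pi}^*,\widehat{\theta}^*_i)-\textnormal{Gap}_i(\bs{\pi}^*,\theta_i)\le\langle\widehat{\theta}^*_i-\theta_i,g_i^*\rangle$. This is exactly the device that removes the ``wrong direction'' you ran into: the subgradient is taken at the comparator rather than at the iterates, and the linear right-hand side passes to the averaged iterate. The paper then applies Lemma \ref{lem:stochastic_gradient_descent} with comparator $\widehat{\theta}^*_i$. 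It controls two biases: $g_i^*$ versus $\nabla V^{\bs{\mu}}_{i,0}-\nabla V^{\bs{\mu}_\textnormal{ref}}_{i,0}$ along directions inside the confidence set (Equation \eqref{eq:gap_gradient_estimate_bound}), and the \texttt{RobMean} error (Corollary \ref{cor:robust_mean_bound}).

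\textbf{Your route.} You bound $\textnormal{Gap}_i(\bs{\pi}^*,\widehat{\theta}^*_i)$ by the uniform estimate of Lemma \ref{lem:main_gap_bounds_unilaterl} and use $\textnormal{Gap}_i(\bs{\pi}^*,\widehat{\theta}_i)\ge 0$. This needs no optimization argument. There is no circularity, since that lemma's proof does not use the proposition. What your route buys is transparency: the bound holds for \emph{every} $\widehat{\theta}_i$, even the initialization, so the PGA step contributes nothing to the guarantee. The paper's bias bound \eqref{eq:gap_gradient_estimate_bound} is the same coverage estimate that makes the gap uniformly small on the confidence set. Its route therefore gives no better rate and only adds the $1/\sqrt{T_1}$ term.

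\textbf{Two corrections.} First, Lemma \ref{lem:main_gap_bounds_unilaterl} is \emph{stated} for the sum over agents with a factor $n$, so quoted verbatim it costs you an extra $n$. Your per-agent $B$ must be read off its proof, from the per-agent bounds on $Z_1$ and $Z_2$.

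Second, $\widehat{\theta}^*_i$ is data-dependent, so you need that bound uniformly on $\Theta_\textnormal{Unil}(\widetilde{\theta}_i)$. A cleaner way to get it, avoiding \texttt{Rob-Q} entirely, is to write $\textnormal{Gap}_i(\bs{\pi}^*,\theta_i)=\textnormal{Gap}_i(\bs{\pi}^*,\theta_i)-\textnormal{Gap}_i(\bs{\pi}^*,\theta^*_i)\le\langle\theta_i-\theta^*_i,g^\dagger\rangle$, with $g^\dagger$ the branch active at $\theta_i$. Then apply unilateral coverage and Lemma \ref{lem:difference_in_probs_bound}, which rests on the uniform-in-$\theta$ bound of Lemma \ref{lem:bound_on_log_prob_difference}.

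Your ``secondary'' regret-and-bias sketch is unnecessary, and as written it does not close by itself.
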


\begin{proof}
    First, note that, for any agent $i$ and parameter $\theta_i$, we have
    \begin{align*}
        \textnormal{Gap}_i\left(\bs{\pi}^*,\theta_i\right)
        &= V^{\dagger,\bs{\pi}^*_{-i}}_{i,0}(s_0,\theta_i) - V^{\bs{\pi}^*}_{i,0}(s_0,\theta_i) \\
        &= \max_{\pi'_i} V^{\pi'_i,\bs{\pi}^*_{-i}}_{i,0}(s_0,\theta_i) - V^{\bs{\pi}^*}_{i,0}(s_0,\theta_i)~.
    \end{align*}
    Now, let us define
    \begin{align*}
        \Pi^{\textnormal{PP},\dagger}_i(\theta_i) = \left\{ \pi^\dagger_i\in\Pi^\textnormal{PP}_i : V^{\pi^\dagger_i,\bs{\pi}^*_{-i}}_{i,0}(s_0,\theta_i) = \max_{\pi'_i}V^{\pi'_i,\bs{\pi}^*_{-i}}_{i,0}(s_0,\theta_i)\right\}~,
    \end{align*}
    as the set of unilateral maximizer policies for player $i$ at $\theta_i$. Let $\pi_i^\dagger \in \Pi^{\textnormal{PP},\dagger}_i(\widehat{\theta}^*_i)$ be any unilateral maximizer at $\widehat{\theta}^*_i$, and define
    \begin{align*}
        g_i^* := \nabla_{\theta_i}\left(V^{\pi^\dagger_i,\bs{\pi}^*_{-i}}_{i,0}(s_0,\theta_i) - V^{\bs{\pi}^*}_{i,0}(s_0,\theta_i)\right)~.
    \end{align*}
    Since the value function is linear in $\theta_i$, $g_i^*$ does not depend on $\theta_i$. Moreover, Danskin's subdifferential formula implies that
    \begin{align*}
        g_i^* \in \partial_{\theta_i}\textnormal{Gap}_i\left(\bs{\pi}^*,\widehat{\theta}^*_i\right)~.
    \end{align*}
    By linearity in $\theta_i$, for any $\theta'_i$ we have
    \begin{align}
        \left(\theta'_i\right)^\top g_i^*
        = V^{\pi^\dagger_i,\bs{\pi}^*_{-i}}_{i,0}(s_0,\theta'_i) - V^{\bs{\pi}^*}_{i,0}(s_0,\theta'_i)~. \label{eq:linear_subgradient}
    \end{align}
    In particular, since $\pi_i^\dagger$ is active at $\widehat{\theta}^*_i$,
    \begin{align*}
        \textnormal{Gap}_i\left(\bs{\pi}^*,\widehat{\theta}^*_i\right)
        = \left(\widehat{\theta}^*_i\right)^\top g_i^*~,
    \end{align*}
    while for any $\theta_i$,
    \begin{align*}
        \textnormal{Gap}_i\left(\bs{\pi}^*,\theta_i\right)
        \geq \theta_i^\top g_i^*~.
    \end{align*}
    Thus, we can write:
    \begin{align}\label{eq:active_branch_argument}
        \textnormal{Gap}_i\left(\bs{\pi}^*,\widehat{\theta}^*_i\right) - \textnormal{Gap}_i\left(\bs{\pi}^*,\theta_i\right) \leq \left\langle \widehat{\theta}^*_i - \theta_i, g_i^*\right\rangle~.
    \end{align}

    Now, since $\bs{\mu}$ and $\bs{\mu}_\textnormal{ref}$ cover $\bs{\pi}^*$ and its unilateral deviations, we can estimate the gradient of the value function at $\bs{\pi}^*$, or its unilateral deviations, from the gradient of the value function at $\bs{\mu}$ or $\bs{\mu}_\textnormal{ref}$. Let $\widehat{\theta}_i = (1/T_1)\sum^{T_1}_{t=1}\widehat{\theta}^{(t)}_i$. Note that, since $\widehat{\theta}_i \in \Theta_\textnormal{Unil}(\widetilde{\theta}_i)$ and $\widehat{\theta}^*_i\in\Theta_\textnormal{Unil}(\widetilde{\theta}_i)$, we have
    \begin{align}
        & \left|\left\langle \widehat{\theta}^*_i - \widehat{\theta}_i, g_i^* - \nabla_\theta V^{\bs{\mu}}_{i,0}\left(s_0,\widehat{\theta}_i\right) + \nabla_\theta V^{\bs{\pi}^*}_{i,0}\left(s_0,\widehat{\theta}_i\right)\right\rangle\right| \nonumber\\
        & = \Big| \left(V^{\pi^\dagger_i,\bs{\pi}^*_{-i}}_{i,0}\left(s_0, \widehat{\theta}^*_i\right) - V^{\bs{\mu}}_{i,0}\left(s_0,\widehat{\theta}^*_i\right)\right) - \left(V^{\pi^\dagger_i,\bs{\pi}^*_{-i}}_{i,0}\left(s_0,\widehat{\theta}_i\right) - V^{\bs{\mu}}_{i,0}\left(s_0,\widehat{\theta}_i\right)\right) \Big| \nonumber\\
        & = \left|\expctu{\tau\sim d^{\pi^\dagger_i,\bs{\pi}^*_{-i}},\tau'\sim d^{\bs{\mu}}}{\left(\phi(\tau)-\phi(\tau')\right)^\top\left(\widehat{\theta}_i-\widehat{\theta}^*_i\right) }\right| \nonumber\\
        & \leq \left|\expctu{\tau\sim d^{\pi^\dagger_i,\bs{\pi}^*_{-i}},\tau'\sim d^{\bs{\mu}}}{\left(\phi(\tau)-\phi(\tau')\right)^\top\left(\widehat{\theta}_i-{\theta}^*_i\right) }\right|
        + \left|\expctu{\tau\sim d^{\pi^\dagger_i,\bs{\pi}^*_{-i}},\tau'\sim d^{\bs{\mu}}}{\left(\phi(\tau)-\phi(\tau')\right)^\top\left(\theta^*_i-\widehat{\theta}^*_i\right) }\right| \nonumber\\
        & \leq 2\sqrt{\frac{\iota^2}{2C_R}}\sqrt{8\epsilon  + c\cdot \frac{d}{m}\log\left(\frac{nm}{\delta}\right)}~,\label{eq:pgd_additonal_01}
    \end{align}
    where the first equality follows from Equation \eqref{eq:linear_subgradient} and linearity of the value function in $\theta$; the rest follows the same argument as the proof of Lemma \ref{lem:difference_in_probs_bound}. Similarly, we have
    \begin{align*}
        \left|\left\langle \widehat{\theta}^*_i - \widehat{\theta}_i, \nabla_{\theta_i} V^{\bs{\mu}_\textnormal{ref}}_{i,0}\left(s_0,\widehat{\theta}_i\right) - \nabla_{\theta_i} V^{\bs{\pi}^*}_{i,0}\left(s_0,\widehat{\theta}_i\right) \right\rangle\right|
        \leq 2\sqrt{\frac{\iota^2}{2C_R}}\sqrt{8\epsilon  + c\cdot \frac{d}{m}\log\left(\frac{nm}{\delta}\right)}~.
    \end{align*}
    Combining, we get
    \begin{align}\label{eq:gap_gradient_estimate_bound}
        & \left|\left\langle \widehat{\theta}^*_i - \widehat{\theta}_i, g_i^* - \nabla_{\theta_i} \mathcal{R}(\widehat{\theta}_i)  \right\rangle\right|
        \leq 4\sqrt{\frac{\iota^2}{2C_R}}\sqrt{8\epsilon  + c\cdot \frac{d}{m}\log\left(\frac{nm}{\delta}\right)}~,
    \end{align}
    where
    \begin{align*}
        \mathcal{R}(\widehat{\theta}_i) = V^{\bs{\mu}}_{i,0}\left(s_0,\widehat{\theta}_i\right)  -V^{\bs{\mu}_\textnormal{ref}}_{i,0}\left(s_0,\widehat{\theta}_i\right)~.
    \end{align*}
    So, along the direction of $ \widehat{\theta}^*_i - \widehat{\theta}_i$, the gradient of the difference of values at $\bs{\mu}$ and $\bs{\mu}_\textnormal{ref}$ is a good approximation of the active linear branch of the gap at $\widehat{\theta}^*_i$. Now, in order to approximate the gradient at $\bs{\mu}$ and $\bs{\mu}_\textnormal{ref}$, we use the fact that
    \begin{align*}
        \nabla_{\theta_i}V^{\bs{\mu}}_{i,0}(s_0,\widehat{\theta}_i) = \sum^{H-1}_{h=0}(d^{\bs{\mu}}_h)^\top\Phi~,
    \end{align*}
    together with the fact that we already have access to $\epsilon$-corrupted features from $d^{\bs{\mu}}$. Thus, we can define a robust estimate of the above as
    \begin{align*}
        \widetilde{\nabla}_{\theta_i} V^{\bs{\mu}}_{i,0}\left(s_0,\widehat{\theta}_i\right)= \sum^{H-1}_{h=0} \texttt{RobMean}\left(D^{\bs{\mu}}_{h,\phi}\right)~,
    \end{align*}
    and 
    \begin{align*}
        \widetilde{\nabla}_{\theta_i} V^{\bs{\mu}_\textnormal{ref}}_{i,0}\left(s_0,\widehat{\theta}_i\right)= \sum^{H-1}_{h=0} \texttt{RobMean}\left(D^{\bs{\mu}_\textnormal{ref}}_{h,\phi}\right)~.
    \end{align*}
    Corollary \ref{cor:robust_mean_bound} gives us bounds $f(\epsilon)$ on the L2-error:
    \begin{align*}
        \norm{\widetilde{\nabla}_\theta V^{\bs{\mu}}_{i,0}\left(s_0,\theta\right) -  {\nabla}_\theta V^{\bs{\mu}}_{i,0}(s_0,\theta)} \leq O(Hf(\epsilon))~,
    \end{align*}
    and 
    \begin{align*}
        \norm{\widetilde{\nabla}_\theta V^{\bs{\mu}_\textnormal{ref}}_{i,0}\left(s_0,\theta\right) -  {\nabla}_\theta V^{\bs{\mu}_\textnormal{ref}}_{i,0}(s_0,\theta)} \leq O(Hf(\epsilon))~,
    \end{align*}
    where 
    \begin{align*}
        f(\epsilon) = \sqrt{\frac{d\log(\textnormal{poly}(d))}{m}} + \sqrt{d\epsilon} + \sqrt{\frac{d\log(1/\delta)}{m}}~.
    \end{align*}
    
    Let us now define 
    \begin{align*}
        \widetilde{\nabla}_{\theta_i}\textnormal{Gap}_i\left(\bs{\pi}^*,\widehat{\theta}_i\right) =  \widetilde{\nabla}_{\theta_i} V^{\bs{\mu}}_{i,0}\left(s_0,\widehat{\theta}_i\right) - \widetilde{\nabla}_{\theta_i} V^{\bs{\mu}_\textnormal{ref}}_{i,0}\left(s_0,\widehat{\theta}_i\right)~.
    \end{align*}
    Note that we have 
    \begin{align*}
        \expct{\norm{ \widetilde{\nabla}_\theta \textnormal{Gap}_i\left(\bs{\pi}^*,\widehat{\theta}_i\right)}} \leq 4H + O(Hf(\epsilon))~,
    \end{align*}
    due to the guarantees of \texttt{RobMean} and the feature norm bounds. Therefore, we are in the conditions of Lemma \ref{lem:stochastic_gradient_descent}. Recall that $\widehat{\theta}^*_i$ is the true maximizer of $\textnormal{Gap}_i(\bs{\pi}^*,\theta_i)$ over $\Theta_{\textnormal{Unil}}(\widetilde{\theta}_i)$. Then, using the active branch $g_i^*$ selected above, we have
    \begin{align}
        & \textnormal{Gap}_i\left(\bs{\pi}^*,\widehat{\theta}^*_i\right) - \frac{1}{T_1} \sum^{T_1}_{t=1}\textnormal{Gap}_i\left(\bs{\pi}^*,\widehat{\theta}^{(t)}_i\right)
        \leq \frac{1}{T_1} \sum^{T_1}_{t=1}\left\langle \widehat{\theta}^*_i-\widehat{\theta}^{(t)}_i,g_i^* \right\rangle \label{eq:pgd_convergence_01}\\
        & = \frac{1}{T_1} \sum^{T_1}_{t=1}\left\langle \widehat{\theta}^*_i-\widehat{\theta}^{(t)}_i,\widetilde{\nabla}_\theta\textnormal{Gap}_i\left(\bs{\pi}^*,\widehat{\theta}^{(t)}_i\right) \right\rangle  + \frac{1}{T_1} \sum^{T_1}_{t=1}\left\langle \widehat{\theta}^*_i-\widehat{\theta}^{(t)}_i, g_i^* - \widetilde{\nabla}_\theta\textnormal{Gap}_i\left(\bs{\pi}^*,\widehat{\theta}^{(t)}_i\right)\right\rangle \label{eq:pgd_convergence_02}\\
        & \leq \frac{1}{T_1}\left(\frac{\norm{\widehat{\theta}^*_i-\widehat{\theta}^{(1)}_i}^2}{2\eta} + \frac{\eta T_1 (4H+O(Hf(\epsilon)))^2}{2}\right) \nonumber\\
        & \quad\quad + \frac{1}{T_1} \sum^{T_1}_{t=1}\left\langle \widehat{\theta}^*_i-\widehat{\theta}^{(t)}_i, g_i^* - \nabla_\theta V^{\bs{\mu}}_{i,0}\left(s_0,\widehat{\theta}^{(t)}_i\right) + \nabla_\theta V^{\bs{\mu}_\textnormal{ref}}_{i,0}\left(s_0,\widehat{\theta}^{(t)}_i\right)\right\rangle \nonumber\\
        & \quad\quad + \frac{1}{T_1} \sum^{T_1}_{t=1}\left\langle \widehat{\theta}^*_i-\widehat{\theta}^{(t)}_i,\nabla_\theta V^{\bs{\mu}}_{i,0}\left(s_0,\widehat{\theta}^{(t)}_i\right) - \nabla_\theta V^{\bs{\mu}_\textnormal{ref}}_{i,0}\left(s_0,\widehat{\theta}^{(t)}_i\right) - \widetilde{\nabla}_\theta\textnormal{Gap}_i\left(\bs{\pi}^*,\widehat{\theta}^{(t)}_i\right)\right\rangle \label{eq:pgd_convergence_03}\\
        & \leq O\left(\frac{H^2 + f(\epsilon)}{\sqrt{T_1}}\right)
        + 4\sqrt{\frac{\iota^2}{2C_R}}\sqrt{8\epsilon  + c\cdot \frac{d}{m}\log\left(\frac{nm}{\delta}\right)} \nonumber\\
        & \quad\quad + \frac{1}{T_1}\sum_{t=1}^{T_1}\norm{\widehat{\theta}^*_i-\widehat{\theta}^{(t)}_i}\norm{\nabla_\theta V^{\bs{\mu}}_{i,0}\left(s_0,\widehat{\theta}^{(t)}_i\right) - \nabla_\theta V^{\bs{\mu}_\textnormal{ref}}_{i,0}\left(s_0,\widehat{\theta}^{(t)}_i\right) - \widetilde{\nabla}_\theta\textnormal{Gap}_i\left(\bs{\pi}^*,\widehat{\theta}^{(t)}_i\right)} \label{eq:pgd_convergence_03.5}\\
        & \leq O\left(\frac{H^2 + f(\epsilon)}{\sqrt{T_1}}\right) + 4\sqrt{\frac{\iota^2}{2C_R}}\sqrt{8\epsilon  + c\cdot \frac{d}{m}\log\left(\frac{nm}{\delta}\right)} + 2H^2\sqrt{d}f(\epsilon)\nonumber\\
        & \leq \widetilde{O}\left(\left(\frac{1}{\sqrt{C_R}}+\frac{1}{\sqrt{C_P}}\right)\cdot\left(H^{5/2}d^{3/4}\sqrt{\epsilon} + H^2\textnormal{poly}(d)\frac{1}{\sqrt{m}}\right) + \frac{H^2\sqrt{\textnormal{poly}(d)}}{\sqrt{T_1}}\left(\sqrt{\epsilon}+ \frac{1}{\sqrt{m}}\right) \right)\nonumber~,
    \end{align}
    where Equation \eqref{eq:pgd_convergence_01} follows from Equation \eqref{eq:active_branch_argument}; Equation \eqref{eq:pgd_convergence_02} adds and subtracts the same term; Equation \eqref{eq:pgd_convergence_03} follows from Lemma \ref{lem:stochastic_gradient_descent}; Equation \eqref{eq:pgd_convergence_03.5} follows from Cauchy-Schwarz and Equation \eqref{eq:gap_gradient_estimate_bound}; and the last inequality follows from Corollary \ref{cor:robust_mean_bound}.
\end{proof}

\begin{theorem}\label{thm:gap_bounds_for_unilateral}
    Let $\epsilon\in[0,1/2)$, $\lambda\geq \Omega(dH\log(m/\delta)/m)$, and $\delta >0$. Set $\Theta_\textnormal{Unil}(\cdot)$ as in Equation \eqref{eq:confidence_set_unilateral} and $\Gamma(s,\bs{a})=E(d,m,\delta,\epsilon)\cdot\norm{\phi(s,\bs{a})}_{\Lambda_h^{-1}}$. Suppose Assumption \ref{asmp:low_relative_uncertainty} is satisfied and PGA is run for $T_1$ steps with learning rate $\eta= O(1/\sqrt{T_1})$. Then, there exist robust subroutines \texttt{RobEst}, \texttt{TrimmedMLE}, and \texttt{RobMean} such that, with probability at least $1-\delta$, the output $\widetilde{\bs{\pi}}$ of Algorithm \ref{alg:pmael_unilateral} with subroutines \texttt{RobEst}, \texttt{TrimmedMLE}, \texttt{RobMean} and \texttt{RewardEst}, satisfies
    \begin{align*}
        \textnormal{Gap}(\widetilde{\bs{\pi}})
        \leq
        \widetilde{O}\left(
        \left(\frac{1}{\sqrt{C_R}}+\frac{1}{\sqrt{C_P}}+\frac{1}{\sqrt{T_1}}\right)\cdot
        \left(H^{5/2}nd^{3/4}\sqrt{\epsilon} + H^2n\sqrt{\frac{\textnormal{poly}(d)}{m}}\right)
        \right)~.
    \end{align*}
\end{theorem}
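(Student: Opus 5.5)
The plan is to bound the true gap $\textnormal{Gap}(\widetilde{\bs{\pi}})=\textnormal{Gap}(\widetilde{\bs{\pi}},\bs{\theta}^*)$ by chaining three inequalities:
\begin{align*}
\textnormal{Gap}(\widetilde{\bs{\pi}},\bs{\theta}^*)
\;\lesssim\; \textnormal{Gap}(\bs{\pi}^*,\widehat{\bs{\theta}}^*)
\;\le\; \textnormal{Gap}(\bs{\pi}^*,\widehat{\bs{\theta}}) + \textnormal{PGA error}
\;\le\; \text{Lemma \ref{lem:main_gap_bounds_unilaterl} bound} + \textnormal{PGA error}.
\end{align*}
Here $\widehat{\bs{\theta}}$ is the output of \texttt{RewardEst}, and $\widehat{\bs{\theta}}^*=(\widehat{\theta}^*_1,\ldots,\widehat{\theta}^*_n)$ collects the per-agent maximizers of $\textnormal{Gap}_i(\bs{\pi}^*,\cdot)$ over $\Theta_\textnormal{Unil}(\widetilde{\theta}_i)$. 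We work on the intersection of the high-probability events of Lemma \ref{lem:log_prob_bound_unilateral}, Lemma \ref{lem:bellman_error_unilateral}, Lemma \ref{lem:difference_in_probs_bound}, Corollary \ref{cor:robust_mean_bound} and Proposition \ref{pro:pgd_guarantees}, rescaling $\delta$ so the union bound leaves probability $1-\delta$. On Lemma \ref{lem:log_prob_bound_unilateral}'s event, $\theta^*_i\in\Theta_\textnormal{Unil}(\widetilde{\theta}_i)$ for every $i$, since the threshold $\kappa$ was chosen to match that lemma.

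\textbf{Step 1: from $\bs{\theta}^*$ to $\widehat{\bs{\theta}}^*$.} The gap decomposes as $\textnormal{Gap}(\bs{\pi},\bs{\theta})=\sum_i\textnormal{Gap}_i(\bs{\pi},\theta_i)$, and $\widetilde{\bs{\pi}}$ minimizes the estimated gap under $\widehat{\bs{\theta}}$. Lemma \ref{lem:main_gap_bounds_unilaterl} applies to $\widehat{\bs{\theta}}$, because the averaged PGA iterate stays in the convex set $\Theta_\textnormal{Unil}$ (the log-likelihood ratio constraint is convex in $\theta$). I will not try to compare $\textnormal{Gap}(\widetilde{\bs{\pi}},\bs{\theta}^*)$ with $\textnormal{Gap}(\widetilde{\bs{\pi}},\widehat{\bs{\theta}})$ directly. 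Instead I rerun the decomposition from the proof of Lemma \ref{lem:main_gap_bounds_unilaterl}, lines \eqref{eq:every_theta_works_01}--\eqref{eq:every_theta_works_08}.
\begin{itemize}
\item $\textnormal{Gap}(\widetilde{\bs{\pi}},\bs{\theta}^*)$ is at most the estimated gap of $\widetilde{\bs{\pi}}$ plus a reward-mismatch correction.
\item The estimated gap is at most the estimated gap of $\bs{\pi}^*$, by minimality.
\item The estimated gap of $\bs{\pi}^*$ is controlled by the $Z_1,Z_2$ terms.
\end{itemize}
The only place $\widetilde{\bs{\pi}}$ interacts with $\bs{\theta}^*$ versus $\widehat{\bs{\theta}}$ is a linear difference of the form $\mathbb{E}[(\phi(\tau)-\phi(\tau'))^\top(\theta^*_i-\widehat{\theta}_i)]$. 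Rather than bounding it through coverage of $\widetilde{\bs{\pi}}$, which we do not have, I will use that $\widehat{\theta}_i$ approximately maximizes $\textnormal{Gap}_i(\bs{\pi}^*,\cdot)$ over a set containing $\theta^*_i$. This gives $\textnormal{Gap}_i(\bs{\pi}^*,\theta^*_i)\le\textnormal{Gap}_i(\bs{\pi}^*,\widehat{\theta}^*_i)$, and the slack between $\widehat{\theta}^*_i$ and $\widehat{\theta}_i$ is exactly what Proposition \ref{pro:pgd_guarantees} controls. This is the intuition stated in the main text: gaps of near-minimizing policies under any $\bs{\theta}$ in the confidence set are within $O(n\sqrt\epsilon+n/\sqrt m)$ of the gap of $\bs{\pi}^*$ under the same $\bs{\theta}$.

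\textbf{Step 2: plug in the two ingredients.} Summing Proposition \ref{pro:pgd_guarantees} over $i\in[n]$ gives
\[
\textnormal{Gap}(\bs{\pi}^*,\widehat{\bs{\theta}}^*)-\textnormal{Gap}(\bs{\pi}^*,\widehat{\bs{\theta}})\le n\cdot\widetilde O\Big(\big(\tfrac{1}{\sqrt{C_R}}+\tfrac{1}{\sqrt{C_P}}\big)\big(H^{5/2}d^{3/4}\sqrt\epsilon+\tfrac{H^2\textnormal{poly}(d)}{\sqrt m}\big)+\tfrac{H^2\sqrt{\textnormal{poly}(d)}}{\sqrt{T_1}}\big(\sqrt\epsilon+\tfrac1{\sqrt m}\big)\Big).
\]
The second statement of Lemma \ref{lem:main_gap_bounds_unilaterl} bounds $\textnormal{Gap}(\bs{\pi}^*,\widehat{\bs{\theta}})$ by the same first-order term. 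Its first statement, together with the transfer $|\textnormal{Gap}(\widetilde{\bs{\pi}},\bs{\theta})-\textnormal{Gap}(\bs{\pi}^*,\bs{\theta})|\le O(n\sqrt\epsilon+n/\sqrt m)$, handles the step back to $\widetilde{\bs{\pi}}$. Adding the pieces and absorbing $H^2\sqrt{\textnormal{poly}(d)}(\sqrt\epsilon+1/\sqrt m)/\sqrt{T_1}$ into $\frac{1}{\sqrt{T_1}}(H^{5/2}nd^{3/4}\sqrt\epsilon+H^2n\sqrt{\textnormal{poly}(d)/m})$ yields the stated form.

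\textbf{Main obstacle.} The delicate step is Step 1, legitimately moving from the true reward $\bs{\theta}^*$ to the adversarially chosen $\widehat{\bs{\theta}}^*$ for the policy $\widetilde{\bs{\pi}}$, which is not covered by the data. My first attempt is to keep all comparisons anchored at $\bs{\pi}^*$ and its unilateral deviations, the only policies covered under Assumption \ref{asmp:low_relative_uncertainty}. Concretely:
\begin{enumerate}
\item Use pessimism (Lemma \ref{lem:value_bounds_unilateral}) so that the estimated gap of $\widetilde{\bs{\pi}}$ under $\widehat{\bs{\theta}}$ upper bounds its true gap under any parameter for which the Bellman-error event holds.
\item Note that this event (Lemma \ref{lem:bellman_error_unilateral}) holds uniformly over the confidence set, in particular at $\bs{\theta}^*$.
\item Use minimality of $\widetilde{\bs{\pi}}$ to replace it by $\bs{\pi}^*$.
\end{enumerate}
After that point every remaining term involves only covered policies, and can be handled by Lemma \ref{lem:bouns_term_bound} and Lemma \ref{lem:difference_in_probs_bound}. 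If the uniformity over $\bs{\theta}$ fails, I would fall back on the worst-case role of $\widehat{\bs{\theta}}^*$ via Proposition \ref{pro:pgd_guarantees}.
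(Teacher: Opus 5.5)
\textbf{The first link of your chain is not proved.} Your chain has the same shape as the paper's (lines \eqref{eq:unilateral_gap_01}--\eqref{eq:unilateral_gap_09}), and you correctly locate the crux. But the first link, $\textnormal{Gap}(\widetilde{\bs{\pi}},\bs{\theta}^*)\lesssim\textnormal{Gap}(\bs{\pi}^*,\widehat{\bs{\theta}}^*)$, does not follow from any of the mechanisms you offer.

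(i) Lemma~\ref{lem:value_bounds_unilateral} compares estimate and truth at the \emph{same} parameter. Even granting uniformity over the confidence set, it only gives $\textnormal{Gap}(\bs{\pi},\bs{\theta})\le\widetilde{\textnormal{Gap}}(\bs{\pi},\bs{\theta})$ for each $\bs{\theta}$ separately. The parameter $\widehat{\bs{\theta}}$ enters \texttt{Rob-Q} as a plug-in reward, and $\Gamma$ only absorbs the \texttt{RobEst} regression error for that fixed reward. So nothing makes $\widetilde{\textnormal{Gap}}(\widetilde{\bs{\pi}},\widehat{\bs{\theta}})$ dominate $\textnormal{Gap}(\widetilde{\bs{\pi}},\bs{\theta}^*)$. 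You may write $\textnormal{Gap}(\widetilde{\bs{\pi}},\bs{\theta}^*)\le\widetilde{\textnormal{Gap}}(\widetilde{\bs{\pi}},\bs{\theta}^*)$. However, $\widetilde{\bs{\pi}}$ minimizes $\widetilde{\textnormal{Gap}}(\cdot,\widehat{\bs{\theta}})$, not $\widetilde{\textnormal{Gap}}(\cdot,\bs{\theta}^*)$, so you cannot then swap in $\bs{\pi}^*$.

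(ii) Let $\pi^\dagger_i$ be a best response to $\widetilde{\bs{\pi}}_{-i}$ under $\theta^*_i$. The mismatch you isolate is
\[
\textnormal{Gap}_i(\widetilde{\bs{\pi}},\theta^*_i)-\textnormal{Gap}_i(\widetilde{\bs{\pi}},\widehat{\theta}_i)\le\mathbb{E}_{\tau\sim d^{\pi^\dagger_i,\widetilde{\bs{\pi}}_{-i}},\,\tau'\sim d^{\widetilde{\bs{\pi}}}}\bigl[(\phi(\tau)-\phi(\tau'))^\top(\theta^*_i-\widehat{\theta}_i)\bigr].
\]
This term lives along feature directions of $\widetilde{\bs{\pi}}$ and its deviations. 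Assumption~\ref{asmp:low_relative_uncertainty} relates only deviations of $\bs{\pi}^*$ to $\Sigma^-_{\bs{\mu},\bs{\mu}_{\textnormal{ref}}}$, so the confidence set does not make it small. The fact that $\widehat{\theta}_i$ nearly maximizes $\textnormal{Gap}_i(\bs{\pi}^*,\cdot)$ constrains $\theta$ only along the feature difference between $\bs{\pi}^*$ and its unilateral best response. The inequality it yields, $\textnormal{Gap}_i(\bs{\pi}^*,\theta^*_i)\le\textnormal{Gap}_i(\bs{\pi}^*,\widehat{\theta}^*_i)$, is vacuous because its left side is $0$. Being worst case for $\bs{\pi}^*$ says nothing about being worst case for $\widetilde{\bs{\pi}}$.

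(iii) The transfer $|\textnormal{Gap}(\widetilde{\bs{\pi}},\bs{\theta})-\textnormal{Gap}(\bs{\pi}^*,\bs{\theta})|\le O(n\sqrt{\epsilon}+n/\sqrt{m})$ from Lemma~\ref{lem:main_gap_bounds_unilaterl} holds only at $\bs{\theta}=\widehat{\bs{\theta}}$. Its proof uses minimality of $\widetilde{\bs{\pi}}$ at that very parameter (line \eqref{eq:every_theta_works_04}). Invoking it at $\bs{\theta}^*$ is precisely the missing step, and your fallback through $\widehat{\bs{\theta}}^*$ has the same defect.

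\textbf{The paper has the same gap.} Its first line \eqref{eq:unilateral_gap_01} is exactly this transfer at $\bs{\theta}^*$, justified only by citing Lemma~\ref{lem:main_gap_bounds_unilaterl}. Let $\Delta$ denote that lemma's bound. Since $\textnormal{Gap}(\bs{\pi}^*,\bs{\theta}^*)=0$, that line alone would give $\textnormal{Gap}(\widetilde{\bs{\pi}})\le\Delta$, which is the whole theorem. Everything downstream, in the paper and in your Step 2, only adds nonnegative terms. In fact $\textnormal{Gap}(\bs{\pi}^*,\bs{\theta})\lesssim\Delta$ holds for every $\bs{\theta}$ in the confidence product, by the same coverage argument used for $Z_1,Z_2$. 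So Proposition~\ref{pro:pgd_guarantees} is not where the difficulty lies.

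\textbf{What is missing is reward pessimism for the output policy itself.} Suppose, for example, you select $\widetilde{\bs{\pi}}\in\arg\min_{\bs{\pi}}\max_{\bs{\theta}\in\prod_i\Theta_{\textnormal{Unil}}(\widetilde{\theta}_i)}\widetilde{\textnormal{Gap}}(\bs{\pi},\bs{\theta})$. Then
\[
\textnormal{Gap}(\widetilde{\bs{\pi}},\bs{\theta}^*)\le\widetilde{\textnormal{Gap}}(\widetilde{\bs{\pi}},\bs{\theta}^*)\le\max_{\bs{\theta}}\widetilde{\textnormal{Gap}}(\widetilde{\bs{\pi}},\bs{\theta})\le\max_{\bs{\theta}}\widetilde{\textnormal{Gap}}(\bs{\pi}^*,\bs{\theta})\le\Delta .
\]
This uses $\theta^*_i\in\Theta_{\textnormal{Unil}}(\widetilde{\theta}_i)$ and the Bellman-error event holding uniformly over the confidence set, which needs a covering argument. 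Alternatively, as the main text's intuition suggests, $\widehat{\bs{\theta}}$ must maximize the estimated gap of $\widetilde{\bs{\pi}}$ itself, not of $\bs{\pi}^*$. For Algorithm~\ref{alg:pmael_unilateral} as written, an argument of this kind has to be supplied before the chain goes through.
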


\begin{proof}
    Define $\bs{\theta}^*=(\theta^*_1,\ldots,\theta^*_n)$ and let
    \begin{align*}
        f(\epsilon) = \widetilde{O}\left(\left(\frac{1}{\sqrt{C_R}}+\frac{1}{\sqrt{C_P}}\right)\cdot\left(H^{5/2}d^{3/4}\sqrt{\epsilon} + H^2\textnormal{poly}(d)\frac{1}{\sqrt{m}}\right) + \frac{H^2\sqrt{\textnormal{poly}(d)}}{\sqrt{T_1}}\left(\sqrt{\epsilon}+ \frac{1}{\sqrt{m}}\right) \right)~.
    \end{align*}
    Also define
    \begin{align*}
        \Delta
        :=
        \widetilde{O}\left(
        n\cdot\left(\frac{1}{\sqrt{C_R}}+\frac{1}{\sqrt{C_P}}\right)\cdot
        \left(H^{5/2}d^{3/4}\sqrt{\epsilon} + H^2\textnormal{poly}(d)\frac{1}{\sqrt{m}}\right)
        \right)~.
    \end{align*}
    Moreover, let $\widehat{\bs{\theta}}^*$ denote the maximizer of the gap with respect to $\bs{\pi}^*$ over the unilateral confidence set, i.e.,
    \begin{align*}
        \widehat{\bs{\theta}}^* \in \arg\max_{\bs{\theta}\in\Theta_\textnormal{Unil}(\widetilde{\theta}_1)\times\ldots\times\Theta_\textnormal{Unil}(\widetilde{\theta}_n)} \textnormal{Gap}\left(\bs{\pi}^*,\bs{\theta}\right)~.
    \end{align*}
    We have
    \begin{align}
       & \textnormal{Gap} \left(\widetilde{\bs{\pi}},\bs{\theta}^*\right)
       \leq \textnormal{Gap}\left(\bs{\pi}^*,\bs{\theta}^*\right) + \Delta \label{eq:unilateral_gap_01}\\
       & \leq \textnormal{Gap}\left(\bs{\pi}^*,\widehat{\bs{\theta}}^*\right) + \Delta \label{eq:unilateral_gap_02}\\
       & \leq \textnormal{Gap}\left(\bs{\pi}^*,\widehat{\bs{\theta}}\right)+n\cdot f(\epsilon)+\Delta \label{eq:unilateral_gap_03}\\
       & \leq \textnormal{Gap}\left(\widetilde{\bs{\pi}},\widehat{\bs{\theta}}\right)+n\cdot f(\epsilon)+2\Delta \label{eq:unilateral_gap_04}\\
       & \leq \widetilde{\textnormal{Gap}}\left(\widetilde{\bs{\pi}},\widehat{\bs{\theta}}\right)+n\cdot f(\epsilon)+2\Delta \label{eq:unilateral_gap_05}\\
       & \leq \widetilde{\textnormal{Gap}}\left({\bs{\pi}}^*,\widehat{\bs{\theta}}\right)+n\cdot f(\epsilon)+2\Delta \label{eq:unilateral_gap_06}\\
       & = \sum_{i\in[n]}\overline{V}^{\dagger,\bs{\pi}^*_{-i}}_{i,0}\left(s_0,\widehat{\theta}_i\right) - \underline{V}^{\bs{\pi}^*}_{i,0}\left(s_0,\widehat{\theta}_i\right) + n\cdot f(\epsilon)+2\Delta \label{eq:unilateral_gap_07}\\
       & \leq \sum_{i\in[n]}\overline{V}^{\dagger,\bs{\pi}^*_{-i}}_{i,0}\left(s_0,\widehat{\theta}_i\right) - V^{\bs{\mu}_\textnormal{ref}}_{i,0}\left(s_0,\widehat{\theta}_i\right) \nonumber\\
       & \quad\quad - \left( \underline{V}^{\bs{\pi}^*}_{i,0}\left(s_0,\widehat{\theta}_i\right) - V^{\bs{\mu}_\textnormal{ref}}_{i,0}\left(s_0,\widehat{\theta}_i\right)\right) + n\cdot f(\epsilon)+2\Delta \label{eq:unilateral_gap_08}\\
       & \leq \widetilde{O}\left(
       n\cdot\left(\frac{1}{\sqrt{C_R}}+\frac{1}{\sqrt{C_P}}\right)\cdot
       \left(H^{5/2}d^{3/4}\sqrt{\epsilon} + H^2\textnormal{poly}(d)\frac{1}{\sqrt{m}}\right)
       \right) + n\cdot f(\epsilon)+2\Delta \label{eq:unilateral_gap_09}\\
       & \leq \widetilde{O}\left(
       n\cdot\left(\frac{1}{\sqrt{C_R}}+\frac{1}{\sqrt{C_P}}\right)\cdot
       \left(H^{5/2}d^{3/4}\sqrt{\epsilon} + H^2\textnormal{poly}(d)\frac{1}{\sqrt{m}}\right)
       + \frac{H^2n\sqrt{\textnormal{poly}(d)}}{\sqrt{T_1}}\left(\sqrt{\epsilon}+ \frac{1}{\sqrt{m}}\right)
       \right)\nonumber~.
    \end{align}
    Equation \eqref{eq:unilateral_gap_01} follows from Lemma \ref{lem:main_gap_bounds_unilaterl}. Equation \eqref{eq:unilateral_gap_02} follows from the definition of $\widehat{\bs{\theta}}^*$. Equation \eqref{eq:unilateral_gap_03} follows from Proposition \ref{pro:pgd_guarantees}, applied coordinate-wise at $\bs{\pi}^*$. Equation \eqref{eq:unilateral_gap_04} follows again from Lemma \ref{lem:main_gap_bounds_unilaterl}. Equation \eqref{eq:unilateral_gap_05} follows from Lemma \ref{lem:value_bounds_unilateral}. Equation \eqref{eq:unilateral_gap_06} follows by design of Algorithm \ref{alg:pmael_unilateral}, since $\widetilde{\bs{\pi}}$ minimizes the estimated gap at $\widehat{\bs{\theta}}$. Equation \eqref{eq:unilateral_gap_07} follows by definition of estimated gap. In Equation \eqref{eq:unilateral_gap_08} we add and subtract the same term. Equation \eqref{eq:unilateral_gap_09} follows from Lemma \ref{lem:main_gap_bounds_unilaterl}. The final inequality follows from the definitions of $f(\epsilon)$ and $\Delta$.
\end{proof}

\section{Proof of Theorem \ref{thm:robust_cce_learning}}

This section includes full proof of Theorem \ref{thm:robust_cce_learning}. We begin by establishing regret guarantees on the Optimistic Hedge algorithm applied to our setting. 

\begin{lemma}\label{lem:regret_bounds}
    Denote by $\widetilde{\bs{\pi}}$ the joint policy returned by Optimistic Hedge run for $T_2$ rounds. For each $i\in[n]$, let 
    \begin{align*}
        & \overline{\textnormal{Reg}}_{i,h,T_2} = \max_{\pi^\dagger_{i,h}} \expctu{a^\dagger_i\sim\pi^\dagger_{i,h}(\cdot|s),a'_i\sim\widetilde{\pi}_{i,h}(\cdot|s)}{  \expctu{\bs{a}_{-i}\sim\widetilde{\bs{\pi}}_{-i,h}(\cdot|s)}{\overline{Q}^{\pi^\dagger_i,\widetilde{\bs{\pi}}_{-i,h}}_{i,h}(s,a^\dagger_i,\bs{a}_{-i}) - \underline{Q}^{\widetilde{\bs{\pi}}}(s,a'_i,\bs{a}_{-i})} } \\ & - \min_{\pi'_{i,h}}\max_{\pi^\dagger_{i,h}} \expctu{a^\dagger_i\sim\pi^\dagger_{i,h}(\cdot|s),a'_i\sim\pi'_{i,h}(\cdot|s)}{  \expctu{\bs{a}_{-i}\sim\bs{\pi}_{-i,h}(\cdot|s)}{\overline{Q}^{\pi^\dagger_i,\widetilde{\bs{\pi}}_{-i}}_{i,h}(s,a^\dagger_i,\bs{a}_{-i}) - \underline{Q}^{\pi'_i,\widetilde{\bs{\pi}}_{-i}}(s,a'_i,\bs{a}_{-i})} }~.
    \end{align*}
    For every $i\in[n]$ and $h\in[H-1]$, we have
    \begin{align*}
        \sum_{i\in[n]} \overline{\textnormal{Reg}}_{i,h,T_2} \leq O\left( n^2H\cdot\log|A|\cdot\log^4T_2 \right)
    \end{align*}
\end{lemma}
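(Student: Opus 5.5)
We plan to reduce the statement to the known individual-regret guarantee of Optimistic Hedge in general normal-form games \citep{daskalakis2021near}, applied separately at each stage game $(h,s)$.

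\textbf{Step 1: each stage is an $n$-player normal-form game with bounded losses.} Fix a step $h$ and a state $s$. When Algorithm \ref{alg:pmael_unilateral_cce} reaches step $h$, the next-step estimates $\underline{V}^{\widetilde{\bs{\pi}}}_{i,h+1}$ and $\overline{V}^{\dagger,\widetilde{\bs{\pi}}_{-i}}_{i,h+1}$ are already fixed, because they were computed at step $h+1$. Hence the \texttt{Rob-Q} outputs $\underline{Q}_{i,h}(s,\cdot)$ and $\overline{Q}_{i,h}(s,\cdot)$ are fixed tables over joint actions. The stage objective of player $i$ is the difference of two terms:
\begin{itemize}
\item a term depending on $\pi^\dagger_{i,h}$ and $\bs{\pi}_{-i,h}$, namely the expectation of $\overline{Q}$;
\item a term that is linear in the joint distribution $(\pi'_{i,h},\bs{\pi}_{-i,h})$, namely the expectation of $\underline{Q}$.
\end{itemize}
Player $i$'s own decision variable is $\pi'_{i,h}=\widetilde{\pi}_{i,h}$. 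Its loss vector at round $t$ is therefore $\ell^t_i(a_i)=-\mathbb{E}_{\bs{a}_{-i}\sim\bs{\pi}^t_{-i,h}}[\underline{Q}_{i,h}(s,a_i,\bs{a}_{-i})]$, up to the $\overline{Q}$ term. Because of the clipping in \texttt{Rob-Q}, every entry lies in $[-H\sqrt d,H\sqrt d]$. We rescale by $2H\sqrt d$ so the losses lie in $[0,1]$; this rescaling is where the factor $H$ in the final bound comes from. The $\max_{\pi^\dagger}$ part does not depend on player $i$'s own choice. We handle it by noting that it is maximized by a best response to $\bs{\pi}_{-i,h}$, which is linear in $\bs{\pi}_{-i,h}$. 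So it can be folded into the utility profile as a fixed multilinear payoff.

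\textbf{Step 2: invoke the Optimistic Hedge regret bound.} We use the guarantee of \citet{daskalakis2021near}. When all $n$ players run Optimistic Hedge with step size $\eta_2=O(1/(n\log^4 T_2))$ on a game with losses in $[0,1]$, each player's cumulative regret is $O(n\log|A_i|\log^4 T_2)$. We then express $\overline{\textnormal{Reg}}_{i,h,T_2}$ as the regret of player $i$ against the best fixed comparator $\pi'_{i,h}$:
\begin{itemize}
\item the first term evaluates the realized play of $\widetilde{\pi}_{i,h}$;
\item the $\min_{\pi'}$ term is the benchmark of the best fixed deviation.
\end{itemize}
Both terms are taken against the same opponents' play $\widetilde{\bs{\pi}}_{-i,h}$, so the difference is exactly an external regret. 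Undoing the rescaling multiplies each bound by $O(H)$. Summing over $i\in[n]$ gives $O(n^2H\log|A|\log^4T_2)$, where $|A|=\max_i|A_i|$.

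\textbf{Main obstacle.} The difficulty is reconciling the averaged policy $\widetilde{\bs{\pi}}$ with the per-round regret. The quantity in the statement is evaluated at the output policy, which is the time-average of the iterates. The Optimistic Hedge guarantee, however, controls $\sum_t$ of losses along the trajectory of play. Our plan is as follows:
\begin{itemize}
\item interpret the returned correlated policy as the uniform mixture over the $T_2$ joint action profiles $\{\bs{\pi}^t_h\}$;
\item note that the expectations defining $\overline{\textnormal{Reg}}$ are linear in this mixture, so the average loss under the mixture equals $(1/T_2)\sum_t$ of the per-round losses;
\item note that the fixed-comparator benchmark commutes with this average.
\end{itemize}
Under this reading the statement bounds the cumulative regret, which is not divided by $T_2$, and dividing by $T_2$ later yields the $Hn^2/T_2$ term of Theorem \ref{thm:robust_cce_learning}. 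If the benchmark's dependence on $\widetilde{\bs{\pi}}_{-i}$ in the $\overline{Q}$ term breaks exact linearity, we fall back on a different comparison: bound it by the best-response value against the mixture. That value is again a linear function of the empirical distribution of play, so the same reduction applies.
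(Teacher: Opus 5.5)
Your argument is correct, but it uses a different decomposition from the paper's. The paper keeps two Optimistic Hedge learners per agent: a deviator $\pi^\dagger_i$ that maximizes $\mathcal{L}^s_i(a^\dagger,a')$ and a player $\pi_i$ that minimizes it. It writes the gap as the deviator's regret, plus the player's regret, plus a term that is nonpositive by $\min\le\min\max$, and bounds both regrets with Theorem \ref{thm:optimistic_hedge_regret}.

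You instead use, in effect, that the stage loss is separable. It is $\mathcal{L}^s_i(a^\dagger,a')=F_i(a^\dagger)-G_i(a')$, with $F_i(a^\dagger)=\mathbb{E}_{\bs{a}_{-i}\sim\widetilde{\bs{\pi}}_{-i,h}}[\overline{Q}_{i,h}(s,a^\dagger,\bs{a}_{-i})]$ and $G_i$ defined the same way from $\underline{Q}_{i,h}$. Both terms of $\overline{\textnormal{Reg}}_{i,h,T_2}$ use the same $\widetilde{\bs{\pi}}_{-i,h}$, so the $\max_{a^\dagger}F_i$ parts cancel exactly. What remains is $\max_{a_i}G_i(a_i)-\mathbb{E}_{\bs{a}\sim\widetilde{\bs{\pi}}_h}[\underline{Q}_{i,h}(s,\bs{a})]$. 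For the uniform mixture $\widetilde{\bs{\pi}}_h$, this is exactly $1/T_2$ times agent $i$'s external regret in the $n$-player normal-form game with payoffs $\underline{Q}_{\cdot,h}(s,\cdot)$.

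What your route buys:
\begin{itemize}
\item It needs only one regret bound per agent and shows the deviator learners play no role in this lemma.
\item It makes explicit two points the paper leaves implicit. First, $\widetilde{\bs{\pi}}_h$ must be the mixture of the $T_2$ product profiles, not the product of averaged marginals. Second, the round-$t$ losses must be taken against $\bs{\pi}^t_{-i,h}$.
\end{itemize}
The paper's two-learner scheme would be needed only if the loss genuinely coupled $a^\dagger$ and $a'$.

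Three details to fix:
\begin{itemize}
\item The best-response value $\max_{a^\dagger}\mathbb{E}_{\bs{a}_{-i}\sim\bs{\pi}_{-i,h}}[\overline{Q}_{i,h}(s,a^\dagger,\bs{a}_{-i})]$ is not linear in $\bs{\pi}_{-i,h}$, nor in the empirical distribution as your fallback asserts. It is a maximum of linear functions, so it is only convex. Folding it into the payoffs would break the normal-form structure that \citet{daskalakis2021near} requires. It is harmless only because it does not depend on agent $i$'s own action: it shifts every coordinate of the round-$t$ loss by the same amount, so it changes neither the iterates nor the regret, and it cancels in $\overline{\textnormal{Reg}}_{i,h,T_2}$. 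Drop it rather than fold it in, and the fallback becomes unnecessary.
\item Undoing your rescaling multiplies by the clipping range $2H\sqrt{d}$, not $O(H)$. The argument therefore yields an extra $\sqrt{d}$, which the paper also drops.
\item Your linearity step already shows the quantity equals the cumulative regret divided by $T_2$. You thus get directly the $1/T_2$ bound that the paper's proof ends with and that Theorem \ref{thm:robust_cce_learning} uses. There is no need to reread the statement as a bound on cumulative regret.
\end{itemize}
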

\begin{proof}
    Recall that the loss we use for Optimistic Hedge is defined as
    \begin{align*}
        \mathcal{L}^s_i(a^\dagger,a') = \expctu{\bs{a}_{-i}\sim\widetilde{\bs{\pi}}_{-i,h}(\cdot|s)}{\overline{Q}^{\dagger,\widetilde{\bs{\pi}}_{-i}}_{i,h}(s,a^\dagger_i,\bs{a}_{-i}) - \underline{Q}^{\widetilde{\bs{\pi}}}_{i,h}(s,a',\bs{a}_{-i})}~.
    \end{align*}
    Now, let $\pi^{(\dagger,(t)}_i$ denote the iterations of player $i$ with respect to the $\max$ problem. Note that, for every agent $i$ and state $s$, after $T_2$ steps, we have
    \begin{align*}
        \max_{\pi^\dagger_i} & \sum^{T_2}_{t=1} \expctu{a^\dagger\sim\pi^\dagger_i,a'\sim\widetilde{\pi}^{(t)}_i}{\mathcal{L}^s_i(a^\dagger,a')} - \min_{\pi_i}\max_{\pi^\dagger_i}\sum^{T_2}_{t=1} \expctu{a^\dagger\sim\pi^\dagger_i,a'\sim\widetilde{\pi}^{(t)}_i}{\mathcal{L}^s_i(a^\dagger,a')}   \\
            & = \underbrace{\max_{\pi^\dagger_i}\sum^{T_2}_{t=1} \expctu{a^\dagger\sim\pi^\dagger_i,a'\sim\widetilde{\pi}^{(t)}_i}{\mathcal{L}^s_i(a^\dagger,a')} - \sum^{T_2}_{t=1}\expctu{a^\dagger\sim\pi^{\dagger,(t)}_i,a'\sim\widetilde{\pi}^{(t)}_i}{\mathcal{L}^s_i(a^\dagger,a')}}_{\textnormal{Reg}_{i,h,T_2}\; \text{for}\; \pi^{\dagger,(t)}_i} \\ & \quad + \underbrace{\sum^{T_2}_{t=1}\expctu{a^\dagger\sim\pi^{\dagger,(t)}_i,a'\sim\widetilde{\pi}^{(t)}_i}{\mathcal{L}^s_i(a^\dagger,a')} - \min_{\pi_i}\sum^{T_2}_{t=1} \expctu{a^\dagger\sim\pi^{\dagger,(t)}_i,a'\sim\widetilde{\pi}^{(t)}_i}{\mathcal{L}^s_i(a^\dagger,a')}}_{\textnormal{Reg}_{i,h,T_2}\; \text{for}\; \pi^{(t)}_i} \\
            & \quad + \underbrace{\min_{\pi_i}\sum^{T_2}_{t=1} \expctu{a^\dagger\sim\pi^{\dagger,(t)}_i,a'\sim\widetilde{\pi}^{(t)}_i}{\mathcal{L}^s_i(a^\dagger,a')} - \min_{\pi_i}\max_{\pi^\dagger_i}\sum^{T_2}_{t=1} \expctu{a^\dagger\sim\pi^\dagger_i,a'\sim\widetilde{\pi}^{(t)}_i}{\mathcal{L}^s_i(a^\dagger,a')}}_{\leq 0} \\
        & \leq O\left( nH\log|A_i|\log^4T_2 \right) ~,
    \end{align*}
    where for the inequality we have used Theorem \ref{thm:optimistic_hedge_regret} applied on the regrets with respect to the $\max$ and $\min$ players, and the fact that $\min_x f(x,y) \leq \min_x \max_y f(x,y)$, due to the monotonicity of the $\min$ operator. This implies that, the empirical distribution of the sequence of policies up to time step $T_2$, which equals the returned policy $\widetilde{\bs{\pi}}$, satisfies, for any player $i$ and state $s$,
    \begin{align*}
        \max_{\pi^\dagger_i} \expctu{a^\dagger\sim\pi^\dagger_i,a'\sim\widetilde{\pi}_i}{\mathcal{L}^s_i(a^\dagger,a')} - \min_{\pi_i}\max_{\pi^\dagger_i} \expctu{a^\dagger\sim\pi^\dagger_i,a'\sim\pi_i}{\mathcal{L}^s_i(a^\dagger,a')} \leq O\left(\frac{n\cdot\log|A|\cdot\log^4T_2}{T_2}\right)~.
    \end{align*}
    In particular, we have 
    \begin{align*}
        & \sum_{i\in[n]}\max_{\pi^\dagger_{i,h}} \expctu{a^\dagger_i\sim\pi^\dagger_{i,h}(\cdot|s),a'_i\sim\widetilde{\pi}_{i,h}(\cdot|s)}{  \expctu{\bs{a}_{-i}\sim\widetilde{\bs{\pi}}_{-i,h}(\cdot|s)}{\overline{Q}^{\dagger,\widetilde{\bs{\pi}}_{-i,h}}_{i,h}(s,a^\dagger_i,\bs{a}_{-i}) - \underline{Q}^{\widetilde{\bs{\pi}}}_{i,h}(s,a'_i,\bs{a}_{-i})} } \\ & - \min_{\bs{\pi}'_h}\sum_{i\in[n]}\max_{\pi^\dagger_{i,h}} \expctu{a^\dagger_i\sim\pi^\dagger_{i,h}(\cdot|s),a'_i\sim\pi'_{i,h}(\cdot|s)}{  \expctu{\bs{a}_{-i}\sim\bs{\pi}_{-i,h}(\cdot|s)}{\overline{Q}^{\dagger,\widetilde{\bs{\pi}}_{-i}}_{i,h}(s,a^\dagger_i,\bs{a}_{-i}) - \underline{Q}^{\widetilde{\bs{\pi}}}_{i,h}(s,a'_i,\bs{a}_{-i})} } \\
        & \leq O\left(\frac{n^2H\cdot\log|A|\cdot\log^4T_2}{T_2}\right)~,
    \end{align*}
    where we have used the fact that the $\min$ of the sum is larger than the sum of individual $\min$s.
\end{proof}

Finally, we are ready to state and prove Theorem \ref{thm:robust_cce_learning}. We restate it here for convenience. 

\begin{theorem}
    Let $\epsilon\in[0,1/2), \lambda\geq \Omega(dH\log(m/\delta)/m)$, and $\delta >0$. Set $\Theta_\textnormal{Unil}(\cdot)$ as in Equation \eqref{eq:confidence_set_unilateral} and $\Gamma(s,\bs{a})=E(d,m,\delta,\epsilon)\cdot\norm{\phi(s,\bs{a})}_{\Lambda_h^{-1}}$. Suppose Assumption \ref{asmp:low_relative_uncertainty} is satisfied, PGA is run for $T_1$ steps with learning rate $\eta_1= O(1/\sqrt{T_1})$, and \texttt{OptimisticHedge} is run for $T_2$ steps with learning rate $\eta_2=O(1/(n\log^4T_2))$. Then, there exist robust subroutines \texttt{RobEst}, \texttt{TrimmedMLE}, and \texttt{RobMean}  such that, with probability at least $1-\delta$, the output $\widetilde{\bs{\pi}}$ of Algorithm \ref{alg:pmael_unilateral_cce} with subroutines \texttt{RobEst}, \texttt{TrimmedMLE}, \texttt{RobMean} and \texttt{OptimisticHedge}, satisfies
    \begin{align*}
        \textnormal{Gap}(\widetilde{\bs{\pi}}) \leq \widetilde{O}\left(\left(\frac{1}{\sqrt{C_R}}+\frac{1}{\sqrt{C_P}}+\frac{1}{\sqrt{T_1}}\right)\cdot\left(H^{5/2}nd^{3/4}\sqrt{\epsilon} + H^2n\frac{\sqrt{\textnormal{poly}(d)}}{\sqrt{m}}\right) + \frac{Hn^2}{T_2}\right)~.
    \end{align*}
\end{theorem}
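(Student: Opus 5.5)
I would mirror the chain of inequalities in the proof of Theorem \ref{thm:gap_bounds_for_unilateral}. The only change is that the exact estimated-gap minimizer over $\Pi^\textnormal{PP}$ is replaced by the stagewise \texttt{OptimisticHedge} output. The reward parameters $\widehat{\bs{\theta}}$ are produced by exactly the same \texttt{RewardEst} routine, so Proposition \ref{pro:pgd_guarantees} applies verbatim. Lemmas \ref{lem:bellman_error_unilateral}, \ref{lem:value_bounds_unilateral} and \ref{lem:bouns_term_bound} concern \texttt{Rob-Q} for an arbitrary (possibly correlated, stagewise) policy, so they also carry over unchanged. In them I replace the best-response value $V^{\dagger,\bs{\pi}_{-i}}$ by the CCE deviation value against the correlated $\widetilde{\bs{\pi}}_{-i}$, which is still computed by the max-over-$a_i$ backup in Algorithm \ref{alg:pmael_unilateral_cce}. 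Consequently the true CCE gap under $\widehat{\bs{\theta}}$ is bounded by the estimated gap $\widetilde{\textnormal{Gap}}(\widetilde{\bs{\pi}},\widehat{\bs{\theta}})$.

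The new ingredient is replacing step \eqref{eq:unilateral_gap_06}, i.e.\ $\widetilde{\textnormal{Gap}}(\widetilde{\bs{\pi}},\widehat{\bs{\theta}})\le\widetilde{\textnormal{Gap}}(\bs{\pi}^*,\widehat{\bs{\theta}})$, with an approximate version.
\begin{itemize}
\item I will show by backward induction over $h$ that
\[
\sum_i\bigl(\overline{V}^{\dagger,\widetilde{\bs{\pi}}_{-i}}_{i,h}(s)-\underline{V}^{\widetilde{\bs{\pi}}}_{i,h}(s)\bigr)
\]
exceeds the analogous quantity along $\bs{\pi}^*$ by at most the accumulated per-stage Optimistic Hedge errors.
\item At each stage $h$ and state $s$, Lemma \ref{lem:regret_bounds} says the averaged Optimistic Hedge policy attains the stage min-max objective $\sum_i\max_{\pi^\dagger_{i,h}}\mathbb{E}[\mathcal{L}^s_i]$ up to $O(n^2H\log|A|\log^4T_2/T_2)$.
\item Plugging in $\bs{\pi}^*_h(\cdot|s)$ as a competitor (a product policy, hence feasible) upper bounds the stage minimum by the value of $\bs{\pi}^*_h$ on these losses.
\item Summing over $H$ stages via the performance-difference style telescoping used in Lemma \ref{lem:initial_value_bounds} gives
\[
\widetilde{\textnormal{Gap}}(\widetilde{\bs{\pi}},\widehat{\bs{\theta}})\le \widetilde{\textnormal{Gap}}(\bs{\pi}^*,\widehat{\bs{\theta}})+\widetilde{O}(Hn^2/T_2).
\]
\end{itemize}

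The remaining steps copy the earlier proof. Equations \eqref{eq:unilateral_gap_07}--\eqref{eq:unilateral_gap_09} bound $\widetilde{\textnormal{Gap}}(\bs{\pi}^*,\widehat{\bs{\theta}})$ by $\Delta$ via the $Z_1,Z_2$ decomposition of Lemma \ref{lem:main_gap_bounds_unilaterl}, which uses only unilateral coverage at $\bs{\pi}^*$. To return from $\widehat{\bs{\theta}}$ to $\bs{\theta}^*$, I use the first part of that lemma, whose proof only needs
\[
\textnormal{Gap}(\widetilde{\bs{\pi}},\widehat{\bs{\theta}})\le \widetilde{\textnormal{Gap}}(\widetilde{\bs{\pi}},\widehat{\bs{\theta}})\le \widetilde{\textnormal{Gap}}(\bs{\pi}^*,\widehat{\bs{\theta}})+\text{slack}.
\]
This gives $\textnormal{Gap}(\widetilde{\bs{\pi}},\bs{\theta})\le\Delta+\widetilde{O}(Hn^2/T_2)$ uniformly for $\bs{\theta}$ in the confidence sets. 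Since $\bs{\theta}^*$ lies in these sets by Lemma \ref{lem:log_prob_bound_unilateral}, the same chain \eqref{eq:unilateral_gap_01}--\eqref{eq:unilateral_gap_09}, with the PGA error $nf(\epsilon)$ from Proposition \ref{pro:pgd_guarantees}, yields the stated bound, with the $1/\sqrt{T_1}$ term absorbed as before. A final union bound over the events of the robust subroutines gives probability $1-\delta$.

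The main obstacle is the stagewise induction. The loss $\mathcal{L}^s_i$ at stage $h$ depends on $\widetilde{\bs{\pi}}_{-i,h}$ itself, so it is not a fixed normal-form game. Moreover, comparing against $\bs{\pi}^*_h$ requires the continuation values to be those of $\widetilde{\bs{\pi}}$ rather than $\bs{\pi}^*$. I would handle this as follows. At each $h$, I treat the $\overline{Q},\underline{Q}$ built from the already-fixed stage-$(h+1)$ values as given, which is what Lemma \ref{lem:regret_bounds} assumes. I then bound the difference of continuation values by the induction hypothesis through the Bellman monotonicity used in Lemma \ref{lem:value_bounds_unilateral}. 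If the direct comparison to $\bs{\pi}^*$ fails at some stage because of this coupling, I would instead compare stagewise to the true Nash values plus bonuses and use Lemma \ref{lem:bouns_term_bound} along $\bs{\pi}^*$.
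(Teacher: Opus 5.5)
Your skeleton matches the paper's. You take the chain of Theorem~\ref{thm:gap_bounds_for_unilateral}, replace the exact minimizer by the \texttt{OptimisticHedge} output, plug in $\bs{\pi}^*$, and bound $\widetilde{\textnormal{Gap}}(\bs{\pi}^*,\widehat{\bs{\theta}})$ through $Z_1,Z_2$. The genuine gap is the step that returns to $\bs{\theta}^*$.

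The inequality $\textnormal{Gap}(\widetilde{\bs{\pi}},\widehat{\bs{\theta}})\le\widetilde{\textnormal{Gap}}(\widetilde{\bs{\pi}},\widehat{\bs{\theta}})$ holds only at the single parameter on which \texttt{Rob-Q} was run and against which $\widetilde{\bs{\pi}}$ was optimized. Lemma~\ref{lem:main_gap_bounds_unilaterl} quantifies over $\widehat{\bs{\theta}}$ only jointly with a policy that minimizes the estimated gap at that same $\widehat{\bs{\theta}}$. So nothing gives $\textnormal{Gap}(\widetilde{\bs{\pi}},\bs{\theta})\le\Delta+\widetilde{O}(Hn^2/T_2)$ uniformly over the confidence sets for your fixed $\widetilde{\bs{\pi}}$. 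If it did, any $\widehat{\bs{\theta}}$ in the sets would work, and PGA with its $1/\sqrt{T_1}$ term would be superfluous.

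Moving from $\widehat{\bs{\theta}}$ to $\bs{\theta}^*$ at fixed $\widetilde{\bs{\pi}}$ requires controlling $\mathbb{E}[(\phi(\tau)-\phi(\tau'))^\top(\theta^*_i-\widehat{\theta}_i)]$ along $\widetilde{\bs{\pi}}$ and its unilateral deviations. Assumption~\ref{asmp:low_relative_uncertainty} does not cover these; the coverage and Lipschitz argument behind $Z_1,Z_2$ works only along $\bs{\pi}^*$ and its deviations. Falling back on \eqref{eq:unilateral_gap_01}--\eqref{eq:unilateral_gap_03} does not help, because \eqref{eq:unilateral_gap_01} is exactly your uniform claim evaluated at $\bs{\theta}^*$.

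The paper's CCE chain has the same weak link. Its steps $\textnormal{Gap}(\widetilde{\bs{\pi}},\bs{\theta}^*)\le\textnormal{Gap}(\widetilde{\bs{\pi}},\widehat{\bs{\theta}}^*)\le\textnormal{Gap}(\widetilde{\bs{\pi}},\widehat{\bs{\theta}})+nf(\epsilon)$ would need $\widehat{\bs{\theta}}$ to be near-worst-case for $\widetilde{\bs{\pi}}$. But $\widehat{\bs{\theta}}^*$, the PGA surrogate gradient and Proposition~\ref{pro:pgd_guarantees} all concern $\textnormal{Gap}(\bs{\pi}^*,\cdot)$, and PGA runs before $\widetilde{\bs{\pi}}$ exists.

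The missing idea is pessimism over $\bs{\theta}$ at the output policy itself. For instance, choose $\widetilde{\bs{\pi}}$ to approximately minimize $\max_{\bs{\theta}}\widetilde{\textnormal{Gap}}(\bs{\pi},\bs{\theta})$ over the sets. Then $\textnormal{Gap}(\widetilde{\bs{\pi}},\bs{\theta}^*)\le\widetilde{\textnormal{Gap}}(\widetilde{\bs{\pi}},\bs{\theta}^*)\le\max_{\bs{\theta}}\widetilde{\textnormal{Gap}}(\bs{\pi}^*,\bs{\theta})+(\text{optimization error})$. The $Z_1,Z_2$ bound, valid for every $\bs{\theta}$ in the sets, then finishes.

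Separately, the stagewise induction you introduce in place of \eqref{eq:unilateral_gap_06} is left open, and your sketch does not close it. What the proof of Lemma~\ref{lem:regret_bounds} actually controls is a per-player, per-stage regret whose competitor is $(\pi'_{i,h},\widetilde{\bs{\pi}}_{-i,h})$, since $\mathcal{L}^s_i$ already averages over $\widetilde{\bs{\pi}}_{-i,h}$. It does not license the joint competitor $\bs{\pi}^*_h$.

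Even granting a joint competitor, $\underline{Q}^{\widetilde{\bs{\pi}}}_{i,h}$ and $\underline{Q}^{\bs{\pi}^*}_{i,h}$ come from different \texttt{RobEst} regressions. They can be compared only through the $2\Gamma$ sandwich of Lemma~\ref{lem:bellman_error_unilateral}; monotonicity holds for $\underline{\mathbb{B}}_{i,h}$, not for \texttt{RobEst} followed by clipping. The recursion then involves two separate differences:
\begin{itemize}
\item $\overline{V}^{\dagger,\widetilde{\bs{\pi}}_{-i}}_{i,h+1}-\overline{V}^{\dagger,\bs{\pi}^*_{-i}}_{i,h+1}$, propagated under player $i$'s deviation against $\bs{\pi}^*_{-i,h}$;
\item $\underline{V}^{\bs{\pi}^*}_{i,h+1}-\underline{V}^{\widetilde{\bs{\pi}}}_{i,h+1}$, propagated under $\bs{\pi}^*_h$.
\end{itemize}
These can each be large, with opposite signs. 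Your induction hypothesis only controls their combined sum over $i$, so the recursion does not telescope.

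The paper sidesteps this by simply asserting $\widetilde{\textnormal{Gap}}(\widetilde{\bs{\pi}},\widehat{\bs{\theta}})\le\min_{\bs{\pi}}\widetilde{\textnormal{Gap}}(\bs{\pi},\widehat{\bs{\theta}})+O(n^2H\log|A|\log^4T_2/T_2)$ from Lemma~\ref{lem:regret_bounds}. You are right that this needs an argument, but your proposal does not yet supply one.
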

\begin{proof}
    Similar to the proof of Theorem \ref{thm:unilateral_coverage_bounds}, we can write
    \begin{align}
        \textnormal{Gap} & \left(\widetilde{\bs{\pi}},\bs{\theta}^*\right) \leq {\textnormal{Gap}}\left(\widetilde{\bs{\pi}},\widehat{\bs{\theta}}^*\right)\nonumber\\
            & \leq \textnormal{Gap}\left( \widetilde{\bs{\pi}},\widehat{\bs{\theta}} \right)+n\cdot f(\epsilon)\nonumber\\
        & \leq \widetilde{\textnormal{Gap}}\left( \widetilde{\bs{\pi}},\widehat{\bs{\theta}} \right)+n\cdot f(\epsilon)\nonumber\\
        & \leq \min_{\bs{\pi}}\widetilde{\textnormal{Gap}}\left( {\bs{\pi}},\widehat{\bs{\theta}} \right) + O\left(\frac{n^2H\cdot\log|A|\cdot\log^4T_2}{T_2}\right)+n\cdot f(\epsilon) \label{eq:final_cce_bounds_01}\\
            & \leq \widetilde{\textnormal{Gap}}\left( {\bs{\pi}}^*,\widehat{\bs{\theta}} \right) + O\left(\frac{n^2H\cdot\log|A|\cdot\log^4T_2}{T_2}\right)+n\cdot f(\epsilon)\nonumber\\
        & = \sum_{i\in[n]}\overline{V}^{\dagger,\bs{\pi}^*_{-i}}_{i,0}\left(s_0,\widehat{\theta}_i\right) - \underline{V}^{\bs{\pi}^*}_{i,0}\left(s_0,\widehat{\theta}_i\right) + O\left(\frac{n^2H\cdot\log|A|\cdot\log^4T_2}{T_2}\right) + n\cdot f(\epsilon)\nonumber\\
            & \leq \widetilde{O}\left( n\cdot\left(\frac{1}{\sqrt{C_R}}+\frac{1}{\sqrt{C_P}}+\frac{1}{\sqrt{T_1}}\right)\cdot\left(H^{5/2}d^{3/4}\sqrt{\epsilon} + H^2\textnormal{poly}(d)\frac{1}{\sqrt{m}}\right) +\frac{n^2H}{T_2}\right)\label{eq:final_cce_bounds_02}~,
    \end{align}
    where Equation \eqref{eq:final_cce_bounds_01} follows from Lemma \ref{lem:regret_bounds} and Equation \eqref{eq:final_cce_bounds_02} follows from Theorem \ref{thm:unilateral_coverage_bounds}.
\end{proof}

\section{Technical Results}

This section includes various miscellaneous technical results that are used throughout the proofs in the paper. 

\begin{lemma}[\cite{zanette2021cautiously}]\label{lem:concentration_of_covariances} Let $\{\phi_i\}^m_{i=1}\subset \mathbb{R}^d$ be i.i.d. samples from an underlying bounded distribution $\bs{\mu}_\textnormal{ref}$, with $\norm{\phi_i}_2\leq 1$ and covariance $\Sigma_{\bs{\mu}_\textnormal{ref}}$. Define
\begin{align*}
    \Lambda = \sum^m_{i=1}\phi_i\phi_i^\top +\lambda I~,
\end{align*}
for some $\lambda \geq \Omega(d\log(m/\delta))$. Then, with probability at least $1-\delta$, we have
\begin{align*}
    \frac{1}{3}\left(m\Sigma_{\bs{\mu}_\textnormal{ref}}+\lambda I\right) \preceq \Lambda \preceq \frac{5}{3}\left(m\Sigma_{\bs{\mu}_\textnormal{ref}}+\lambda I\right)~.
\end{align*}
    
\end{lemma}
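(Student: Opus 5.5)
The statement is a two-sided spectral concentration for the regularized sample covariance of i.i.d. bounded vectors. I will derive it from a matrix Bernstein inequality applied after whitening by $A := m\Sigma_{\bs{\mu}_\textnormal{ref}}+\lambda I$.

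Set $X_i := A^{-1/2}\phi_i\phi_i^\top A^{-1/2}$. Then $\mathbb{E}\sum_i X_i = A^{-1/2}(m\Sigma_{\bs{\mu}_\textnormal{ref}})A^{-1/2} \preceq I$. Moreover $A^{-1/2}\Lambda A^{-1/2} = \sum_i X_i + \lambda A^{-1}$. So both inequalities reduce to showing
\begin{align*}
\Big\|\sum_i (X_i - \mathbb{E}X_i)\Big\|_{\textnormal{op}} \leq \tfrac{2}{3}.
\end{align*}
Indeed, in that case
\begin{align*}
A^{-1/2}\Lambda A^{-1/2} = \mathbb{E}\sum_i X_i + \lambda A^{-1} + (\textnormal{error}) = I + (\textnormal{error}),
\end{align*}
with the error lying between $-\tfrac23 I$ and $\tfrac23 I$. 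This gives $\tfrac13 I \preceq A^{-1/2}\Lambda A^{-1/2}\preceq \tfrac53 I$, and conjugating back by $A^{1/2}$ yields the claim.

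Next I bound the parameters needed for matrix Bernstein. Since $\|\phi_i\|_2\le 1$ and $A\succeq \lambda I$, each summand satisfies $\|X_i\|_{\textnormal{op}} = \phi_i^\top A^{-1}\phi_i \le 1/\lambda =: L$. For the variance, use $X_i^2 = (\phi_i^\top A^{-1}\phi_i) X_i \preceq L X_i$. This gives
\begin{align*}
\sum_i\mathbb{E}(X_i-\mathbb{E}X_i)^2 \preceq \sum_i \mathbb{E}X_i^2 \preceq L\, \mathbb{E}\sum_i X_i \preceq L I,
\end{align*}
so the variance proxy is $v\le 1/\lambda$. Matrix Bernstein in dimension $d$ then gives
\begin{align*}
\Pr\Big(\Big\|\sum_i (X_i-\mathbb{E}X_i)\Big\|\ge t\Big)\le 2d\exp\!\Big(\frac{-t^2/2}{v+Lt/3}\Big) = 2d\exp\!\Big(-\frac{\lambda t^2/2}{1+t/3}\Big).
\end{align*}

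The main (mild) obstacle is choosing constants so that $t=2/3$ is reached with probability $1-\delta$. With $t=2/3$ the exponent is a constant multiple of $\lambda$. So $\lambda \ge c\log(2d/\delta)$ suffices, and this is implied by the hypothesis $\lambda\ge\Omega(d\log(m/\delta))$ once the hidden constant is large enough.

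If one prefers to avoid the dimensional factor in Bernstein, an alternative is an $\epsilon$-net argument over the unit sphere combined with scalar Bernstein. This costs a factor $d$ in the exponent and is exactly why the stated condition has the form $\lambda\gtrsim d\log(m/\delta)$; I would present the matrix Bernstein route and note that the net route matches the stated requirement.
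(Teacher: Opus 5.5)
The paper gives no proof of this lemma. It imports it from \cite{zanette2021cautiously}, so there is no in-paper argument to compare against, and your whitening-plus-matrix-Bernstein route is a correct, self-contained proof. With $A=m\Sigma_{\bs{\mu}_{\textnormal{ref}}}+\lambda I$ you correctly get $A^{-1/2}\Lambda A^{-1/2}=I+\sum_i(X_i-\mathbb{E}X_i)$. Both the range and variance parameters are at most $1/\lambda$, and the failure probability at $t=2/3$ is $2d\,e^{-2\lambda/11}$. So $\lambda\ge\tfrac{11}{2}\log(2d/\delta)$ already suffices. That is a much milder requirement than the stated $\lambda\ge\Omega(d\log(m/\delta))$ and is implied by it, so your argument yields a sharper, checkable statement, whereas the citation only saves space.

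A few things to tighten:
\begin{itemize}
\item Bernstein needs the bound on the \emph{centered} summands. It holds with the same constant, since $0\preceq X_i,\mathbb{E}X_i\preceq\frac{1}{\lambda}I$ gives $\norm{X_i-\mathbb{E}X_i}_{\textnormal{op}}\le 1/\lambda$.
\item Say explicitly that $\Sigma_{\bs{\mu}_{\textnormal{ref}}}$ is the uncentered second moment $\mathbb{E}[\phi\phi^\top]$, as in the paper's definition of $\Sigma_{\bs{\mu}}(h)$. Your identity $\mathbb{E}\sum_iX_i+\lambda A^{-1}=I$ relies on it. With a centered covariance the lemma is false: take $\phi$ almost surely equal to a fixed unit vector and $m\gg\lambda$.
\item Deducing $\lambda\ge\tfrac{11}{2}\log(2d/\delta)$ from $\lambda\ge c\,d\log(m/\delta)$ needs a non-degenerate regime such as $m\ge 2$, because for $m=1$ and $\delta\to 1$ the right-hand side vanishes. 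This is a defect of the $\Omega(\cdot)$ phrasing, not of your proof.
\item Your closing remark is inaccurate. A $\tfrac14$-net with scalar Bernstein gives $\lambda\gtrsim d+\log(1/\delta)$, with no $\log m$, so it does not ``exactly'' explain the form $d\log(m/\delta)$. That condition is just a loose sufficient condition inherited from the source, and you can drop the remark.
\end{itemize}
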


Next, we state a result that bounds the difference in log probabilities of parameters in a given space.
\begin{lemma}[Lemma 2 of \citep{zhan2023provable} for the linear setting]\label{lem:bound_on_log_prob_difference}
    With probability at least $1-\delta$, we have, for every agent $i$ and $\theta\in\Theta_\textnormal{Unil}(\widetilde{\theta}_i)$:
    \begin{align*}
        \expctu{\substack{\tau\sim d^{\bs{\mu}}\\ \tau'\sim d^{\bs{\mu}_\textnormal{ref}}}}{\norm{\mathbb{P}\left(\cdot|\tau,\tau',\theta^*_i\right) - \mathbb{P}\left(\cdot|\tau,\tau',\theta\right)}^2_1} \leq \frac{c}{m}\sum^m_{j=1}\log\left(\frac{\mathbb{P}\left(\widetilde{o}_j|\widetilde{\tau}_j,\widetilde{\tau}'_j,\theta^*_i\right)}{\mathbb{P}\left(\widetilde{o}_j|\widetilde{\tau}_j,\widetilde{\tau}'_j,\theta\right)}\right) +\log\left( d\log \left(n/\delta\right)\right)~,
    \end{align*}
    where $c>0$ is an absolute constant.
\end{lemma}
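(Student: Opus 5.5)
The plan is the standard MLE-type concentration argument of \citet{zhan2023provable}: a Chernoff bound on the square root of the likelihood ratio, which yields a Hellinger-distance bound, made uniform over $\Theta_\textnormal{Unil}(\widetilde{\theta}_i)$ by a covering argument and extended to all $n$ agents by a union bound. Throughout, only the clean samples $(\widetilde{\tau}_j,\widetilde{\tau}'_j,\widetilde{o}_j)$ enter the right-hand side. These are i.i.d. with $\widetilde{\tau}_j\sim d^{\bs{\mu}}$, $\widetilde{\tau}'_j\sim d^{\bs{\mu}_\textnormal{ref}}$ and $\widetilde{o}_j\sim \mathbb{P}(\cdot|\widetilde{\tau}_j,\widetilde{\tau}'_j,\theta^*_i)$. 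Corruption therefore plays no role here; it is handled afterwards in Lemma \ref{lem:difference_in_probs_bound}.

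\emph{Step 1 (fixed $\theta$).} Fix an agent $i$ and a parameter $\theta$, and set $X_j=\tfrac12\log\bigl(\mathbb{P}(\widetilde{o}_j|\widetilde{\tau}_j,\widetilde{\tau}'_j,\theta)/\mathbb{P}(\widetilde{o}_j|\widetilde{\tau}_j,\widetilde{\tau}'_j,\theta^*_i)\bigr)$. Conditioning on the trajectory pair gives
\[
\mathbb{E}[e^{X_j}]=\mathbb{E}_{\tau,\tau'}\Bigl[\textstyle\sum_o\sqrt{\mathbb{P}(o|\cdot,\theta)\mathbb{P}(o|\cdot,\theta^*_i)}\Bigr]=1-\mathbb{E}_{\tau,\tau'}[h^2_{\tau,\tau'}(\theta)]\le \exp\bigl(-\mathbb{E}_{\tau,\tau'}[h^2_{\tau,\tau'}(\theta)]\bigr),
\]
where $h^2_{\tau,\tau'}(\theta)$ denotes the squared Hellinger distance between the two preference distributions. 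By independence and Markov's inequality, with probability at least $1-\delta'$,
\[
m\,\mathbb{E}[h^2(\theta)]\le \tfrac12\textstyle\sum_j\log\frac{\mathbb{P}(\widetilde{o}_j|\theta^*_i)}{\mathbb{P}(\widetilde{o}_j|\theta)}+\log(1/\delta').
\]
Combining this with $\|P-Q\|_1^2\le 8h^2(P,Q)$ produces the left-hand side of the statement, with an absolute constant $c$ in front of the empirical log-ratio.

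\emph{Step 2 (uniformity).} Take an $\alpha$-net $\mathcal{N}_\alpha$ of $\Theta$ in $\ell_2$. Since $\Theta$ lies in a ball of radius $O(\sqrt{Hd})$ in $\mathbb{R}^{Hd}$, we have $\log|\mathcal{N}_\alpha|\le Hd\log(3\sqrt{Hd}/\alpha)$. We then union-bound Step 1 over $\mathcal{N}_\alpha$ and over the $n$ agents, taking $\delta'=\delta/(n|\mathcal{N}_\alpha|)$. To transfer the bound from a net point to an arbitrary $\theta\in\Theta_\textnormal{Unil}(\widetilde{\theta}_i)$, use two Lipschitz facts, both coming from $\|\phi(\tau)-\phi(\tau')\|_2\le 2H$:
\begin{itemize}
\item $\theta\mapsto\log\sigma(o\,\theta^\top(\phi(\tau)-\phi(\tau')))$ is $2H$-Lipschitz, so each empirical log-ratio moves by at most $2H\alpha$;
\item $\theta\mapsto\mathbb{P}(\cdot|\tau,\tau',\theta)$ is $O(H)$-Lipschitz in $\ell_1$, so the squared-TV expectation moves by $O(H\alpha)$.
\end{itemize}
Choosing $\alpha=1/(Hm)$ makes these discretization errors $O(1/m)$.

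The resulting bound has an additive term of order $\frac{Hd}{m}\log\frac{Hmn}{\delta}$. The one point needing care is matching this to the additive term as written in the statement, which is the main bookkeeping obstacle rather than a conceptual one. I would state the bound in the $\frac{d}{m}\log\frac{Hnm}{\delta}$ form, which is the form actually invoked in Lemma \ref{lem:difference_in_probs_bound} and Lemma \ref{lem:main_gap_bounds_unilaterl}. The stated term $\log(d\log(n/\delta))$ dominates it in the regime $m\gtrsim Hd$, so the statement holds there, possibly after enlarging $c$.
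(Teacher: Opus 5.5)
Your Steps~1--2 (the Chernoff bound on the square-root likelihood ratio, $\norm{P-Q}_1^2\le 8h^2$, an $\ell_2$-net of $\Theta$, and a union bound over the net and the $n$ agents) are exactly the argument behind Proposition~1 of \cite{zhan2023provable}. That citation, plus a union bound over agents, is all the paper's one-line proof invokes, so you are following the paper's route, and this part is sound. It gives, with probability $1-\delta$ and simultaneously for all $i$ and all $\theta\in\Theta$, the inequality with $c=4$ and additive term $O\bigl(\tfrac{1}{m}(Hd\log(Hdm)+\log\tfrac{n}{\delta})\bigr)$.

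The step that fails is your final reconciliation with the printed term $\log(d\log(n/\delta))$.

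\emph{The domination claim is false in the regime you state.} Take $m=C\,Hd$ for any fixed $C$. Your term still grows like $\tfrac{1}{C}\log H$ and like $\log(n/\delta)/(CHd)$, whereas the printed term grows only like $\log d+\log\log(n/\delta)$. Domination actually requires $m\gtrsim\bigl(Hd\log(Hdm)+\log\tfrac{n}{\delta}\bigr)/\log\bigl(d\log\tfrac{n}{\delta}\bigr)$. It also requires $d\log(n/\delta)>1$, so that the printed term is positive.

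\emph{Some condition of this kind is unavoidable.} The inequality as printed is false for small $m$. Take $n=H=d=1$, $\epsilon=0$, $\phi(\tau)-\phi(\tau')\equiv 2$, $\theta^*_1=-1$, $m=1$ and $\delta=0.05$. With probability $\sigma(-2)\approx 0.12>\delta$ the label is $\widetilde{o}_1=1$. On that event:
\begin{itemize}
\item $\theta=1$ lies in $\Theta_{\textnormal{Unil}}(\widetilde{\theta}_1)$, since its log-ratio term is at most $2\log(1/\sigma(2))<\kappa=2\log 20$;
\item the left-hand side at $\theta=1$ is $4(\sigma(2)-\sigma(-2))^2\approx 2.3$;
\item the right-hand side is $-2c+\log\log 20\approx 1.1-2c<2.3$, for every $c>0$.
\end{itemize}

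There are two smaller slips.
\begin{itemize}
\item You cannot absorb constants by ``enlarging $c$''. The empirical log-ratio is negative whenever $\theta$ fits the clean labels better than $\theta^*_i$ (for example at the clean MLE), so a larger $c$ lowers the right-hand side. Keep the $c=4$ produced by Step~1 and put every other constant into the additive term.
\item Your net lives in $\mathbb{R}^{Hd}$, so what you prove carries $\tfrac{Hd}{m}\log(\cdot)$, not $\tfrac{d}{m}\log(\cdot)$. Getting the latter requires a net over a $d$-dimensional parameter, which is what the likelihood actually depends on when $\phi(\tau)=\sum_h\phi(s_h,\bs{a}_h)\in\mathbb{R}^d$.
\end{itemize}

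What you should state and prove is the lemma with additive term $O\bigl(\tfrac{1}{m}(Hd\log(Hdm)+\log\tfrac{n}{\delta})\bigr)$. That is the form Lemma~\ref{lem:difference_in_probs_bound} actually uses. Claim the printed version only under the explicit sample-size condition above.
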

\begin{proof}
    This is just an application of Proposition 1 of \citep{zhan2023provable} to the linear setting, which then induces the result above by applying the union bound over all agents. 
\end{proof}

Next, we show that the inverse of the sigmoid link function is Lipschitz on a bounded domain.
\begin{lemma}\label{lem:lipschitz_sigmoid_inverse}
    Let $-\infty <a,b<\infty$ be two real numbers and let $\sigma(x)=1/(1+\exp(-x))$ be defined on the domain $x\in[a,b]$. Then, there exists a positive number $\iota <\infty$, such that the inverse $\sigma^{-1}$ of $\sigma$ is Lipschitz with constant $\iota$, that is,
    \begin{align*}
        \sup_{p(x)\in(0,1):x\in[a,b]}\left|\frac{\partial\sigma^{-1}(p)}{\partial p}\right| \leq \iota~.
    \end{align*}
    As a consequence, if the rewards are bounded, then the inverse of the preference link function is Lipschitz continuous for some $\iota >0$.
\end{lemma}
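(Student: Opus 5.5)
The claim is elementary, so I would argue directly from the explicit formula for the derivative of the logit function. On any interval where $p$ stays away from $0$ and $1$, that derivative is bounded.

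First, since $\sigma$ is continuous and strictly increasing on $\mathbb{R}$, its image of the compact interval $[a,b]$ is the compact interval $[\sigma(a),\sigma(b)]$. Because $a,b$ are finite, this interval is contained in $(0,1)$: we have $0<\sigma(a)\le\sigma(b)<1$. On $(0,1)$ the inverse is $\sigma^{-1}(p)=\log\bigl(p/(1-p)\bigr)$. Its derivative is
\begin{align*}
\frac{\partial\sigma^{-1}(p)}{\partial p}=\frac{1}{p(1-p)}.
\end{align*}
Equivalently, by the inverse function theorem, the derivative equals $1/\sigma'(x)$ with $\sigma'(x)=\sigma(x)(1-\sigma(x))>0$.

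Next, the function $p\mapsto p(1-p)$ is concave, so on $[\sigma(a),\sigma(b)]$ its minimum is attained at an endpoint. Hence
\begin{align*}
\iota:=\max\Bigl\{\frac{1}{\sigma(a)(1-\sigma(a))},\frac{1}{\sigma(b)(1-\sigma(b))}\Bigr\}<\infty
\end{align*}
bounds the derivative uniformly. In closed form this gives $\iota\le 2+e^{|a|}+e^{-|a|}\vee\cdots$, and in any case $\iota\le 2+2e^{\max(|a|,|b|)}$, using $1/(\sigma(x)(1-\sigma(x)))=2+e^{x}+e^{-x}$. By the mean value theorem, $\sigma^{-1}$ is then $\iota$-Lipschitz on $[\sigma(a),\sigma(b)]$.

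For the consequence, note that in the linear Markov game we have $\|\phi(s,\bs{a})\|_2\le1$ and $\|\theta_{i,h}\|_2\le\sqrt d$ for every $\theta\in\Theta$. So for any trajectories $\tau,\tau'$, the argument $\theta^\top(\phi(\tau)-\phi(\tau'))$ of the BT link lies in $[-2H\sqrt d,2H\sqrt d]$. Applying the first part with $a=-2H\sqrt d$ and $b=2H\sqrt d$ gives Lipschitzness with $\iota\le 2+2e^{2H\sqrt d}$. This covers both $\theta^*_i$ and any $\widehat\theta_i\in\Theta_\textnormal{Unil}(\widetilde\theta_i)\subseteq\Theta$, which is what is needed in Equation \eqref{eq:T_2_bound_eq_10}.

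There is no real obstacle here. The only point that needs care is checking that both preference probabilities being compared lie in the same compact subinterval of $(0,1)$, so that the mean value theorem applies; the common bound on $\Theta$ guarantees this.
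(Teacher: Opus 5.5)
Your proof is correct and follows essentially the same route as the paper: the derivative formula $1/(p(1-p))$, the fact that $\sigma([a,b])$ is a compact subinterval of $(0,1)$, and, for the consequence, the bound $|\theta^\top(\phi(\tau)-\phi(\tau'))|\le 2H\sqrt d$ on the argument of the link. You are more careful than the paper, which only says that $p(1-p)$ lies in some finite interval and never states that it is bounded away from zero; your explicit constant $\iota\le 2+2e^{2H\sqrt d}$ is a useful addition, but you should delete the unfinished closed-form expression ending in $\vee\cdots$ that precedes it.
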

\begin{proof}
    First, note that the derivative of the inverse of the sigmoid can be written as
    \begin{align*}
        \frac{\partial\sigma^{-1}(p)}{\partial p} = \frac{\partial}{\partial p} \log\frac{p}{1-p} = \frac{1}{p(1-p)}~.
    \end{align*}
    Now, since $x\in[a,b]$ and $\sigma$ is continuous in $\mathbb{R}$, then, there exist $-\infty < a',b'<\infty$, such that $p=\sigma(x)\in [a',b']$, for every $x\in[a,b]$. Moreover, since the function $p(1-p)$ is also continuous in $\mathbb{{R}}$, then, there exist $-\infty < a'',b''<\infty$, such that $p(1-p)\in [a'',b'']$, for all $p\in [a',b']$. Thus, there exists a positive constant $\iota$, such that 
    \begin{align*}
        \frac{1}{p(1-p)} \leq \iota~,
    \end{align*}
    for all $x\in[a,b]$. This implies that the function $\sigma^{-1}$ is Lipschitz on the domain of $\sigma$. 

    For the final statement of the result, note that the preference function uses differences in expected rewards that are individually bounded in $[-\sqrt{d},\sqrt{d}]$. Thus, we have
    \begin{align*}
        \sum^{H-1}_{h=0} R(s_h,a_h) - R'(s_h,a_h) \in [-2H\sqrt{d},2H\sqrt{d}]~.
    \end{align*}
    Thus, the domain of the sigmoid link function, used for our preference model, has a bounded domain, which implies that its inverse is Lipschitz continuous. 
\end{proof}



\begin{corollary}\label{cor:smooth_mu_ref_function}
    The function $f(\theta) = \expctu{\tau\sim d^{\bs{\mu}_\textnormal{ref}}}{\phi(\tau)^\top\theta}$ is $H$-Lipschitz and convex. Moreover, the set $\Theta_\textnormal{Unil}(\theta')$ is a convex set, for any $\theta'$ and $\lambda >0$.
\end{corollary}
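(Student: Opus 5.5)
The plan is to handle the two assertions separately: the first by direct computation using linearity, the second by showing the constraint function is convex in $\theta$.

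\textbf{Lipschitzness and convexity of $f$.} By linearity of expectation,
\[
f(\theta)=\langle \bar\phi,\theta\rangle, \qquad \bar\phi:=\expctu{\tau\sim d^{\bs{\mu}_\textnormal{ref}}}{\phi(\tau)}.
\]
So $f$ is linear in $\theta$, and linear functions are convex. For the Lipschitz constant, use $\phi(\tau)=\sum_{h=0}^{H-1}\phi(s_h,\bs{a}_h)$ together with the feature bound $\norm{\phi(s,\bs{a})}_2\le 1$. The triangle inequality gives $\norm{\phi(\tau)}_2\le H$, and Jensen's inequality then gives $\norm{\bar\phi}_2\le H$. Cauchy–Schwarz yields
\[
|f(\theta)-f(\theta')|\le \norm{\bar\phi}_2\,\norm{\theta-\theta'}_2\le H\norm{\theta-\theta'}_2 .
\]
If $\theta$ is read as the concatenated $Hd$-dimensional vector with $\phi(\tau)$ placing $\phi(s_h,\bs{a}_h)$ in block $h$, the same computation works, and the norm bound is even $\sqrt{H}$.

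\textbf{Convexity of $\Theta_\textnormal{Unil}(\theta')$.} Write the set as $\Theta\cap\{\theta: g(\theta)\le\kappa\}$, where
\[
g(\theta)=\frac{2}{m}\sum_{(\tau,\tau',o)\in D}\Big[\log\sigma\big(o\,\theta'^\top x\big)-\log\sigma\big(o\,\theta^\top x\big)\Big], \qquad x=\phi(\tau)-\phi(\tau').
\]

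1. The set $\Theta$ is a product of Euclidean balls (norm bounds on each $\theta_{i,h}$), so it is convex.
2. The first term of each summand does not depend on $\theta$. The second term is $-\log\sigma(o\,\theta^\top x)=\log(1+e^{-o\,\theta^\top x})$, which is the softplus function composed with a linear map of $\theta$.
3. Softplus is convex, since its second derivative is $\sigma(z)(1-\sigma(z))\ge 0$. Convexity is preserved under composition with affine maps and under nonnegative sums, so $g$ is convex.
4. Sublevel sets of convex functions are convex, and intersections of convex sets are convex, so $\Theta_\textnormal{Unil}(\theta')$ is convex.

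This holds for any $\theta'$. The parameter $\lambda$ plays no role; it is only needed for the set to be nonempty and to contain $\theta^*$, which is a separate matter.

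I do not expect a real obstacle here. The only delicate point is the notational ambiguity in how $\phi(\tau)^\top\theta$ pairs the per-step features with the concatenated parameter vector. The argument is chosen so that the Lipschitz constant $H$ holds under either reading.
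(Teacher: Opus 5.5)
Your proof is correct. The first half matches the paper: it also notes that $f$ is linear in $\theta$ (its Hessian vanishes) and bounds the norm of its gradient by $H$. Your chain is the cleaner one: the triangle inequality gives $\norm{\phi(\tau)}_2\le H$, then Jensen and Cauchy--Schwarz finish. The paper instead passes through $\norm{d^{\bs{\mu}_\textnormal{ref}}}_1\norm{\Phi}_\infty$ and uses the step $\max_{(s,\bs{a})}\norm{\phi(s,\bs{a})}_1\le H\max_{(s,\bs{a})}\norm{\phi(s,\bs{a})}_2$. That step does not hold in general, since the correct factor would be $\sqrt{d}$.

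The second half is where the routes genuinely differ. For convexity of $\Theta_\textnormal{Unil}(\theta')$, the paper gives no argument and simply cites \citet{mandal2024corruption}. You prove it directly: $z\mapsto\log(1+e^{-z})$ is convex and is composed with linear maps; the resulting convex function has a convex sublevel set; that set is intersected with the product of balls $\Theta$. This makes the corollary self-contained. It also shows explicitly that convexity holds for every $\theta'$ and every radius $\kappa$, independently of the data or of any coverage assumption.

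One small correction to your closing aside. In this paper it is the radius $\kappa$, through the bound of Lemma~\ref{lem:log_prob_bound_unilateral}, that governs whether $\theta^*_i$ lies in the set. $\lambda$ is only the regularizer in $\Lambda_h$ and plays no role in $\Theta_\textnormal{Unil}$; its mention in the statement is vestigial.
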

\begin{proof}
    Note that we have
    \begin{align*}
        \norm{\nabla_\theta f(\theta)} = \norm{(d^{\bs{\mu}_\textnormal{ref}})^\top\Phi} \leq \norm{d^{\bs{\mu}_\textnormal{ref}}}_1\norm{\Phi}_\infty \leq \max_{(s,a)}\norm{\phi(s,a)}_1 \leq H\max_{(s,a)}\norm{\phi(s,a)}_2\leq H~.
    \end{align*}
    Convexity follows from the direct observation that $\nabla_\theta (d^{\bs{\mu}_\textnormal{ref}})^\top\Phi = 0$. The convexity of $\Theta_\textnormal{Unil}(\theta')$ is observed in \citep{mandal2024corruption}.
\end{proof}

Next, we provide an upper bound on the error of the estimate returned by the \texttt{RobMean} algorithm.

\begin{theorem}[Proposition 1.5 of \citep{diakonikolas2020outlier}]\label{thm:robust_mean_bound}
    Let $T$ be an $\epsilon$-corrupted set of $m$ samples from a distribution in $\mathbb{R}^d$ with mean $\rho$ and covariance $\Sigma$. Let $\epsilon'$ be in the order of $(\log(1/\delta)/m+ \epsilon) \leq c$, for a constant $c > 0$. Then any
    stability-based algorithm on input $T$ and $\epsilon'$, efficiently computes $\widetilde{\rho}$ such that with probability at least
    $1-\delta$, we have 
    \begin{align*}
        \norm{\rho - \widetilde{\rho}} = O\left( \sqrt{\frac{Tr(\Sigma)\log (Tr(\Sigma)/\norm{\Sigma)}}{m}} + \sqrt{\norm{\Sigma}\epsilon} + \sqrt{\frac{\norm{\Sigma}\log(1/\delta)}{m}} \right)~.
    \end{align*}
\end{theorem}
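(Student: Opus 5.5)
Since this is Proposition 1.5 of \citep{diakonikolas2020outlier}, the intended route is their stability framework. The proof splits into a probabilistic step about the clean samples and a deterministic step about the algorithm.

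Normalize so that $\norm{\Sigma}=1$; the general statement follows by rescaling. Recall the notion of stability. A multiset $S$ is $(\epsilon',\beta)$-stable with respect to $\rho$ if, for every subset $S'\subseteq S$ with $|S'|\geq(1-\epsilon')|S|$, two conditions hold:
\begin{align*}
\norm{\mu_{S'}-\rho}\leq \beta \quad\text{and}\quad \norm{\overline{\Sigma}_{S'}-I}\leq \beta^2/\epsilon' ,
\end{align*}
where $\mu_{S'}$ is the empirical mean of $S'$ and $\overline{\Sigma}_{S'}$ is its centered second moment about $\rho$. The deterministic part is the standard filtering guarantee. Given an $\epsilon$-corruption of a set that is $(C\epsilon',\beta)$-stable, with $\epsilon'\geq\epsilon$, any stability-based algorithm outputs $\widetilde{\rho}$ with $\norm{\widetilde{\rho}-\rho}=O(\beta)$. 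I would take this as a black box. Its usual proof uses the iterative filter: while the top eigenvalue of the empirical covariance exceeds $1+O(\beta^2/\epsilon')$, remove points with weight proportional to their squared projection onto the top eigenvector. Stability guarantees that more outliers than inliers are removed, and once the spectral condition holds, the mean error is $O(\beta+\sqrt{\epsilon'\cdot\beta^2/\epsilon'})=O(\beta)$.

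The substantive step is probabilistic. I would show that, with probability $1-\delta$, the clean sample contains a subset of size at least $(1-\epsilon'/2)m$ that is $(C\epsilon',\beta)$-stable, with
\begin{align*}
\beta = O\left(\sqrt{\frac{Tr(\Sigma)\log r(\Sigma)}{m}}+\sqrt{\epsilon'}\right),
\end{align*}
where $r(\Sigma)=Tr(\Sigma)/\norm{\Sigma}$ is the effective rank. The choice $\epsilon'\asymp\epsilon+\log(1/\delta)/m$ then yields the three stated terms. Discarding $\epsilon'm/2$ clean points is harmless because it only increases the effective corruption to $O(\epsilon')$.

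To construct the subset, first pass to a truncated population: reweight the distribution, via a minimax or duality argument, to down-weight the $\epsilon'$-tail in every direction. After this reweighting, the covariance is $\preceq I$ in the relevant sense and every direction has bounded higher moments, while the mean moves by only $O(\sqrt{\epsilon'})$. Second, control the empirical mean and covariance of the truncated samples uniformly over large subsets. For the mean, use a variance bound together with Talagrand's concentration inequality for the supremum $\sup_{\norm{v}\le1}$. The $\log r(\Sigma)$ factor comes from a matrix Bernstein bound with intrinsic dimension $r(\Sigma)$ applied to the second-moment deviation. Third, convert the resulting high-probability ``weighted stability'' into stability of an actual subset of the sample by rounding the weights.

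I expect the main obstacle to be this third step, together with obtaining the sharp $\log r(\Sigma)$ rather than $\log d$. Only second moments are assumed, so the empirical covariance can have heavy outliers among the clean points themselves. Handling this requires combining the truncation with the matrix Bernstein bound carefully enough that $\sqrt{Tr(\Sigma)\log r(\Sigma)/m}$ is not inflated. For the paper's application in Corollary \ref{cor:robust_mean_bound}, a cruder bound would also suffice, since the features satisfy $\norm{\phi}\leq1$, so $Tr(\Sigma)\leq 1$ and truncation is trivial.
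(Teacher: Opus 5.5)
The paper gives no proof of this statement; it imports it from \citep{diakonikolas2020outlier} and uses it only through Corollary~\ref{cor:robust_mean_bound}. Your outer reduction is correct and is the standard one. With probability $1-\delta$ the inliers contain a $(C\epsilon',\beta)$-stable subset of size $(1-O(\epsilon'))m$. The input is then an $O(\epsilon')$-corruption of that subset, and the deterministic filter guarantee gives error $O(\beta)$.

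\textbf{The gap.} The problem is in how you build the stable subset, which is the whole content of the result. No sample-independent reweighting of the population can produce it when $\delta\approx e^{-\Theta(\epsilon' m)}$, a regime the statement covers since $\epsilon'\gtrsim\log(1/\delta)/m$.

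Here is a counterexample. Normalize $\norm{\Sigma}=1$ and let $r=\mathrm{Tr}(\Sigma)$. Take $X=0$ with probability $1-2\epsilon'$, and $X=\pm\sqrt{r/(2\epsilon')}\,e_j$ with probability $2\epsilon'$, where $j$ is uniform in $[r]$; then $\Sigma=I_r$.
\begin{itemize}
\item Any reweighting that removes at most $\epsilon'$ mass keeps heavy mass at least $\epsilon'/r$ on some axis. Along that axis the support still reaches order $\sqrt{r/\epsilon'}$, not $1/\sqrt{\epsilon'}$. So ``bounded higher moments in every direction'' at the scale you need is impossible.
\item The number of samples on that axis has mean of order $\epsilon' m/r$. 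It exceeds $K$ times its mean with probability at least $e^{-O(\epsilon' m K\log K/r)}$. This is larger than $\delta=e^{-\epsilon' m}$ already for $K$ of order $r/\log r$.
\item On that event the weighted empirical second moment along the axis is $\gtrsim r/\log r$. Stability instead requires $\beta^2/\epsilon'=O(1+r\log r/(m\epsilon'))$, which is $O(1)$ once $m\epsilon'\gtrsim r\log r$.
\item For the mean, the range term $b\log(1/\delta)/m$ in Talagrand's inequality has $b\gtrsim\sqrt{r/\epsilon'}$. It therefore contributes $\sqrt{r\epsilon'}$ rather than $\sqrt{\epsilon'}$.
\end{itemize}
You would recover only the multiplicative rate $\sqrt{\mathrm{Tr}(\Sigma)\log(1/\delta)/m}$, which is exactly what the proposition improves on. Your second step cannot repair this. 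Controlling the covariance uniformly over all large subsets is equivalent to controlling it on the whole truncated sample, and the heavy clean points must be discarded in a sample-dependent, direction-dependent way.

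\textbf{The missing idea.} You need minimax duality over \emph{sample} weights $w\in\Delta_{m,\epsilon'}$, that is, $\sum_i w_i=1$ and $0\le w_i\le 1/((1-\epsilon')m)$.
\begin{itemize}
\item First truncate by norm at radius $\sqrt{\mathrm{Tr}(\Sigma)/\epsilon'}$. This is plain conditioning, moves the mean by $O(\sqrt{\epsilon'})$, and needs no duality.
\item Then bound $\min_{w}\max_{M\succeq 0,\,\mathrm{tr}M=1}\sum_i w_i(x_i-\rho)^\top M(x_i-\rho)$ by exchanging min and max. For fixed $M$, the inner minimum discards the $\epsilon' m$ largest quadratic forms. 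It is therefore at most the average of $\min\{(x_i-\rho)^\top M(x_i-\rho),\lambda\}$ with $\lambda\asymp\beta^2/\epsilon'^2$. Markov's inequality on this clipped average shows at most $\epsilon' m$ points exceed $\lambda$.
\item The clipped class has range $\lambda$ and variance at most $\lambda$. So Talagrand's deviation is $O(\beta^2/\epsilon')$ when $\log(1/\delta)\le\epsilon' m$.
\item Symmetrization and contraction bound the expected supremum by $1+4\,\mathbb{E}\norm{\frac1m\sum_i\xi_i(x_i-\rho)(x_i-\rho)^\top}$. Intrinsic-dimension matrix Bernstein, applied only in expectation to the norm-truncated points, bounds this by $O(1+r\log r/(m\epsilon'))$. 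This is where $\log r(\Sigma)$ enters.
\item The mean condition is handled the same way, with linear functionals clipped at level $\beta/\epsilon'$. The expected supremum is $O(\sqrt{\mathrm{Tr}(\Sigma)/m})$ and the clipping bias is $O(\sqrt{\epsilon'})$. Putting both conditions into a single minimax gives one weight vector $w$.
\item Rounding $w$ to $\{i:w_i\ge 1/(2m)\}$ is then a few lines. It is not the main obstacle you anticipated.
\end{itemize}

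Your closing remark is right: with $\norm{\phi}\le 1$, every subset's second moment about $\rho$ is $\preceq 4I$ deterministically. A much cruder argument therefore covers the paper's use, but it does not prove the statement as posed.
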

\begin{corollary}\label{cor:robust_mean_bound}
    The \texttt{RobMean} algorithm returns a gradient estimate that satisfies 
    \begin{align*}
        \norm{\widetilde{\nabla}_\theta V^{\bs{\mu}_\textnormal{ref}}_{i,0}(s_0) - \nabla_\theta V^{\bs{\mu}_\textnormal{ref}}_{i,0}(s_0)} \leq O\left( \sqrt{\frac{d\log(\textnormal{poly}(d))}{m}} + \sqrt{d\epsilon} + \sqrt{\frac{d\log(1/\delta)}{m}}\right)~.
    \end{align*}
\end{corollary}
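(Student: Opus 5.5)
The plan is to derive Corollary \ref{cor:robust_mean_bound} from Theorem \ref{thm:robust_mean_bound}, applied to each per-step feature dataset and then summed over $h$. By linearity of the value function in $\theta$,
\[
\nabla_\theta V^{\bs{\mu}_\textnormal{ref}}_{i,0}(s_0)=\sum_{h=0}^{H-1}\mathbb{E}_{\bs{\mu}_\textnormal{ref}}\left[\phi(s_h,\bs{a}_h)\right].
\]
This gradient is the concatenation (or sum, depending on how $\theta$ is parametrized across steps) of the per-step mean features. The estimator is $\sum_h \texttt{RobMean}(D^{\bs{\mu}_\textnormal{ref}}_{h,\phi})$, so it suffices to bound each per-step mean-estimation error and combine.

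For a fixed $h$, the clean samples $\phi(\widetilde{s}_h,\widetilde{\bs{a}}_h)$ are i.i.d. with mean $\rho_h=\mathbb{E}_{\bs{\mu}_\textnormal{ref}}[\phi(s_h,\bs{a}_h)]$ and covariance $\Sigma_h$. By the footnote on the corruption model, the $h$-level dataset is itself $\epsilon$-corrupted, so the hypotheses of Theorem \ref{thm:robust_mean_bound} hold with $\epsilon'\asymp \log(1/\delta)/m+\epsilon$. This requires assuming $\epsilon$ is below a small constant and $m\gtrsim\log(1/\delta)$.

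Since $\norm{\phi}_2\le 1$, we have $\mathrm{Tr}(\Sigma_h)\le \mathbb{E}\norm{\phi}^2\le 1\le d$ and $\norm{\Sigma_h}\le 1$. The only delicate term is $\log(\mathrm{Tr}(\Sigma_h)/\norm{\Sigma_h})$. Because $\mathrm{Tr}(\Sigma_h)/\norm{\Sigma_h}\le d$ always, this logarithm is at most $\log d$, which is absorbed into $\log(\textnormal{poly}(d))$. Substituting these bounds gives a per-step error
\[
\norm{\widetilde{\rho}_h-\rho_h}\le O\left(\sqrt{\tfrac{\log d}{m}}+\sqrt{\epsilon}+\sqrt{\tfrac{\log(1/\delta)}{m}}\right),
\]
which is in particular bounded by the stated expression with the looser factors $d$ inside the square roots. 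The statement keeps those factors for safety, e.g.\ if the features are rescaled or the trace is bounded crudely by $d$.

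Finally, we combine the $H$ steps. If the gradient is the concatenation over $h$, the $\ell_2$ error is $(\sum_h\norm{\widetilde{\rho}_h-\rho_h}^2)^{1/2}$. If it is the sum, the triangle inequality gives at most $H$ times the per-step bound, matching the $O(Hf(\epsilon))$ form used in the proof of Proposition \ref{pro:pgd_guarantees}. In either case we apply Theorem \ref{thm:robust_mean_bound} with confidence $\delta/(2Hn)$ and take a union bound over $h$, agents, and the two behavior policies, which changes only logarithmic factors. The identical argument with $D^{\bs{\mu}}_{h,\phi}$ gives the bound for $\bs{\mu}$.

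The main obstacle is bookkeeping rather than substance. First, the stated corollary suppresses the factor from summing over $h$, so we must state clearly that the displayed rate is per step, with the overall factor $H$ carried as in Proposition \ref{pro:pgd_guarantees}. Second, the $\epsilon'\le c$ requirement of Theorem \ref{thm:robust_mean_bound} must be justified; we handle this by assuming $\epsilon$ is small and $m$ large, which is otherwise vacuous for the final bounds.
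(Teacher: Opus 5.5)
Your proposal is correct and follows the paper's route. The paper's proof is a one-line substitution into Theorem \ref{thm:robust_mean_bound} of bounds on the trace and operator norm of the feature covariance, obtained from $\norm{\phi(s,\bs{a})}_2\le 1$. It uses the crude bounds $\mathrm{Tr}(\Sigma),\norm{\Sigma}\le d$, which is where the factors of $d$ in the statement come from. Your bounds $\mathrm{Tr}(\Sigma_h),\norm{\Sigma_h}\le 1$, together with $\mathrm{Tr}(\Sigma_h)/\norm{\Sigma_h}\le d$ for the logarithm, give a sharper per-step rate and handle the log term more carefully.

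Your reading of the displayed rate as per step, with the factor $H$ restored when summing over $h$, matches how the paper invokes the corollary in Proposition \ref{pro:pgd_guarantees} (as $O(Hf(\epsilon))$). The $\epsilon'\le c$ condition and the union bound you flag are bookkeeping the paper leaves implicit.
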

\begin{proof}
    Note that, since $\norm{\phi(s,\bs{a})}\leq 1$, for all state action tuples, then $\norm{\Phi}\leq d$ and $Tr(\Phi) \leq d$.
\end{proof}
Next, we state an upper bound on the individual regret of each agent when playing Optimistic Hedge.

\begin{theorem}[Theorem 1.1 of \citep{daskalakis2021near}]\label{thm:optimistic_hedge_regret}
    There are constants $C,C'>1$ so that the following holds. Suppose a time horizon $T \in\mathbb{N}$ and a game $G$ with $n$ players and $|A_i|$ actions for each player $i\in[n]$ is given. Suppose all players play according to Optimistic Hedge with any positive step size $\eta = 1/(C n \log^4T)$. Then, for any player $i\in[n]$, the regret of player $i$ satisfies 
    \begin{align*}
        \textnormal{Reg}_{i,T} \leq O\left( n\cdot \log |A|\cdot\log^4T\right)~.
    \end{align*}
\end{theorem}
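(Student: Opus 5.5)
The plan is to follow the strategy of \citep{daskalakis2021near}. The proof rests on two ingredients: an adaptive (RVU-type) regret bound for Optimistic Hedge, and a proof that when \emph{all} players run Optimistic Hedge, the loss sequence each player sees varies extremely slowly. This slow variation is a higher-order smoothness property, not merely a first-order one. Throughout, write $x_i^{(t)}\in\Delta(A_i)$ for player $i$'s iterate and $\ell_i^{(t)}$ for its expected loss vector. We have $\ell_i^{(t)}(a_i)=\mathbb{E}_{\bs{a}_{-i}\sim x_{-i}^{(t)}}[\mathcal{L}_i(a_i,\bs{a}_{-i})]$, which is multilinear in the opponents' strategies. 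Losses are normalized to $[0,1]$; in our application this costs a factor $O(H)$ through the clipping of the $Q$-estimates, which is where the $H$ in Lemma~\ref{lem:regret_bounds} enters.

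\textbf{Step 1 (adaptive regret bound).} We first prove a refined RVU inequality for Optimistic Hedge with step size $\eta$:
\begin{align*}
\textnormal{Reg}_{i,T}\le \frac{\log|A_i|}{\eta}+\eta\sum_{t}\textnormal{Var}_{x_i^{(t)}}\bigl(\ell_i^{(t)}-\ell_i^{(t-1)}\bigr)-\frac{c}{\eta}\sum_t \textnormal{Var}_{x_i^{(t)}}\bigl(\ell_i^{(t-1)}\bigr)\cdot\eta^2 .
\end{align*}
The key feature is that the variation terms are measured as variances under the player's own distribution rather than as $\ell_\infty$ norms. The proof is the standard optimistic-FTRL potential argument with the entropy regularizer. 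We then use a local second-order expansion of $\log\sum_a x(a)e^{-\eta z(a)}$, valid because $\eta\|z\|_\infty\le 1$, to turn Bregman terms into variances.

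\textbf{Step 2 (higher-order smoothness of the loss sequence).} This is the main obstacle. Let $D_h$ denote the $h$-th order forward finite difference in $t$. We would prove by induction on $h$, up to $h\approx\log T$, that $\|D_h\ell_i^{(t)}\|_\infty\le (O(n\eta))^h h^{O(h)}$. The idea is to write $x_j^{(t)}$ as a softmax of cumulative (optimistically corrected) losses. The $h$-th difference of a softmax of a slowly changing argument is then controlled by an analytic expansion: bound the Taylor coefficients of $\textnormal{softmax}$ and use a multivariate Faà di Bruno–type bound for finite differences of compositions. Multilinearity then transfers bounds on $D_h x_j$ to bounds on $D_h\ell_i$, with a factor $n$ coming from the number of opponents. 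I would first try to carry the induction via a generating-function bound on $\sum_h \|D_h x\|\,\alpha^h/h!$. This is exactly where $\eta=1/(Cn\log^4T)$ is needed.

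\textbf{Step 3 (from high-order to first-order variation, and conclusion).} Next, we prove a discrete ``smoothness implies small variation'' lemma. If a bounded sequence has geometrically small $h$-th differences for all $h\le \log T$, then $\sum_t\textnormal{Var}_{x^{(t)}}(D_1\ell^{(t)})\le \alpha\sum_t \textnormal{Var}_{x^{(t)}}(\ell^{(t)}) +O(\textnormal{polylog}\,T)$ for a small constant $\alpha$. This is done by iterating a discrete integration-by-parts bound $\sum_t (D_k z)^2\lesssim \sum_t (D_{k-1}z)(D_{k+1}z)$ and changing the variance weights $x^{(t)}$ slowly, which Step 2 permits. Plugging this into Step 1, the positive variation term is absorbed by the negative term, leaving $\textnormal{Reg}_{i,T}\le \log|A_i|/\eta+O(\eta\,\textnormal{polylog}T)$. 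With $\eta=1/(Cn\log^4T)$ this gives $O(n\log|A_i|\log^4T)$, as claimed.
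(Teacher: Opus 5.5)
The paper gives no proof of this statement; it quotes Theorem~1.1 of \cite{daskalakis2021near} verbatim. Your outline is essentially that paper's own argument: a variance-based RVU bound for Optimistic Hedge; bounds of the form $(O(n\eta))^h h^{O(h)}$ on $h$-th finite differences of every player's losses, via Taylor-coefficient bounds for softmax and a chain rule for finite differences; and a lemma turning this high-order smoothness into $\sum_t\mathrm{Var}_{x^{(t)}}(D_1\ell^{(t)})\le\alpha\sum_t\mathrm{Var}_{x^{(t)}}(\ell^{(t)})+\ldots$. So the route is right, but two steps would not go through as you wrote them.

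First, your summation-by-parts inequality has the wrong sign. The identity is $\sum_t (D_kz)_t^2=-\sum_t (D_{k-1}z)_t\,(D_{k+1}z)_{t-1}+(\text{boundary terms})$. The sum $\sum_t (D_{k-1}z)_t(D_{k+1}z)_{t-1}$ is typically negative: for $z_t=\sin(\omega t)$ and $k=1$ it equals $-4\sin^2(\omega/2)\sum_t\sin^2(\omega t)$. Hence $\sum_t(D_kz)^2\lesssim\sum_t(D_{k-1}z)(D_{k+1}z)$ is false.

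What you need is Cauchy--Schwarz applied to the absolute value of that identity. This gives $V_k\le\sqrt{V_{k-1}V_{k+1}}+(\text{boundary and weight-drift errors})$ for $V_k=\sum_t\mathrm{Var}_{x^{(t)}}(D_kz^{(t)})$, i.e.\ approximate log-convexity of $k\mapsto V_k$. Interpolating between $V_0$ and $V_H\le T\bigl(O(n\eta)H^{O(1)}\bigr)^{2H}$ (from Step~2) then gives $V_1\lesssim V_0^{1-1/H}V_H^{1/H}$. The requirement $T^{1/H}=O(1)$, i.e.\ $H\approx\log T$, is what actually forces you to control differences up to order $\log T$.

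Second, an induction on $h$ alone does not close in Step~2. Write $x^{(t)}\propto\exp(-w^{(t)})$ with $w^{(t)}=\eta\bigl(\sum_{s<t}\ell^{(s)}+\ell^{(t-1)}\bigr)$. Then $D_hw^{(t)}=\eta\bigl((D_{h-1}\ell)^{(t)}+(D_h\ell)^{(t-1)}\bigr)$, so the optimistic correction feeds an $h$-th difference of the losses back into the bound on $D_hx^{(t)}$.

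At each level $h$ you therefore need an additional recursion in $t$. It closes because this term carries a factor $\eta$ and multilinearity costs only a factor $n$, so the recursion contracts once $Cn\eta\le 1$. Its starting point must be the trivial bound $\|D_h\ell\|_\infty\le 2^h$. With the convention $\ell^{(0)}=0$, the quantity $(D_h\ell)^{(0)}$ is generically of order one, so the high-order bound cannot hold at $t=1$. Consequently the first $O(h)$ rounds must be set aside and absorbed as boundary terms in Step~3.
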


\begin{lemma}[Lemma E.6 of \citep{mandal2024corruption}]\label{lem:stochastic_gradient_descent} Let $y_1\in W$, and $\eta >0$. Define the sequence $y_2,\ldots,y_{n+1}$
and $h_1,\ldots,h_n$ such that, for $k=1,\ldots,n$
$$y_{k+1} = \mathcal{P}_W\left(y_k -\eta \widehat{h}_k\right)~,$$
and $\widehat{h}_k$ satisfies $$\expct{\widehat{h}_k|\mathcal{F}_{k-1}}=h_k,\;\; \textnormal{and}\;\; \expct{\norm{\widehat{h}_k}^2|\mathcal{F}_{k-1}}\leq G^2~,$$
where $\mathcal{F}_k$ are the $\sigma$-algebras on which the variables up to $k$ are defined. Then, for any $y^*\in W$, we have 
\begin{align*}
    \expct{\sum^n_{k=1}\left\langle y^*-y_k,h_k\right\rangle} \leq \frac{\norm{y_1-y^*}^2}{2\eta} + \frac{\eta nG^2}{2}~.
\end{align*}
\end{lemma}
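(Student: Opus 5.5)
The plan is the standard one-step analysis of projected stochastic gradient methods: use the nonexpansiveness of $\mathcal{P}_W$ to get a per-step inequality, telescope it, and take conditional expectations. The main obstacle is the sign of the pairing $\langle y^*-y_k,h_k\rangle$ against the $-\eta\widehat{h}_k$ update as the lemma is worded. I would expose the sign in the first step rather than gloss over it.

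\emph{Step 1 (one-step inequality).} Since $y^*\in W$ and $\mathcal{P}_W$ is nonexpansive,
\begin{align*}
\norm{y_{k+1}-y^*}^2 \le \norm{y_k-\eta\widehat{h}_k-y^*}^2 = \norm{y_k-y^*}^2 - 2\eta\langle \widehat{h}_k, y_k-y^*\rangle + \eta^2\norm{\widehat{h}_k}^2 .
\end{align*}
Rearranging gives
\begin{align*}
\langle \widehat{h}_k, y_k-y^*\rangle \le \frac{\norm{y_k-y^*}^2-\norm{y_{k+1}-y^*}^2}{2\eta}+\frac{\eta}{2}\norm{\widehat{h}_k}^2 .
\end{align*}

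\emph{Step 2 (telescoping and expectation).} Summing over $k=1,\dots,n$ telescopes the distance terms to at most $\norm{y_1-y^*}^2/(2\eta)$. Because $y_k$ is $\mathcal{F}_{k-1}$-measurable, the tower rule gives $\mathbb{E}\langle \widehat{h}_k,y_k-y^*\rangle=\mathbb{E}\langle h_k,y_k-y^*\rangle$. Together with $\mathbb{E}\norm{\widehat{h}_k}^2\le G^2$, this yields
\begin{align*}
\mathbb{E}\sum_{k=1}^n\langle y_k-y^*,h_k\rangle\le \frac{\norm{y_1-y^*}^2}{2\eta}+\frac{\eta nG^2}{2}.
\end{align*}

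\emph{Step 3 (the sign, which is the crux).} The pairing in the lemma, $\langle y^*-y_k,h_k\rangle$, is exactly what the same computation produces for the ascent update $y_{k+1}=\mathcal{P}_W(y_k+\eta\widehat{h}_k)$. In that case Step 1 reads $\norm{y_{k+1}-y^*}^2\le\norm{y_k-y^*}^2-2\eta\langle\widehat{h}_k,y^*-y_k\rangle+\eta^2\norm{\widehat{h}_k}^2$, and Step 2 then gives precisely the displayed bound. This ascent form is the update used in \texttt{RewardEst} and in Proposition \ref{pro:pgd_guarantees}, where the lemma is invoked.

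With the update literally as written ($-\eta\widehat{h}_k$), the stated inequality can fail. Take $W=[0,1]$, $\widehat{h}_k=h_k=1$, $y_1=0$, $y^*=1$. Then $y_k=0$ for all $k$, so the left-hand side equals $n$. The right-hand side is $1/(2\eta)+\eta n/2$, which equals $\sqrt{n}$ at $\eta=1/\sqrt{n}$.

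So the proof I would write establishes the bound for the ascent iteration, which is the case needed downstream. I would also record that for the descent iteration the same argument controls $\mathbb{E}\sum_k\langle y_k-y^*,h_k\rangle$ instead. The two readings coincide under the relabeling $\widehat{h}_k\mapsto-\widehat{h}_k$, $h_k\mapsto-h_k$, which preserves both hypotheses.
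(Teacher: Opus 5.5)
Your proof is correct: it is the standard nonexpansiveness-plus-telescoping analysis of projected stochastic gradient methods, which is the argument behind the cited Lemma E.6 of \citep{mandal2024corruption} (the paper itself only cites that lemma and gives no proof). Your sign diagnosis is also right. With the update $\mathcal{P}_W(y_k-\eta\widehat{h}_k)$ the pairing $\langle y^*-y_k,h_k\rangle$ is not controlled, as your $W=[0,1]$ example shows ($n>\sqrt{n}$ at $\eta=1/\sqrt{n}$); the displayed bound is exactly what holds for the ascent update $\mathcal{P}_W(y_k+\eta\widehat{h}_k)$ used in \texttt{RewardEst} and Proposition~\ref{pro:pgd_guarantees}, provided $W$ is closed and convex (implicitly assumed) so that $\mathcal{P}_W$ is nonexpansive.
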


        

\section{Additional Algorithm Pseudocodes}\label{sec:additional_algorithms}

\begin{algorithm}[!ht]
    \caption{Alternating Minimization (\texttt{Trimmed MLE}) for full sample corruption.}\label{alg:regularized_mle}
    \begin{algorithmic}[1]
        \REQUIRE Corrupted data $D$; corruption parameter $\epsilon$; slackness parameter $\nu$.
        \STATE Split $D$ into equally-sized $D_1$ and $D_2$, uniformly at random. 
        \STATE Use $D_1$ to build a robust estimate $\widehat{\Sigma}$ of the $\Sigma^-_{\boldsymbol{\mu},\boldsymbol{\mu}_\textnormal{ref}}$ \citep{diakonikolas2025sos}.
        \STATE Whiten covariates using $\widehat{\Sigma}$, i.e. form $\widetilde{D}=\{ \widehat{\Sigma}^{-1/2}(\phi(\tau)-\phi(\tau'))| (\tau,\tau')\in D_2\}$.
        \STATE Let $\widehat{D} \leftarrow \texttt{Filtering}(\widetilde{D},\epsilon)$ (Algorithm 4 of \citep{dong2019quantum}).
        \STATE Define $L_\theta(\tau,\tau',o) = \log\sigma\left(o\cdot\theta^\top\widehat{\Sigma}^{1/2}\left(\phi(\tau)-\phi(\tau')\right)\right)~,$ for $\tau \in \widehat{D}$.
        \STATE Set $\widetilde{\theta}_0=0$.
        \FOR{$t=1,2,\ldots$}
        \STATE $\widetilde{S}_{t} = \arg\max_{\substack{S\subset \widehat{D}: \\ |S|=(1-\epsilon)m}}\sum_{(\tau,\tau',o)\in S} L_{\widetilde{\theta}_t}(\tau,\tau',o)~.$
        \STATE $\widetilde{\theta}_{t+1} = \arg\max_{\theta: \norm{\theta}\leq \sqrt{Hd}} \sum_{(\tau,\tau',o)\in \widetilde{S}_t}L_{\theta}(\tau,\tau',o)~.$
        \IF{$\sum_{(\tau,\tau',o)\in \widetilde{S}_t}L_{\widetilde{\theta}_{t+1}}(\tau,\tau',o) \leq \sum_{(\tau,\tau',o)\in \widetilde{S}_t}L_{\widetilde{\theta}_t}(\tau,\tau',o)+\nu$}
        \STATE Return $\widetilde{\theta}_{t+1}$.
        \ENDIF
        \ENDFOR
    \end{algorithmic}
\end{algorithm}

\begin{algorithm}
    \caption{Robust Estimation of Value Functions}\label{alg:robust_value_estimation}
    \begin{algorithmic}[1]
        \REQUIRE Dataset $D$, policy $\bs{\pi}$, agent $i$, reward functions $\overline{R}_i$ and $\underline{R}_i$, bonus function $\Gamma(\cdot,\cdot)$.
        \STATE Initialize $\underline{V}^{\bs{\pi}}_{i,H}(\cdot)=\overline{V}^{\dagger,\bs{\pi}_{-i}}_{i,H}(\cdot)=0$, for all agents $i\in[n]$.
        \FOR{$h=H-1, \ldots,0$:}
        \STATE $\underline{\omega}^{\bs{\pi}}_{i,h} = \texttt{RobEst}\left(\phi(s_h,\bs{a}_h),\underline{R}_{i,h}(s_h,\bs{a}_h) + \underline{V}^{\bs{\pi}}_{i,h+1}(s)\right)$.
        \STATE $\overline{\omega}^{\dagger,\bs{\pi}_{-i}}_{i,h} = \texttt{RobEst}\left(\phi(s_h,\bs{a}_h),\overline{R}_{i,h}(s_h,\bs{a}_h) + \overline{V}^{\dagger,\bs{\pi}_{-i}}_{i,h+1}(s)\right)$.
        \STATE $\underline{Q}_{i,h}^{\bs{\pi}}(\cdot,\cdot) = \textnormal{Clip}_{\left[-(H-h)\sqrt{d},(H-h)\sqrt{d}\right]}\left( \phi(\cdot,\cdot)^\top\underline{\omega}^{\bs{\pi}}_{i,h}- \Gamma(\cdot,\cdot)\right)$.
        \STATE $\overline{Q}_{i,h}^{\dagger,\bs{\pi}_{-i}}(\cdot,\cdot) = \textnormal{Clip}_{\left[-(H-h)\sqrt{d},(H-h)\sqrt{d}\right]}\left(\phi(\cdot,\cdot)^\top\overline{\omega}^{\dagger,\bs{\pi}_{-i}}_{i,h}+ \Gamma(\cdot,\cdot)\right)$.
        \STATE $\underline{V}^{\bs{\pi}}_{i,h}(s) = \expctu{\bs{a}\sim\bs{\pi}_h}{\underline{Q}^{\bs{\pi}}_{i,h}(s,\bs{a})}$.
        \STATE $\overline{V}^{\dagger,\bs{\pi}_{-i}}_{i,h}(s) = \max_{a_i\in A_i}\expctu{\bs{a}_{-i}\sim\bs{\pi}_{-i}}{\overline{Q}^{\dagger,\bs{\pi}_{-i}}_{i,h}(s,\bs{a})}$.
        \ENDFOR
        \RETURN Value functions $\overline{V}^{\dagger,\bs{\pi}_{-i}}_{i,h}(\cdot)$ and $\underline{V}^{\bs{\pi}}_{i,h}(\cdot)$, for all $h\in[H-1]$.
    \end{algorithmic}
\end{algorithm}

\begin{algorithm}[!ht]
    \caption{Optimistic Hedge for $n$ $\min-\max$ Games (\texttt{OptimisticHedge})}\label{alg:optimistic_hedge}
    \begin{algorithmic}[1]
        \REQUIRE Loss functions $\mathcal{L}_1(\cdot),\ldots,\mathcal{L}_n(\cdot)$; step size $\nu$; steps $T$.
        \STATE Initialize policies $\pi^{(0)}_i,\pi^{\dagger,(0)}_i$ uniformly at random, for every $i\in \{1,\ldots,n\}$
        \FOR{$t=0,\ldots,T-1$}
        \FOR{$i\in  \{1,\ldots,n\}$}
        \STATE Let $u^{(t)}_i(a^\dagger) = \expctu{a'\sim\pi^{(t)}_i}{\mathcal{L}_i(a^\dagger,a')}$, for every individual action $a$ of player $i$.
        \STATE Let $\ell^{(t)}_i(a') = \expctu{a^\dagger\sim\pi^{\dagger,(t)}_i}{\mathcal{L}_i(a^\dagger,a')}$, for every individual action $a$ of player $i$.
        \FOR{$a_i\in A_i$}
        \STATE Update for the $\max$ player:
        \begin{align*}
            \pi^{\dagger,(t+1)}_i(a^\dagger) = \frac{\pi^{\dagger,(t)}_i(a^\dagger)\cdot \exp\left(\nu\cdot u^{(t)}_i(a^\dagger)\right)}{\sum_{a^\ddagger} \pi^{\dagger,(t)}_i(a^\ddagger)\cdot \exp\left(\nu\cdot u^{(t)}_i(a^\ddagger)\right)}~.
        \end{align*}
        \STATE Update for the $\min$ player:
        \begin{align*}
            \pi^{(t+1)}_i(a') = \frac{\pi^{(t)}_i(a')\cdot\exp\left(-\nu\cdot\ell^{(t)}_i(a')\right)}{\sum_{a''}\pi^{(t)}_i(a'')\cdot\exp\left(-\nu\cdot\ell^{(t)}_i(a'')\right)}~.
        \end{align*}
        \ENDFOR
        \ENDFOR
        \ENDFOR
        \RETURN Average policy profile over $T$ rounds $\frac{1}{T}\sum^T_{t=1}\bs{\pi}^{(t)}$.
    \end{algorithmic}
\end{algorithm}

\end{document}